\documentclass[11pt,letterpaper]{article}
\usepackage[T1]{fontenc}
\usepackage[margin=1in]{geometry}
\usepackage{times}
\usepackage[round,authoryear]{natbib}
\setcitestyle{authoryear,round,citesep={;},aysep={,},yysep={;}}
\usepackage{amsmath,amssymb,amsthm}
\usepackage{thm-restate}

\newtheorem{assumption}{Assumption}
\newtheorem{lemma}{Lemma}
\newtheorem{proposition}{Proposition}
\newtheorem{remark}{Remark}

\newtheorem{realizabilityproposition}{Proposition}[section]

\usepackage{amsmath,amsfonts,bm}

\def\Eqref#1{equation~\ref{#1}}
\def\Eqref#1{Equation~\ref{#1}}

\def\1{\bm{1}}

\DeclareMathAlphabet{\mathsfit}{\encodingdefault}{\sfdefault}{m}{sl}
\SetMathAlphabet{\mathsfit}{bold}{\encodingdefault}{\sfdefault}{bx}{n}

\newcommand{\KL}{D_{\mathrm{KL}}}

\usepackage{graphicx}
\usepackage[hypertexnames=false]{hyperref}
\usepackage{url}
\usepackage{algorithm}
\usepackage{algpseudocode}
\usepackage{etoolbox}
\usepackage{needspace}

\makeatletter
\let\paperappendixcontentsline\l@section
\patchcmd{\paperappendixcontentsline}{1.5em}{6.5em}{}{}
\newcommand{\l@appendixsection}[2]{%
  \begingroup
    \let\papernumberline\numberline
    \renewcommand{\numberline}[1]{\papernumberline{Appendix~##1}}%
    \paperappendixcontentsline{#1}{#2}%
  \endgroup
}
\newcommand{\appendixcontentsformat}{\let\l@section\l@appendixsection}
\newcommand{\proofcontentsentry}[2]{%
  \begingroup
    \small
    \@dottedtocline{2}{3.8em}{0em}%
      {\bfseries\hyperref[#1]{#2}}%
      {\bfseries\hyperref[#1]{\pageref*{#1}}}%
  \endgroup
}
\newcommand{\proofcontentschild}[2]{%
  \begingroup
    \small
    \@dottedtocline{2}{5.8em}{0em}%
      {\hyperref[#1]{#2}}%
      {\hyperref[#1]{\pageref*{#1}}}%
  \endgroup
}
\makeatother

\title{Learning without Overwriting: \\ A Theory of Self-Distillation and Supervised Fine-Tuning in Continual Reasoning}

\author{
   Shinichi Uemura\textsuperscript{$\ast$1}, Taiji Suzuki\textsuperscript{$\dagger$1, 2}\\[0.5em]
   \textsuperscript{1}Department of Mathematics Informatics, The University of Tokyo\\
   \textsuperscript{2}Center for Advanced Intelligence Project, RIKEN, Tokyo, Japan\\
   \textsuperscript{$\ast$}\texttt{uemura-shinichi666@g.ecc.u-tokyo.ac.jp}\\
   \textsuperscript{$\dagger$}\texttt{taiji@mist.i.u-tokyo.ac.jp}
}
\date{}

\begin{document}

\maketitle

\begin{abstract}
   On-policy self-distillation (OPSD) of large language models (LLMs) has demonstrated 
   the ability to improve reasoning capabilities while preserving previously acquired knowledge. 
   Despite substantial empirical success, the dynamics of OPSD in continual reasoning remain incompletely understood. 
   Modeling LLM reasoning as search over a directed acyclic graph, we provide a unified theoretical analysis of both the dynamics of 
   post-training---OPSD and supervised fine-tuning (SFT) in continual learning---and the impact of pre-training 
   on subsequent performance.
   Our findings establish three key insights with an optimization guarantee: 
   (i) OPSD with hints from correct outputs enables continual learning without forgetting by sparse yet effective gradient descent updates induced by the hint structure. 
   (ii) SFT on correct reasoning paths can lead to catastrophic forgetting due to dense updates along the training paths, which overwrite the information previously acquired. 
   (iii) Diversity in pre-training is crucial for enabling a post-trained model to reach a correct output when a rollout starts from an intermediate state. 
   Our results, supported by theoretical analysis, show that reliable continual reasoning depends 
   on how post-training updates interact with the reasoning structure established during pre-training.


\end{abstract}

\section{Introduction}
\label{sec:introduction}
Post-training has substantially improved the task-solving capabilities of large language models
through supervised fine-tuning
\citep{chung2022scalinginstructionfinetunedlanguagemodels, luo2025wizardcoderempoweringcodelarge,
zhang2024raftadaptinglanguagemodel} and reinforcement fine-tuning
\citep{Guo_2025, jin2025searchr1trainingllmsreason, zeng-etal-2025-acecoder}.
A central distinction between common implementations lies in the source of training trajectories:
off-policy SFT learns from fixed demonstrations
\citep{zhao2023surveylargelanguagemodels, zhang2023instructiontuninglargelanguage},
whereas on-policy reinforcement learning (RL) uses reward feedback on trajectories sampled from
the current model \citep{wang2023aligninglargelanguagemodels, zhao2026unified}.
In the latter, the generation distribution determines which experiences are encountered during
training, while itself evolving through subsequent updates
\citep{kaufmann2025survey, li2026reviewreinforcementlearning}.
The distribution inherited from pre-training therefore influences which reasoning paths are
sampled: even models with the same initial task accuracy can distribute probability
differently across paths and risk missing some useful reasoning steps.
Using a Markovian model of reasoning, \citet{bu2025distributional} showed that RL fine-tuning
can favor high-probability paths while suppressing rare but useful ones.
For SFT, preserving existing capabilities is a central challenge: sequential fine-tuning
can cause catastrophic forgetting of earlier tasks and degrade general capabilities
\citep{wang2023trace, li2024revisitingcatastrophicforgetting}.
Efforts to mitigate this problem include learning successive tasks in orthogonal low-rank
subspaces \citep{wang2023orthogonalsubspacelearning} and applying sharpness-aware minimization
to favor flatter loss landscapes \citep{li2024revisitingcatastrophicforgetting}.
Furthermore, in code instruction tuning, \citet{biderman2024loralearnslessforgets} reported both
reduced forgetting and better preservation of output diversity with LoRA than with full fine-tuning.

Self-distillation (SD) has also been explored as a means of learning new tasks while preserving
previously acquired capabilities. In recent on-policy variants, the student generates a trajectory, and
the same model---conditioned on an expert demonstration or informative feedback unavailable
to the deployed student---acts as a teacher. By deriving supervision from the model's own
behavior, self-distillation can reduce forgetting during adaptation
\citep{yang2024selfdistillationbridgesdistributiongap, shenfeld2026sdft}.
They also provide token-level supervision on student rollouts, with reported improvements
in reasoning accuracy over off-policy SFT and training efficiency over GRPO
\citep{hubotter2026sdpo, zhao2026sdr}. These methods leverage the pre-trained model's in-context
learning abilities \citep{shenfeld2026sdft, hubotter2026sdpo}, although SD can underperform RL
in some settings \citep{yang2026selfdistilledrlvr, kim2026doesselfdistillationsometimesdegrade}.
Their reliance on student trajectories also makes the diversity of the pre-trained
policy relevant to learning.

Despite empirical successes, the dynamics of SD and their dependence
on the pre-trained policy remain only partially understood.
\citet{wang2026statisticallimitsselfimprovingagents}, the general analysis of self-improvement, 
established high-probability guarantees for true-risk improvement and finite-sample generalization under self-modification.
Existing analyses for self-distillation characterize specific aspects of post-training.
Using a Gaussian mixture model,
\citet{balasubramanian2026quantitativecharacterizationforgettingposttraining} showed that SFT
on new-task data alone drives the mixture weight associated with previously learned behavior
to zero. For autoregressive models, \citet{sang2026crispcompressedreasoningiterative} bounded
policy drift within each fixed-teacher window in terms of the distillation loss and the shift
induced by a conciseness instruction. 
However, these analyses directly assume the teacher distribution
rather than derive it from hint information. They consequently leave unresolved how hints---unique to 
self-distillation---shape learning dynamics, which is more difficult because hint-based analysis requires
observations of the teacher behavior and its effect on the evolving student policy through multi-step reasoning.

Our objective is, therefore, to explain the difference in forgetting between on-policy SD and
off-policy SFT, while considering their ability to learn new tasks. We model Chain-of-Thought (CoT) reasoning as
a problem of reaching a task-specific goal in a DAG. 
This model links the sparsity of hint information to localized updates and
retention, and provides a quantitative comparison with SFT. We evaluate this comparison
numerically in Section~\ref{sec:numerical-experiments}. 
Although existing studies suggest that pre-training is important for the successful SD, there are few theoretical analyses on its role in on-policy SD.
Hence, as a complementary theoretical discussion, 
we examine in Section~\ref{sec:pretraining-reliability} how limited diversity in the pre-training
can leave relevant transitions unlearned and thereby impair reasoning from intermediate states.

\paragraph{Summary of contributions}Our contributions are summarized as follows:
\begin{itemize}
   \item We model the continual learning scenario in Section~\ref{sec:problem-setup} with a graph-based approach, which is often used
   to explain LLM reasoning process such as CoT reasoning (\citet{abbe2024fartransformersreasonglobality, NEURIPS2025_a8f12348,kim2025metastable}).
   In particular, we introduce a path-conflict setting to analyze the interference between tasks.
   \item
   We prove the learnability of a single task under both SD and SFT in Section~\ref{sec:single-task} (Theorems~\ref{thm:sd-correct-rollout} and~\ref{thm:sft-single-task}, respectively).
   Notably, Lemma~\ref{lem:sd-gradient} in Appendix~\ref{app:sd-gradient} shows that SD with a hint in \Eqref{eq:path-level-hint-feedback} can realize a sparse update that preserves the performance of the pre-trained model.
   Furthermore, we show that SD can learn new tasks while preserving previously acquired capabilities, whereas SFT can forget them in Theorem~\ref{thm:sd-no-forgetting} and Theorem~\ref{thm:sft-forgetting}.
   These results are aligned with the empirical findings of \citet{shenfeld2026sdft}.
   \item We conduct numerical experiments in Section~\ref{sec:numerical-experiments} to support our theoretical bounds. The results confirm that
   tracing the hint-based sparse updates of SD and the overwriting properties of SFT is helpful to explain the observed learning dynamics.
   \item As a complementary theoretical analysis, we study the effect of pre-training diversity on reasoning from intermediate states in Section~\ref{sec:pretraining-reliability} (Proposition~\ref{prop:restart-negative}).
   We show that limited coverage in the pre-trained policy can leave relevant transitions unlearned even after SD succeeds from the original start.
\end{itemize}
These results explain how the sparsity of hint information translates into localized updates
and retention in continual reasoning. The complementary pre-training analysis gives a
theoretical implication for reasoning from intermediate states.

\subsection{Related work}
In the following, we briefly discuss related work on self-distillation and diversity in LLM reasoning.
Additional related work is discussed in Appendix~\ref{app:related-works}.

\paragraph{Self-distillation for LLMs.}
Recent methods differ in the information supplied to the self-teacher.
\citet{shenfeld2026sdft} introduced SDFT, conditioning the teacher on pre-existing expert demonstrations to supervise student-generated trajectories.
Their experiments demonstrated new-task acquisition with reduced forgetting relative to SFT.
\citet{hubotter2026sdpo} instead conditioned the SDPO teacher on feedback obtained after evaluating the student's own attempts, such as execution errors, improving reasoning accuracy and training efficiency.
\citet{zhao2026sdr} proposed OPSD, providing the teacher with reference answers or verified reasoning traces, and demonstrated improved mathematical reasoning with greater token efficiency than GRPO.
\citet{ye2026opcd} introduced OPCD, including a self-distillation setting, using knowledge extracted from past solution traces or optimized system prompts as teacher context.
Despite substantial empirical evidence, the learning dynamics of SD remain theoretically underexplored.
\paragraph{Diversity in LLM reasoning.}
\citet{wang2023selfconsistency} demonstrated that aggregating answers across diverse reasoning paths improves reasoning accuracy
with numerical experiments.
\citet{yao2025diversity} proposed diversity-aware policy optimization, which promotes token-level diversity selectively on correct responses.
Their experiments showed improvements in both solution diversity and performance on mathematical reasoning benchmarks.
Using a Markovian model, \citet{bu2025distributional} theoretically showed that post-training can suppress rare but useful reasoning paths 
and emphasize the role of exploration in preserving rare but useful reasoning paths already present in the pre-trained model.


\begin{figure}[t]
   \centering
   \includegraphics[width=1.\textwidth]{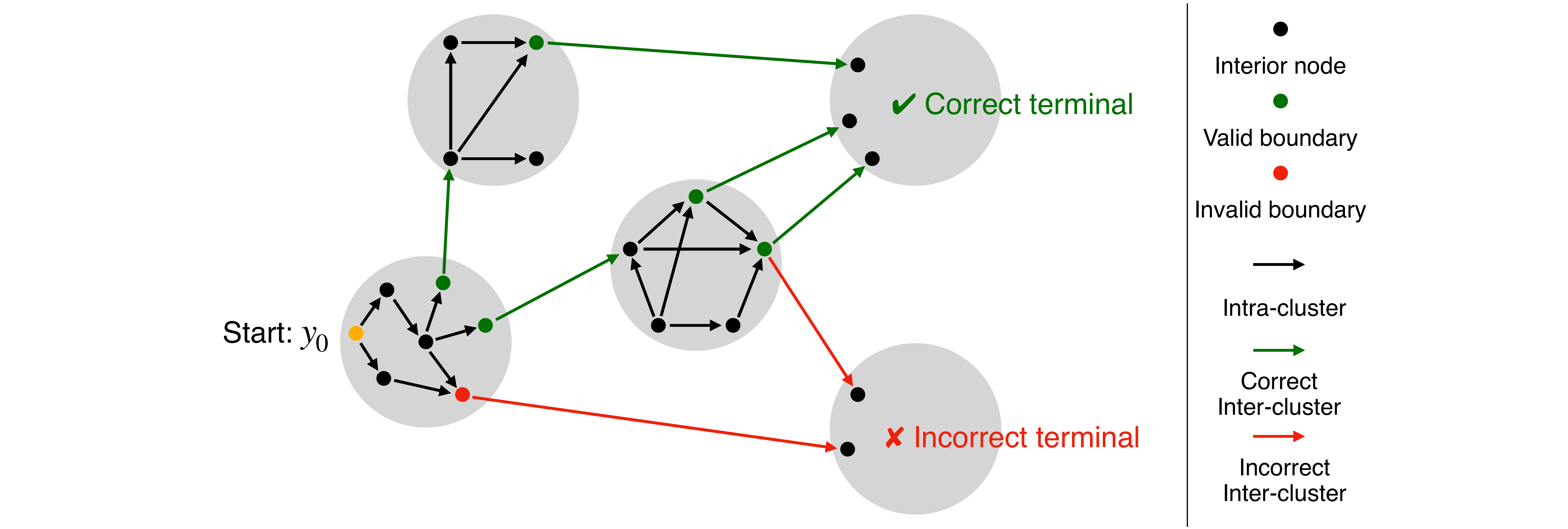}
   \caption{
      Example of the CoT graph we considered in Section~\ref{sec:problem-setup}. Each node represents a reasoning state,
      and edges correspond to reasoning transitions. Black edges indicate easy intra-cluster transitions,
      while red and green ones mean difficult inter-cluster transitions. 
      The goal of training is to improve difficult transitions and to reach the correct terminal.
      Therefore, the model is required to learn the inter-cluster transitions while preserving the diversity within clusters.
   }
   \label{fig:cot-graph}
\end{figure}
\section{Problem Setup}
\label{sec:problem-setup}

\subsection{Continual learning on CoT graph}
Following the reasoning-graph construction of \citet{kim2025metastable}, we model the LLM chain-of-thought process as Markov transitions on a directed acyclic graph (\textbf{DAG}).
Let the LLM CoT graph be $G = (V, E)$, where $V$ is the node set and $E$ is the directed edge set.
A model receives input $x \in \mathbb{R}^d$ and generates a CoT sequence $y = (y_0, \ldots, y_t, \ldots, y_T)$, where $y_t \in V$, $(y_t, y_{t + 1}) \in E$,
and $T$ denotes a rollout-dependent termination time.
Note that $y$ is node-simple since the graph is acyclic.
We consider a correspondence between these states and the graph nodes; that is, each node in the graph is regarded as a reasoning state $y_t$
like a logical sentence or a mathematical expression rather than a simple token.
Thus, the prediction goal is to reach a proper node or cluster through logical reasoning transitions.
The graph $G$ consists of $K$ clusters denoted by $C_1 \ldots C_K$,
and it is free from multiple edges. A cluster is a set of similar reasoning states like equivalent reformulations
obtained by straightforward manipulation. 
Hence, we consider the inter-cluster edges represent the essential and conceptual transitions,
while intra-cluster edges are something trivial.
For every node $u \in C_k$ and every cluster $C_{k'}$ with $k' \neq k$,
there is at most one node $v \in C_{k'}$ such that $(u,v) \in E$.
We separate the cluster nodes in $C_k$ into two types: one is the interior nodes denoted by $\mathring{C}_k$,
and the other is the boundary nodes denoted by $\partial C_k$
\begin{equation}
   \mathring{C}_k \mathrel{:=} \{u \in C_k: \forall v \in V, (u, v) \in E \Rightarrow v \in C_k\}, \quad
   \partial C_k \mathrel{:=} C_k \backslash \mathring{C}_k.
\end{equation}
For preserving the inner-cluster transition diversity, we assume that all destinations of inter-cluster edges are interior nodes.
\paragraph{Continual learning.}
We study continual learning of $L$ tasks $\mathcal{D}_{1}, \ldots, \mathcal{D}_{L}$.
The task $\mathcal{D}_{\ell}$ consists of a fixed input $x_\ell \in \left\{0, 1\right\}^d$,
a fixed CoT startpoint $y_{\ell, 0} \in \mathring{C}_{k_\ell}$, and a task-specific correct terminal cluster $\tilde{C}_{\ell} \in \left\{C_1, \ldots, C_K\right\}$.
We assume that $\tilde{C}_{\ell}$ has no outgoing inter-cluster transitions.
Furthermore, we assume that the tasks can be separated. More precisely, the matrix $(x_1, \ldots, x_L)$ is column full rank and the clusters $\tilde{C}_{1}, \ldots, \tilde{C}_{L}$ are disjoint.
A rollout for task $\mathcal{D}_{\ell}$ is defined to be correct if and only if $y_T \in \tilde{C}_{\ell}$.
\footnote{
We focus on reaching the terminal cluster.
For the subsequent search for a particular target node within it,
\citet[Lemma~B.9]{kim2025metastable} provide hitting-time bounds
under additional assumptions on within-cluster mixing and
stationary probabilities.
}
Then, we consider the following assumption.

\begin{assumption}[transition uniqueness and competition between tasks]
\label{ass:transition-compatibility}
   Each boundary node has at most one correct cluster transition for each task. 
   Whenever such a transition exists for multiple tasks 
   at the same boundary node, it is identical across any tasks that are not orthogonal in the input.
   Hence, competition between tasks can arise only through trivial interior transitions.
\end{assumption}

This assumption is strong, but seems reasonable since similar tasks share underlying reasoning properties, 
which we model as common inter-cluster transitions.
$\mathcal{P}_\ell$ is the set of correct paths for task $\mathcal{D}_\ell$, and
$E_\ell$ is the union of the inter-cluster transitions appearing in those correct paths.
In learning, the model is trained sequentially on the tasks $\mathcal{D}_{1}, \ldots, \mathcal{D}_{L}$.

\subsection{Pre-training and fine-tuning}
\paragraph{Pre-training requirements}For each task $\mathcal{D}_{\ell}$ and cluster $C_k$, define the set of boundary nodes that admit a correct inter-cluster transition by
\begin{align}
   \partial^{\star}_{\ell}C_k = \left\{u \in \partial C_k \mid \exists v \in V\backslash C_k, (u, v) \in E_{\ell}\right\}.
\end{align}
Then, we further define $M = \max_{k, \ell} \left|\partial^{\star}_{\ell}C_{k}\right|$.
We call the initial node or an entry through a valid inter-cluster transition a relevant start $s$ in $C_k$.
For a rollout initialized at $Y_0=s$ under task input $x_\ell$, define
\begin{align*}
   \tau_{\ell,k}^{s}
   :=\inf\{t\geq0:Y_t\in\partial C_k\},\quad
   U_{\ell,k}^{s}
   :=Y_{\tau_{\ell,k}^{s}}.
\end{align*}
Thus $U_{\ell,k}^{s}$ is the first boundary node reached after the rollout starts from the relevant start $s$.
For the model, we consider a softmax predictor:
\begin{align}
   \pi_{\mathbf{W}}(\cdot \mid u, x) = \text{softmax}(x^{\top} \mathbf{W} e_{u, \cdot}), \quad \mathbf{W} \in \mathbb{R}^{d \times \left|E\right|},
\end{align}
where $e_{u, \cdot}$ is the embedded concatenation matrix of one-hot vectors $e_{u, v}$ for all $v \text{ s.t. }(u, v) \in E$.
The following assumption specifies the required properties of the pre-trained parameter $\mathbf{W}^{(0)}$.

\begin{assumption}[Pre-trained diversity]
\label{ass:pretrained-diversity}
   For every task $\mathcal{D}_{\ell}$ and non-terminal cluster $C_k$ satisfying $\partial^{\star}_{\ell}C_k \neq \emptyset$, 
   the first boundary-hitting time after every relevant start $s$ in $C_k$ is almost surely finite.
   Uniformly over all such $s$, the pretrained model satisfies
   \begin{align}
      \mathbb{P}_{W^{(0)}}\left(U_{\ell, k}^s \notin \partial^{\star}_{\ell}C_{k} \mid Y_0=s,x_\ell\right)
      &= O(\epsilon_1),\label{eq:pretrained-boundary-error}\\
      \mathbb{P}_{W^{(0)}}\left(U_{\ell, k}^s = u \mid Y_0=s,x_\ell\right)
      &=\Omega(1/M),\label{eq:pretrained-boundary-coverage}
   \end{align}
   for every $u \in \partial^{\star}_{\ell}C_{k}$. 
   Moreover, $\pi_{W^{(0)}}(v\mid u,x_\ell) = \Theta(\epsilon_2)$ for every inter-cluster edge $(u,v)$ and every task input $x_\ell$.
   $\epsilon_1$ and $\epsilon_2$ are supposed to be sufficiently small; at least, they satisfy $KM\epsilon_1 = o(1)$ and $M\epsilon_2 = o(1)$.
\end{assumption}

The diversity conditions state that 
(i) the pre-trained model reaches valid boundaries with high probability, and 
(ii) every inter-cluster transition initially has small but nonzero probability.
In other words, these assumptions suggest that the trivial reasoning steps are sufficiently acquired,
but essential transitions are difficult and the model is required further training to learn.
Appendix~\ref{app:realizability} constructs a finite pre-trained parameter satisfying every condition of Assumption~\ref{ass:pretrained-diversity} on a family of layered DAGs.

\paragraph{Fine-tuning settings}
We use SD and SFT as on- and off-policy fine-tuning methods respectively. 
The detailed algorithms are described in Algorithm~\ref{alg:sequential-training}.
For each task $\mathcal{D}_{\ell}$, a teacher-forcing path $y^{*}=(y_0^{*},\ldots,y_T^{*})$ and a hint $c$ are constructed by sampling
\begin{align}
   \gamma \sim \operatorname{Unif}(\mathcal P_{\ell}).
   \label{eq:uniform-correct-path-sampling}
\end{align}
Across training iterations and across elements within each mini-batch, the teacher paths are sampled independently according to Equation~\ref{eq:uniform-correct-path-sampling}, conditional on the preceding training history.
Note that for SD, conditional on the same history, each teacher path used to construct the hint and its corresponding unhinted student rollout are sampled independently.
We define the teacher path to be $y^{*}=\gamma$, and the hint $c$ represents all cluster transitions in $\gamma$;
that is, the hint is $c = \left\{c_i\right\}_{i = 1}^I$,
where $c_i = \left\{(u_i, v_i) \text{ is traversed.}\right\}$. 
This hint construction reflects the human instruction that ignores trivial transitions and emphasizes the essential reasoning steps.
At each $\mathcal{D}_{\ell}$ training iteration, the model receives a minibatch 
of $\{x_{\ell}, c^{(b)}\}_{b = 1}^B$ or $\{x_{\ell}, y^{*(b)}\}_{b = 1}^B$ for SD or SFT respectively, and 
updates the parameter $\mathbf{W}$ by minimizing the loss function.
Note that such an external hint is consistent with \citet{shenfeld2026sdft} and \citet{liu2026selfdistilledpolicygradient}.
Since SD uses only hints about specific transitions rather than the more informative full paths used in SFT, pre-training is crucial for its success.
We use the same uniform distribution for SD and SFT to compare their performance under a common source of supervision without favoring any correct path.
Batch size $B$ is fixed for all tasks and iterations.
The mini-batch loss functions for SD and SFT are defined as follows respectively:
\begin{align}
   L_{\text{SD}}(\mathbf{W}; \mathcal{D}_{\ell}) &= \frac{1}{B}\sum_{b = 1}^B\sum_{t=0}^{T^{(b)}-1} \KL\left(\pi_\mathbf{W}(\cdot \mid y_t^{(b)}, x_{\ell}) \parallel \text{SG}(\pi_{\mathbf{W}}(\cdot \mid y_t^{(b)}, x_{\ell}, c^{(b)}))\right),\\
   L_{\text{SFT}}(\mathbf{W}; \mathcal{D}_{\ell}) &= -\frac{1}{B}\sum_{b = 1}^B\sum_{t=0}^{T^{(b)}-1} \log \pi_{\mathbf{W}}(y_{t+1}^{*(b)} \mid y_{t}^{*(b)}, x_{\ell}),
\end{align}
where \(\text{SG}\) denotes stop-gradient.
The SD loss sums the per-state KL divergences along each student rollout and averages these sums over the $B$ mini-batch elements, without normalizing by rollout length.
Both SD and SFT update the parameters using SGD as $\mathbf{W}^{(n + 1)} \to \mathbf{W}^{(n)} - \eta_j\nabla_{\mathbf{W}}L_{\mathrm{method}}(\mathbf{W}^{(n)}; \mathcal{D}_{\ell}),$
where $\mathrm{method}\in \{\text{SD, SFT}\}$ denotes the corresponding training method.

\section{Main results}
If the hint $c$ strictly indicates that the model should follow all cluster shift $(u_i, v_i)$ in $\gamma$,
then the teacher support would be strictly subset of the student one, and the KL divergence would diverge.
Avoiding this divergence, fix a smoothing parameter \(\epsilon_{\mathrm{rel}}\in(0,1)\) and use a relaxed hint $c$:
\begin{align}
   \chi_{\mathbf{W}}(c_i \mid y_{0:T}, x) \propto
   \begin{cases}
      1 & (\tau_i < T, y_{\tau_i + 1} = v_i),\\
      \epsilon_{\mathrm{rel}} & (\tau_i < T, y_{\tau_i + 1} \neq v_i),\\
      Z_{\mathbf{W}, i}(x) & (\tau_i \geq T),
   \end{cases}
   \label{eq:path-level-hint-feedback}
\end{align}
where $\tau_i = \inf\left\{s \in \{0,\ldots,T\} \mid y_s = u_i\right\}$, with the convention \(\inf\emptyset=\infty\), and $Z_{\mathbf{W}, i}(x) = \pi_{\mathbf{W}}(v_i \mid u_i, x) + \epsilon_{\mathrm{rel}}(1 - \pi_{\mathbf{W}}(v_i \mid u_i, x))$.
$\chi_{\mathbf{W}}$ is a likelihood factor for the relaxed hint $c_i$, and $Z_{\mathbf{W}, i}(x)$ is equivalent to the expected local feedback before the transition from $u_i$ is observed.
When $u_i$ is not visited, using this expected value as feedback allows us to leave the corresponding transition parameters
unchanged, since the current student provides no information about transition out of $u_i$.
This edge-level feedback enables the sparse activation, which identifies where mistakes happen and adjusts to the student's response distribution
(\citet{hubotter2026sdpo}).
From \eqref{eq:path-level-hint-feedback}, the teacher distribution is determined by Bayes' rule:
\begin{align}
   \pi_{\mathbf{W}}(y_{0:T} \mid c, x) \propto \pi_{\mathbf{W}}(y_{0:T} \mid x)\cdot\chi_{\mathbf{W}}(c \mid y_{0:T}, x).
   \label{eq:bayes-rule}
\end{align}
The step-level teacher distribution $\pi_{\mathbf{W}}(y_{t+1} \mid y_{t}, c, x)$ is derived in Appendix~\ref{app:sd-gradient}.
\paragraph{Sparse structure learning.}
We believe that hint-induced sparse learning mitigates catastrophic forgetting.
Indeed, as shown in Lemma~\ref{lem:sd-gradient} in Appendix~\ref{app:sd-gradient},
the hint in \Eqref{eq:path-level-hint-feedback} yields a gradient that
updates only the parameters associated with outgoing edges from the hint-source nodes.
Consequently, SD preserves the interior transition diversity, which is crucial for
continual learning, whereas SFT updates the parameters at all source nodes along
the teacher path.


\subsection{Single-task learnability}
\label{sec:single-task}
\paragraph{SD as on-policy training} 
Before analyzing the continual learning performance, we first consider their ability to learn a single task.
Theorems~\ref{thm:sd-correct-rollout} and~\ref{thm:sft-single-task} suggest that SD and SFT are both effective for solving one task.
It is important to note that there is little difference between the two methods
in learning a single task.

\begin{restatable}[Single-task learning under SD]{theorem}{sdrollouttheorem}
\label{thm:sd-correct-rollout}
Fix a task $\mathcal D_\ell$ and let the stepsize $\eta_j=\Theta(j^{-\nu})$ for some
$\nu\in[0,1/2)$. Suppose Assumptions~\ref{ass:transition-compatibility} and~\ref{ass:pretrained-diversity} hold and $\epsilon_1$ is
sufficiently small. 
Define
\begin{gather}
   \xi_{\ell,n}^{(K)}(\delta)
   \mathrel{:=}
   \left[
      \epsilon_2+
      \epsilon_2^2\left(\log\frac{1}{\epsilon_{\mathrm{rel}}}\right)
      \left[
      \frac{e^{-\Theta(K\log\frac{1}{\epsilon_2})}n^{1 - \nu}}{M^K|\mathcal{P}_{\ell}|}
      -\sqrt{
         \frac{\log(KM/\delta)}{B}
      }n^{1/2 - \nu}
   \right]^+
   \right]^{-1}.
   \label{eq:theorem-one-xi}
\end{gather}
Then, with probability at least $1-\delta$,
\begin{align}
   \pi_{\mathbf W^{(n)}}\left(
      y_T\in\tilde C_\ell\mid y_{\ell,0},x_\ell
   \right)
   \geq
   1
   -O\!\left(
      KM\epsilon_1
   \right)
   -O\!\left(
      KM\xi_{\ell,n}^{(K)}(\delta)
   \right).
   \label{eq:theorem-one-correct-rollout}
\end{align}
\end{restatable}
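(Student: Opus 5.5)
The plan is to reduce the rollout success probability to a product of per-cluster events, and then to show that the correct inter-cluster logits grow under SD along the lines of a stochastic-approximation argument.

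\emph{Step 1 (decomposition).} Any correct rollout from $y_{\ell,0}$ crosses at most $K$ clusters. Within each cluster $C_k$ it first reaches a boundary node $U_{\ell,k}^s$ and then takes an inter-cluster edge. Lemma~\ref{lem:sd-gradient} says the SD gradient touches only the columns of $\mathbf W$ for edges leaving hint-source nodes $u_i\in\partial^\star_\ell C_k$. So the interior transition probabilities stay at their pre-trained values, and by \eqref{eq:pretrained-boundary-error} each cluster visit exits through $\partial^\star_\ell C_k$ except with probability $O(\epsilon_1)$.

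For a boundary node $u\in\partial^\star_\ell C_k$, Assumption~\ref{ass:transition-compatibility} gives a unique correct outgoing edge $(u,v_u)$. A union bound over the at most $K$ clusters and $M$ such nodes per cluster then gives
\[
\pi_{\mathbf W^{(n)}}(y_T\notin\tilde C_\ell)\le O(KM\epsilon_1)+\sum_{k}\sum_{u\in\partial^\star_\ell C_k}\bigl(1-\pi_{\mathbf W^{(n)}}(v_u\mid u,x_\ell)\bigr).
\]
The first term is already in the required form, so it remains to show $1-\pi_{\mathbf W^{(n)}}(v_u\mid u,x_\ell)\le O(\xi^{(K)}_{\ell,n}(\delta))$ for every such $u$.

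\emph{Step 2 (drift of a single logit).} Write $\theta_u^{(n)}$ for the margin logit $x_\ell^\top\mathbf W e_{u,v_u}$ relative to its competitors. From the Bayes teacher \eqref{eq:bayes-rule} and the step-level teacher of Appendix~\ref{app:sd-gradient}, I compute the expected update.

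When $u$ is visited and lies on the sampled hint path $\gamma$, the teacher reweights $v_u$ by roughly $1/\epsilon_{\mathrm{rel}}$. The KL gradient then pushes the margin up by about $\eta_j\,\pi(1-\pi)\log(1/\epsilon_{\mathrm{rel}})$. When $u$ is not visited, the normalizer $Z_{\mathbf W,i}$ makes the contribution vanish.

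This yields an expected increment of
\[
\eta_j\, p_{\mathrm{visit}}(u)\, q_\gamma(u)\,\log\tfrac1{\epsilon_{\mathrm{rel}}}\,(1-\pi_u)\pi_u .
\]
Here $p_{\mathrm{visit}}(u)$ is the probability that the student reaches $u$. Under the pre-trained model and \eqref{eq:pretrained-boundary-coverage} it is at least $(\Omega(1/M)\cdot\Theta(\epsilon_2))^{K}$, which gives $e^{-\Theta(K\log(1/\epsilon_2))}M^{-K}$. This lower bound persists during training: visiting probabilities along correct routes only increase, since every $\pi_u$ is monotone up to noise. The factor $q_\gamma(u)\ge 1/|\mathcal P_\ell|$ is the probability that the uniformly sampled path passes through $u$. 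Finally, $\pi_u\ge\Theta(\epsilon_2)$ initially, and this accounts for one of the $\epsilon_2$ factors in \eqref{eq:theorem-one-xi}.

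\emph{Step 3 (concentration and conversion).} Summing the drift over $j\le n$ with $\eta_j=\Theta(j^{-\nu})$ gives $\sum_j\eta_j=\Theta(n^{1-\nu})$. The noise is a martingale with increments bounded by $O(\eta_j/\sqrt B)$ after minibatch averaging, using bounded gradients of the softmax KL. By Azuma, each of the at most $KM$ nodes deviates by at most $\sqrt{\log(KM/\delta)/B}\,n^{1/2-\nu}$, uniformly with probability $1-\delta$; this needs $\nu<1/2$.

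To convert logit growth into the stated $\xi$, I use a comparison ODE for $r=1-\pi_u$. The dynamics $\dot\theta\gtrsim c\,\pi(1-\pi)$ with $\pi\ge\epsilon_2$ give $1/r\gtrsim 1/\epsilon_2+\epsilon_2^{2}\log(1/\epsilon_{\mathrm{rel}})\cdot[\text{net drift}]^+$, or, stated equivalently, $r\le\xi$. The $\epsilon_2^2$ factor arises from bounding $\pi$ by $\epsilon_2$ in the worst phase, together with the Lipschitz relation between logit and probability. Combining this with Step 1 gives \eqref{eq:theorem-one-correct-rollout}.

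The main obstacle is Step 2's claim that the visit probability stays bounded below throughout training. This requires controlling the coupling between nodes: sparse updates at one boundary node could in principle lower reach probabilities elsewhere, and the noise could also transiently decrease logits. I would handle this with an induction on a good event, on which all correct margins stay above their initial values minus the martingale deviation. Sparsity from Lemma~\ref{lem:sd-gradient} plus transition uniqueness ensure that updates at $u$ only move mass toward $v_u$, which keeps the induction closed.
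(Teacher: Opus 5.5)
Your outer structure matches the paper's. The Step~1 union bound is its rollout-accuracy lemma. Interior-kernel preservation follows from Lemma~\ref{lem:sd-gradient}. The update-opportunity rate $\underline\rho_{\ell,K,M}\asymp (c/M)^K(c\epsilon_2)^{K-1}/|\mathcal P_\ell|$ comes from a shortest cluster route, and there is a union bound over at most $KM$ correct edges. The gap is in Step~3 and in how you handle the ``main obstacle.'' You apply Azuma to the logit/margin process, assuming noise can transiently decrease logits, and close the coverage bound with a good event on which margins stay above their initial values minus the martingale deviation. As written, this cannot close. The deviation you obtain is of order $\sqrt{\log(KM/\delta)/B}\,n^{1/2-\nu}$, which diverges for $\nu<1/2$. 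On that event you could only guarantee $\pi_u\gtrsim\epsilon_2e^{-\Theta(n^{1/2-\nu})}$, and the lower bound $\underline\rho_{\ell,K,M}$ that feeds your drift collapses. Separately, a logit-level martingale bound does not yield $\xi^{(K)}_{\ell,n}(\delta)$. The logit drift is state-dependent: it is proportional to $\pi_u(1-\pi_u)$, which vanishes as $\pi_u\to1$. So $\underline\rho\, n^{1-\nu}-\sqrt{\cdot}\,n^{1/2-\nu}$ is not a net logit displacement, and subtracting a logit-scale fluctuation from an odds-scale drift inside your comparison ODE is unjustified.

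The missing idea is that there is no adverse noise. By Lemma~\ref{lem:sd-gradient}, the update at a hint source $u_i$ is $\mathbf W e_{u_i,v}\leftarrow\mathbf W e_{u_i,v}+a_{i,j}(\mathbf 1\{v=v_i\}-p_{u_i,v})x_\ell$ with $a_{i,j}=\frac{\eta_j}{B}\log(1/\epsilon_{\mathrm{rel}})\,\mathfrak n_i^{(j)}p_i^{(j-1)}\ge0$. Hence each correct-minus-incorrect logit gap at $u_i$ grows by $a_{i,j}\|x_\ell\|_2^2(1-p_i+p_{u_i,v})\ge0$ on \emph{every} history. Randomness enters only through $\mathfrak n_i^{(j)}=\sum_b I_i^{(b,j)}$, where $I_i^{(b,j)}$ indicates that rollout $b$ visits $u_i$ and its hint contains $(u_i,v_i)$. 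Your closing sentence essentially says this, and it makes the good event unnecessary. The argument then runs in three steps.

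(a) Correct-edge probabilities never fall below $c_{\mathrm{int}}\epsilon_2$, and the first-boundary law is exactly pretrained. Hence $\mathbb E[I_i^{(b,j)}\mid\mathcal F_{j-1}]\ge\underline\rho_{\ell,K,M}$ holds deterministically.

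(b) Since $r:=(1-p_i)/p_i$ is nonincreasing, $p_i(1-p_i)\ge r(p_i^{(0)})^2$. Together with $e^t\ge1+t$ this gives the deterministic odds recursion $1/r^{(j)}\ge1/r^{(j-1)}+\frac{\eta_j}{B}\|x_\ell\|_2^2(p_i^{(0)})^2\log(1/\epsilon_{\mathrm{rel}})\,\mathfrak n_i^{(j)}$. This is where the factor $\epsilon_2^2$ in $\xi$ comes from.

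(c) Apply conditional Hoeffding (Azuma) to the bounded indicators $I_i^{(b,j)}$, not to logits, and union-bound over at most $KM$ edges. This gives $\frac1B\sum_j\eta_j\mathfrak n_i^{(j)}\ge[\underline\rho_{\ell,K,M}\sum_j\eta_j-(\log(KM/\delta)\sum_j\eta_j^2/(2B))^{1/2}]^+$.

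Telescoping (b) and inserting (c) gives exactly the bracket in $\xi^{(K)}_{\ell,n}(\delta)$.

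Two smaller fixes. First, the failure union bound in Step~1 should count at most $1+N_\ell\le1+KM$ segments via node simplicity. The cluster-level graph may have cycles, so ``at most $K$ clusters'' is unjustified, although the $O(KM\epsilon_1)$ conclusion survives. Second, $q_\gamma(u)$ should be $\mathbb P((u,v_u)\in\gamma)\ge1/|\mathcal P_\ell|$, since a correct path may pass through $u$ without taking its inter-cluster edge.
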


Theorem~\ref{thm:sd-correct-rollout} says that SD can learn a single task with a success probability of $1 - O(KM\epsilon_1)$.
The error term arises from visits to invalid boundaries, and in this setting,
we reduce this error by assuming $\epsilon_1$ is sufficiently small.
The error $O(KM\epsilon_1)$ reflects a trade-off between single-task accuracy and the ability
to retain previously acquired knowledge during continual learning.
Preserving the student's pre-trained diversity also leaves some of its initial errors intact.
On the other hand, this term might be improved by providing hints that
discourage the model from proceeding in an incorrect direction.
Indeed, \citet{zhu2025negative} show that penalizing incorrect responses alone can improve reasoning performance,
and their gradient analysis indicates that these updates suppress incorrect generations and redistribute probability mass
toward plausible alternatives favored by the model's prior.
During inference, \citet{yao2023tot} use negative evaluations of intermediate reasoning states to prune unpromising branches
in Tree of Thoughts, with ablation results showing that removing this pruning degrades search performance.
Theorem~\ref{thm:sd-correct-rollout} further implies that the strength of the hint controls
the convergence rate of the student success probability. 
In fact, decreasing $\epsilon_{\mathrm{rel}}$ reduces the bound in
Equation~\ref{eq:theorem-one-xi} without increasing the number of iterations $n$.
Thus, the theorem effectively captures how hint strength affects self-distillation dynamics.

\paragraph{SFT as off-policy learning} Theorem~\ref{thm:sft-single-task} is a counterpart of Theorem~\ref{thm:sd-correct-rollout} for SFT. SFT also learns a single task with high probability,
and updates the parameters at all source nodes along the teacher. It is a main difference from SD,
which only focuses on the source nodes related to the hint.

\begin{restatable}[Single-task learning under SFT]{theorem}{sftlearningtheorem}
\label{thm:sft-single-task}
Fix $\delta\in(0,1)$. Suppose the stepsize $\eta_j=\Theta(j^{-\nu})$ for some $\nu\in(0,1/2)$ and
$\mathbf W_{\mathrm{SFT}}^{(0)}=\mathbf W^{(0)}$.
Then, with probability at least $1-\delta$,
\begin{align}
   \min_{u\in\mathcal U_\ell}
   \pi_{\mathbf W_{\mathrm{SFT}}^{(n)}}
   (\mathcal A_\ell(u)\mid u,x_\ell)
   \geq
   1-\tilde{O}\!\left(|\mathcal P_\ell|\left[
   \frac{
      1
   }{
      n^{1-\nu}
   }
   +
   \frac{
      \log(e/\delta)
   }
   {
      Bn^{\nu}
   }
   \right]\right).
   \label{eq:theorem-two-final-phase}
\end{align}
Here $\mathcal U_\ell$ is the set of nodes appearing in the correct paths except for the terminal,
and $\mathcal A_\ell(u)$ is the set of their correct successors from $u$.
$\tilde{O}(\cdot)$ hides logarithmic factors in $n$ and $|\mathcal P_\ell|$.
\end{restatable}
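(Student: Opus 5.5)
The plan is to reduce the statement to a per-node analysis of SGD on a softmax cross-entropy objective. For a single task $\mathcal D_\ell$ the input $x_\ell$ is fixed, so every row block $x_\ell^\top\mathbf W e_{u,\cdot}$ acts as a free logit vector $z_u\in\mathbb R^{\deg(u)}$. Under the parametrization, the gradient of $L_{\mathrm{SFT}}$ with respect to $\mathbf W$ is $x_\ell$ times the logit gradient. The SFT loss is a sum over the edges $(y_t^*,y_{t+1}^*)$ of the teacher path. Because the graph has no multiple edges and the paths are node-simple, the logits at a node $u\in\mathcal U_\ell$ receive the gradient $-\,\mathbb 1\{u\in\gamma\}\,(e_{\mathrm{succ}_\gamma(u)}-\pi_{z_u})$. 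Writing $z_u^{(n)}$ for the induced logits, the dynamics become
\begin{equation*}
z_u^{(n+1)} = z_u^{(n)} + \eta_n\|x_\ell\|^2\,\frac{1}{B}\sum_{b=1}^B \mathbb 1\{u\in\gamma^{(b)}\}\bigl(e_{\mathrm{succ}_{\gamma^{(b)}}(u)}-\pi_{z_u^{(n)}}\bigr).
\end{equation*}
These are decoupled, independent-noise SGD recursions, one per node. For each $u\in\mathcal U_\ell$ I would define $p_u:=\mathbb P_{\gamma\sim\mathrm{Unif}(\mathcal P_\ell)}(u\in\gamma)\ge 1/|\mathcal P_\ell|$. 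The target quantity is the mass on $\mathcal A_\ell(u)$, so it suffices to control the wrong mass $w_u^{(n)}:=1-\pi_{z_u^{(n)}}(\mathcal A_\ell(u))$.

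The first step is the expected drift of the wrong mass. Every teacher successor lies in $\mathcal A_\ell(u)$, so the drift of each wrong logit $z_{u,v}$ with $v\notin\mathcal A_\ell(u)$ is $-\eta_n\|x_\ell\|^2 p_u\,\pi_v$. Meanwhile the logits in $\mathcal A_\ell(u)$ gain $\eta_n\|x_\ell\|^2(q_{u,a}-p_u\pi_a)$, where $q_{u,a}$ is the frequency with which $a$ is the successor. A cleaner potential is the log-ratio
\begin{equation*}
\Phi_u:=\log\sum_{a\in\mathcal A_\ell(u)}e^{z_{u,a}}-\max_{v\notin\mathcal A_\ell(u)}z_{u,v}.
\end{equation*}
In expectation $\Phi_u$ increases at rate $\gtrsim \eta_n p_u(1-\pi(\mathcal A_\ell(u)))$. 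This has the logistic form: for $w$ small we get $\mathbb E[\Delta\Phi]\gtrsim\eta_n p_u w\approx\eta_n p_u e^{-\Phi}$. Integrating gives $e^{\Phi_u^{(n)}}\gtrsim e^{\Phi_u^{(0)}}+p_u\sum_{j\le n}\eta_j$, and $\sum_{j\le n}\eta_j=\Theta(n^{1-\nu})$. That produces the deterministic term $w_u^{(n)}\lesssim |\mathcal P_\ell|\log(\cdot)/n^{1-\nu}$, with the logarithm absorbing the initial $\Theta(\epsilon_2)$-type disadvantage and the count of wrong successors.

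The second step is to handle the noise. The increments are bounded (each minibatch gradient has norm $\le 2$), so the martingale part after $n$ steps has conditional variance $\sum_j\eta_j^2/B$. I would apply Freedman's inequality, union-bounded over $u\in\mathcal U_\ell$ and over dyadic epochs; the union bound contributes logarithmic factors hidden in $\tilde O$. To get the rate $\log(e/\delta)/(Bn^\nu)$ rather than a crude $\sqrt{\cdot}$ rate, the approach is to use the self-bounding structure: the noise variance at time $j$ is proportional to $\pi(\cdot)$ mass that is itself shrinking, and the last-step fluctuation scale is $\eta_n/B\sim n^{-\nu}/B$. Concretely, I would analyze the tail window $[n/2,n]$. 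In that window, $\sum\eta_j\sim n^{1-\nu}$ drives the drift, and the fluctuation of $\Phi$ is a variance-weighted sum bounded by $\eta_n\log(1/\delta)/B$ after multiplication by $w$. Converting back through $w\le e^{-\Phi}$ then gives the second term, with factor $1/p_u\le|\mathcal P_\ell|$.

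The main obstacle is this high-probability control with the sharp $1/(Bn^\nu)$ dependence, because the drift is nonlinear (logistic) and the noise is state-dependent. The first thing I would try is a stopping-time argument. Define the first time $\Phi_u$ drops below its running lower envelope $\log(c\,p_u\sum_{i\le j}\eta_i)$, and use a Freedman bound on the stopped martingale of $e^{-\Phi_u}$, which is a supermartingale up to second-order $O(\eta_j^2/B)$ terms. Summing those second-order terms against the shrinking $w$ yields $\sum_j\eta_j^2 w_j/B$. On the window, this sum compared with the drift $\sum_j\eta_j w_j p_u$ gives the $\eta_n/(Bp_u)$ correction. Finally, taking the minimum over $u\in\mathcal U_\ell$ with a union bound gives the statement of Theorem~\ref{thm:sft-single-task}.
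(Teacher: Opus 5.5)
Your reduction to per-node logit recursions is sound. The noise at different nodes is correlated through the shared sampled path rather than independent, but a union bound over $\mathcal U_\ell$ does not need independence. At a node with a single correct successor, your logistic argument for $\Phi_u$ also goes through.

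The gap is the drift claim $\mathbb E[\Delta\Phi_u]\gtrsim\eta_j p_u w_u$ at nodes with $|\mathcal A_\ell(u)|\ge 2$. This is the generic case in this model, for example a start routing to several valid boundaries. Write $r_u:=r_{\ell,u}$ for the teacher's successor frequencies, $\tilde\pi_u$ for the conditional law of $\pi_{z_u}$ on $\mathcal A_\ell(u)$, and $v^\star$ for the maximizing wrong successor. To first order,
\[
   \mathbb E[\Delta\Phi_u\mid\mathcal F_{j-1}]
   \approx
   \eta_j\|x_\ell\|_2^2\,p_u
   \bigl[
      \langle\tilde\pi_u,r_u-\tilde\pi_u\rangle
      +w_u\|\tilde\pi_u\|_2^2
      +\pi_{z_u}(v^\star)
   \bigr],
\]
and the first term can be of order $-1$ when $\tilde\pi_u$ is misaligned with $r_u$. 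Take $z_{a_1}=10$, $z_{a_2}=z_v=0$ and $r_u=(0.1,0.9)$. The dominant correct logit $z_{a_1}$ then falls at rate about $0.9$ (in units of $\eta_j\|x_\ell\|_2^2p_u$) while $z_v$ barely moves. So $\Phi_u$ decreases, and $w_u$ grows exponentially in the initial misalignment until $z_{a_2}$ catches up. Nothing excludes this: neither the theorem nor the paper's proof uses any alignment or coverage property of the initialization.

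Even after realignment, each visit samples a single correct successor and perturbs $\tilde\pi_u$ by $O(\eta_j)$. Hence $\Phi_u$ never becomes a closed one-dimensional logistic process. Your stopping-time/Freedman analysis of $e^{-\Phi_u}$ therefore carries a drift term driven by $\tilde\pi_u-r_u$ that nothing in the proposal controls.

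The missing ingredient is a potential that controls the misalignment and the wrong mass together. The natural choice is the per-node cross-entropy $-\sum_{a}r_u(a)\log\pi_{z_u}(a)=\KL(r_u\,\|\,\tilde\pi_u)-\log\pi_{z_u}(\mathcal A_\ell(u))+H(r_u)$, with $H$ the entropy. It is convex in $z_u$, and its gradient times $-\eta_j\|x_\ell\|_2^2p_u$ is the conditional mean of your update. This is the paper's $F_\ell$, weighted by $\omega_{\ell,u}$ and summed over $u$.

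Because the infimum of $F_\ell$ is not attained, the paper proceeds as follows:
\begin{enumerate}
\item It compares against a finite $\mathbf W_R$ with $F_\ell(\mathbf W_R)\le C_0e^{-R}$ and $\|\mathbf W_R-\mathbf W^{(0)}\|_{\mathrm F}^2=O((1+R+\log|\mathcal P_\ell|)^2)$.
\item It applies a high-probability last-iterate bound for smooth convex SGD with sub-Gaussian noise of variance $O(1/B)$. This is done after a deterministic transient $j_0$ beyond which $\eta_j\le1/(2L_\ell)$; your small-step logistic approximation needs the same transient.
\item It sets $R=\log(e+n|\mathcal P_\ell|)$.
\end{enumerate}
The $\log(e/\delta)/(Bn^{\nu})$ term is then just the noise floor $\sigma^2\sum_j\eta_j^2/\sum_{s\ge j}\eta_s$. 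Finally, $1-P_{\ell,u}\le-\log P_{\ell,u}\le|\mathcal P_\ell|F_\ell$ covers every $u\in\mathcal U_\ell$ on a single event. You could keep your node-wise structure and run this convex argument per node with a union bound, but a convex cross-entropy potential, not $\Phi_u$, has to carry the analysis.
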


\begin{proof}[Sketch of Proof]
   Note that the optimal parameter $\mathbf{W}_{\text{opt}}$ is not finite because a softmax predictor with a finite parameter
   cannot assign probability one to any outcome. Thus, we analyze the convergence in two steps:
   (1) Set parameter $\mathbf{W}_R \to \mathbf{W}_{\mathrm{opt}} (R\to \infty)$, and use it as a reference.
   (2) Construct the potential function to measure 
   the difference between the current parameter, the reference parameter and the optimal one. 
   A detailed analysis is provided in Appendix~\ref{app:sft-learning}.
\end{proof}
Theorem~\ref{thm:sft-single-task} shows that SFT learns to select correct successors from states on correct paths.
Whereas SD carries over some of the pre-trained model's routing errors, Theorem~\ref{thm:sft-single-task} shows that
SFT can learn to generate correct paths with probability approaching one.
This difference reflects the scope of the updates. SFT supervises all transitions along the teacher paths,
including intra-cluster transitions, and
can therefore correct the pre-trained routing errors reflected in the residual term of the SD lower bound.
In the next section, we use this overwriting property to explain why SFT can forget previously learned tasks.

\subsection{Continual learning and catastrophic forgetting}
\paragraph{Sustainability of SD}
In this section, we study the continual learning performance of SD and SFT.
A key feature of SD in our framework is that hints localize parameter
updates to a small subset of graph nodes while SFT fails and overwrites the parameters of the student model.
Intuitively, such localized updates may help preserve existing knowledge
and mitigate catastrophic forgetting.
Relatedly, \citet{wang2026entropypreservingsupervisedfinetuningadaptive}
suggest that self-distillation can mitigate knowledge forgetting by
selectively encouraging exploration at reasoning tokens while maintaining
stable targets for factual tokens.
The following theorem supports our intuition.
\begin{restatable}[No forgetting under SD]{theorem}{sdretentiontheorem}
\label{thm:sd-no-forgetting}
Let $\mathbf W_{\mathrm{SD}}^{[s]}$ denote the parameter after sequentially
training $\mathcal D_1,\ldots,\mathcal D_s$ by SD. Fix
$\ell\in\{1,\ldots,L\}$, and let $n_\ell$ be the number of SD updates used for
$\mathcal D_\ell$. Suppose Assumptions~\ref{ass:transition-compatibility} and~\ref{ass:pretrained-diversity} hold for all tasks,
and the stepsize $\eta_j=\Theta(j^{-\nu})$ for some $\nu\in[0,1/2)$.
Then, after any numbers of subsequent SD updates on
$\mathcal D_{\ell+1},\ldots,\mathcal D_L$, with probability at least
$1-\delta$,
\begin{align}
   \pi_{\mathbf W_{\mathrm{SD}}^{[L]}}\left(
      y_T\in\tilde C_\ell\mid y_{\ell,0},x_\ell
   \right)
   \geq
   1
   -O\!\left(
      KM\epsilon_1
   \right)
   -O\!\left(
      KM\xi_{\ell,n_\ell}^{(K)}(\delta)
   \right).
   \label{eq:theorem-three-no-forgetting}
\end{align}
Moreover, every correct transition learned for task $\mathcal D_\ell$ satisfies
\begin{align}
   \pi_{\mathbf W_{\mathrm{SD}}^{[L]}}(v\mid u,x_\ell)
   \geq
   \pi_{\mathbf W_{\mathrm{SD}}^{[\ell]}}(v\mid u,x_\ell),
   \qquad (u,v)\in E_\ell.\label{eq:theorem-three-edgewise-monotonicity}
\end{align}
\end{restatable}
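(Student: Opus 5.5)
We will prove both claims of Theorem~\ref{thm:sd-no-forgetting} by tracking, edge by edge, how later SD updates act on the parameters that govern task $\mathcal D_\ell$. The main tool is the sparse-gradient structure of Lemma~\ref{lem:sd-gradient}. Under the hint in \Eqref{eq:path-level-hint-feedback}, the SD gradient for task $\mathcal D_{\ell'}$ is nonzero only in the columns of $\mathbf W$ indexed by outgoing edges of hint-source nodes $u_i$, that is, boundary nodes in $\partial^\star_{\ell'}C_k$. It has the outer-product form $x_{\ell'}\,g^\top$, where $g$ is supported on the edges leaving $u_i$. From Bayes' rule~\eqref{eq:bayes-rule}, the teacher at $u_i$ reweights $v_i$ by $1$ and every other successor by $\epsilon_{\mathrm{rel}}$. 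The KL gradient therefore pushes the logit of $(u_i,v_i)$ up and all sibling logits down by the same proportional amount. Interior nodes are never touched. The first step is to restate this sign structure explicitly: a positive coefficient on the hinted edge and nonpositive coefficients on its siblings, in the direction $x_{\ell'}$.

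The second step handles the logits seen under input $x_\ell$. An update for task $\ell'$ changes $x_\ell^\top\mathbf W e_{u,v}$ by $-\eta\, (x_\ell^\top x_{\ell'})\, g_{u,v}$. We split into two cases.

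\emph{Case 1: $x_\ell^\top x_{\ell'}=0$.} The update is invisible to task $\ell$.

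\emph{Case 2: the inputs are not orthogonal.} By Assumption~\ref{ass:transition-compatibility}, any boundary node $u$ carrying a correct transition for both tasks has the same correct transition $(u,v)$. The update therefore raises the logit of that same edge and lowers its siblings, which can only increase $\pi(v\mid u,x_\ell)$. This requires $x_\ell^\top x_{\ell'}>0$, which holds automatically since the inputs lie in $\{0,1\}^d$.

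Boundary nodes that are hint sources for $\ell'$ but not in $\partial^\star_\ell C_k$ do not affect $\pi(v\mid u,x_\ell)$ for $(u,v)\in E_\ell$. They can only change which non-$\partial^\star_\ell$ boundary moves occur. Since interior parameters are never updated, the boundary-hitting distribution $U^s_{\ell,k}$ under $x_\ell$ is the same as under $\mathbf W^{(0)}$, so \eqref{eq:pretrained-boundary-error} and \eqref{eq:pretrained-boundary-coverage} still hold. One subtlety: for a boundary node $u\in\partial^\star_\ell C_k$ whose other outgoing edges are correct for an orthogonal-input task, those edges may be raised only in the direction $x_{\ell'}\perp x_\ell$. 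Assumption~\ref{ass:transition-compatibility} is exactly what rules out the non-orthogonal conflicting case. Induction over the SGD steps on $\mathcal D_{\ell+1},\dots,\mathcal D_L$ then gives the edgewise monotonicity~\eqref{eq:theorem-three-edgewise-monotonicity}, deterministically, on every sample path.

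The third step transfers the success bound. We reuse the decomposition from the proof of Theorem~\ref{thm:sd-correct-rollout}. Success fails only if, in one of at most $K$ traversed clusters, either (a) the rollout hits a boundary outside $\partial^\star_\ell C_k$, contributing $O(\epsilon_1)$ per cluster because interior parameters are unchanged, or (b) at a correct boundary $u$ it fails to take the correct edge, with probability $1-\pi(v\mid u,x_\ell)$. After the union bound over at most $M$ boundaries per cluster, this gives the $O(KM\epsilon_1)+O(KM\max(1-\pi))$ form. Monotonicity shows that each $1-\pi_{\mathbf W^{[L]}}(v\mid u,x_\ell)$ is at most its value at $\mathbf W^{[\ell]}$. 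Here we need the edgewise form of the bound in Theorem~\ref{thm:sd-correct-rollout}, namely $1-\pi_{\mathbf W^{[\ell]}}(v\mid u,x_\ell)\le O(\xi^{(K)}_{\ell,n_\ell}(\delta))$ with probability $1-\delta$. We also need that training on $\mathcal D_\ell$ started from $\mathbf W^{[\ell-1]}$ rather than $\mathbf W^{(0)}$ is harmless. By the same orthogonality and compatibility argument, the earlier tasks only left task $\ell$'s relevant logits at least as favorable, and the interior distribution untouched. As a result, the hypotheses of Assumption~\ref{ass:pretrained-diversity} still hold at $\mathbf W^{[\ell-1]}$, except that $\pi(v\mid u,x_\ell)$ may exceed $\Theta(\epsilon_2)$.

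\textbf{Main obstacle.} The hardest step is this last point: showing that the single-task convergence bound, which was derived for the pretrained initialization, remains valid from the warm start $\mathbf W^{[\ell-1]}$. That bound uses a lower bound on hint activation (the $e^{-\Theta(K\log(1/\epsilon_2))}/(M^K|\mathcal P_\ell|)$ term) as well as upper bounds on $\pi$. We plan to handle it with a comparison argument: larger initial correct-edge probabilities only increase visit probabilities of later hint sources, and the logit gap dynamics are monotone in their initial value.
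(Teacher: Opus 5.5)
Your route is the paper's. Your Step 2 is exactly Lemma~\ref{lem:sd-cross-task}: orthogonal inputs leave task-$\ell$ logits unchanged. A non-orthogonal, hence positive, inner product together with Assumption~\ref{ass:transition-compatibility} forces the same correct edge, whose gap over every sibling can only grow. Interior columns never move. Your Step 3 is Lemma~\ref{lem:restart-accuracy} applied to the final predictor on the same high-probability event. The gap is in the step you flag yourself, the warm start at $\mathbf W^{[\ell-1]}$: the comparison argument you propose would not go through as stated, for two reasons.
\begin{itemize}
\item Hint-source visit probabilities are not monotone in correct-edge probabilities. Raising $\pi(v_j\mid u_j,x_\ell)$ removes mass from the intra-cluster successors of $u_j$, and these can be the route to another hint source, for example when a residual successor of $u_j$ is another correct boundary in the same cluster.
\item The per-step gap increment is $a_{i,j}\|x_\ell\|_2^2(1-p_i+p_{u_i,v})$ with $a_{i,j}\propto p_i^{(j-1)}$, which is of $p(1-p)$ type. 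So for large admissible steps the odds recursion need not be order-preserving in the initial logit. Also, the activation counts $\mathfrak n_i^{(j)}$ are on-policy and state-dependent, so there is no evident pathwise coupling between a warm-started and a cold-started run.
\end{itemize}

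The missing observation is that no comparison is needed, because the single-task proof never uses an upper bound on $\pi$. It uses only two facts.
\begin{enumerate}
\item The interior kernel is the pretrained one. This gives coverage $c_{\mathrm{route}}/M$ and the invalid-boundary bound $C_{\mathrm{bad}}\epsilon_1$.
\item Every correct edge satisfies $p_i\ge c_{\mathrm{int}}\epsilon_2$ at the start of training. This enters through $\underline\rho_{\ell,K,M}$ in Lemma~\ref{lem:restart-coverage}, together with within-task monotonicity from Lemma~\ref{lem:restart-update}. It also enters through $(r_i^{(0)})^{-1}\ge c_{\mathrm{int}}\epsilon_2$ and $(p_i^{(0)})^2\ge c_{\mathrm{int}}^2\epsilon_2^2$ in Lemma~\ref{lem:restart-odds}.
\end{enumerate}
Your Step 2 never used the order of tasks. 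Applying it to the updates on $\mathcal D_1,\ldots,\mathcal D_{\ell-1}$ gives both facts at $\mathbf W^{[\ell-1]}$ on every sample path, with each $p_i$ at least its pretrained value. The proof of Proposition~\ref{prop:sd-transition-learning} then reruns verbatim, with the same constants, conditionally on the history up to the start of task $\ell$. Integrating out that history gives the edgewise $O(\xi^{(K)}_{\ell,n_\ell}(\delta))$ bound with probability $1-\delta$; this is how the paper argues.

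A minor point in Step 3: a rollout can revisit a cluster, since the graph is acyclic on nodes, not on clusters. Count segments by traversed correct edges, at most $1+N_\ell\le 1+KM$, as Lemma~\ref{lem:restart-accuracy} does, rather than by $K$ clusters. The stated $O(KM\epsilon_1)$ term is unaffected.
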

Theorem~\ref{thm:sd-no-forgetting} shows that, after learning task
$\mathcal D_\ell$, the student can continue learning subsequent tasks
through self-distillation while retaining its ability to generate
correct rollouts from $y_{\ell,0}$
(Equations~\ref{eq:theorem-three-no-forgetting}
and~\ref{eq:theorem-three-edgewise-monotonicity}).
Equation~\ref{eq:theorem-three-edgewise-monotonicity} also shows that the probability of correct inter-cluster transitions
may even increase by subsequent tasks when the tasks share the same correct inter-cluster transitions because the hints
associated with these transitions are sampled as well during training on subsequent tasks.
\paragraph{Forgetting in SFT} SFT, on the other hand, is a learning system that forces the student to follow the teacher path, which is a 
relatively dense graph structure.
Therefore, if any path conflict exists between tasks, the student rollout is likely to follow
new task paths and forget the old ones.
\textsl{We consider this overwriting is precisely what causes catastrophic forgetting.}
For the SFT comparison, fix two tasks $\mathcal D_{\mathrm{old}}$ and
$\mathcal D_{\mathrm{new}}$ learned in this order. Let
$\mathbf W_{\mathrm{new}}^{(0)}$ be the parameter immediately after learning
$\mathcal D_{\mathrm{old}}$, and let $\mathbf W_{\mathrm{new}}^{(n)}$ be the
parameter after $n$ subsequent SFT updates on $\mathcal D_{\mathrm{new}}$.
Let $\theta\in[0,1]$ denote the fraction of old correct paths that conflict with
the new task. Formal definitions are deferred to Appendix~\ref{app:sft-forgetting}.

\begin{restatable}[Catastrophic forgetting under SFT]{theorem}{sftforgettingtheorem}
\label{thm:sft-forgetting}
Fix $\delta\in(0,1)$ and set $\lambda := \frac{\langle x_{\mathrm{old}},x_{\mathrm{new}}\rangle}{\|x_{\mathrm{new}}\|_2^2}>0$.
Suppose $\eta_j=\Theta(j^{-\nu})$ for some $\nu\in(0,1/2)$.
Then, for arbitrary pre-new-task old-path
masses, with probability at least $1-\delta$,
\begin{gather}
   \Pi(\mathbf W_{\mathrm{new}}^{(n)})
   \leq
   \underbrace{\mathfrak C_{\mathrm{old}}(1-\theta)}_{\substack{
      \text{\normalfont{upper bound on}}\\
      \text{\normalfont{surviving knowledge}}}}
   +\widetilde O\!\left(\theta|\mathcal P_{\mathrm{old}}|
   |\mathcal P_{\mathrm{new}}|^\lambda
   \left[
   \frac{1+B^{-1}\log(1/\delta)}{n^{\nu}} 
   \right]^\lambda
   \right),\label{eq:theorem-four-late-upper}\\
   \Pi(\mathbf W_{\mathrm{new}}^{(0)})
   -\mathfrak C_{\mathrm{old}}(\theta)
   \leq
   \Pi(\mathbf W_{\mathrm{new}}^{(n)}),\label{eq:theorem-four-late-lower}
\end{gather}
where $\Pi(\mathbf{W}) = \pi_{\mathbf{W}}(Y_T \in \tilde C_{\mathrm{old}} \mid Y_0 = y_{\mathrm{old},0}, x_{\mathrm{old}})$ and
\begin{align}
   \mathfrak C_{\mathrm{old}}(\phi) := \max_{\substack{\mathcal P \subseteq \mathcal P_{\mathrm{old}}\\|\mathcal P| \leq \phi|\mathcal P_{\mathrm{old}}|}}
   \sum_{\gamma \in \mathcal P} \pi_{\mathbf W_{\mathrm{new}}^{(0)}}(Y_{0:T} = \gamma \mid Y_0 = y_{\mathrm{old},0}, x_{\mathrm{old}}).
\end{align}
$\tilde{O}(\cdot)$ hides logarithmic factors in $n$ and $|\mathcal P_{\mathrm{new}}|$. 
\end{restatable}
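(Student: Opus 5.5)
The plan is to decompose the old-task success probability path by path and track how the SFT updates on $\mathcal D_{\mathrm{new}}$ move each path's mass. Write
\[
\Pi(\mathbf W)=\sum_{\gamma\in\mathcal P_{\mathrm{old}}}\pi_{\mathbf W}(Y_{0:T}=\gamma\mid Y_0=y_{\mathrm{old},0},x_{\mathrm{old}}),
\]
and split $\mathcal P_{\mathrm{old}}$ into two parts. The conflicting paths $\mathcal P^{\mathrm c}$ (a $\theta$ fraction) are those containing a source node $u$ at which the new task's correct successor set $\mathcal A_{\mathrm{new}}(u)$ excludes the old path's successor. By Assumption~\ref{ass:transition-compatibility}, such conflicts can only occur at interior transitions. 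The remaining paths form the non-conflicting set $\mathcal P^{\mathrm{nc}}$.

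Because of the softmax structure $x^\top\mathbf W e_{u,\cdot}$, any SFT update on $\mathcal D_{\mathrm{new}}$ has the form $x_{\mathrm{new}}g^\top$, where $g$ is supported on the edges leaving nodes of the new teacher paths. Decompose $x_{\mathrm{old}}=\lambda x_{\mathrm{new}}+x_\perp$ with $x_\perp\perp x_{\mathrm{new}}$. Then the old-task logit change at every node $u$ is exactly $\lambda$ times the new-task logit change. This is the key structural fact, and it is what makes the exponent $\lambda$ appear.

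\textbf{Upper bound (\ref{eq:theorem-four-late-upper}).} At a conflicting node $u$ on $\gamma\in\mathcal P^{\mathrm c}$ with old successor $v\notin\mathcal A_{\mathrm{new}}(u)$, I will bound $\pi_{\mathbf W^{(n)}_{\mathrm{new}}}(v\mid u,x_{\mathrm{old}})$ in four steps.
\begin{enumerate}
\item The logit gap satisfies $z^{\mathrm{old}}_v-z^{\mathrm{old}}_{a}=(\text{initial gap})+\lambda\,\Delta^{\mathrm{new}}_{v,a}$ for a correct successor $a$. This gives, up to initial-parameter constants absorbed in $\widetilde O$,
\[
\pi^{\mathrm{old}}(v\mid u)\lesssim \bigl(\pi^{\mathrm{new}}(v\mid u)/\pi^{\mathrm{new}}(a\mid u)\bigr)^{\lambda}.
\]
\item Theorem~\ref{thm:sft-single-task}, applied with $\mathcal D_{\mathrm{new}}$ started from $\mathbf W^{(0)}_{\mathrm{new}}$, gives $\pi^{\mathrm{new}}(v\mid u)\le \widetilde O\bigl(|\mathcal P_{\mathrm{new}}|[n^{-(1-\nu)}+\log(e/\delta)/(Bn^\nu)]\bigr)$.
\item The theorem's rate is stated loosely as $(1+B^{-1}\log(1/\delta))/n^\nu$. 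Since $\nu<1/2$, this dominates the sharper rate from step 2, so step 2 implies it.
\item I also need a lower bound on $\pi^{\mathrm{new}}(a\mid u)$, namely at least $1/|\mathcal A_{\mathrm{new}}(u)|$ minus the error. This follows from the same theorem, because the correct mass tends to one.
\end{enumerate}
Each conflicting path's mass is then at most the transition probability at its conflict node, so the conflicting paths contribute at most $\theta|\mathcal P_{\mathrm{old}}|\cdot\widetilde O\bigl((|\mathcal P_{\mathrm{new}}|\,\mathrm{rate})^\lambda\bigr)$. For the non-conflicting paths I will not try to control the dynamics. I simply bound their total mass at time $n$ by the largest mass any $(1-\theta)$-sized subset could carry, which I will argue is $\mathfrak C_{\mathrm{old}}(1-\theta)$. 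This is the step I expect to be delicate, since path masses can increase under updates. The fallback is to assert the bound "for arbitrary pre-new-task masses" in the worst case over which paths survive, or to define $\mathfrak C$ as the relevant supremum as in the appendix's formal definitions.

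\textbf{Lower bound (\ref{eq:theorem-four-late-lower}).} I will argue that non-conflicting old paths lose no mass. At any node on such a path, the new-task update either does not touch the node, or it raises the logit of an edge that is also the old path's successor; by Assumption~\ref{ass:transition-compatibility}, the correct inter-cluster transitions coincide. So each step probability along $\gamma\in\mathcal P^{\mathrm{nc}}$ is nondecreasing with respect to $x_{\mathrm{old}}$, using $\lambda>0$. Hence
\[
\Pi(\mathbf W^{(n)}_{\mathrm{new}})\ge \sum_{\gamma\in\mathcal P^{\mathrm{nc}}}\pi_{\mathbf W^{(0)}_{\mathrm{new}}}(\gamma)=\Pi(\mathbf W^{(0)}_{\mathrm{new}})-\sum_{\gamma\in\mathcal P^{\mathrm c}}\pi_{\mathbf W^{(0)}_{\mathrm{new}}}(\gamma)\ge \Pi(\mathbf W^{(0)}_{\mathrm{new}})-\mathfrak C_{\mathrm{old}}(\theta).
\]

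The main obstacle is the monotonicity claim at shared nodes where the new update raises one correct successor but the old path uses a different one. Under SFT's cross-entropy gradient, that different successor's logit can decrease relative to the raised one. I would handle this by folding such nodes into the conflict definition of $\theta$, which is presumably how the appendix defines conflict.
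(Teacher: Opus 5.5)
\textbf{Where the proposal is sound.} Your treatment of the conflicted paths is essentially the paper's. Updates at a new-task source lie in $\operatorname{span}\{x_{\mathrm{new}}\}$, so old-input logits move by exactly $\lambda$ times the new-input logits. Hence the suppression $1-\pi(\mathcal A_{\mathrm{new}}(u)\mid u,x_{\mathrm{new}})\le\bar\epsilon_n$ transfers to $\pi(v\mid u,x_{\mathrm{old}})=O(\bar\epsilon_n^{\lambda})$ at an unsupported successor $v$. The paper handles the softmax denominator differently: it lower-bounds it by $e^{b_{\min}(u)}\sum_w p_w^{\lambda}\ge e^{b_{\min}(u)}$ using $\lambda\le1$, instead of lower-bounding one correct successor. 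Either route works. You do need the SFT learning bound from an arbitrary finite initialization $\mathbf W_{\mathrm{new}}^{(0)}$. That is what the proof of Theorem~\ref{thm:sft-single-task} delivers; the theorem as stated only covers starting from $\mathbf W^{(0)}$.

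\textbf{The gap: non-conflicted paths.} These paths carry both the $\mathfrak C_{\mathrm{old}}(1-\theta)$ term and the entire lower bound, and neither half is established.
\begin{itemize}
\item For the upper bound, you need their time-$n$ mass to be at most their pre-new-task mass. You leave this open.
\item For the lower bound, your monotonicity argument does not hold as justified. Assumption~\ref{ass:transition-compatibility} only constrains inter-cluster edges at boundary nodes, whereas SFT also rewrites interior nodes along teacher paths. At a shared node with $|\mathcal A_{\mathrm{new}}(u)|\ge2$, the old successor can indeed lose mass to another new-correct successor.
\item Folding such nodes into the conflict set does not help. Your upper bound would then need those extra paths to have vanishing mass. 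A path whose successors at every shared node lie in $\mathcal A_{\mathrm{new}}$ has no unsupported edge to suppress.
\end{itemize}

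\textbf{The missing idea: this case never arises.} An old correct path is conflicted if and only if $\operatorname{Src}(\gamma)\cap\mathcal U_{\mathrm{new}}\neq\varnothing$. To see this, let $t_\star$ be the last index with $u_{t_\star}\in\mathcal U_{\mathrm{new}}$, and suppose $u_{t_\star+1}\in\mathcal A_{\mathrm{new}}(u_{t_\star})$. Take a new correct path through that edge. There are two cases.
\begin{itemize}
\item $u_{t_\star+1}$ is a further new-task source. This contradicts maximality of $t_\star$. The boundary case $t_\star+1=T$ is also excluded, because no node of $\tilde C_{\mathrm{old}}$ can be a new source: $\tilde C_{\mathrm{old}}$ has no outgoing inter-cluster edges.
\item $u_{t_\star+1}$ is the endpoint of that new path, hence lies in $\tilde C_{\mathrm{new}}$. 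From there $\gamma$ can never reach the disjoint cluster $\tilde C_{\mathrm{old}}$, contradicting that $\gamma$ is correct.
\end{itemize}
So the successor at the last shared node is always unsupported.

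\textbf{How this closes both bounds.} New-task SFT changes only columns whose source lies in $\mathcal U_{\mathrm{new}}$. Therefore every non-conflicted path mass is exactly invariant, $P_n^{\mathrm{old}}(\gamma)=P_{\mathrm{pre}}^{\mathrm{old}}(\gamma)$. This gives $\Pi(\mathbf W_{\mathrm{new}}^{(n)})=R_{\mathrm{pre}}+\sum_{\gamma\in\mathcal C_{\mathrm{old}}}P_n^{\mathrm{old}}(\gamma)$, where $\Pi(\mathbf W_{\mathrm{new}}^{(0)})-\mathfrak C_{\mathrm{old}}(\theta)\le R_{\mathrm{pre}}\le\mathfrak C_{\mathrm{old}}(1-\theta)$. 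Both inequalities of the theorem follow, with no monotonicity argument and with the theorem's own $\theta$.
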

During sequential learning, Theorem~\ref{thm:sft-forgetting} shows that SFT drives the probability mass of old correct paths
that conflict with the new task toward zero as training proceeds (Equation~\ref{eq:theorem-four-late-upper}).
Despite some forgetting, SFT retains previously acquired knowledge that does not conflict with the new knowledge (Equation~\ref{eq:theorem-four-late-lower}).
Although our current setting excludes conflicts between tasks at inter-cluster transitions,
this result may help explain why SD could remain less prone to forgetting than SFT in broader
settings that allow such conflicts.
Whereas SFT updates transitions along teacher paths, SD restricts its updates only to source nodes selected by hints, which may leave more old correct paths unaffected by subsequent training.
If these paths account for most of the old task's success probability at the start of new task training, their preserved solution paths would alleviate forgetting.

\begin{restatable}[Explicit representation of $\mathfrak C_{\mathrm{old}}(\phi)$]{remark}{oldconcentrationremark}
\label{rem:old-concentration}
Let $\kappa_{\max}^{\mathrm{old}}$ and $\kappa_{\min}^{\mathrm{old}}$ be the
largest and smallest old-path masses normalized by their average mass, as
defined in Appendix~\ref{app:sft-forgetting}. Then
\begin{align}
   \mathfrak C_{\mathrm{old}}(\phi)
   &\le
   \Pi(\mathbf W_{\mathrm{new}}^{(0)})
   \min\left\{
      \kappa_{\max}^{\mathrm{old}}\phi,
      1-\kappa_{\min}^{\mathrm{old}} + \kappa_{\min}^{\mathrm{old}}\phi
   \right\}.
   \label{eq:theorem-four-explicit-theta-bound}
\end{align}
\end{restatable}

\begin{remark}[Error accumulation across tasks]
\label{rem:task-accumulation}
For multiple later tasks, the union of their conflicted old-path sets can
increase, and the fraction of non-conflicted old correct paths can decrease. 
Consequently, $\theta_{\mathrm{total}}$, the multi-task counterpart of $\theta$, can increase as the number of tasks
grows, thereby exacerbating the forgetting effect and
further decreasing the corresponding concentration term in
the success probability of the old tasks.
This suggests an asymptotic upper bound of $\mathfrak{C}_{\mathrm{old}}(1 - \theta_{\mathrm{total}})$ on the old-task success probability.
\end{remark}

\subsection{A theoretical implication for pre-training diversity}
\label{sec:pretraining-reliability}
The preceding results concern new-task learning and retention from each task's original start.
As a complementary theoretical discussion, we now ask when the learned policy can also reason
successfully from intermediate states. The role of the pre-trained model in SD
\citep{shenfeld2026sdft, hubotter2026sdpo} and the benefits of diverse reasoning paths
\citep{wang2023selfconsistency} motivate this question. We examine diversity through the
coverage of reasoning states in the pre-trained policy: because SD updates depend on visited
states, limited coverage can leave some transitions unlearned that are needed for intermediate restarts.
We fix a task $\mathcal{D}_{\ell}$ and run the same SD procedure from its original start; restarts are used only for evaluation.
Under the coverage condition of Assumption~\ref{ass:pretrained-diversity}, every correct boundary receives training opportunities and they suffice to 
support all relevant starts simultaneously (Proposition~\ref{prop:restart-positive} in Appendix~\ref{app:restart}).
However, if the pre-trained model is not diverse enough, some relevant starts may not pass through proper cluster transitions and
thus the model cannot generate a correct path from them.

\begin{assumption}[Existence of a rare entrance]
\label{ass:restart-rare}
   For the fixed task, retain Assumption~\ref{ass:pretrained-diversity} except its coverage lower bound at one correct boundary node $u^*$.
   The pretrained probability of ever visiting $u^*$ from the original start is $O(1/M')$, with $M'=\Omega(M)$ and a uniform implied constant.
   Its correct transition is the only incoming edge to a nonterminal cluster $C_{k}$ that does not contain the original start $y_{\ell, 0}$.
\end{assumption}

\begin{restatable}[Restart failure beyond a rare entrance]{proposition}{restartnegativeproposition}
Suppose\label{prop:restart-negative} Assumptions~\ref{ass:transition-compatibility} and~\ref{ass:restart-rare} hold,
$\eta_j=\Theta(j^{-\nu})$ for fixed $\nu\in [0, 1/2)$, and
$0<\epsilon_1,\epsilon_2,\epsilon_{\mathrm{rel}},\delta<1$.
If
\begin{gather}
n=\Omega\!\left(
\max\left\{\frac{\log(2KM/\delta)}{B\underline\rho_{\ell,K, M}^{\,2}}\right.,
\left.\left[\frac{\epsilon_1^{-1}-1}{\epsilon_2^2\log(1/\epsilon_{\mathrm{rel}})\underline\rho_{\ell,K, M}}\right]^{1/(1-\nu)}\right\}\right),
\label{eq:restart-explicit-n}\\
M'=\Omega\!\left(\frac1{\epsilon_2^{|E_{\ell}|}}
\max\!\left\{\frac1{KM\epsilon_1},\frac{Bn}{\delta}\right\}\right),
\label{eq:restart-budget}
\end{gather}
with sufficiently large uniform constants, then, with probability at least
$1-\delta$, no training rollout visits $u^*$, and jointly
\begin{gather}
\pi_{\mathbf W^{(n)}}(y_T\in\tilde C_\ell\mid y_{\ell,0},x_\ell)
= 1-O(KM\epsilon_1)\quad (\text{\normalfont{Misleading high accuracy}}), \label{eq:restart-original}\\
\sup_{s\in\mathcal S_{\ell,k}}
\pi_{\mathbf W^{(n)}}(y_T\in\tilde C_\ell\mid y_0=s,x_\ell)
=O(M\epsilon_2)\quad (\text{\normalfont{Restart failure}}). 
\label{eq:restart-restart-low}
\end{gather}
Here $\mathcal S_{\ell,k}$ contains the relevant starts in $C_k$ and
$\underline\rho_{\ell,K, M}$ is specified in
Appendix~\ref{app:sd-transitions}.
\end{restatable}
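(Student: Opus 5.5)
The plan is to show that, with high probability, the rare boundary $u^*$ is never visited during training; sparsity then forces the correct transition out of $u^*$ to stay at its pre-trained value.

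\textbf{Step 1 (no visit to $u^*$).} I would bound the probability that any of the $Bn$ training rollouts visits $u^*$. Along the SD trajectory, Lemma~\ref{lem:sd-gradient} says only outgoing edges of hint-source nodes are updated, and only when such a node is actually visited. The updates in the clusters preceding $u^*$ change at most $|E_\ell|$ inter-cluster edge probabilities; these are the transitions that were $\Theta(\epsilon_2)$ before training and are pushed toward one. Interior transitions are never updated, so the first-boundary-hitting law within each cluster is unchanged. I would then argue that the pretrained probability $O(1/M')$ of reaching $u^*$ is inflated by at most a factor $\epsilon_2^{-|E_\ell|}$: each reweighted inter-cluster edge on a path to $u^*$ gains a likelihood ratio of at most $O(1/\epsilon_2)$ over its pretrained value. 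Hence every rollout visits $u^*$ with probability $O(\epsilon_2^{-|E_\ell|}/M')$. A union bound over $Bn$ rollouts, combined with \eqref{eq:restart-budget}, makes this at most $\delta/2$. Formally, I would run this as an induction over iterations on the event that no earlier rollout visited $u^*$; under that event the parameters at $u^*$ remain equal to $\mathbf W^{(0)}$.

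\textbf{Step 2 (misleading high accuracy).} On the event of Step 1, I would rerun the proof of Theorem~\ref{thm:sd-correct-rollout} with $u^*$ removed from the valid boundary set. Assumption~\ref{ass:restart-rare} keeps all other coverage bounds, so the concentration argument in Appendix~\ref{app:sd-transitions} still applies. The first term in \eqref{eq:restart-explicit-n} makes the martingale deviation at most half the drift, with the union bound over $KM$ edges at level $\delta/2$. The second term makes the drift $\epsilon_2^2\log(1/\epsilon_{\mathrm{rel}})\underline\rho\, n^{1-\nu}$ exceed $\epsilon_1^{-1}$, so the $\xi$-type term is $O(\epsilon_1)$. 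The mass routed through $u^*$ from the original start is $O(\epsilon_2^{-|E_\ell|}/M')$, which is $O(KM\epsilon_1)$ by the first branch of \eqref{eq:restart-budget}. Together these give \eqref{eq:restart-original}.

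\textbf{Step 3 (restart failure).} Since $u^*\to C_k$ is the only entrance to $C_k$, and $u^*$ was never visited, I would then look at what was updated inside $C_k$. An outgoing edge of a boundary node in $C_k$ is updated only when that node is visited, which requires entering $C_k$. So no boundary parameter of $C_k$ was updated, and every inter-cluster transition out of $C_k$ keeps its pretrained probability $\Theta(\epsilon_2)$. A restart from $s\in\mathcal S_{\ell,k}$ must leave the nonterminal cluster $C_k$ through some correct inter-cluster edge. Union bounding over the at most $M$ correct boundaries and the (a.s. finite) visits, each exit succeeds with probability $\Theta(\epsilon_2)$ per boundary hit, giving a total of $O(M\epsilon_2)$ and hence \eqref{eq:restart-restart-low}.

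\textbf{Main obstacle.} The hard part is Step 1. The likelihood-ratio bound has to be uniform over the random, adaptively updated parameters, and it must cover the multiple boundary visits possible before exiting a cluster. I would first try to control it by showing that, under the no-visit event, each edge probability is monotone between $\Theta(\epsilon_2)$ and $1$. I would then bound the path-probability ratio edge by edge along acyclic paths, of which at most $|E_\ell|$ are inter-cluster correct edges.
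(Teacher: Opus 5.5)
Your three-step architecture is the paper's. It consists of a deterministic growth factor $\epsilon_2^{-|E_\ell|}$ on the probability of visiting $u^*$ with a first-moment bound over the $Bn$ training rollouts (Lemmas~\ref{lem:restart-path} and~\ref{lem:restart-nonvisit}), learning of the correct edges reachable without $e_m$ at level $\delta/2$ together with the $KM\epsilon_1$ bound on the residual mass through $u^*$, and freezing of every column with source in $C_k$ (Lemma~\ref{lem:restart-frozen}). The gap is in how you justify the growth factor in Step~1, the step you yourself single out.

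It is not true that SD changes at most $|E_\ell|$ edge probabilities. Nor is it true that only traversed correct inter-cluster edges can gain likelihood while every other edge is non-increasing. An update at a hint source $u_i$ moves \emph{every} outgoing logit there: by $+c(1-p_{v_i})$ for the correct successor and by $-c\,p_v$ for each other successor $v$, with $c=a_{i,j}\|x_\ell\|_2^2$. A small incorrect successor can therefore gain probability. Take a boundary with the kernel of Appendix~\ref{app:realizability}, two inter-cluster successors $v_i,v$ and one same-cluster successor $w$, so that $p_{v_i}=p_v=\epsilon_2$ and $p_w=1-2\epsilon_2$. The derivative of $p_v$ along the update is $c\,p_v\bigl[\sum_{u\neq v_i}p_u^2-p_v-p_{v_i}(1-p_{v_i})\bigr]=c\,p_v(1-6\epsilon_2+6\epsilon_2^2)>0$. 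In the small-step regime $z_v-z_w$ grows at about half the rate of $z_{v_i}-z_w$, so $p_v$ rises to order $\sqrt{\epsilon_2}$ before $p_{v_i}$ reaches $1/2$. Rollouts that reach $u^*$ may use such non-correct edges, since the $O(1/M')$ visitation mass counts all rollouts. An edge-by-edge accounting that charges $O(1/\epsilon_2)$ only to correct edges therefore does not bound the path ratio.

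The missing idea is a source-level rather than edge-level ratio bound. The correct-minus-incorrect logit gaps at $u_i$ never decrease (Lemma~\ref{lem:restart-update}), so $p^{(j)}_{u_i,v}/p^{(j)}_i\le p^{(0)}_{u_i,v}/p^{(0)}_i$ for every successor $v$. Hence $p^{(j)}_{u_i,v}/p^{(0)}_{u_i,v}\le p^{(j)}_i/p^{(0)}_i\le 1/(c_{\mathrm{int}}\epsilon_2)$, whichever outgoing edge the rollout takes. Now factor a rollout's probability into one-step probabilities. Only the at most $N_\ell=|E_\ell|$ hint sources contribute ratios different from one, each at most once by node simplicity, so the path ratio is at most $(c_{\mathrm{int}}\epsilon_2)^{-N_\ell}$.

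This bound holds for every training history, so your induction on the no-visit event is unnecessary: $\mathbb E[\mathbf 1\{\text{visit}\}\mid\mathcal F_{j-1}]$ is bounded directly and summed over $b,j$.

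Two smaller points. In Step~2, prove the learning event unconditionally at level $\delta/2$ and intersect it with the no-visit event, rather than running the martingale argument on that event; conditioning on a future event breaks the conditional lower bounds on update opportunities. In Step~3, the claim that a successful restart must first exit $C_k$ through an edge of $E_\ell$ needs an argument. A correct prefix ending at $s$, concatenated with the continuation, is node-simple by acyclicity and hence a correct path. That is what caps your union bound at $M$ exits instead of all inter-cluster exits of $C_k$.
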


Proposition~\ref{prop:restart-negative} states that if $C_k$ is a difficult cluster for the student to reach and the amount of training is insufficient,
the student may still have a low success probability when restarting from relevant intermediate states in $C_k$
even though the model seems to have highly learned the task from its original start $y_{\ell, 0}$ (Equation~\ref{eq:restart-original}).
Rare entrances satisfying Assumption~\ref{ass:restart-rare} make some clusters severely difficult to reach from the original start,
and thus leave these clusters unlearned, preventing the success probability from increasing. To overcome this issue, 
self-distillation requires an amount of training that grows at least linearly with $M'$.
The gap in success probabilities between the original start and the downstream cluster suggests that
limited diversity can pose risks at inference time that are difficult to detect during training. 
Improved performance from the original start may not translate into 
better performance when restarting from downstream clusters.
Pretrained coverage therefore supports intermediate restartability beyond original-start
accuracy. 

\begin{restatable}[Separation of training times]{remark}{restarttimeremark}
\label{rem:restart-time}
Under Assumptions~\ref{ass:transition-compatibility} and~\ref{ass:pretrained-diversity}, the model trained by SD can resume from any relevant start for a training length satisfying \Eqref{eq:restart-explicit-n}
(Proposition~\ref{prop:restart-positive} in Appendix~\ref{app:restart}).
Holding other quantities fixed, the nonvisit horizon grows linearly with
$M'$.
If the nonvisit horizon grows asymptotically faster than the sufficient learning scale, there is a range of training lengths for which the two settings exhibit different restart behavior. Under the full-coverage condition,
the trained model can reason successfully from every relevant start, whereas under the rare-entrance condition, the cluster beyond the rare entrance remains unlearned.
\end{restatable}
\section{Numerical experiments}
\label{sec:numerical-experiments}
In this section, we evaluate whether the theoretical bounds are effective. The number of tasks is set to $L = 4$.
Additional experimental details are written in Appendix~\ref{app:numerical}. Figure 2 shows training
curves for every task between SD and SFT in continual learning. SFT exhibits a significant drop
in performance on task 1 to 3 while SD preserves each task’s performance during subsequent-task
training. These results confirm that tracing the sparse updates in SD and the conflict between tasks
in SFT can explain the observed retention and forgetting

\begin{figure}[t]
   \centering
   \includegraphics[width=1.\textwidth]{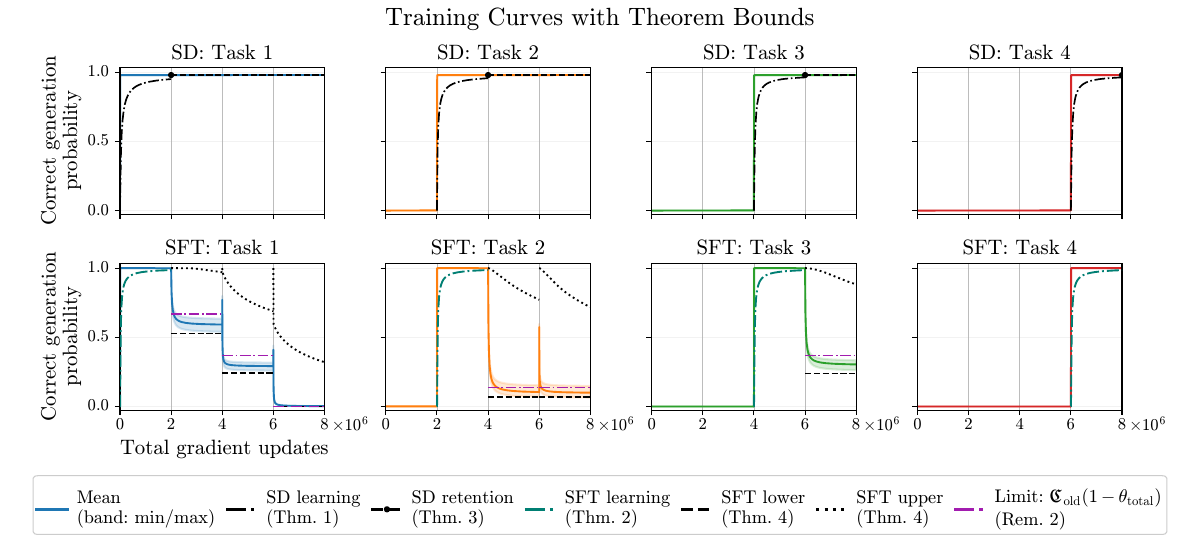}
   \setlength{\abovecaptionskip}{0pt}
   \caption{Training dynamics under SD (top row) and
   SFT (bottom row) for Tasks 1--4.
   Solid curves show the mean correct-generation probability over ten seeds,
   and shaded regions indicate the minimum--maximum range.
   Dash-dotted and dashed black curves represent the SD population learning
   references and conditional retention lower bounds.
   Dashed and dotted black curves represent the SFT conditional retention
   lower bounds and population upper references, respectively.
   Population references use $B\to\infty$ for stabilizing the curves.
   Purple dash-dotted lines show the concentration-profile term
   $\mathfrak{C}_{\mathrm{old}}(1-\theta_{\mathrm{total}})$.
   Vertical gray lines indicate task transitions.}
   \label{fig:training-dynamics}
\end{figure}

\section{Conclusion}
We analyze continual learning under SD and SFT using a CoT reasoning graph.
Our theory shows how the sparsity of hint information localizes SD updates to preserve previously
acquired capabilities, whereas dense SFT updates along the teacher paths can cause forgetting.
As a complementary theoretical implication, we show that limited coverage in the pre-trained
policy can prevent successful reasoning from intermediate states even after SD succeeds from
the original start.
\paragraph{Future work.}
The major improvement is to extend the analysis from the current outcome reward format such as Equation~\ref{eq:path-level-hint-feedback} to that of the process reward. 
As shown in \citet{hubotter2026sdpo}, the self-teacher in on-policy self-distillation can act as an implicit process reward model through retrospection when provided with rich feedback.
This setting might also be helpful to analyze the
importance of in-context learning ability in the pre-training strategy.

\section*{Acknowledgements}
SU was partially supported by JSPS KAKENHI (24K02905). TS was partially supported by JST CREST (JPMJCR2115). This work was partially supported by JST ERATO (JPMJER2601).
This research is supported by the National Research Foundation, Singapore and the Ministry of Digital Development and Information under the AI Visiting Professorship Programme (award number AIVP-2024-004). Any opinions, findings and conclusions or recommendations expressed in this material are those of the author(s) and do not reflect the views of National Research Foundation, Singapore and the Ministry of Digital Development and Information.

\section*{AI use statement}
We used AI, mainly ChatGPT, to assist in the following ways.
\begin{itemize}
   \item To aid and polish writing: We used AI to improve the clarity and grammar of the text,
   and to help us express our ideas more clearly.
   \item For retrieval of relevant references and related work: Mainly, we used AI to find and summarize
   relevant works and we incorporated them into this paper.
   \item For research ideation: We used AI to brainstorm ideas and to discuss the research direction.
   Assumption~\ref{ass:transition-compatibility}, Equations~\ref{eq:pretrained-boundary-error} and~\ref{eq:pretrained-boundary-coverage} in Assumption~\ref{ass:pretrained-diversity} and Section~\ref{sec:pretraining-reliability}
   is constructed during the discussion with AI.
   \item For the purpose of drafting sections of all contents in the paper: We made AI-generated drafts
   at all sections, which were then revised and modified repeatedly.
   \item For numerical experiments: We made AI create some code and the graph we used to validate our theoretical results.
   These codes and graphs were revised and modified by us to fit our purpose.
   \item For proving mathematical claims: AI helped us to check the correctness of the proofs and
   formalized our ideas to mathematical statements and provided proofs.
   In particular, the following is a list of the major works of AI in the proofs.
   \begin{itemize}
      \item Lemma~\ref{lem:restart-coverage}: AI helped us to evaluate the times of visitation to the correct boundaries.
      \item Lemma~\ref{lem:sft-potential}: AI helped us to define the potential function in \Eqref{eq:analysis-potential} and then incorporated it into our original
      proof to produce the current version.
      \item Theorem~\ref{thm:sft-forgetting}: AI helped us to prove the bounds in Theorem~\ref{thm:sft-forgetting}.
   \end{itemize}
\end{itemize}

\section*{Reproducibility}
The code, configuration files, and saved results for reproducing the numerical experiments
are available in the \href{https://github.com/yuukiizayoi/continual-reasoning-experiments}{\texttt{continual-reasoning-experiments} GitHub repository}.\footnote{\url{https://github.com/yuukiizayoi/continual-reasoning-experiments}}
The repository README provides instructions for rerunning training and regenerating the figures from saved checkpoints.
The code implements the synthetic CoT graph and the SD and SFT training procedures.
Experimental settings and results are given in Appendix~\ref{app:numerical}.

\clearpage
\bibliography{iclr2027_conference}

@misc{yang2026selfdistilledrlvr,
      title={Self-Distilled {RLVR}},
      author={Chenxu Yang and Chuanyu Qin and Qingyi Si and Minghui Chen and Naibin Gu and Dingyu Yao and Zheng Lin and Weiping Wang and Jiaqi Wang and Nan Duan},
      year={2026},
      eprint={2604.03128},
      archivePrefix={arXiv},
      primaryClass={cs.LG},
      url={https://arxiv.org/abs/2604.03128}, 
}

@misc{yao2025diversity,
      title={Diversity-Aware Policy Optimization for Large Language Model Reasoning}, 
      author={Jian Yao and Ran Cheng and Xingyu Wu and Jibin Wu and Kay Chen Tan},
      year={2025},
      eprint={2505.23433},
      archivePrefix={arXiv},
      primaryClass={cs.LG},
      url={https://arxiv.org/abs/2505.23433}, 
}

@misc{wang2026statisticallimitsselfimprovingagents,
      title={On The Statistical Limits of Self-Improving Agents}, 
      author={Charles L. Wang and Keir Dorchen and Peter Jin},
      year={2026},
      eprint={2510.04399},
      archivePrefix={arXiv},
      primaryClass={cs.AI},
      url={https://arxiv.org/abs/2510.04399}, 
}

@misc{luo2025wizardcoderempoweringcodelarge,
      title={{WizardCoder}: Empowering Code Large Language Models with {Evol-Instruct}},
      author={Ziyang Luo and Can Xu and Pu Zhao and Qingfeng Sun and Xiubo Geng and Wenxiang Hu and Chongyang Tao and Jing Ma and Qingwei Lin and Daxin Jiang},
      year={2025},
      eprint={2306.08568},
      archivePrefix={arXiv},
      primaryClass={cs.CL},
      url={https://arxiv.org/abs/2306.08568}, 
}

@misc{zhang2024raftadaptinglanguagemodel,
      title={{RAFT}: Adapting Language Model to Domain Specific {RAG}},
      author={Tianjun Zhang and Shishir G. Patil and Naman Jain and Sheng Shen and Matei Zaharia and Ion Stoica and Joseph E. Gonzalez},
      year={2024},
      eprint={2403.10131},
      archivePrefix={arXiv},
      primaryClass={cs.CL},
      url={https://arxiv.org/abs/2403.10131}, 
}

@inproceedings{zeng-etal-2025-acecoder,
    title = "{ACECODER}: Acing Coder {RL} via Automated Test-Case Synthesis",
    author = "Zeng, Huaye  and
      Jiang, Dongfu  and
      Wang, Haozhe  and
      Nie, Ping  and
      Chen, Xiaotong  and
      Chen, Wenhu",
    editor = "Che, Wanxiang  and
      Nabende, Joyce  and
      Shutova, Ekaterina  and
      Pilehvar, Mohammad Taher",
    booktitle = "Proceedings of the 63rd Annual Meeting of the Association for Computational Linguistics (Volume 1: Long Papers)",
    month = jul,
    year = "2025",
    address = "Vienna, Austria",
    publisher = "Association for Computational Linguistics",
    url = "https://aclanthology.org/2025.acl-long.587/",
    doi = "10.18653/v1/2025.acl-long.587",
    pages = "12023--12040",
    ISBN = "979-8-89176-251-0"
}

@misc{jin2025searchr1trainingllmsreason,
      title={{Search-R1}: Training {LLMs} to Reason and Leverage Search Engines with Reinforcement Learning},
      author={Bowen Jin and Hansi Zeng and Zhenrui Yue and Jinsung Yoon and Sercan Arik and Dong Wang and Hamed Zamani and Jiawei Han},
      year={2025},
      eprint={2503.09516},
      archivePrefix={arXiv},
      primaryClass={cs.CL},
      url={https://arxiv.org/abs/2503.09516}, 
}

@article{Guo_2025,
   title={{DeepSeek-R1} incentivizes reasoning in {LLMs} through reinforcement learning},
   volume={645},
   ISSN={1476-4687},
   url={http://dx.doi.org/10.1038/s41586-025-09422-z},
   DOI={10.1038/s41586-025-09422-z},
   number={8081},
   journal={Nature},
   publisher={Springer Science and Business Media LLC},
   author={Guo, Daya and Yang, Dejian and Zhang, Haowei and Song, Junxiao and Wang, Peiyi and Zhu, Qihao and Xu, Runxin and Zhang, Ruoyu and Ma, Shirong and Bi, Xiao and Zhang, Xiaokang and Yu, Xingkai and Wu, Yu and Wu, Z. F. and Gou, Zhibin and Shao, Zhihong and Li, Zhuoshu and Gao, Ziyi and Liu, Aixin and Xue, Bing and Wang, Bingxuan and Wu, Bochao and Feng, Bei and Lu, Chengda and Zhao, Chenggang and Deng, Chengqi and Ruan, Chong and Dai, Damai and Chen, Deli and Ji, Dongjie and Li, Erhang and Lin, Fangyun and Dai, Fucong and Luo, Fuli and Hao, Guangbo and Chen, Guanting and Li, Guowei and Zhang, H. and Xu, Hanwei and Ding, Honghui and Gao, Huazuo and Qu, Hui and Li, Hui and Guo, Jianzhong and Li, Jiashi and Chen, Jingchang and Yuan, Jingyang and Tu, Jinhao and Qiu, Junjie and Li, Junlong and Cai, J. L. and Ni, Jiaqi and Liang, Jian and Chen, Jin and Dong, Kai and Hu, Kai and You, Kaichao and Gao, Kaige and Guan, Kang and Huang, Kexin and Yu, Kuai and Wang, Lean and Zhang, Lecong and Zhao, Liang and Wang, Litong and Zhang, Liyue and Xu, Lei and Xia, Leyi and Zhang, Mingchuan and Zhang, Minghua and Tang, Minghui and Zhou, Mingxu and Li, Meng and Wang, Miaojun and Li, Mingming and Tian, Ning and Huang, Panpan and Zhang, Peng and Wang, Qiancheng and Chen, Qinyu and Du, Qiushi and Ge, Ruiqi and Zhang, Ruisong and Pan, Ruizhe and Wang, Runji and Chen, R. J. and Jin, R. L. and Chen, Ruyi and Lu, Shanghao and Zhou, Shangyan and Chen, Shanhuang and Ye, Shengfeng and Wang, Shiyu and Yu, Shuiping and Zhou, Shunfeng and Pan, Shuting and Li, S. S. and Zhou, Shuang and Wu, Shaoqing and Yun, Tao and Pei, Tian and Sun, Tianyu and Wang, T. and Zeng, Wangding and Liu, Wen and Liang, Wenfeng and Gao, Wenjun and Yu, Wenqin and Zhang, Wentao and Xiao, W. L. and An, Wei and Liu, Xiaodong and Wang, Xiaohan and Chen, Xiaokang and Nie, Xiaotao and Cheng, Xin and Liu, Xin and Xie, Xin and Liu, Xingchao and Yang, Xinyu and Li, Xinyuan and Su, Xuecheng and Lin, Xuheng and Li, X. Q. and Jin, Xiangyue and Shen, Xiaojin and Chen, Xiaosha and Sun, Xiaowen and Wang, Xiaoxiang and Song, Xinnan and Zhou, Xinyi and Wang, Xianzu and Shan, Xinxia and Li, Y. K. and Wang, Y. Q. and Wei, Y. X. and Zhang, Yang and Xu, Yanhong and Li, Yao and Zhao, Yao and Sun, Yaofeng and Wang, Yaohui and Yu, Yi and Zhang, Yichao and Shi, Yifan and Xiong, Yiliang and He, Ying and Piao, Yishi and Wang, Yisong and Tan, Yixuan and Ma, Yiyang and Liu, Yiyuan and Guo, Yongqiang and Ou, Yuan and Wang, Yuduan and Gong, Yue and Zou, Yuheng and He, Yujia and Xiong, Yunfan and Luo, Yuxiang and You, Yuxiang and Liu, Yuxuan and Zhou, Yuyang and Zhu, Y. X. and Huang, Yanping and Li, Yaohui and Zheng, Yi and Zhu, Yuchen and Ma, Yunxian and Tang, Ying and Zha, Yukun and Yan, Yuting and Ren, Z. Z. and Ren, Zehui and Sha, Zhangli and Fu, Zhe and Xu, Zhean and Xie, Zhenda and Zhang, Zhengyan and Hao, Zhewen and Ma, Zhicheng and Yan, Zhigang and Wu, Zhiyu and Gu, Zihui and Zhu, Zijia and Liu, Zijun and Li, Zilin and Xie, Ziwei and Song, Ziyang and Pan, Zizheng and Huang, Zhen and Xu, Zhipeng and Zhang, Zhongyu and Zhang, Zhen},
   year={2025},
   month=sep, pages={633–638} }

@misc{chung2022scalinginstructionfinetunedlanguagemodels,
      title={Scaling Instruction-Finetuned Language Models}, 
      author={Hyung Won Chung and Le Hou and Shayne Longpre and Barret Zoph and Yi Tay and William Fedus and Yunxuan Li and Xuezhi Wang and Mostafa Dehghani and Siddhartha Brahma and Albert Webson and Shixiang Shane Gu and Zhuyun Dai and Mirac Suzgun and Xinyun Chen and Aakanksha Chowdhery and Alex Castro-Ros and Marie Pellat and Kevin Robinson and Dasha Valter and Sharan Narang and Gaurav Mishra and Adams Yu and Vincent Zhao and Yanping Huang and Andrew Dai and Hongkun Yu and Slav Petrov and Ed H. Chi and Jeff Dean and Jacob Devlin and Adam Roberts and Denny Zhou and Quoc V. Le and Jason Wei},
      year={2022},
      eprint={2210.11416},
      archivePrefix={arXiv},
      primaryClass={cs.LG},
      url={https://arxiv.org/abs/2210.11416}, 
}

@misc{sang2026crispcompressedreasoningiterative,
      title={{CRISP}: Compressed Reasoning via Iterative Self-Policy Distillation},
      author={Hejian Sang and Yuanda Xu and Zhengze Zhou and Ran He and Zhipeng Wang and Jiachen Sun},
      year={2026},
      eprint={2603.05433},
      archivePrefix={arXiv},
      primaryClass={cs.LG},
      url={https://arxiv.org/abs/2603.05433}, 
}

@misc{balasubramanian2026quantitativecharacterizationforgettingposttraining,
      title={A Quantitative Characterization of Forgetting in Post-Training}, 
      author={Krishnakumar Balasubramanian and Shiva Prasad Kasiviswanathan},
      year={2026},
      eprint={2603.12163},
      archivePrefix={arXiv},
      primaryClass={cs.LG},
      url={https://arxiv.org/abs/2603.12163}, 
}

@misc{kim2026doesselfdistillationsometimesdegrade,
      title={Why Does Self-Distillation (Sometimes) Degrade the Reasoning Capability of {LLMs}?},
      author={Jeonghye Kim and Xufang Luo and Minbeom Kim and Sangmook Lee and Dohyung Kim and Jiwon Jeon and Dongsheng Li and Yuqing Yang},
      year={2026},
      eprint={2603.24472},
      archivePrefix={arXiv},
      primaryClass={cs.CL},
      url={https://arxiv.org/abs/2603.24472}, 
}

@misc{yang2024selfdistillationbridgesdistributiongap,
      title={Self-Distillation Bridges Distribution Gap in Language Model Fine-Tuning}, 
      author={Zhaorui Yang and Tianyu Pang and Haozhe Feng and Han Wang and Wei Chen and Minfeng Zhu and Qian Liu},
      year={2024},
      eprint={2402.13669},
      archivePrefix={arXiv},
      primaryClass={cs.CL},
      url={https://arxiv.org/abs/2402.13669}, 
}

@misc{abbe2024fartransformersreasonglobality,
      title={How Far Can Transformers Reason? The Globality Barrier and Inductive Scratchpad}, 
      author={Emmanuel Abbe and Samy Bengio and Aryo Lotfi and Colin Sandon and Omid Saremi},
      year={2024},
      eprint={2406.06467},
      archivePrefix={arXiv},
      primaryClass={cs.LG},
      url={https://arxiv.org/abs/2406.06467}, 
}

@inproceedings{NEURIPS2025_a8f12348,
 author = {Yang, Tong and Huang, Yu and Liang, Yingbin and Chi, Yuejie},
 booktitle = {Advances in Neural Information Processing Systems},
 doi = {10.52202/085713-3875},
 editor = {D. Belgrave and C. Zhang and H. Lin and R. Pascanu and P. Koniusz and M. Ghassemi and N. Chen},
 pages = {116194--116285},
 publisher = {Curran Associates, Inc.},
 title = {Multi-head Transformers Provably Learn Symbolic Multi-step Reasoning via Gradient Descent},
 url = {https://proceedings.neurips.cc/paper_files/paper/2025/file/a8f123480ede36612314d013aba6340c-Paper-Conference.pdf},
 volume = {38, Main Conference},
 year = {2025}
}

@misc{wang2026entropypreservingsupervisedfinetuningadaptive,
      title={Entropy-Preserving Supervised Fine-Tuning via Adaptive Self-Distillation for Large Reasoning Models}, 
      author={Hao Wang and Hao Gu and Hongming Piao and Kaixiong Gong and Yuxiao Ye and Xiangyu Yue and Sirui Han and Yike Guo and Dapeng Wu},
      year={2026},
      eprint={2602.02244},
      archivePrefix={arXiv},
      primaryClass={cs.LG},
      url={https://arxiv.org/abs/2602.02244}, 
}

@inproceedings{liu2024lastiterate,
title={Revisiting the Last-Iterate Convergence of Stochastic Gradient Methods},
author={Liu, Zijian and Zhou, Zhengyuan},
booktitle={The Twelfth International Conference on Learning Representations},
year={2024},
url={https://openreview.net/forum?id=xxaEhwC1I4}
}

@misc{liu2026selfdistilledpolicygradient,
      title={Self-Distilled Policy Gradient}, 
      author={Yifeng Liu and Shiyuan Zhang and Yifan Zhang and Quanquan Gu},
      year={2026},
      eprint={2606.04036},
      archivePrefix={arXiv},
      primaryClass={cs.LG},
      url={https://arxiv.org/abs/2606.04036}, 
}

@inproceedings{shenfeld2026sdft,
  title = {{Self-Distillation Enables Continual Learning}},
  author = {Shenfeld, Idan and Damani, Mehul and H{\"{u}}botter, Jonas and Agrawal, Pulkit},
  year = {2026},
  booktitle = {International Conference on Machine Learning},
  url = {https://arxiv.org/abs/2601.19897}
}

@inproceedings{hubotter2026sdpo,
  title = {{Reinforcement Learning via Self-Distillation}},
  author = {H{\"{u}}botter, Jonas and L{\"{u}}beck, Frederike and Behric, Lejs and Baumann, Anton and Bagatella, Marco and Marta, Daniel and Hakimi, Ido and Shenfeld, Idan and Kleine Buening, Thomas and Guestrin, Carlos and Krause, Andreas},
  year = {2026},
  booktitle = {International Conference on Machine Learning},
  url = {https://arxiv.org/abs/2601.20802}
}

@inproceedings{zhao2026sdr,
  title = {{Self-Distilled Reasoner: On-Policy Self-Distillation for Large Language Models}},
  author = {Zhao, Siyan and Xie, Zhihui and Liu, Mengchen and Huang, Jing and Pang, Guan and Chen, Feiyu and Grover, Aditya},
  year = {2026},
  booktitle = {International Conference on Machine Learning},
  url = {https://arxiv.org/abs/2601.18734}
}

@inproceedings{wang2023selfconsistency,
  title = {{Self-Consistency Improves Chain of Thought Reasoning in Language Models}},
  author = {Wang, Xuezhi and Wei, Jason and Schuurmans, Dale and Le, Quoc and Chi, Ed and Narang, Sharan and Chowdhery, Aakanksha and Zhou, Denny},
  year = {2023},
  booktitle = {International Conference on Learning Representations},
  url = {https://arxiv.org/abs/2203.11171}
}

@inproceedings{zhu2025negative,
  title = {{The Surprising Effectiveness of Negative Reinforcement in LLM Reasoning}},
  author = {Zhu, Xinyu and Xia, Mengzhou and Wei, Zhepei and Chen, Wei-Lin and Chen, Danqi and Meng, Yu},
  year = {2025},
  booktitle = {Advances in Neural Information Processing Systems},
  volume = {38},
  url = {https://arxiv.org/abs/2506.01347}
}

@inproceedings{yao2023tot,
  title = {{Tree of Thoughts: Deliberate Problem Solving with Large Language Models}},
  author = {Yao, Shunyu and Yu, Dian and Zhao, Jeffrey and Shafran, Izhak and Griffiths, Thomas L. and Cao, Yuan and Narasimhan, Karthik},
  year = {2023},
  booktitle = {Advances in Neural Information Processing Systems},
  url = {https://arxiv.org/abs/2305.10601}
}

@inproceedings{kim2025metastable,
  title = {{Metastable Dynamics of Chain-of-Thought Reasoning: Provable Benefits of Search, RL and Distillation}},
  author = {Juno Kim and Denny Wu and Jason D. Lee and Taiji Suzuki},
  year = {2025},
  booktitle = {Proceedings of the 42nd International Conference on Machine Learning},
  url = {https://proceedings.mlr.press/v267/kim25aj.html},
  pages = {30791--30825}
}

@article{bu2025distributional,
  title = {{Distributional Biases in Post-Training: A Markovian Analysis of Reasoning Trajectories}},
  author = {Bu, Dake and Huang, Wei and Han, Andi and Nitanda, Atsushi and Xue, Bo and Zhang, Qingfu and Wong, Hau-San and Suzuki, Taiji},
  year = {2025},
  journal = {arXiv preprint arXiv:2511.07368},
  url = {https://arxiv.org/abs/2511.07368},
  note = {Preprint}
}

@article{wei2026backtrack,
  title = {{Provable Benefits of RLVR over SFT for Reasoning Models: Learning to Backtrack Efficiently}},
  author = {Wei, Stanley and Kim, Juno},
  year = {2026},
  journal = {arXiv preprint arXiv:2606.22938},
  url = {https://arxiv.org/abs/2606.22938},
  note = {Preprint}
}

@inproceedings{shenfeld2026razor,
  title = {{RL's Razor: Why Online Reinforcement Learning Forgets Less}},
  author = {Shenfeld, Idan and Pari, Jyothish and Agrawal, Pulkit},
  year = {2026},
  booktitle = {International Conference on Learning Representations},
  url = {https://proceedings.iclr.cc/paper_files/paper/2026/hash/618c95f4557c15b253fb0e6f548ea0c0-Abstract-Conference.html}
}

@article{zhao2026unified,
  title={Large Language Model Post-Training: A Unified View of Off-Policy and On-Policy Learning},
  author={Zhao, Shiwan and Wang, Zhihu and Zhao, Xuyang and Zhou, Jiaming and Xu, Caiyue and Liu, Chenfei and Zhang, Liting and Jia, Yuhang and Zhang, Yanzhe and Yu, Hualong and Xu, Zichen and Li, Qicheng and Qin, Yong},
  year={2026},
  journal={arXiv preprint arXiv:2604.07941},
  url={https://arxiv.org/abs/2604.07941}
}

@article{luo2023forgetting,
  title={An Empirical Study of Catastrophic Forgetting in Large Language Models During Continual Fine-tuning},
  author={Luo, Yun and Yang, Zhen and Meng, Fandong and Li, Yafu and Zhou, Jie and Zhang, Yue},
  year={2023},
  journal={arXiv preprint arXiv:2308.08747},
  url={https://arxiv.org/abs/2308.08747}
}

@article{lai2025reinforcement,
  title={Reinforcement Fine-Tuning Naturally Mitigates Forgetting in Continual Post-Training},
  author={Lai, Song and Zhao, Haohan and Feng, Rong and Ma, Changyi and Liu, Wenzhuo and Zhao, Hongbo and Lin, Xi and Yi, Dong and Zhang, Qingfu and Liu, Hongbin and Meng, Gaofeng and Zhu, Fei},
  year={2025},
  journal={arXiv preprint arXiv:2507.05386},
  url={https://arxiv.org/abs/2507.05386}
}

@article{chen2025retaining,
  title={Retaining by Doing: The Role of On-Policy Data in Mitigating Forgetting},
  author={Chen, Howard and Razin, Noam and Narasimhan, Karthik and Chen, Danqi},
  year={2025},
  journal={arXiv preprint arXiv:2510.18874},
  url={https://arxiv.org/abs/2510.18874}
}

@article{zhu2026sftconflicts,
  title={{SFT} Conflicts, {RL} Coexists: A Theoretical and Empirical Analysis of Multi-Task Learning for {LLMs}},
  author={Zhu, Kejian and Jin, Zhuoran and Tu, Shangqing and Yuan, Hongbang and Bai, Yushi and Liu, Kang and Li, Juanzi and Zhao, Jun},
  year={2026},
  journal={arXiv preprint arXiv:2608.03573},
  url={https://arxiv.org/abs/2608.03573}
}

@article{ye2026opcd,
  title={On-Policy Context Distillation for Language Models},
  author={Ye, Tianzhu and Dong, Li and Wu, Xun and Huang, Shaohan and Wei, Furu},
  year={2026},
  journal={arXiv preprint arXiv:2602.12275},
  url={https://arxiv.org/abs/2602.12275}
}

@inproceedings{sanford2024graph,
  title = {{Understanding Transformer Reasoning Capabilities via Graph Algorithms}},
  author = {Sanford, Clayton and Fatemi, Bahare and Hall, Ethan and Tsitsulin, Anton and Kazemi, Mehran and Halcrow, Jonathan and Perozzi, Bryan and Mirrokni, Vahab},
  booktitle = {Advances in Neural Information Processing Systems},
  volume = {37},
  year = {2024},
  url = {https://proceedings.neurips.cc/paper_files/paper/2024/hash/8f395480c04ac6dfb2c2326a639df88e-Abstract-Conference.html}
}

@inproceedings{bachmann2024pitfalls,
  title = {{The Pitfalls of Next-Token Prediction}},
  author = {Bachmann, Gregor and Nagarajan, Vaishnavh},
  booktitle = {Proceedings of the 41st International Conference on Machine Learning},
  volume = {235},
  pages = {2296--2318},
  series = {Proceedings of Machine Learning Research},
  year = {2024},
  url = {https://proceedings.mlr.press/v235/bachmann24a.html}
}

@article{zhao2023surveylargelanguagemodels,
  title = {{A Survey of Large Language Models}},
  author = {Zhao, Wayne Xin and Zhou, Kun and Li, Junyi and Tang, Tianyi and Wang, Xiaolei and Hou, Yupeng and Min, Yingqian and Zhang, Beichen and Zhang, Junjie and Dong, Zican and Du, Yifan and Yang, Chen and Chen, Yushuo and Chen, Zhipeng and Jiang, Jinhao and Ren, Ruiyang and Li, Yifan and Tang, Xinyu and Liu, Zikang and Liu, Peiyu and Nie, Jian-Yun and Wen, Ji-Rong},
  year = {2023},
  journal = {arXiv preprint arXiv:2303.18223},
  url = {https://arxiv.org/abs/2303.18223}
}

@article{zhang2023instructiontuninglargelanguage,
  title = {{Instruction Tuning for Large Language Models: A Survey}},
  author = {Zhang, Shengyu and Dong, Linfeng and Li, Xiaoya and Zhang, Sen and Sun, Xiaofei and Wang, Shuhe and Li, Jiwei and Hu, Runyi and Zhang, Tianwei and Wu, Fei and Wang, Guoyin},
  year = {2023},
  journal = {arXiv preprint arXiv:2308.10792},
  url = {https://arxiv.org/abs/2308.10792}
}

@article{wang2023aligninglargelanguagemodels,
  title = {{Aligning Large Language Models with Human: A Survey}},
  author = {Wang, Yufei and Zhong, Wanjun and Li, Liangyou and Mi, Fei and Zeng, Xingshan and Huang, Wenyong and Shang, Lifeng and Jiang, Xin and Liu, Qun},
  year = {2023},
  journal = {arXiv preprint arXiv:2307.12966},
  url = {https://arxiv.org/abs/2307.12966}
}

@article{kaufmann2025survey,
  title = {{A Survey of Reinforcement Learning from Human Feedback}},
  author = {Kaufmann, Timo and Weng, Paul and Bengs, Viktor and H{\"{u}}llermeier, Eyke},
  year = {2025},
  journal = {Transactions on Machine Learning Research},
  url = {https://arxiv.org/abs/2312.14925}
}

@article{li2026reviewreinforcementlearning,
  title = {{Review of Reinforcement Learning for Large Language Models: Formulations, Algorithms, and Opportunities}},
  author = {Li, Ziniu and Wang, Pengyuan and Xu, Tian and Ding, Tian and Sun, Ruoyu and Yu, Yang},
  year = {2026},
  journal = {Transactions on Machine Learning Research},
  url = {https://www.liziniu.org/docs/RL4LLM_Survey.pdf}
}

@article{wang2023trace,
  title = {{TRACE: A Comprehensive Benchmark for Continual Learning in Large Language Models}},
  author = {Wang, Xiao and Zhang, Yuansen and Chen, Tianze and Gao, Songyang and Jin, Senjie and Yang, Xianjun and Xi, Zhiheng and Zheng, Rui and Zou, Yicheng and Gui, Tao and Zhang, Qi and Huang, Xuanjing},
  year = {2023},
  journal = {arXiv preprint arXiv:2310.06762},
  url = {https://arxiv.org/abs/2310.06762}
}

@inproceedings{li2024revisitingcatastrophicforgetting,
  title = {{Revisiting Catastrophic Forgetting in Large Language Model Tuning}},
  author = {Li, Hongyu and Ding, Liang and Fang, Meng and Tao, Dacheng},
  year = {2024},
  booktitle = {Findings of the Association for Computational Linguistics: EMNLP 2024},
  publisher = {Association for Computational Linguistics},
  pages = {4297--4308},
  doi = {10.18653/v1/2024.findings-emnlp.249},
  url = {https://aclanthology.org/2024.findings-emnlp.249/}
}

@inproceedings{wang2023orthogonalsubspacelearning,
  title = {{Orthogonal Subspace Learning for Language Model Continual Learning}},
  author = {Wang, Xiao and Chen, Tianze and Ge, Qiming and Xia, Han and Bao, Rong and Zheng, Rui and Zhang, Qi and Gui, Tao and Huang, Xuanjing},
  year = {2023},
  booktitle = {Findings of the Association for Computational Linguistics: EMNLP 2023},
  publisher = {Association for Computational Linguistics},
  pages = {10658--10671},
  doi = {10.18653/v1/2023.findings-emnlp.715},
  url = {https://aclanthology.org/2023.findings-emnlp.715/}
}

@article{biderman2024loralearnslessforgets,
  title = {{LoRA Learns Less and Forgets Less}},
  author = {Biderman, Dan and Portes, Jacob and Gonzalez Ortiz, Jose Javier and Paul, Mansheej and Greengard, Philip and Jennings, Connor and King, Daniel and Havens, Sam and Chiley, Vitaliy and Frankle, Jonathan and Blakeney, Cody and Cunningham, John P.},
  year = {2024},
  journal = {Transactions on Machine Learning Research},
  url = {https://openreview.net/forum?id=aloEru2qCG}
}
\bibliographystyle{plainnat}

\clearpage
\appendix

\addtocontents{toc}{\protect\appendixcontentsformat}
\begingroup
\setlength{\parskip}{0pt}
\tableofcontents
\endgroup
\clearpage

\section{Additional related work}
\label{app:related-works}
In this section, we discuss additional related work.
\paragraph{On-policy versus off-policy learning.}
A central focus of this paragraph is the preservation of previously acquired capabilities.
Continual SFT can degrade existing knowledge and reasoning abilities through catastrophic forgetting \citep{luo2023forgetting}.
Recent comparisons report that RL often preserves prior capabilities better than SFT at comparable new-task performance \citep{lai2025reinforcement,shenfeld2026razor,chen2025retaining}.
Controlled experiments further identify on-policy sampling as an important contributor to this retention advantage \citep{chen2025retaining}.
\citet{zhu2026sftconflicts} attribute the contrasting forgetting behavior of SFT and RL to inter-task interference: SFT exhibits substantial conflicts during sequential training, whereas RL induces less interferences that allow multiple task capabilities to coexist.
From a complementary distributional perspective, \citet{shenfeld2026razor} argue that standard SFT fits external demonstrations without explicitly accounting for proximity to the base model, whereas on-policy updates are biased toward solutions closer to the initial policy, helping explain their reduced forgetting.
Building on \citet{zhu2026sftconflicts}'s task-interference perspective, we develop more theoretical analysis of how SFT and on-policy SD differ in learning new tasks and preserving previously acquired capabilities.

\paragraph{CoT reasoning as graph search.}
Graph reasoning is a broadly accepted approach for analyzing the reasoning capabilities of language models.
\citet{sanford2024graph} characterize Transformer expressivity on graph reasoning tasks
in terms of depth, width, and additional computation tokens,
and empirically evaluate their theoretical predictions on GraphQA.
\citet{bachmann2024pitfalls} use path-star graphs to demonstrate
how teacher-forced next-token prediction can favor shortcuts that hinder learning to plan.
\citet{kim2025metastable} model CoT as a metastable Markov process on clustered graphs
and establish benefits of search, RL, and distillation for crossing rare inter-cluster transitions.
\citet{wei2026backtrack} study backtracking in graph pathfinding
and prove an exponential inference-time advantage of RLVR over SFT
when SFT is trained only on golden shortest paths without negative examples.
Our work uses a CoT graph model to analyze how SFT and self-distillation affect the retention of what they learned.

\section{Realizability and proofs}
\label{app:proofs}
The proofs are organized as sketches, supporting lemmas, and main proofs.
Algorithm~\ref{alg:sequential-training} describes the SD and SFT training procedures.

\begin{algorithm}[htbp]
\caption{Sequential SD and SFT in the reasoning-graph model}
\label{alg:sequential-training}
\begin{algorithmic}[1]
\Require method $\in \{\text{SD, SFT}\}$
\Require $\mathbf W^{(0)}$, $B$, $\{n_\ell\}_{\ell=1}^L$, $\{\eta_j\}_{j\geq 1}$, $\epsilon_{\mathrm{rel}}\in(0,1)$

\State $\mathbf W \gets \mathbf W^{(0)}$
\For{$\ell=1,\ldots,L$}
    \For{$j=1,\ldots,n_\ell$}
        \For{$b=1,\ldots,B$}
            \State Sample
            $y^{(b)} \sim \pi_{\mathbf W}(\cdot \mid x_{\ell})$ and
            $\gamma^{(b)}\sim\operatorname{Unif}(\mathcal P_\ell)$.
            \If{method $=$ SFT}
                \State $y^{*(b)}\gets\gamma^{(b)}$
            \Else
                \State Construct $c^{(b)}$ from all inter-cluster
                transitions in $\gamma^{(b)}$.
                \State Construct $\pi_{\mathbf W}(\cdot \mid x_{\ell}, c^{(b)})$.
            \EndIf
            \EndFor
        \State $\mathbf W\gets\mathbf W
        -\eta_j\nabla_{\mathbf W}
        L_{\mathrm{method}}(\mathbf W;\mathcal D_\ell)$
        \EndFor
\EndFor
\State \Return $\mathbf W$
\end{algorithmic}
\end{algorithm}

\addtocontents{toc}{\protect\Needspace{5\protect\baselineskip}}
\subsection{Realizability on routing-compatible layered DAGs}\label{app:realizability}
\addtocontents{toc}{\protect\nopagebreak[4]}
\addtocontents{toc}{\protect\proofcontentsentry{prop:routing-realizability}{Proposition~\protect\ref*{prop:routing-realizability}. Realizability on layered DAGs}}
\addtocontents{toc}{\protect\nopagebreak[4]}
\addtocontents{toc}{\protect\proofcontentschild{lem:routing-kernel}{Lemma~\protect\ref*{lem:routing-kernel}. Positive routing with boundary coverage}}
\addtocontents{toc}{\protect\proofcontentschild{lem:boundary-kernel}{Lemma~\protect\ref*{lem:boundary-kernel}. Boundary kernels with small transition probabilities}}
\addtocontents{toc}{\protect\proofcontentschild{lem:finite-realization}{Lemma~\protect\ref*{lem:finite-realization}. Simultaneous finite softmax realization}}

We consider layered DAGs satisfying the support conditions below.
For this graph family, the proposition following the description shows that
Assumption~\ref{ass:pretrained-diversity} can be realized by a finite pre-trained parameter
$\mathbf W^{(0)}$.
For $u\in V$, write $\mathcal N(u):=\{v\in V:(u,v)\in E\}$.
Suppose that $G$ is a finite layered DAG, meaning that
\begin{align}
   V=V_0\mathbin{\dot\cup}\cdots\mathbin{\dot\cup}V_H,
   \qquad
   (u,v)\in E,\ u\in V_h,\ v\in V_{h'}
   \quad\Longrightarrow\quad h<h'.
   \label{eq:routing-layered-dag}
\end{align}
We call such a graph \emph{routing-compatible} for the task family if the
following conditions hold:
\begin{enumerate}
   \item Every boundary node has at least one same-cluster successor:
   \begin{align}
      R_k(u):=\mathcal N(u)\cap C_k\neq\emptyset,
      \qquad u\in\partial C_k.
      \label{eq:routing-boundary-support}
   \end{align}
   These later-layer nodes provide support for the residual probability mass
   of a finite softmax distribution; they are not thereby declared relevant
   starts.
   \item Every relevant start belongs to the interior of its cluster, and for
   every task $\mathcal D_\ell$, every non-terminal cluster $C_k$ with
   $\partial_\ell^\star C_k\neq\emptyset$, and every relevant start $s$ in
   $C_k$,
   \begin{align}
      \partial_\ell^\star C_k
      \subseteq\mathcal N(s)
      \subseteq\partial C_k.
      \label{eq:routing-start-support}
   \end{align}
\end{enumerate}
The numbers of layers, clusters, nodes, valid boundaries, and residual
successors are arbitrary within this graph family.

\begin{realizabilityproposition}[Realizability on routing-compatible layered DAGs]
\label{prop:routing-realizability}
Suppose that $G$ is a routing-compatible layered DAG in the sense of
Equations~\ref{eq:routing-layered-dag}--\ref{eq:routing-start-support}.
Under the full-rank condition on $(x_1,\ldots,x_L)$ in the problem setup, fix
$\epsilon_1\in(0,1)$ and $\epsilon_2>0$ such that
\begin{align}
   (K-1)\epsilon_2\leq\frac12.
   \label{eq:routing-epsilon-two}
\end{align}
Then there exists a finite
$\mathbf W^{(0)}\in\mathbb R^{d\times|E|}$ satisfying every clause of
Assumption~\ref{ass:pretrained-diversity} simultaneously for all task inputs $x_1,\ldots,x_L$.
\end{realizabilityproposition}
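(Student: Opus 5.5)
The plan is to build $\mathbf W^{(0)}$ in two stages. First we choose target transition kernels $q_\ell(\cdot\mid u)$ on the out-neighbourhoods $\mathcal N(u)$, one kernel per task $\ell$, that satisfy every clause of Assumption~\ref{ass:pretrained-diversity}. Then we realize all of them at once by a single finite softmax parameter. The second stage uses the full column rank of $(x_1,\ldots,x_L)$. For each edge column $e_{u,v}$ we need $x_\ell^\top\mathbf W e_{u,v}=\log q_\ell(v\mid u)+c_{\ell,u}$ for all $\ell$. This is a linear system in the column $\mathbf W e_{u,v}\in\mathbb R^d$, and it is solvable exactly when the $L$ right-hand sides are prescribed. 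Full rank guarantees this, say by taking the minimum-norm solution through the pseudo-inverse of $X^\top$. Finiteness only needs every target probability to be strictly positive, since then the logarithms are finite. This is Lemma~\ref{lem:finite-realization}.

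For the kernels, first consider interior nodes and relevant starts (Lemma~\ref{lem:routing-kernel}). By \eqref{eq:routing-start-support}, a relevant start $s$ has all its successors in $\partial C_k$, and these include every node of $\partial^\star_\ell C_k$. We put mass $(1-\epsilon_1)/|\partial^\star_\ell C_k|$ on each $u\in\partial^\star_\ell C_k$ and spread the remaining $\epsilon_1$ uniformly over $\mathcal N(s)\setminus\partial^\star_\ell C_k$. If that set is empty, we put all the mass on $\partial^\star_\ell C_k$. The first boundary hit then occurs at time $1$, so the hitting time is almost surely finite. The probability of an invalid boundary is at most $\epsilon_1$, and each valid boundary is hit with probability at least $(1-\epsilon_1)/M=\Omega(1/M)$, uniformly in $s$. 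Relevant starts are interior nodes, so this is consistent with the assumption that inter-cluster destinations are interior. For the other interior nodes, which are not relevant starts, we take any strictly positive kernel. These nodes do not affect the hitting law from relevant starts, because that law is determined after a single step.

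Next come the boundary nodes (Lemma~\ref{lem:boundary-kernel}). A boundary node $u\in\partial C_k$ has at most one inter-cluster successor per other cluster, hence at most $K-1$ inter-cluster edges. We give each of them probability exactly $\epsilon_2$ for every task. By \eqref{eq:routing-epsilon-two} the leftover mass is at least $1/2$, and we spread it over the nonempty set $R_k(u)$ guaranteed by \eqref{eq:routing-boundary-support}. This gives $\pi(v\mid u,x_\ell)=\Theta(\epsilon_2)$ on every inter-cluster edge, positivity everywhere, and task-independent kernels at boundary nodes. Each boundary node's outgoing edges are then either inter-cluster, at fixed probability $\epsilon_2$, or intra-cluster. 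Interior nodes have no inter-cluster edges by definition, so the "$\Theta(\epsilon_2)$ for every inter-cluster edge" clause is exhausted.

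The main obstacle is the consistency check between the per-task kernels. A node $s$ may be a relevant start for several tasks with different sets $\partial^\star_\ell C_k$, so $q_\ell(\cdot\mid s)$ really does depend on $\ell$. This is exactly why we need the full-rank step rather than a task-independent parameter, and why the logit offsets $c_{\ell,u}$ may be chosen freely per $(\ell,u)$. We must also confirm that the layered structure \eqref{eq:routing-layered-dag} makes the boundary-hitting statement trivially a.s. finite, and that the clause is only imposed on non-terminal clusters with $\partial^\star_\ell C_k\neq\emptyset$. If a relevant start satisfies \eqref{eq:routing-start-support} for one task but is irrelevant for another, its kernel for the other task is arbitrary positive. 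With these checks, Lemmas~\ref{lem:routing-kernel}, \ref{lem:boundary-kernel}, and \ref{lem:finite-realization} combine to give the proposition.
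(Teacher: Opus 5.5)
Your proposal is correct and follows the paper's proof essentially step for step. You use one-step strictly positive routing kernels at relevant starts, boundary kernels with mass exactly $\epsilon_2$ on each inter-cluster edge and the remainder on $R_k(u)$, and simultaneous realization of all task-wise kernels through $X^\top(XX^\top)^{-1}$ via the full-rank condition. The differences are cosmetic: your $(1-\epsilon_1)$ normalization gives valid-boundary mass $(1-\epsilon_1)/M$, whereas the paper's $1/(1+\epsilon_1)$ normalization gives the bound $1/(2M)$ uniformly in $\epsilon_1$; and your per-$(\ell,u)$ logit offsets are harmless but unnecessary.
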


\paragraph{Sketch of proof of Proposition~\ref{prop:routing-realizability}.}
\begin{itemize}
   \item Construct positive within-cluster routing kernels with small invalid-boundary probability and uniform coverage of valid boundaries (Lemma~\ref{lem:routing-kernel}).
   \item Construct normalized boundary kernels assigning the prescribed small probability to every inter-cluster edge (Lemma~\ref{lem:boundary-kernel}).
   \item Realize all task-wise kernels simultaneously with one finite softmax parameter (Lemma~\ref{lem:finite-realization}).
\end{itemize}

\begin{lemma}[A positive routing kernel with boundary coverage]
\label{lem:routing-kernel}
Under \Eqref{eq:routing-start-support}, there is a strictly positive kernel at every
relevant start for which the first boundary is reached in one step,
the invalid-boundary probability is at most $\epsilon_1$, and each valid
boundary has probability at least $1/(2M)$.
\end{lemma}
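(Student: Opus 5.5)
The construction is explicit and happens at each relevant start $s$ separately; the kernel is defined task by task. Fix a task $\mathcal D_\ell$, a non-terminal cluster $C_k$ with $\partial^\star_\ell C_k\neq\emptyset$, and a relevant start $s$ in $C_k$. By \Eqref{eq:routing-start-support}, every successor of $s$ is a boundary node of $C_k$. Hence any kernel $P_\ell(\cdot\mid s)$ supported on $\mathcal N(s)$ reaches $\partial C_k$ in exactly one step, so $\tau^s_{\ell,k}=1$ almost surely and $U^s_{\ell,k}=Y_1$. The same inclusion gives $\partial^\star_\ell C_k\subseteq\mathcal N(s)$, so every valid boundary is an admissible target.

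Next I specify the probabilities. Write $m:=|\partial^\star_\ell C_k|\le M$ and let $r:=|\mathcal N(s)\setminus\partial^\star_\ell C_k|$ be the number of invalid boundary successors. If $r=0$, I take the uniform kernel on $\partial^\star_\ell C_k$; each valid boundary then has probability $1/m\ge 1/M$ and the invalid probability is $0$. If $r\ge1$, I put mass $\epsilon_1/2$ (or any amount in $(0,\epsilon_1]$, provided it is at most $1/2$) spread uniformly over the $r$ invalid successors. The remaining mass $1-\epsilon_1/2\ge 1/2$ goes uniformly to the $m$ valid boundaries. Each valid boundary then receives at least $1/(2m)\ge 1/(2M)$, every entry is strictly positive, and the invalid-boundary probability is at most $\epsilon_1$. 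Since $\epsilon_1<1$, we have $\epsilon_1/2<1/2$, which is why the constant $1/(2M)$ in the lemma is the natural one.

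There is one subtlety. Where a relevant start is shared between tasks, the lemma only asks for a kernel per task, because the inputs $x_\ell$ differ and Lemma~\ref{lem:finite-realization} later realizes the kernels task-wise through the full-rank condition. So I treat each pair $(\ell,s)$ independently and do not need a single kernel valid for all tasks at once. The main point to get right is therefore bookkeeping rather than analysis:
\begin{itemize}
\item the "interior relevant start" hypothesis places $s$ in $\mathring C_k$, so all its successors lie in $C_k$, which is consistent with $\mathcal N(s)\subseteq\partial C_k$;
\item strict positivity on the whole support $\mathcal N(s)$ is what allows a finite softmax parameter to realize the kernel later.
\end{itemize}
Both follow directly from the construction above.
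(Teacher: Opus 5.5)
Your proposal is correct and takes essentially the paper's approach: fix $(\ell,k,s)$, use $\mathcal N(s)\subseteq\partial C_k$ to get one-step boundary hitting, and split the mass uniformly over valid and invalid successors, with a uniform kernel on $\partial^\star_\ell C_k$ when there are no invalid successors. The only difference is the normalization, and it does not affect the bounds $\epsilon_1$ and $1/(2M)$: you place mass $\epsilon_1/2$ on the invalid successors, whereas the paper uses weights $1/((1+\epsilon_1)|\partial^\star_\ell C_k|)$ and $\epsilon_1/((1+\epsilon_1)|I_{\ell,k}(s)|)$, so its invalid mass is $\epsilon_1/(1+\epsilon_1)$.
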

\begin{proof}
Fix a task $\mathcal D_\ell$, a non-terminal cluster $C_k$ with
$\partial_\ell^\star C_k\neq\emptyset$, and a relevant start $s$ in $C_k$.
Define
\begin{align}
   I_{\ell,k}(s)
   :=\mathcal N(s)\setminus\partial_\ell^\star C_k.
\end{align}
By Equation~\ref{eq:routing-start-support}, every outgoing neighbor of $s$ is
a boundary node and every valid boundary is an outgoing neighbor.  Prescribe
the transition kernel at $s$ by
\begin{align}
   P_\ell(v\mid s)
   :=
   \begin{cases}
      \displaystyle
      \frac{1}{(1+\epsilon_1)|\partial_\ell^\star C_k|},
      &v\in\partial_\ell^\star C_k,\quad I_{\ell,k}(s)\neq\emptyset,\\[7pt]
      \displaystyle
      \frac{\epsilon_1}{(1+\epsilon_1)|I_{\ell,k}(s)|},
      &v\in I_{\ell,k}(s),\quad I_{\ell,k}(s)\neq\emptyset,\\[7pt]
      \displaystyle
      \frac{1}{|\partial_\ell^\star C_k|},
      &v\in\partial_\ell^\star C_k,\quad I_{\ell,k}(s)=\emptyset.
   \end{cases}
   \label{eq:routing-start-kernel}
\end{align}
This is a strictly positive probability distribution on $\mathcal N(s)$.
Because $\mathcal N(s)\subseteq\partial C_k$, the first boundary is reached
in exactly one step, and hence
\begin{align}
   \mathbb P_{P_\ell}\!\left(
      U_{\ell,k}^{s}\notin\partial_\ell^\star C_k
      \mid Y_0=s,x_\ell
   \right)
   &\leq\frac{\epsilon_1}{1+\epsilon_1}
   \leq\epsilon_1,
   \label{eq:routing-error}\\
   \mathbb P_{P_\ell}\!\left(
      U_{\ell,k}^{s}=u
      \mid Y_0=s,x_\ell
   \right)
   &\geq
   \frac{1}{(1+\epsilon_1)|\partial_\ell^\star C_k|}
   \geq\frac{1}{2M}
   \label{eq:routing-diversity}
\end{align}
for every $u\in\partial_\ell^\star C_k$.  Thus the first-boundary hitting
time is almost surely finite, the routing error is $O(\epsilon_1)$, and every
valid boundary receives probability $\Omega(1/M)$, uniformly over all
relevant starts.
\end{proof}

\begin{lemma}[Boundary kernels with small inter-cluster probabilities]
\label{lem:boundary-kernel}
Under \Eqref{eq:routing-boundary-support} and \Eqref{eq:routing-epsilon-two},
each boundary admits a strictly positive normalized kernel assigning exactly
$\epsilon_2$ to every inter-cluster edge.
\end{lemma}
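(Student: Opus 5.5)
We will construct the kernel explicitly at each boundary node $u\in\partial C_k$ and then verify positivity and normalization. Let $O(u):=\mathcal N(u)\setminus C_k$ be the set of inter-cluster successors of $u$. By the problem setup, for every cluster $C_{k'}\neq C_k$ there is at most one $v\in C_{k'}$ with $(u,v)\in E$, so $|O(u)|\leq K-1$. By \Eqref{eq:routing-boundary-support}, the set $R_k(u)=\mathcal N(u)\cap C_k$ is nonempty. We will use $R_k(u)$ to absorb the residual mass. Since $\mathcal N(u)=O(u)\mathbin{\dot\cup}R_k(u)$, define
\begin{align*}
   P(v\mid u):=
   \begin{cases}
      \epsilon_2, & v\in O(u),\\[3pt]
      \dfrac{1-|O(u)|\epsilon_2}{|R_k(u)|}, & v\in R_k(u).
   \end{cases}
\end{align*}

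The verification then proceeds in two steps. First, the total mass is $|O(u)|\epsilon_2+(1-|O(u)|\epsilon_2)=1$, so $P(\cdot\mid u)$ is normalized. Second, positivity: the inter-cluster entries equal $\epsilon_2>0$. For the same-cluster entries, \Eqref{eq:routing-epsilon-two} gives $|O(u)|\epsilon_2\leq (K-1)\epsilon_2\leq 1/2$, so each entry is at least $1/(2|R_k(u)|)>0$. Every inter-cluster edge leaves some boundary node by the definition of $\partial C_k$, so this assigns exactly $\epsilon_2$ to every inter-cluster edge. The construction is independent of the task, so it can be used for all $x_\ell$ at once; if desired, one may index it by $\ell$ trivially.

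The main obstacle is only bookkeeping. We must confirm that $O(u)$ and $R_k(u)$ partition $\mathcal N(u)$, and that the bound $|O(u)|\leq K-1$ really follows from the no-multiple-edge and at-most-one-successor-per-cluster conditions. We will also record that the same-cluster entries are bounded below by $1/(2|R_k(u)|)$, which is strictly positive and independent of $\epsilon_2$. Strict positivity is what the later simultaneous finite softmax realization (Lemma~\ref{lem:finite-realization}) needs, because finite logits can then be taken as logarithms of these probabilities.
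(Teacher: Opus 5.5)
Your proposal is correct and matches the paper's proof: the paper defines the same kernel, assigning $\epsilon_2$ to each edge in $\mathcal N(u)\setminus C_k$ and $(1-d_{u,k}^{\mathrm{out}}\epsilon_2)/|R_k(u)|$ to each successor in $R_k(u)$. It also verifies positivity the same way, via $d_{u,k}^{\mathrm{out}}\le K-1$ together with \Eqref{eq:routing-epsilon-two} and \Eqref{eq:routing-boundary-support}.
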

\begin{proof}
We next prescribe the kernel at boundary nodes.  For
$u\in\partial C_k$, let
\begin{align}
   d_{u,k}^{\mathrm{out}}
   :=|\mathcal N(u)\setminus C_k|.
\end{align}
For every task input, set
\begin{align}
   P_\ell(v\mid u)
   :=
   \begin{cases}
      \epsilon_2,
      &v\in\mathcal N(u)\setminus C_k,\\[4pt]
      \displaystyle
      \frac{1-d_{u,k}^{\mathrm{out}}\epsilon_2}{|R_k(u)|},
      &v\in R_k(u).
   \end{cases}
   \label{eq:routing-boundary-kernel}
\end{align}
The graph convention in the problem setup gives
$d_{u,k}^{\mathrm{out}}\leq K-1$.  Therefore
Equation~\ref{eq:routing-epsilon-two}, together with
Equation~\ref{eq:routing-boundary-support}, makes
Equation~\ref{eq:routing-boundary-kernel} a strictly positive probability
distribution.  
\end{proof}

\begin{lemma}[Simultaneous finite softmax realization]
\label{lem:finite-realization}
Every family of strictly positive task-wise transition kernels $P_\ell$
on the given edge support is realized by one finite $\mathbf W^{(0)}$
when the task-input matrix has full column rank.
\end{lemma}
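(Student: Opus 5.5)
Fix a node $u$ with outgoing neighborhood $\mathcal N(u)$, and let $P_\ell(\cdot\mid u)$, $\ell=1,\ldots,L$, be the prescribed strictly positive kernels on $\mathcal N(u)$. The plan is to handle each source node independently. This works because the softmax at $u$ reads only the columns of $\mathbf W$ indexed by the edges $(u,v)$, $v\in\mathcal N(u)$, and distinct source nodes use disjoint column sets. It therefore suffices to construct, for each $u$, a finite block $\mathbf W_u\in\mathbb R^{d\times|\mathcal N(u)|}$ with
\begin{align}
   \operatorname{softmax}(x_\ell^\top\mathbf W_u)=P_\ell(\cdot\mid u),\qquad \ell=1,\ldots,L.
\end{align}
We then assemble $\mathbf W^{(0)}$ by placing each block in its own columns.

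For the target logits, take $z_{\ell,v}:=\log P_\ell(v\mid u)$. These are finite real numbers precisely because every kernel is strictly positive; a zero entry would force an infinite logit. Since softmax is invariant to additive shifts, any logits equal to $z_{\ell,\cdot}$ up to a per-task constant would also work, but the shift is not needed. Stack these logits into $Z_u\in\mathbb R^{L\times|\mathcal N(u)|}$ and let $X:=(x_1,\ldots,x_L)\in\mathbb R^{d\times L}$. The condition becomes the linear system
\begin{align}
   X^\top\mathbf W_u=Z_u .
\end{align}

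Because $X$ has full column rank, $X^\top X$ is invertible. Hence $\mathbf W_u:=X(X^\top X)^{-1}Z_u$ solves the system, since $X^\top\mathbf W_u=(X^\top X)(X^\top X)^{-1}Z_u=Z_u$, and every entry of $\mathbf W_u$ is finite. For all $\ell$ and $v\in\mathcal N(u)$ this gives
\begin{align}
   \pi_{\mathbf W^{(0)}}(v\mid u,x_\ell)=\frac{P_\ell(v\mid u)}{\sum_{v'}P_\ell(v'\mid u)}=P_\ell(v\mid u).
\end{align}
Nodes with no outgoing edges involve no parameters, and since $V$ is finite, $\mathbf W^{(0)}$ is a finite matrix.

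The argument has no real obstacle. The one point that needs care is the column-disjointness claim. It requires $e_{u,\cdot}$ to select exactly the one-hot edge coordinates for edges leaving $u$, so that the logits at different nodes are decoupled. This holds because each edge $(u,v)$ has a unique source and the graph has no multiple edges.
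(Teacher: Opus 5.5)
Your proposal is correct and is essentially the paper's proof. The paper sets each edge column $\mathbf W^{(0)}e_{u,v}:=X^\top(XX^\top)^{-1}\ell_{u,v}$, where $X$ is the $L\times d$ matrix with rows $x_\ell^\top$ and $\ell_{u,v}$ collects $\log P_\ell(v\mid u)$ over tasks; this is exactly your $X(X^\top X)^{-1}Z_u$ written column by column with $X$ transposed, so the decoupling across source nodes is automatic and finiteness follows from strict positivity as you note.
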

\begin{proof}
For each task input, assign any strictly positive probability distribution to
the outgoing edges of each remaining node whose kernel has not yet been
specified.  No kernel is needed at a terminal node with no outgoing edge.  We
now realize all these task-wise kernels with a single finite parameter matrix.
Let
\begin{align}
   X
   :=
   \begin{bmatrix}
      x_1^\top\\
      \vdots\\
      x_L^\top
   \end{bmatrix}
   \in\mathbb R^{L\times d}.
\end{align}
The full-rank input condition gives $\operatorname{rank}(X)=L$, so $XX^\top$
is invertible.  For every $(u,v)\in E$, define
\begin{align}
   \ell_{u,v}
   &:={}
   \begin{bmatrix}
      \log P_1(v\mid u)\\
      \vdots\\
      \log P_L(v\mid u)
   \end{bmatrix},
   &
   \mathbf W^{(0)}e_{u,v}
   &:={}
   X^\top(XX^\top)^{-1}\ell_{u,v}.
   \label{eq:routing-parameter-construction}
\end{align}
All prescribed edge probabilities are strictly positive, so
$\mathbf W^{(0)}$ is finite, and
\begin{align}
   x_\ell^\top\mathbf W^{(0)}e_{u,v}
   =\log P_\ell(v\mid u).
\end{align}
Since $P_\ell(\cdot\mid u)$ is normalized over $\mathcal N(u)$, the softmax
predictor satisfies
\begin{align}
   \pi_{\mathbf W^{(0)}}(v\mid u,x_\ell)
   &=
   \frac{\exp(\log P_\ell(v\mid u))}
   {\sum_{w\in\mathcal N(u)}\exp(\log P_\ell(w\mid u))}
   =P_\ell(v\mid u).
   \label{eq:routing-softmax-realization}
\end{align}
\end{proof}

\paragraph{Main proof of Proposition~\ref{prop:routing-realizability}.}
\label{proof:routing-realizability}
\begin{proof}
Lemmas~\ref{lem:routing-kernel} and~\ref{lem:boundary-kernel} specify compatible
kernels, since relevant starts are interior nodes. Lemma~\ref{lem:finite-realization}
realizes them simultaneously. Finally, Equation~\ref{eq:routing-layered-dag} implies that every transition
strictly increases the layer index. Equations~\ref{eq:routing-error},
\ref{eq:routing-diversity}, and~\ref{eq:routing-boundary-kernel} verify Assumption~\ref{ass:pretrained-diversity}.
\end{proof}

\paragraph{Scope of the graph conditions.}
The support conditions above give a concrete finite-parameter realization of
Assumption~\ref{ass:pretrained-diversity}.
However, our learning and forgetting results apply more broadly under their respective assumptions.
Equation~\ref{eq:routing-boundary-support} ensures normalization when inter-cluster
transitions are rare.
Equation~\ref{eq:routing-start-support} simplifies the construction by ensuring
that the first boundary is reached in one step.
More generally, the within-cluster routing conditions allow multiple steps,
provided boundary hitting is almost surely finite and the bounds on
invalid-boundary probability and valid-boundary coverage in
Equations~\ref{eq:pretrained-boundary-error} and~\ref{eq:pretrained-boundary-coverage}
hold uniformly over relevant starts.

\addtocontents{toc}{\protect\Needspace{4\protect\baselineskip}}
\subsection{Gradient of the SD loss}\label{app:sd-gradient}
\addtocontents{toc}{\protect\nopagebreak[4]}
\addtocontents{toc}{\protect\proofcontentsentry{lem:sd-gradient}{Lemma~\protect\ref*{lem:sd-gradient}. Gradient of the SD loss}}
\addtocontents{toc}{\protect\nopagebreak[4]}
\addtocontents{toc}{\protect\proofcontentschild{lem:sd-kl-identity}{Lemma~\protect\ref*{lem:sd-kl-identity}. The one-state stop-gradient identity}}
\addtocontents{toc}{\protect\proofcontentschild{lem:sd-hint-cancellation}{Lemma~\protect\ref*{lem:sd-hint-cancellation}. Cancellation of inactive hints}}

\begin{lemma}[Gradient of the SD loss]
\label{lem:sd-gradient}
\begin{align}
   \nabla_{\mathbf{W}}L_{\mathrm{SD}}(\mathbf{W};\mathcal{D}_{\ell})
   =&\frac{\log\epsilon_{\mathrm{rel}}}{B}\sum_{b = 1}^B\sum_{i=1}^{I^{(b)}}\mathbf{1}\{\tau_i^{(b)}<T^{(b)}\}
   \pi_{\mathbf{W}}(v_i^{(b)}\mid u_i^{(b)},x_{\ell}^{(b)})\\
   &\cdot x_{\ell}^{(b)}\left(
   e_{u_i^{(b)},v_i^{(b)}}
   -\sum_{v\in\mathcal{N}(u_i^{(b)})}
   \pi_{\mathbf{W}}(v\mid u_i^{(b)},x_{\ell}^{(b)})e_{u_i^{(b)},v}
   \right)^{\top}
   \label{eq:sd-gradient}
\end{align}
\end{lemma}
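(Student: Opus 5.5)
The plan is to reduce the computation to a single rollout, and then to a single visited state along it, and only afterwards sum. Fix one minibatch element $b$, suppress its superscript, and set $z_u:=x_\ell^\top\mathbf W e_{u,\cdot}$. Then $\pi_{\mathbf W}(\cdot\mid u,x_\ell)=\mathrm{softmax}(z_u)$ and $\partial z_u/\partial \mathbf W$ acts as $x_\ell e_{u,\cdot}^\top$. Let $p=\pi_{\mathbf W}(\cdot\mid y_t,x_\ell)$, and let $q$ be the hinted teacher at $y_t$, treated as a constant because of the stop-gradient.

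The first step is the one-state stop-gradient identity (Lemma~\ref{lem:sd-kl-identity}). Since $\sum_v p_v=1$,
\[
\nabla_z \KL(p\,\|\,q)=\sum_v p_v(e_v-p)\bigl(\log p_v-\log q_v\bigr),
\]
so the gradient with respect to $\mathbf W$ is $x_\ell\bigl(\sum_v p_v\log(p_v/q_v)(e_{y_t,v}-\sum_w p_w e_{y_t,w})\bigr)^\top$. Equivalently, it is the covariance under $p$ of the log-ratio $\log(p/q)$ with the edge indicators. A log-ratio that is constant in $v$ therefore contributes nothing.

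The second step derives the step-level teacher from \eqref{eq:bayes-rule}. The likelihood factors as $\chi(c\mid y_{0:T})=\prod_i\chi(c_i\mid y_{0:T})$, the path is node-simple, and the hint sources $u_i$ are distinct because each boundary node has at most one correct transition. Marginalizing over the future and conditioning on $y_{0:t}$ then gives
\[
q(v)\propto p_v\,h_t(v),
\]
where $h_t(v)$ is the expected remaining likelihood. The relaxed factor was built so that $\mathbb E_{\pi_{\mathbf W}}[\chi(c_i\mid\cdot)]=Z_{\mathbf W,i}$ whenever $u_i$ has not yet been visited.

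This gives the cancellation of inactive hints (Lemma~\ref{lem:sd-hint-cancellation}). Every factor $i$ with $y_t\neq u_i$ contributes the same multiplicative constant to $h_t(v)$ for every successor $v$. This is exact in the unvisited case, because the value $Z_{\mathbf W,i}$ is the same whether or not $u_i$ is reached later. It also holds when $u_i$ was already passed, because that factor is fixed by the past. These factors therefore cancel under normalization.

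At $y_t=u_i$, the only non-constant factor is $1$ if $v=v_i$ and $\epsilon_{\mathrm{rel}}$ otherwise. The log-ratio is then
\[
\log(p_v/q_v)=\mathrm{const}-\log\epsilon_{\mathrm{rel}}\,\mathbf 1\{v\neq v_i\}=\mathrm{const}'+\log\epsilon_{\mathrm{rel}}\,\mathbf 1\{v=v_i\}.
\]
Substituting into the covariance formula gives the per-state gradient
\[
\log\epsilon_{\mathrm{rel}}\,p_{v_i}\,x_\ell\Bigl(e_{u_i,v_i}-\sum_w p_w e_{u_i,w}\Bigr)^\top.
\]

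At every non-hint state the teacher equals the student, so the gradient vanishes. The same happens when $u_i$ is reached only at $T$: then $\tau_i\ge T$, the summand at $y_T$ is not included in $\sum_{t<T}$, and the corresponding factor is $Z$. This produces the indicator $\mathbf 1\{\tau_i<T\}$. Summing over $t$ and over $b$, and dividing by $B$, yields \eqref{eq:sd-gradient}.

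The main obstacle is the second step: making the cancellation rigorous. The future-marginal $h_t(v)$ could in principle depend on $v$ through whether later hint sources are reachable from $v$. The reason it does not is that, by the definition of $Z_{\mathbf W,i}$, the expected feedback for hint $i$ equals $Z_{\mathbf W,i}$ whether or not $u_i$ is eventually visited, so $h_t(v)=\prod_{i\ \mathrm{unvisited}}Z_{\mathbf W,i}$ uniformly in $v$. I would check this identity by induction backward in time along the DAG.
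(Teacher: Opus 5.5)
Your proposal is correct and takes the same route as the paper: the one-state stop-gradient identity (Lemma~\ref{lem:sd-kl-identity}), cancellation of every hint factor not sourced at the current state (Lemma~\ref{lem:sd-hint-cancellation}), the contribution $\log\epsilon_{\mathrm{rel}}\nabla_{\mathbf W}\pi_{\mathbf W}(v_i\mid u_i,x_\ell)$ at $t=\tau_i<T$, and summation over states and the mini-batch using node simplicity. Your one addition is the backward-induction check that the future marginal of the not-yet-visited factors equals $\prod_i Z_{\mathbf W,i}$ for every proposed successor; the paper builds this in directly by assigning $Z_{\mathbf W,i}(x_\ell)$ to the case $\tau_i>t$ of the step-level factor.
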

\paragraph{Sketch of proof of Lemma~\ref{lem:sd-gradient}.}
\begin{itemize}
   \item Differentiate the one-state stop-gradient KL and show that successor-independent teacher factors contribute zero gradient (Lemma~\ref{lem:sd-kl-identity}).
   \item Identify the active hint factor and show that the local gradient vanishes away from hint-source states (Lemma~\ref{lem:sd-hint-cancellation}).
   \item Sum the local gradients over visited states and average over the mini-batch to obtain the stated sparse gradient formula (Lemma~\ref{lem:sd-gradient}).
\end{itemize}

\begin{lemma}[The one-state stop-gradient identity]
\label{lem:sd-kl-identity}
For $p_t(v)=\pi_{\mathbf W}(v\mid y_t,x_\ell)$ and the stop-gradient
teacher $q_t$, the KL and softmax derivatives satisfy
\Eqref{eq:mf-kl-grad}--\Eqref{eq:mf-softmax-grad}. In particular,
a teacher factor independent of the proposed successor contributes zero gradient.
\end{lemma}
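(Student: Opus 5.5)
The proof is a direct calculation. I would first differentiate the KL term with the teacher frozen, then insert the softmax Jacobian, and finally show that adding a successor-independent constant to $\log q_t$ multiplies a centered vector and therefore cancels.

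\emph{Step 1 (KL derivative under stop-gradient).} Fix a visited state $u=y_t$ and write $p_t(v)=\pi_{\mathbf W}(v\mid u,x_\ell)$ for $v\in\mathcal N(u)$. Let $q_t(v)$ be the teacher probability, which the stop-gradient treats as a constant in $\mathbf W$. Then
\begin{align*}
\nabla_{\mathbf W}\KL(p_t\parallel q_t)
=\sum_{v\in\mathcal N(u)}\nabla_{\mathbf W}p_t(v)\bigl(\log p_t(v)-\log q_t(v)\bigr)
+\sum_{v\in\mathcal N(u)}\nabla_{\mathbf W}p_t(v).
\end{align*}
The last sum vanishes because $\sum_v p_t(v)=1$ identically in $\mathbf W$. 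This identity needs $q_t(v)>0$ on all of $\mathcal N(u)$. That holds here because the relaxed factor in \Eqref{eq:path-level-hint-feedback} is bounded below by $\epsilon_{\mathrm{rel}}>0$, and $\pi_{\mathbf W}$ has full support for finite $\mathbf W$.

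\emph{Step 2 (softmax derivative).} For the predictor $\pi_{\mathbf W}(\cdot\mid u,x)=\mathrm{softmax}(x^\top\mathbf W e_{u,\cdot})$, the standard computation gives
\begin{align*}
\nabla_{\mathbf W}p_t(v)=p_t(v)\,x_\ell\Bigl(e_{u,v}-\sum_{w\in\mathcal N(u)}p_t(w)e_{u,w}\Bigr)^{\top}.
\end{align*}
Write $\bar e_u:=\sum_w p_t(w)e_{u,w}$. Combining this with Step 1,
\begin{align*}
\nabla_{\mathbf W}\KL(p_t\parallel q_t)=x_\ell\Bigl(\sum_{v\in\mathcal N(u)}p_t(v)\bigl(\log p_t(v)-\log q_t(v)\bigr)(e_{u,v}-\bar e_u)\Bigr)^{\top}.
\end{align*}
These two displays are the identities I expect \Eqref{eq:mf-kl-grad} and \Eqref{eq:mf-softmax-grad} to record.

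\emph{Step 3 (successor-independent factors).} Suppose the teacher factorizes as $q_t(v)=r(v)\,h/Z$, where $h$ and the normalizer $Z$ do not depend on the proposed successor $v$. Then $\log q_t(v)=\log r(v)+c$ for a constant $c$. The contribution of $c$ to the display above is
\begin{align*}
-c\,x_\ell\Bigl(\sum_v p_t(v)(e_{u,v}-\bar e_u)\Bigr)^{\top}=-c\,x_\ell(\bar e_u-\bar e_u)^{\top}=0.
\end{align*}
Hence such factors, including the normalizer, drop out of the gradient.

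The only delicate point is bookkeeping rather than analysis. One must confirm that "independent of the proposed successor" is the correct notion for the factors that will later be discarded in Lemma~\ref{lem:sd-hint-cancellation}. These include the factors $Z_{\mathbf W,i}$ for unvisited hints and the contributions from earlier states along the prefix. All of them are constant in $v$ once $y_{0:t}$ is fixed, and each enters $q_t$ with a stop-gradient, so it is treated as a constant even though it depends on $\mathbf W$.
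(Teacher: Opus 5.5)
Your proposal is correct and matches the paper's proof: you differentiate the KL with the teacher frozen, insert the softmax Jacobian, and remove any successor-independent factor using $\sum_{v}\nabla_{\mathbf W}p_t(v)=0$, which you write equivalently as the vanishing centered vector $\sum_v p_t(v)(e_{u,v}-\bar e_u)=0$. You are slightly more explicit than the paper, which only records these identities: you track the discarded $\sum_v\nabla_{\mathbf W}p_t(v)$ term coming from the $p\log p$ part and note that $\epsilon_{\mathrm{rel}}>0$ keeps $q_t$ strictly positive.
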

\begin{proof}
For one batch element and one time $t<T$, set
\begin{align*}
u:=y_t,\quad
h_t:=(y_0,\ldots,y_t),\quad
p_t(v):=\pi_{\mathbf W}(v\mid u,x_\ell),\quad 
q_t(v):=\pi_{\mathbf W}(v\mid h_t,x_\ell,c).
\end{align*}
With
\begin{gather}
\chi_{\mathbf W}(c\mid h_t,v,x_\ell)
:=
\prod_{i=1}^{I}\chi_{\mathbf W}(c_i\mid h_t,v,x_\ell),
\\
q_t(v)
=
\frac{
\chi_{\mathbf W}(c\mid h_t,v,x_\ell)p_t(v)
}{
\sum_{z\in\mathcal N(u)}
\chi_{\mathbf W}(c\mid h_t,z,x_\ell)p_t(z)
},
\label{eq:mf-hinted-teacher}
\end{gather}
we have
\begin{align}
\nabla_{\mathbf W}\KL(p_t\parallel\operatorname{SG}(q_t))
&=
\sum_{v\in\mathcal N(u)}
\nabla_{\mathbf W}p_t(v)
\log\frac{p_t(v)}{q_t(v)},
\label{eq:mf-kl-grad}
\\
\nabla_{\mathbf W}p_t(v)
&=
p_t(v)x_\ell
\left(
e_{u,v}-\sum_{z\in\mathcal N(u)}p_t(z)e_{u,z}
\right)^\top,
\label{eq:mf-softmax-grad}
\\
\sum_{v\in\mathcal N(u)}\nabla_{\mathbf W}p_t(v)&=0.
\end{align}
\end{proof}

\begin{lemma}[Cancellation of inactive hints]
\label{lem:sd-hint-cancellation}
With the hint in \Eqref{eq:path-level-hint-feedback}, the one-state gradient
vanishes unless $t=\tau_i<T$ for a hint edge $(u_i,v_i)$. At such a time it equals
$\log(\epsilon_{\mathrm{rel}})\nabla_{\mathbf W}\pi_{\mathbf W}(v_i\mid u_i,x_\ell)$.
\end{lemma}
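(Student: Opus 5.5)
The plan is to compute the step-level teacher factor $\chi_{\mathbf W}(c\mid h_t,v,x_\ell)$ in Equation~\ref{eq:mf-hinted-teacher}. I would show that, as a function of the proposed successor $v\in\mathcal N(y_t)$, it is a constant times $1$ or $\epsilon_{\mathrm{rel}}$, with the non-constant case arising only for the hint edge whose source is the current state. The conclusion then follows from Lemma~\ref{lem:sd-kl-identity}. Because of the zero-sum identity $\sum_v\nabla_{\mathbf W}p_t(v)=0$, any additive term in $\log\frac{p_t(v)}{q_t(v)}$ that does not depend on $v$ drops out of Equation~\ref{eq:mf-kl-grad}. This covers the normalizer of $q_t$ and every successor-independent factor.

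\textbf{Step 1 (step-level factor).} By Bayes' rule, Equation~\ref{eq:bayes-rule}, the step-level factor is the conditional expectation of the path-level likelihood given the prefix and the next state:
\[
\chi_{\mathbf W}(c\mid h_t,v,x_\ell)=\mathbb E_{\pi_{\mathbf W}}\bigl[\chi_{\mathbf W}(c\mid Y_{0:T},x_\ell)\mid Y_{0:t}=h_t,Y_{t+1}=v\bigr].
\]
Conditional on the full path, the factors $\chi_{\mathbf W}(c_i\mid\cdot)$ in Equation~\ref{eq:path-level-hint-feedback} depend on disjoint pieces of the path. Each depends only on the transition out of $u_i$, or on the absence of a visit to $u_i$. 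Since the hint edges come from the single node-simple path $\gamma$, the sources $u_i$ are distinct. Since $G$ is a DAG, each $u_i$ is visited at most once. I would then factor the expectation hint by hint, and the key fact to prove is the following: for a hint whose source has not yet been resolved, the expected factor equals $Z_{\mathbf W,i}(x_\ell)$.

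\textbf{Step 2 (case analysis for each $i$).} I would split into three cases.

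\emph{Case (a): $\tau_i<t$.} The transition out of $u_i$ is already contained in $h_t$, so the factor is $1$ or $\epsilon_{\mathrm{rel}}$ and does not depend on $v$.

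\emph{Case (b): $u_i\notin\{y_0,\dots,y_t\}$.} Conditional on reaching $u_i$ later, the Markov student draws the successor of $u_i$ from $\pi_{\mathbf W}(\cdot\mid u_i,x_\ell)$, so the expected factor is $\pi_{\mathbf W}(v_i\mid u_i,x_\ell)+\epsilon_{\mathrm{rel}}(1-\pi_{\mathbf W}(v_i\mid u_i,x_\ell))=Z_{\mathbf W,i}$. On the event that $u_i$ is never visited, the factor is $Z_{\mathbf W,i}$ by definition. The total is therefore $Z_{\mathbf W,i}$ whatever the visit probability is, and hence independent of $v$. This martingale-type identity is exactly the reason for the choice of the third line of Equation~\ref{eq:path-level-hint-feedback}. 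I expect this to be the main obstacle, because the joint factorization over several pending hints needs care: future visits to different $u_i$ are correlated. I would handle this by conditioning successively on the transitions out of each $u_i$ in visitation order. I would use the strong Markov property and the fact that each pending factor's conditional expectation given the past is $Z_{\mathbf W,i}$, so the product telescopes to $\prod_i Z_{\mathbf W,i}$.

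\emph{Case (c): $y_t=u_i$, i.e.\ $t=\tau_i<T$.} The factor equals $1$ if $v=v_i$ and $\epsilon_{\mathrm{rel}}$ otherwise.

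\textbf{Step 3 (conclusion).} Suppose no $i$ has $\tau_i=t$. Then $q_t=p_t$ up to normalization, so $\log\frac{p_t}{q_t}$ is constant in $v$ and the gradient vanishes by Lemma~\ref{lem:sd-kl-identity}. Otherwise, exactly one index $i$ is active, by the distinctness of sources. In that case $\log\frac{p_t(v)}{q_t(v)}=-\log\epsilon_{\mathrm{rel}}\,\mathbf 1\{v\neq v_i\}+\mathrm{const}$. Plugging this into Equation~\ref{eq:mf-kl-grad} and using $\sum_{v\neq v_i}\nabla p_t(v)=-\nabla p_t(v_i)$ gives $\log(\epsilon_{\mathrm{rel}})\nabla_{\mathbf W}\pi_{\mathbf W}(v_i\mid u_i,x_\ell)$. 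The terminal time $t=T$ contributes no KL term, so only $\tau_i<T$ matters, as the statement requires.
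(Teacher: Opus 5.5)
Your proposal is correct and follows the same route as the paper. Both arguments build a case table for the step-level hint factor and show it is constant in $v$ except at the unique active hint source, with uniqueness coming from node-simplicity of $\gamma$. Both then use the zero-sum identity $\sum_v\nabla_{\mathbf W}p_t(v)=0$ to reduce the active case to $\log(\epsilon_{\mathrm{rel}})\nabla_{\mathbf W}p_t(v_i)$. The one thing you add is the Step~1/case~(b) conditional-expectation (martingale) argument showing that a not-yet-resolved hint contributes exactly $Z_{\mathbf W,i}(x_\ell)$; this covers several pending hints at once, and implicitly the case $v=u_i$. The paper instead writes $Z_{\mathbf W,i}(x_\ell)$ directly into its table for $\tau_i>t$ without deriving it.
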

\begin{proof}
By Lemma~\ref{lem:sd-kl-identity}, it suffices to identify the successor-dependent hint factor.
For each hint edge $c_i$,
\begin{align*}
\chi_{\mathbf W}(c_i\mid h_t,v,x_\ell)
\propto
\begin{cases}
1,
&\tau_i=t,\ v=v_i,
\\
\epsilon_{\mathrm{rel}},
&\tau_i=t,\ v\neq v_i,
\\
1,
&\tau_i<t,\ y_{\tau_i+1}=v_i,
\\
\epsilon_{\mathrm{rel}},
&\tau_i<t,\ y_{\tau_i+1}\neq v_i,
\\
Z_{\mathbf W,i}(x_\ell),
&\tau_i>t,
\end{cases}
\end{align*}
where
\begin{align*}
Z_{\mathbf W,i}(x_\ell)
&=
\pi_{\mathbf W}(v_i\mid u_i,x_\ell)
+
\epsilon_{\mathrm{rel}}\left(
1-\pi_{\mathbf W}(v_i\mid u_i,x_\ell)
\right).
\end{align*}
Hence, if
\begin{align*}
t\notin\{\tau_1,\ldots,\tau_I\},
\end{align*}
then
\begin{align*}
\chi_{\mathbf W}(c\mid h_t,v,x_\ell)
&\equiv \text{const. for } v
\quad\Longrightarrow\quad
q_t(v)=p_t(v)
\quad\Longrightarrow\quad
\nabla_{\mathbf W}\KL(p_t\parallel\operatorname{SG}(q_t))=0.
\end{align*}

If $t=\tau_i<T$, node-simplicity of the sampled teacher path gives
$\tau_{i'}\neq t$ for $i'\neq i$, and
\begin{align*}
Z_t&:=p_t(v_i)+\epsilon_{\mathrm{rel}}(1-p_t(v_i)),
\\
q_t(v)
&=
\begin{cases}
p_t(v)/Z_t,&v=v_i,\\
\epsilon_{\mathrm{rel}} p_t(v)/Z_t,&v\neq v_i.
\end{cases}
\end{align*}
Thus
\begin{align*}
\nabla_{\mathbf W}\KL(p_t\parallel\operatorname{SG}(q_t))
&=
\sum_v\nabla_{\mathbf W}p_t(v)\log Z_t
-
(\log\epsilon_{\mathrm{rel}})
\sum_{v\neq v_i}\nabla_{\mathbf W}p_t(v)
\\
&=
(\log\epsilon_{\mathrm{rel}})\nabla_{\mathbf W}p_t(v_i)
\\
&=
(\log\epsilon_{\mathrm{rel}})
p_t(v_i)x_\ell
\left(
e_{u_i,v_i}
-
\sum_{v\in\mathcal N(u_i)}
p_t(v)e_{u_i,v}
\right)^\top.
\end{align*}
\end{proof}

\paragraph{Main proof of Lemma~\ref{lem:sd-gradient}.}
\label{proof:sd-gradient}
\begin{proof}
Apply Lemma~\ref{lem:sd-hint-cancellation} to each visited state.
Node simplicity permits each hint source to contribute at most once.
Averaging over the $B$ paths gives
\begin{align*}
\nabla_{\mathbf W}L_{\mathrm{SD}}(\mathbf W;\mathcal D_\ell)
&=
\frac{\log\epsilon_{\mathrm{rel}}}{B}
\sum_{b=1}^{B}
\sum_{i=1}^{I^{(b)}}
\mathbf 1\{\tau_i^{(b)}<T^{(b)}\}
\pi_{\mathbf W}(v_i^{(b)}\mid u_i^{(b)},x_\ell^{(b)})
\\
&\qquad\cdot
x_\ell^{(b)}
\left(
e_{u_i^{(b)},v_i^{(b)}}
-
\sum_{v\in\mathcal N(u_i^{(b)})}
\pi_{\mathbf W}(v\mid u_i^{(b)},x_\ell^{(b)})
e_{u_i^{(b)},v}
\right)^\top.
\end{align*}
\end{proof}

\addtocontents{toc}{\protect\Needspace{7\protect\baselineskip}}
\subsection{Learning hint-supported inter-cluster transitions}
\label{app:sd-transitions}
\addtocontents{toc}{\protect\nopagebreak[4]}
\addtocontents{toc}{\protect\proofcontentsentry{prop:sd-transition-learning}{Proposition~\protect\ref*{prop:sd-transition-learning}. Learning hint-supported transitions}}
\addtocontents{toc}{\protect\nopagebreak[4]}
\addtocontents{toc}{\protect\proofcontentschild{lem:sd-gradient}{Lemma~\protect\ref*{lem:sd-gradient}. Gradient of the SD loss}}
\addtocontents{toc}{\protect\proofcontentschild{lem:restart-update}{Lemma~\protect\ref*{lem:restart-update}. Local updates and invariant interior transitions}}
\addtocontents{toc}{\protect\proofcontentschild{lem:restart-odds}{Lemma~\protect\ref*{lem:restart-odds}. Error from weighted update counts}}
\addtocontents{toc}{\protect\proofcontentschild{lem:restart-coverage}{Lemma~\protect\ref*{lem:restart-coverage}. Update opportunities from retained coverage}}
\addtocontents{toc}{\protect\proofcontentschild{lem:restart-concentration}{Lemma~\protect\ref*{lem:restart-concentration}. Simultaneous learning with finite mini-batches}}

We introduce the notation required for the following statement.
Fix a task \(\mathcal D_\ell\), and consider \(n\) consecutive SD updates on
this task starting from \(\mathbf W^{(0)}\).
Let \(N_\ell\mathrel{:=}|E_\ell|\), and enumerate the elements of \(E_\ell\) as
\(e_m=(u_m,v_m)\), \(m\in\{1,\ldots,N_\ell\}\).
Assumption~\ref{ass:transition-compatibility} and the definition of $M$ imply
$N_\ell\leq\sum_{k=1}^K|\partial_\ell^\star C_k|\leq KM$.

Let \(\mathcal S_{\ell,k}\subseteq\mathring C_k\) be the
set of relevant starts in \(C_k\): the task start \(y_{\ell,0}\) when it
belongs to \(C_k\), an endpoint in \(C_k\) of a correct inter-cluster
transition entering \(C_k\).
For \(s\in\mathcal S_{\ell,k}\), write \(U_{\ell,k}^{s}\) for the first
boundary node reached by the rollout started from \(s\), and define
\begin{align}
   \lambda_\ell
   \mathrel{:=}
   \min_{\substack{k:\,\partial_\ell^\star C_k\neq\emptyset\\
                   s\in\mathcal S_{\ell,k},\,
                   u\in\partial_\ell^\star C_k}}
   \mathbb P_{\mathbf W^{(0)}}\left(
      U_{\ell,k}^{s}=u\mid s,x_\ell
   \right).
   \label{eq:routing-lower-bound}
\end{align}
Assumption~\ref{ass:pretrained-diversity} implies the explicit, $K$-uniform bound
\(\lambda_\ell\geq c_{\mathrm{route}}/M>0\).
Lemma~\ref{lem:sd-gradient} shows that SD updates only edges whose sources are hint-source
boundary nodes. Hence all transitions whose sources are interior nodes
remain unchanged, and the first-boundary distribution in
Equation~\ref{eq:routing-lower-bound} is preserved during SD training.

For each \(e_m\), fix a shortest sequence
\(\mathcal Q_m=(C_{k_0},\ldots,C_{k_{L_m}})\) in the cluster-level graph
induced by \(E_\ell\), from the cluster containing \(y_{\ell,0}\) to the
cluster containing \(u_m\).
The sequence can be chosen without repeated clusters, and therefore
\(L_m\leq K-1\).
By Assumption~\ref{ass:pretrained-diversity}, every correct inter-cluster edge has pretrained probability at least
$c_{\mathrm{int}}\epsilon_2$.  We therefore use the $K$-explicit lower bound
\begin{align}
   \underline\rho_{\ell,K, M}
   &:=
   \frac{1}{|\mathcal P_\ell|}
   \left(\frac{c_{\mathrm{route}}}{M}\right)^K
   (c_{\mathrm{int}}\epsilon_2)^{K-1}
   =
   \frac{
      c_{\mathrm{route}}^Kc_{\mathrm{int}}^{K-1}
      \epsilon_2^{K-1}
   }{
      M^K|\mathcal P_\ell|
   }.
   \label{eq:effective-update-probability-MK}
\end{align}
Under \Eqref{eq:uniform-correct-path-sampling}, every path in \(\mathcal P_\ell\) is sampled with probability exactly \(1/|\mathcal P_\ell|\).
The conditional independence properties used below are exactly those specified in the fine-tuning setup above.
Let \(\{\eta_j\}_{j\geq1}\) be a diminishing positive step-size sequence,
and let \(\delta\in(0,1)\).
Define the finite-sample effective drift by
\begin{align}
   \mathfrak D_{\ell,n}^{(K)}(\delta)
   &:={}
   \left[
      \underline\rho_{\ell,K, M}\sum_{j=1}^{n}\eta_j
      -
      \sqrt{
         \frac{\log(KM/\delta)}{2B}
         \sum_{j=1}^{n}\eta_j^2
      }
   \right]^+.
   \label{eq:theorem-one-drift-K}
\end{align}

\begin{proposition}[Learning hint-supported inter-cluster transitions]
\label{prop:sd-transition-learning}
Suppose Assumptions~\ref{ass:transition-compatibility} and~\ref{ass:pretrained-diversity} hold and \(0<\epsilon_2\leq1\).  With
probability at least $1-\delta$, simultaneously for every
$(u_m,v_m)\in E_\ell$,
\begin{align}
   \pi_{\mathbf W^{(n)}}(v_m\mid u_m,x_\ell)
   \geq
   1-O\!\left(
      \left[
         \epsilon_2+
         \epsilon_2^2\left(\log\frac{1}{\epsilon_{\mathrm{rel}}}\right)
         \mathfrak D_{\ell,n}^{(K)}(\delta)
      \right]^{-1}
   \right),
   \label{eq:prop-one-transition-bound}
\end{align}
where $[a]_+:=\max\{a,0\}$.  If in addition
$\eta_j=\Theta(j^{-\nu})$ for some $\nu\in[0,1)$, then for sufficiently large
$n$ the drift term dominates the fluctuation term and
\begin{align}
   \pi_{\mathbf W^{(n)}}(v_m\mid u_m,x_\ell)
   \geq
   1-O\!\left(
      \left[
         \epsilon_2+
         \epsilon_2^2\left(\log\frac{1}{\epsilon_{\mathrm{rel}}}\right)
         \underline\rho_{\ell,K, M} n^{1-\nu}
      \right]^{-1}
   \right).
   \label{eq:prop-one-transition-bound-asymptotic}
\end{align}
The learning term inside the brackets has the explicit $M,K$ dependence
\begin{align*}
   \epsilon_2^2\underline\rho_{\ell,K, M} n^{1-\nu}
   &=
   \frac{
      c_{\mathrm{route}}^K
      c_{\mathrm{int}}^{K-1}
   }{
      M^K|\mathcal P_\ell|
   }
   \epsilon_2^{K+1}n^{1-\nu}.
\end{align*}
\end{proposition}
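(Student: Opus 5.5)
The plan is to reduce the statement to a one-dimensional log-odds recursion at each hint source $u_m$. I will then lower-bound the number of effective updates with a martingale concentration argument.

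\textbf{Step 1 (local update and monotonicity).} By Lemma~\ref{lem:sd-gradient}, one SGD step changes only the columns $\mathbf W e_{u_m,\cdot}$ with $u_m$ a hint source visited at $\tau_i<T$, and always along the direction $x_\ell$. Write $z_v:=x_\ell^\top\mathbf W e_{u_m,v}$ and let $A_j^{(m)}\in\{0,\ldots,B\}$ count the mini-batch elements at iteration $j$ whose hint contains $e_m$ and whose rollout visits $u_m$. By Assumption~\ref{ass:transition-compatibility}, all hints active at $u_m$ for task $\ell$ point to the same $v_m$. Since $\log\epsilon_{\mathrm{rel}}<0$, the step therefore gives
\[
\Delta z_{v_m}=\eta_j\log\tfrac1{\epsilon_{\mathrm{rel}}}\tfrac{A_j^{(m)}}{B}p(v_m)(1-p(v_m))\|x_\ell\|^2,
\qquad
\Delta z_{v}=-\eta_j\log\tfrac1{\epsilon_{\mathrm{rel}}}\tfrac{A_j^{(m)}}{B}p(v_m)p(v)\|x_\ell\|^2 \quad (v\neq v_m).
\]
Consequently, the log-odds $\phi_m:=\log\frac{p(v_m)}{1-p(v_m)}$ increases by at least $\eta_j\log(1/\epsilon_{\mathrm{rel}})\frac{A_j^{(m)}}{B}p(v_m)\|x_\ell\|^2$. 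The lower bound on this increment uses the convexity of log-sum-exp, or a direct computation of $\Delta$ of $z_{v_m}-\log\sum_{v\ne v_m}e^{z_v}$. Two facts follow:
\begin{itemize}
\item every $\pi(v_m\mid u_m,x_\ell)$ is nondecreasing in $n$, hence stays $\geq c_{\mathrm{int}}\epsilon_2$;
\item interior transitions are frozen, so the first-boundary law in \eqref{eq:routing-lower-bound} is preserved for all $n$.
\end{itemize}
Using $p(v_m)\ge c_{\mathrm{int}}\epsilon_2$ in the increment and summing gives
\[
\phi_m^{(n)}\ge\log(c\epsilon_2)+c'\epsilon_2\log\tfrac1{\epsilon_{\mathrm{rel}}}\textstyle\sum_j\eta_jA_j^{(m)}/B.
\]
With $e^x\ge1+x$ this yields odds $\gtrsim\epsilon_2+\epsilon_2^2\log(1/\epsilon_{\mathrm{rel}})\sum_j\eta_jA_j^{(m)}/B$. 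Finally, $1-p\le 1/(\text{odds})$ produces the form of \eqref{eq:prop-one-transition-bound}.

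\textbf{Step 2 (per-element update probability).} Conditionally on the history, a single batch element contributes to $A_j^{(m)}$ with probability at least $\underline\rho_{\ell,K,M}$. To see this, note that the teacher path contains $e_m$ with probability $\ge1/|\mathcal P_\ell|$, and it is sampled independently of the student rollout. The student can reach $u_m$ along the cluster sequence $\mathcal Q_m$, which has at most $K$ clusters. Inside each cluster it hits the required valid boundary with probability $\ge c_{\mathrm{route}}/M$; this is preserved because interior kernels are frozen. Each of the at most $K-1$ intermediate correct inter-cluster edges has probability $\ge c_{\mathrm{int}}\epsilon_2$, by the monotonicity of Step 1. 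The product of these factors is exactly \eqref{eq:effective-update-probability-MK}.

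\textbf{Step 3 (concentration, the main obstacle).} The difficulty is that both the visit probability and the rate factor $p(v_m)$ depend on the evolving history. The monotonicity of Step 1 resolves this by giving history-uniform lower bounds, so a martingale argument applies. The increments $\eta_j(\mathbf 1\{\text{elem }b\text{ contributes}\}-\rho_{j,b})/B$, with $\rho_{j,b}\ge\underline\rho$ the conditional probabilities, form a martingale difference sequence over the $Bn$ elements with ranges $\eta_j/B$. Azuma–Hoeffding then gives
\[
\sum_j\eta_jA_j^{(m)}/B\ge\underline\rho\sum_j\eta_j-\sqrt{\tfrac{\log(1/\delta')}{2B}\textstyle\sum_j\eta_j^2}.
\]
A union bound over the $N_\ell\le KM$ edges, with $\delta'=\delta/(KM)$, gives $\mathfrak D^{(K)}_{\ell,n}(\delta)$ simultaneously for all $m$. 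Combined with Step 1 this proves \eqref{eq:prop-one-transition-bound}.

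For the asymptotic form, $\eta_j=\Theta(j^{-\nu})$ gives $\sum_j\eta_j=\Theta(n^{1-\nu})$ and $\sqrt{\sum_j\eta_j^2}=O(n^{1/2-\nu}+\sqrt{\log n})$. The drift term therefore dominates for large $n$, which gives \eqref{eq:prop-one-transition-bound-asymptotic}. The explicit $M,K$ expression follows by substituting \eqref{eq:effective-update-probability-MK}.
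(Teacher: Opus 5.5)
Your overall architecture is the paper's: Step 1's structural facts (sparse update along $x_\ell$, monotonicity, frozen interior kernels) match Lemma~\ref{lem:restart-update}, and Steps 2 and 3 coincide with Lemmas~\ref{lem:restart-coverage} and~\ref{lem:restart-concentration} and are sound. The gap is the log-odds increment claimed in Step 1, on which your derivation of \eqref{eq:prop-one-transition-bound} rests.

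\textbf{Why the increment bound fails.} Write $p:=p(v_m)$ and $c_j:=\eta_j\log(1/\epsilon_{\mathrm{rel}})\frac{A_j^{(m)}}{B}\|x_\ell\|_2^2$. From your own correct update formulas, the competitors' logits only decrease, so $\Delta\phi_m\ge\Delta z_{v_m}=c_jp(1-p)$. Convexity of log-sum-exp gives the \emph{reverse} bound, $\Delta\phi_m\le c_jp(1-p)+c_jp\sum_{v\ne v_m}p(v)^2/(1-p)\le 2c_jp(1-p)$. Hence your claim $\Delta\phi_m\ge c_jp$ fails whenever $p>1/2$; with a single competitor the increment is exactly $2c_jp(1-p)$.

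The claim is also inconsistent with the dynamics. Set $S:=\sum_j\eta_jA_j^{(m)}/B$. Your bound would imply $1-p^{(n)}\le e^{-\phi_m^{(n)}}\lesssim\epsilon_2^{-1}\exp(-c'\epsilon_2\log(1/\epsilon_{\mathrm{rel}})S)$, an exponential rate. These dynamics do not have it: near $p=1$ the increment is at most $2c_j(1-p)\le 2c_je^{-\phi_m}$, so the odds grow only additively and $1-p$ decays like $1/S$.

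With the correct increment, your summation breaks down. The factor $1-p$ is not bounded below along the trajectory, so inserting $p\ge c_{\mathrm{int}}\epsilon_2$ into a sum of log-odds increments yields no uniform bound.

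\textbf{The repair.} Telescope the odds $o:=p/(1-p)$ rather than the log-odds. Apply $e^x\ge1+x$ at every step and use $o\,(1-p)=p$:
\[
o^{(j)}\ge o^{(j-1)}\exp\bigl(c_jp^{(j-1)}(1-p^{(j-1)})\bigr)\ge o^{(j-1)}+c_j\bigl(p^{(j-1)}\bigr)^2\ge o^{(j-1)}+c_jc_{\mathrm{int}}^2\epsilon_2^2,
\]
where the last step uses the monotonicity you established. Summing, with $o^{(0)}\ge c_{\mathrm{int}}\epsilon_2$ and $\|x_\ell\|_2^2\ge1$, gives $o^{(n)}\ge c_{\mathrm{int}}\epsilon_2+c_{\mathrm{int}}^2\epsilon_2^2\log(1/\epsilon_{\mathrm{rel}})S$. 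This is the paper's Lemma~\ref{lem:restart-odds}. Then $1-p^{(n)}\le 1/o^{(n)}$, and your Steps 2--3 complete the proof as stated.

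Note that the factor $\epsilon_2^2$ comes from $p^2$ in the per-step odds gain. It does not come from multiplying the initial odds by the rate, as in your computation.
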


\paragraph{Sketch of proof of Proposition~\ref{prop:sd-transition-learning}.}
\begin{itemize}
   \item Show that local SD updates increase correct logit gaps and preserve the interior transition kernel (Lemma~\ref{lem:restart-update}).
   \item Use retained boundary coverage to lower-bound the probability of an update opportunity for each correct edge (Lemma~\ref{lem:restart-coverage}).
   \item Control the weighted update counts simultaneously using conditional mini-batch concentration (Lemma~\ref{lem:restart-concentration}).
   \item Convert the weighted update counts into transition-error bounds through the odds estimate (Lemma~\ref{lem:restart-odds}).
\end{itemize}

\paragraph{Shared notation and constants.}
The local update and odds estimates below apply under either Assumptions~\ref{ass:transition-compatibility} and~\ref{ass:pretrained-diversity}
or Assumptions~\ref{ass:transition-compatibility} and~\ref{ass:restart-rare}.
Unless stated otherwise, the SD lemmas concern training
on the fixed task alone.
Throughout the SD lemmas, $C_{\mathrm{bad}}\epsilon_1$ bounds the initial
invalid-boundary probability and $c_{\mathrm{int}}\epsilon_2$ bounds every initial
correct inter-cluster probability from below. The constants are positive and uniform;
we may take $c_{\mathrm{int}}\le1$ and $c_{\mathrm{route}}/M\le1$.
For the rare-entrance case, let $e_m=(u_m,v_m)$ be the edge in
Assumption~\ref{ass:restart-rare}, so $u_m=u^*$.
The first-boundary lower bound is retained at all other correct boundaries;
$c_{\mathrm{route}}$ and $\underline\rho_{\ell,K, M}$ use only this retained lower bound
in the rare case.

Let $\mathcal F_{j-1}$ be the training history before iteration $j$. Define
\begin{align*}
 \tau_i^{(b,j)}&=\inf\{t\ge0:y_t^{(b,j)}=u_i\},&
 I_i^{(b,j)}&=\mathbf1\{e_i\in c^{(b,j)}\}
                   \mathbf1\{\tau_i^{(b,j)}<T^{(b,j)}\},\\
 \mathfrak n_i^{(j)}&=\sum_{b=1}^B I_i^{(b,j)},&
 p_i^{(j)}&=p_{u_i,v_i}^{(j)},\\
 p_{u,v}^{(j)}&=\pi_{\mathbf W^{(j)}}(v\mid u,x_\ell),&
 z_{u,v}^{(j)}&=x_\ell^\top\mathbf W^{(j)}e_{u,v},\\
 r_i^{(j)}&=(1-p_i^{(j)})/p_i^{(j)},&
 a_{i,j}&=\frac{\eta_j}{B}\log(1/\epsilon_{\mathrm{rel}})
             \mathfrak n_i^{(j)}p_i^{(j-1)}.
\end{align*}
The symbols $e_{u,v}$ denote one-hot edge features; $e_i$ denotes a graph edge.
Assumption~\ref{ass:transition-compatibility} gives distinct sources $u_i$ and
$N_\ell=\sum_h|\partial_\ell^\star C_h|\le KM$.

\begin{lemma}[Local update and invariant interior transitions]
\label{lem:restart-update}
For every training history and every outgoing edge of $u_i$,
\begin{equation}
 \mathbf W^{(j)}e_{u_i,v}
 =\mathbf W^{(j-1)}e_{u_i,v}
 +a_{i,j}(\mathbf1\{v=v_i\}-p_{u_i,v}^{(j-1)})x_\ell.
 \label{eq:restart-update}
\end{equation}
Only columns whose sources occur in $E_\ell$ can change. Moreover,
\begin{equation}
 p_i^{(j)}\ge p_i^{(j-1)},\qquad
 \frac{p_{u_i,v}^{(j)}}{p_i^{(j)}}
 \le\frac{p_{u_i,v}^{(j-1)}}{p_i^{(j-1)}}.
 \label{eq:restart-ratio-step}
\end{equation}
First-boundary hitting distributions from relevant starts remain equal to their initial values.
\end{lemma}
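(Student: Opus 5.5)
The lemma follows from the sparse gradient formula of Lemma~\ref{lem:sd-gradient}, specialised to one SD step on $\mathcal D_\ell$, followed by a one-dimensional logit computation. We proceed in four steps.

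\textbf{Step 1: the update rule \eqref{eq:restart-update}.} Here $\mathbf W^{(j)}=\mathbf W^{(j-1)}-\eta_j\nabla_{\mathbf W}L_{\mathrm{SD}}(\mathbf W^{(j-1)};\mathcal D_\ell)$.
\begin{itemize}
\item In the formula of Lemma~\ref{lem:sd-gradient}, every summand is attached to a pair $(b,i)$ with $\tau_i^{(b)}<T^{(b)}$. It is supported only on the columns $e_{u_i,v}$ with $v\in\mathcal N(u_i)$, where $u_i$ is the source of a hint edge.
\item Hint edges are inter-cluster transitions of correct paths, so they belong to $E_\ell$.
\item By Assumption~\ref{ass:transition-compatibility}, each boundary node has at most one correct transition for the task. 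So distinct elements $e_i\in E_\ell$ have distinct sources, and every batch contribution at source $u_i$ refers to the same target $v_i$.
\item Collecting the $\mathfrak n_i^{(j)}=\sum_b I_i^{(b,j)}$ such contributions and writing $p$ for $p^{(j-1)}$, the column-$u_i$ part of the gradient is
\[
\frac{\log\epsilon_{\mathrm{rel}}}{B}\,\mathfrak n_i^{(j)}\,p_i^{(j-1)}\,x_\ell\Bigl(e_{u_i,v_i}-\sum_{v}p_{u_i,v}^{(j-1)}e_{u_i,v}\Bigr)^{\top}.
\]
\item Multiplying by $-\eta_j$ and using $-\log\epsilon_{\mathrm{rel}}=\log(1/\epsilon_{\mathrm{rel}})$ gives exactly $a_{i,j}(\mathbf 1\{v=v_i\}-p_{u_i,v}^{(j-1)})x_\ell$ on column $e_{u_i,v}$.
\item All other columns receive zero gradient. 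This proves that only columns whose source occurs in $E_\ell$ can change.
\end{itemize}

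\textbf{Step 2: the monotonicity \eqref{eq:restart-ratio-step}.} Take the inner product of the update with $x_\ell$. The logits then move by
\[
z_{u_i,v}^{(j)}-z_{u_i,v}^{(j-1)}=a_{i,j}\|x_\ell\|_2^2\,(\mathbf 1\{v=v_i\}-p_{u_i,v}^{(j-1)}).
\]
Since $a_{i,j}\ge0$ (because $\epsilon_{\mathrm{rel}}<1$), the gap $z_{u_i,v_i}-z_{u_i,v}$ grows for every $v\ne v_i$ by $a_{i,j}\|x_\ell\|^2(1-p_i^{(j-1)}+p_{u_i,v}^{(j-1)})\ge0$. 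Two consequences follow.
\begin{itemize}
\item The ratio $p_{u_i,v}/p_i=\exp(z_{u_i,v}-z_{u_i,v_i})$ is non-increasing.
\item Hence $p_i=\bigl(1+\sum_{v\ne v_i}p_{u_i,v}/p_i\bigr)^{-1}$ is non-decreasing.
\end{itemize}

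\textbf{Step 3: invariance of first-boundary hitting distributions.}
\begin{itemize}
\item Every source in $E_\ell$ is a boundary node, since it has an inter-cluster successor. So by Step 1, the columns of all outgoing edges of interior nodes are never modified. Consequently, $\pi_{\mathbf W^{(j)}}(\cdot\mid u,x_\ell)$ is unchanged for every $u\in\mathring C_k$.
\item A relevant start $s$ is interior. By the definition of $\tau^s_{\ell,k}$, all states $Y_0,\dots,Y_{\tau-1}$ lie in $C_k\setminus\partial C_k=\mathring C_k$. Interior nodes have only intra-cluster successors, so the walk stays in $C_k$ until it first hits $\partial C_k$.
\item Therefore the law of $(Y_0,\dots,Y_\tau)$, and in particular of $U^s_{\ell,k}$, is a functional of the interior transition kernels alone.
\item Induction over $j$ then gives equality with the initial values.
\end{itemize}

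\textbf{Main obstacle.} The delicate point is the aggregation in Step 1: it must be checked that no column receives conflicting pushes toward different targets within one minibatch. This is exactly what Assumption~\ref{ass:transition-compatibility} (a unique correct transition per boundary node) guarantees. The same fact, together with node-simplicity as in Lemma~\ref{lem:sd-hint-cancellation}, ensures that each hint source contributes at most once per rollout, so the count $\mathfrak n_i^{(j)}$ is the correct multiplier.
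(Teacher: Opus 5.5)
Your proposal is correct and follows essentially the same route as the paper. Both derive the column update from Lemma~\ref{lem:sd-gradient} using uniqueness of the correct exit and node simplicity, and both obtain the ratio and monotonicity inequalities from the increase $a_{i,j}\|x_\ell\|_2^2(1-p_i^{(j-1)}+p_{u_i,v}^{(j-1)})\ge0$ of the logit gap; both then freeze the interior kernels because hint sources are boundary nodes. Your Step~3 simply spells out, in more detail than the paper does, why the walk before the first boundary hit uses only interior kernels.
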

\begin{proof}
Lemma~\ref{lem:sd-gradient} gives the visited-source contribution
\[
 (\log\epsilon_{\mathrm{rel}})p_i^{(j-1)}x_\ell
 \left(e_{u_i,v_i}-\sum_{v\in\mathcal N(u_i)}p_{u_i,v}^{(j-1)}e_{u_i,v}\right)^\top.
\]
The hint's normalization term vanishes because $\sum_v\nabla p_{u_i,v}=0$.
At other sources the hinted and unhinted distributions agree, giving zero gradient.
Uniqueness, node simplicity, and mini-batch averaging yield \Eqref{eq:restart-update}.
For $v\ne v_i$, the correct-minus-incorrect logit gap increases by
\[
 a_{i,j}\|x_\ell\|_2^2
 (1-p_i^{(j-1)}+p_{u_i,v}^{(j-1)})\ge0.
\]
Exponentiating proves the ratio inequality, and summing the incorrect-to-correct ratios
proves monotonicity of $p_i^{(j)}$. Interior sources never occur in hints, so all transitions
before the first boundary are unchanged. This also preserves boundary-hitting properness.
\end{proof}

\begin{lemma}[Error from weighted update counts]
\label{lem:restart-odds}
For every history and every $i$,
\begin{equation}
 1-p_i^{(n)}\le
 \left[1+c_{\mathrm{int}}^2\epsilon_2^2\log(1/\epsilon_{\mathrm{rel}})
 \frac1B\sum_{j=1}^n\eta_j\mathfrak n_i^{(j)}\right]^{-1}.
 \label{eq:restart-odds}
\end{equation}
If $p_i^{(0)}<1$, the stronger estimate is
\begin{equation}
r_i^{(n)}\le\left[(r_i^{(0)})^{-1}
+\|x_\ell\|_2^2(p_i^{(0)})^2\log(1/\epsilon_{\mathrm{rel}})
\frac1B\sum_{j=1}^n\eta_j\mathfrak n_i^{(j)}\right]^{-1}.
\label{eq:mf-prop-odds}
\end{equation}
\end{lemma}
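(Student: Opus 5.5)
The plan is to work with the odds $r_i^{(j)}=(1-p_i^{(j)})/p_i^{(j)}$. By Lemma~\ref{lem:restart-update}, only the columns at $u_i$ move, so the odds obey an explicit one-step contraction. First I will turn this contraction into a linear lower bound on the growth of $1/r_i^{(j)}$, then telescope it over $j=1,\dots,n$. The first claim then follows from the second by a crude comparison, so I prove \eqref{eq:mf-prop-odds} first.

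\emph{Step 1 (one-step contraction).} Write $g_v^{(j)}:=z_{u_i,v_i}^{(j)}-z_{u_i,v}^{(j)}$ for $v\neq v_i$. By \eqref{eq:restart-update},
\[
g_v^{(j)}-g_v^{(j-1)}=a_{i,j}\|x_\ell\|_2^2\bigl(1-p_i^{(j-1)}+p_{u_i,v}^{(j-1)}\bigr)\ge a_{i,j}\|x_\ell\|_2^2\bigl(1-p_i^{(j-1)}\bigr),
\]
where I drop $p_{u_i,v}^{(j-1)}\ge0$. Since $r_i^{(j)}=\sum_{v\neq v_i}e^{-g_v^{(j)}}$, every summand shrinks by at least the common factor $\exp(-a_{i,j}\|x_\ell\|_2^2(1-p_i^{(j-1)}))$. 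Using $1-p=p\,r$, this gives
\[
r_i^{(j)}\le r_i^{(j-1)}\exp\!\bigl(-c_j\,r_i^{(j-1)}\bigr),\qquad c_j:=a_{i,j}\|x_\ell\|_2^2\,p_i^{(j-1)}.
\]

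\emph{Step 2 (linearization and telescoping).} Apply $e^{-y}\le(1+y)^{-1}$ for $y\ge0$. This gives $r_i^{(j)}\le r_i^{(j-1)}/(1+c_jr_i^{(j-1)})$, that is, $1/r_i^{(j)}\ge 1/r_i^{(j-1)}+c_j$ whenever $r_i^{(j-1)}>0$. By the monotonicity $p_i^{(j-1)}\ge p_i^{(0)}$ from \eqref{eq:restart-ratio-step}, and since $a_{i,j}$ itself carries a factor $p_i^{(j-1)}\ge p_i^{(0)}$, we get
\[
c_j\ge \|x_\ell\|_2^2(p_i^{(0)})^2\log(1/\epsilon_{\mathrm{rel}})\,\frac{\eta_j}{B}\,\mathfrak n_i^{(j)}.
\]
Summing over $j=1,\dots,n$ and inverting yields \eqref{eq:mf-prop-odds}, provided $p_i^{(0)}<1$ (so $r_i^{(0)}>0$). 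The recursion stays valid pathwise for every history, because no probabilistic input is used.

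\emph{Step 3 (the cruder bound).} Since $1-p=1/(1+1/r)$, the bound of Step 2 gives
\[
1-p_i^{(n)}\le\Bigl[1+(r_i^{(0)})^{-1}+\|x_\ell\|_2^2(p_i^{(0)})^2\log(1/\epsilon_{\mathrm{rel}})\tfrac1B\textstyle\sum_j\eta_j\mathfrak n_i^{(j)}\Bigr]^{-1}.
\]
I then drop $(r_i^{(0)})^{-1}\ge0$. Next I use $\|x_\ell\|_2^2\ge1$, which holds because $x_\ell\in\{0,1\}^d$ is nonzero by the full-column-rank assumption. Finally I use $p_i^{(0)}\ge c_{\mathrm{int}}\epsilon_2$ from Assumption~\ref{ass:pretrained-diversity} (or its retained version under Assumption~\ref{ass:restart-rare}). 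Together these give \eqref{eq:restart-odds}.

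\emph{Degenerate cases and main obstacle.} Exact probabilities cannot equal one under a finite softmax, so $p_i^{(0)}=1$ cannot occur with a finite $\mathbf W^{(0)}$. If it is allowed formally, $1-p_i^{(n)}=0$ and \eqref{eq:restart-odds} holds trivially; the same applies if some $r_i^{(j-1)}=0$. The only delicate point is Step 1: the step $1-p=p\,r$ must be paired with the factor $p_i^{(j-1)}$ already inside $a_{i,j}$, so that the exponent is proportional to $r_i^{(j-1)}$. This is what makes the recursion of the form $r'\le r e^{-cr}$, which is what produces the harmonic-type $1/\sum$ rate rather than an exponential one. I expect checking this index bookkeeping to be the main, though minor, obstacle.
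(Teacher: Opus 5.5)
Your proposal is correct and is essentially the paper's proof. The paper likewise derives $r_i^{(j)}\le r_i^{(j-1)}\exp[-a_{i,j}\|x_\ell\|_2^2(1-p_i^{(j-1)})]$ from the logit-gap increase and bounds $p_i^{(j-1)}(1-p_i^{(j-1)})\ge r_i^{(j-1)}(p_i^{(0)})^2$, which is your $1-p=pr$ combined with $p_i^{(j-1)}\ge p_i^{(0)}$; it then applies $e^t\ge1+t$ to get the additive increment in $1/r_i^{(j)}$, telescopes, and finishes with $1-p_i^{(n)}=(1+1/r_i^{(n)})^{-1}$, $\|x_\ell\|_2^2\ge1$, and $p_i^{(0)}\ge c_{\mathrm{int}}\epsilon_2$.

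One small inaccuracy: a softmax over a single outgoing edge does give probability one, so $p_i^{(0)}=1$ is not impossible in general. Your fallback covers it anyway, since monotonicity then forces $p_i^{(n)}=1$, just as in the paper's ``immediate'' case.
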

\begin{proof}
If $p_i^{(0)}=1$, the assertion is immediate. Otherwise the logit-gap estimate above gives
$r_i^{(j)}\le r_i^{(j-1)}
 \exp[-a_{i,j}\|x_\ell\|_2^2(1-p_i^{(j-1)})]$.
Since $r_i^{(j-1)}\le r_i^{(0)}$ and
$p_i^{(j-1)}(1-p_i^{(j-1)})\ge r_i^{(j-1)}/(1+r_i^{(0)})^2$,
the inequality $e^t\ge1+t$ implies
\[
 \frac1{r_i^{(j)}}\ge\frac1{r_i^{(j-1)}}+
 \frac{\eta_j}{B}\|x_\ell\|_2^2(p_i^{(0)})^2
 \log(1/\epsilon_{\mathrm{rel}})\mathfrak n_i^{(j)}.
\]
Telescoping proves \Eqref{eq:mf-prop-odds}. For the error bound, use $p_i^{(0)}\ge c_{\mathrm{int}}\epsilon_2$ and
$\|x_\ell\|_2^2\ge1$, and write $1-p_i^{(n)}=(1+1/r_i^{(n)})^{-1}$.
\end{proof}

\begin{lemma}[Update opportunities from retained coverage]
\label{lem:restart-coverage}
Under Assumptions~\ref{ass:transition-compatibility} and~\ref{ass:pretrained-diversity}, every $e_i\in E_\ell$ satisfies
\begin{equation}
 \mathbb E[I_i^{(b,j)}\mid\mathcal F_{j-1}]\ge\underline\rho_{\ell,K, M}.
 \label{eq:restart-update-rate}
\end{equation}
Under Assumptions~\ref{ass:transition-compatibility} and~\ref{ass:restart-rare}, the same holds for every $e_i\ne e_m$ whose source cluster is
reachable from the original-start cluster in the cluster graph induced by
$E_\ell\setminus\{e_m\}$.
\end{lemma}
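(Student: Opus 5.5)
The plan is to lower-bound $\mathbb E[I_i^{(b,j)}\mid\mathcal F_{j-1}]$ by the product of two independent events, using the conditional independence in the fine-tuning setup. The first event is that the sampled teacher path $\gamma^{(b,j)}$ contains $e_i$. The second is that the unhinted student rollout $y^{(b,j)}\sim\pi_{\mathbf W^{(j-1)}}(\cdot\mid x_\ell)$ visits $u_i$ before termination. Given $\mathcal F_{j-1}$, these two samples are independent. So
\[
\mathbb E[I_i^{(b,j)}\mid\mathcal F_{j-1}]
=\mathbb P(e_i\in c^{(b,j)})\,
\mathbb P_{\mathbf W^{(j-1)}}(\tau_i^{(b,j)}<T^{(b,j)}\mid y_{\ell,0},x_\ell).
\]
Since $e_i\in E_\ell$, some correct path contains it. Under \Eqref{eq:uniform-correct-path-sampling}, the first factor is therefore at least $1/|\mathcal P_\ell|$.

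For the visiting probability, I use the shortest cluster sequence $\mathcal Q_i=(C_{k_0},\ldots,C_{k_{L_i}})$, with $L_i\le K-1$, fixed before \Eqref{eq:effective-update-probability-MK}. Each consecutive pair $(C_{k_{r}},C_{k_{r+1}})$ is joined by a correct edge $(w_r,s_{r+1})\in E_\ell$ with $w_r\in\partial^\star_\ell C_{k_r}$. Its endpoint $s_{r+1}$ is, by definition, a relevant start in $C_{k_{r+1}}$, and $y_{\ell,0}$ is a relevant start in $C_{k_0}$. I lower-bound the probability of one specific itinerary. The rollout starts at $y_{\ell,0}$ and first hits boundary $w_0$; it then takes $(w_0,s_1)$; from $s_1$ it first hits $w_1$; and so on, until from $s_{L_i}$ it first hits $u_i$ (note $u_i\in\partial^\star_\ell C_{k_{L_i}}$). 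By the Markov property, this probability factorizes into $L_i+1$ first-boundary probabilities and $L_i$ inter-cluster transition probabilities.

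Lemma~\ref{lem:restart-update} says the first-boundary hitting distributions from relevant starts are unchanged under SD. So each first-boundary factor is at least $\lambda_\ell\ge c_{\mathrm{route}}/M$ by \Eqref{eq:routing-lower-bound} and Assumption~\ref{ass:pretrained-diversity}. The same lemma, via \Eqref{eq:restart-ratio-step}, gives $p_{w_r,s_{r+1}}^{(j-1)}\ge p_{w_r,s_{r+1}}^{(0)}\ge c_{\mathrm{int}}\epsilon_2$ for every correct edge. The itinerary probability is therefore at least $(c_{\mathrm{route}}/M)^{L_i+1}(c_{\mathrm{int}}\epsilon_2)^{L_i}$. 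Because $c_{\mathrm{route}}/M\le1$ and $c_{\mathrm{int}}\epsilon_2\le1$, this is at least $(c_{\mathrm{route}}/M)^K(c_{\mathrm{int}}\epsilon_2)^{K-1}$. The itinerary reaches $u_i$ at a finite time strictly before termination, because $u_i$ is a boundary node with outgoing edges and is not in a terminal cluster. Combining the two factors gives $\underline\rho_{\ell,K,M}$.

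The step I expect to be the main obstacle is applying the monotonicity of \Eqref{eq:restart-ratio-step} to edges $(w_r,s_{r+1})$ that are not the hinted edge at that iteration. Lemma~\ref{lem:restart-update} is stated for the edge $e_i$ whose source $u_i$ is updated. I need that the only update at source $w_r$ is toward its unique correct successor. This holds because Assumption~\ref{ass:transition-compatibility} says each boundary node has at most one correct transition per task, so every update at $w_r$ pushes toward $s_{r+1}$. The same observation justifies the invariance of interior kernels; it is also why the hitting times stay almost surely finite.

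For the rare-entrance case, I run the identical argument with the cluster graph induced by $E_\ell\setminus\{e_m\}$. The shortest sequence then avoids $u^*$ entirely, so only the retained coverage bound is used, and $c_{\mathrm{route}}$ is defined from it exactly as in the shared notation.
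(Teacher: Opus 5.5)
Your proposal is correct and follows essentially the same argument as the paper. Conditional independence of hint and rollout gives the factor $1/|\mathcal P_\ell|$. A shortest cluster itinerary through relevant starts, factored by the Markov property, then gives the visit factor, using the preserved first-boundary coverage $c_{\mathrm{route}}/M$ and the monotone correct-edge probabilities $c_{\mathrm{int}}\epsilon_2$ from Lemma~\ref{lem:restart-update}. The rare case is handled the same way, by excluding $e_m$ so that $u^*$ is never a specified boundary.

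You also make explicit details the paper leaves implicit: correct sources are distinct, so monotonicity applies to every crossing edge, and $u_i$ is not a sink, so $\tau_i<T$. One small misattribution: interior-kernel invariance comes from hint sources being boundary nodes (Lemma~\ref{lem:sd-gradient}), not from the uniqueness in Assumption~\ref{ass:transition-compatibility}; your argument does not depend on this.
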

\begin{proof}
Fix an edge in the specified set. Choose a shortest sequence of clusters leading to its
source cluster, using correct inter-cluster edges (excluding $e_m$ in the second case).
The sequence has no repeated cluster and at most $K-1$ transitions. Specify one such edge
between each adjacent pair, and finally specify boundary $u_i$ in the last cluster.
In the second case all specified boundary sources differ from $u_m$, since its unique
correct edge is $e_m$. Also, the sequence cannot enter $C_k$ by Assumption~\ref{ass:restart-rare}.

Every internal segment begins at a relevant start. Its probability of first reaching the
specified boundary is at least $c_{\mathrm{route}}/M$, by retained coverage and
Lemma~\ref{lem:restart-update}. At each required crossing the correct edge probability is
at least $c_{\mathrm{int}}\epsilon_2$, by monotonicity. Apply the Markov property successively
at these boundary and entry times. There are at most $K$ internal segments and $K-1$
crossings, so
\[
 \mathbb P(\tau_i^{(b,j)}<T^{(b,j)}\mid\mathcal F_{j-1})
 \ge (c_{\mathrm{route}}/M)^K(c_{\mathrm{int}}\epsilon_2)^{K-1}.
\]
Both lower-bound factors can be taken at most one. Since $e_i\in E_\ell$, at least one correct path
contains $e_i$, so the uniform hint contains it with probability at least $1/|\mathcal P_\ell|$.
The stipulated conditional independence of hint and student gives \Eqref{eq:restart-update-rate}.
\end{proof}

\begin{lemma}[Simultaneous learning with finite mini-batches]
\label{lem:restart-concentration}
For the applicable edge set in Lemma~\ref{lem:restart-coverage}, with probability at least
$1-\delta$, every such edge satisfies
\begin{equation}
 \frac1B\sum_{j=1}^n\eta_j\mathfrak n_i^{(j)}
 \ge\mathfrak D_{\ell,n}^{(K)}(\delta).
 \label{eq:restart-counts}
\end{equation}
Combining this event with Lemma~\ref{lem:restart-odds} gives simultaneous error bounds for all these edges.
\end{lemma}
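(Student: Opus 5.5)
We need to prove Lemma~\ref{lem:restart-concentration}: with probability at least $1-\delta$, every applicable edge satisfies $\frac1B\sum_{j}\eta_j\mathfrak n_i^{(j)}\ge\mathfrak D_{\ell,n}^{(K)}(\delta)$.

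The plan is a martingale concentration argument applied edge by edge, followed by a union bound over at most $N_\ell\le KM$ edges. Fix an applicable edge $e_i$. Write $\frac1B\mathfrak n_i^{(j)}=\frac1B\sum_{b=1}^B I_i^{(b,j)}$. Set
\[
X_j:=\eta_j\Bigl(\tfrac1B\mathfrak n_i^{(j)}-\mathbb E\bigl[\tfrac1B\mathfrak n_i^{(j)}\mid\mathcal F_{j-1}\bigr]\Bigr).
\]
Then $(X_j)$ is a martingale difference sequence with respect to $(\mathcal F_j)$. Lemma~\ref{lem:restart-coverage} gives $\mathbb E[I_i^{(b,j)}\mid\mathcal F_{j-1}]\ge\underline\rho_{\ell,K,M}$ for each $b$. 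Hence
\[
\sum_j\eta_j\,\mathbb E\bigl[\tfrac1B\mathfrak n_i^{(j)}\mid\mathcal F_{j-1}\bigr]\ge\underline\rho_{\ell,K,M}\sum_j\eta_j,
\]
and it remains to lower-bound $\sum_j X_j$.

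For the fluctuation, I would refine the martingale at the level of individual mini-batch elements rather than whole iterations. The fine-tuning setup stipulates that, conditional on $\mathcal F_{j-1}$, the $B$ pairs (student rollout, teacher path) are sampled independently. So each indicator $I_i^{(b,j)}$ lies in $[0,1]$ and contributes an increment $\frac{\eta_j}{B}(I_i^{(b,j)}-\mathbb E[\cdot\mid\mathcal F_{j-1}])$ with range at most $\eta_j/B$. Order the pairs $(j,b)$ lexicographically and use the filtration generated by $\mathcal F_{j-1}$ together with the first $b-1$ batch elements. Conditional independence within the batch keeps the conditional mean of the $b$-th indicator equal to its $\mathcal F_{j-1}$-conditional mean. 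Azuma--Hoeffding then bounds the sum of squared ranges by $\sum_j B(\eta_j/B)^2=\frac1B\sum_j\eta_j^2$. This yields, for any $t>0$,
\[
\mathbb P\Bigl(\sum_j X_j\le -t\Bigr)\le\exp\Bigl(-\frac{2t^2}{B^{-1}\sum_j\eta_j^2}\Bigr).
\]

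Choose $t=\sqrt{\frac{\log(KM/\delta)}{2B}\sum_j\eta_j^2}$, so each edge fails with probability at most $\delta/(KM)$. A union bound over at most $N_\ell\le KM$ edges, using Assumption~\ref{ass:transition-compatibility} and the count noted before Lemma~\ref{lem:restart-update}, gives simultaneous success with probability at least $1-\delta$. On this event, $\frac1B\sum_j\eta_j\mathfrak n_i^{(j)}\ge\underline\rho\sum\eta_j-t$. Since the left side is nonnegative, it is also at least the positive part, which is $\mathfrak D_{\ell,n}^{(K)}(\delta)$. Finally, plug this into Lemma~\ref{lem:restart-odds}: the error bound there is monotone decreasing in the weighted count, which gives the simultaneous transition-error bounds.

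The main obstacle is the legitimacy of the conditional-mean lower bound for adaptive, history-dependent sampling. The bound of Lemma~\ref{lem:restart-coverage} must hold for every history, not just on average. It does, because that lemma is stated for every $\mathcal F_{j-1}$, relying on the preserved interior kernels (Lemma~\ref{lem:restart-update}) and the monotone correct-edge probabilities. One must also check that the within-batch refinement of the filtration does not change conditional means. This follows from the stipulated conditional independence of batch elements and of hint versus student rollout.
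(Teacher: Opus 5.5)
Your proposal is correct and follows essentially the same argument as the paper. The paper bounds each iteration's conditional MGF via Hoeffding's lemma and within-batch conditional independence, iterates over $j$, and applies exponential Markov. This is exactly the mechanism behind your Azuma--Hoeffding bound on the refined $(j,b)$ filtration, and the constants agree. The union bound over at most $KM$ edges, the positive-part step, and the appeal to Lemma~\ref{lem:restart-odds} match the paper as well.
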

\begin{proof}
For a fixed $i$, conditional mini-batch independence and Hoeffding's lemma give, for $t>0$,
\begin{align*}
 &\mathbb E\!\left[\exp\!\left\{-\frac{t\eta_j}{B}\sum_{b=1}^B
 (I_i^{(b,j)}-\mathbb E[I_i^{(b,j)}\mid\mathcal F_{j-1}])\right\}
 \middle|\mathcal F_{j-1}\right]
 \le\exp\!\left(\frac{t^2\eta_j^2}{8B}\right).
\end{align*}
Iterating this bound over $j$ gives the exponent $t^2\sum_j\eta_j^2/(8B)$ for the full sum.
Exponential Markov and optimization over $t$ imply
\begin{align*}
 &\mathbb P\!\left(\frac1B\sum_{j=1}^n\eta_j\sum_{b=1}^B
 [I_i^{(b,j)}-\mathbb E[I_i^{(b,j)}\mid\mathcal F_{j-1}]]
 <-\sqrt{\frac{\log(KM/\delta)}{2B}\sum_{j=1}^n\eta_j^2}\right)
 \le\frac{\delta}{KM}.
\end{align*}
Use Lemma~\ref{lem:restart-coverage} and union-bound over at most $KM$ edges.
Nonnegativity of the weighted counts permits taking the positive part, giving
\Eqref{eq:restart-counts}. Lemma~\ref{lem:restart-odds} gives the error bound. If the edge set is empty, the simultaneous assertion is vacuous.
\end{proof}

\begin{lemma}[Converting edge accuracy to rollout accuracy]
\label{lem:restart-accuracy}
Under Assumptions~\ref{ass:transition-compatibility} and~\ref{ass:pretrained-diversity}, consider any predictor whose interior
kernel equals the pretrained kernel, and denote its edge probabilities by $p_i^{(n)}$.
Then every relevant start $s\in\mathcal S_{\ell,h}$ satisfies
\begin{align}
 &1-\pi_{\mathbf W^{(n)}}(y_T\in\tilde C_\ell\mid y_0=s,x_\ell)
 \le C_{\mathrm{bad}}(1+N_\ell)\epsilon_1+\sum_{i=1}^{N_\ell}(1-p_i^{(n)}).
 \label{eq:restart-accuracy-all}
\end{align}
Under Assumptions~\ref{ass:transition-compatibility} and~\ref{ass:restart-rare}, if the edges specified in Lemma~\ref{lem:restart-coverage}
have error at most $\epsilon_1$, then
\begin{align}
 &1-\pi_{\mathbf W^{(n)}}(y_T\in\tilde C_\ell\mid y_{\ell,0},x_\ell)
 \notag\\
 &\quad\le C_{\mathrm{bad}}(1+KM)\epsilon_1+KM\epsilon_1
 +\pi_{\mathbf W^{(n)}}(\exists t<T:y_t=u_m\mid y_{\ell,0},x_\ell).
 \label{eq:restart-accuracy-rare}
\end{align}
The first inequality is deterministic and does not require the predictor to have
been trained on the fixed task alone.
\end{lemma}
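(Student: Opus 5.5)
The plan is a union bound over the ways a rollout can fail, organized around the cluster-level structure of correct paths. Fix a relevant start $s\in\mathcal S_{\ell,h}$ and consider a rollout under a predictor whose interior kernel equals the pretrained one. Inside each visited cluster the rollout reaches its first boundary node $U$ almost surely, by Assumption~\ref{ass:pretrained-diversity}. The first-boundary law depends only on interior transitions, so it is the pretrained law. Hence $U\notin\partial^\star_\ell C$ has probability at most $C_{\mathrm{bad}}\epsilon_1$ from any relevant start.

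If instead $U=u_i\in\partial^\star_\ell C$, Assumption~\ref{ass:transition-compatibility} gives a unique correct edge $(u_i,v_i)\in E_\ell$ out of $u_i$. The rollout takes it with probability $p_i^{(n)}$, and then enters a relevant start $v_i$ in the next cluster. Failure events are therefore of two kinds: a routing error at a relevant start, or a miss of the correct edge at a valid boundary. Every other trajectory keeps following correct inter-cluster transitions. Because the graph is a DAG and $\tilde C_\ell$ has no outgoing inter-cluster edges, such a trajectory must terminate in $\tilde C_\ell$. The one point to check is that every cluster entered this way either has $\partial^\star_\ell\neq\emptyset$, so the coverage assumption applies, or is the terminal cluster; this follows from entry edges lying in $E_\ell$, i.e.\ on correct paths.

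For the counting, note that each relevant start visited is either $s$ itself or the head $v_i$ of a distinct edge $e_i\in E_\ell$, by node simplicity. So at most $1+N_\ell$ routing events and at most $N_\ell$ crossing events occur. By the strong Markov property at the successive first-boundary and entry times, the failure probability is at most the sum of the conditional failure probabilities at each stage. Each edge index $i$ is charged at most once, because it can be visited at most once. This gives $C_{\mathrm{bad}}(1+N_\ell)\epsilon_1+\sum_i(1-p_i^{(n)})$. Only the interior kernel and the edge probabilities enter the argument, so it is deterministic in $\mathbf W^{(n)}$, which gives the stated remark.

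For the rare-entrance case, the same decomposition applies from $y_{\ell,0}$, with the event of ever visiting $u_m$ split off. Off that event, every correct boundary reached is the source of an edge specified in Lemma~\ref{lem:restart-coverage}, since its cluster is reachable without $e_m$. Each such edge has error at most $\epsilon_1$, and there are at most $KM$ of them, contributing $KM\epsilon_1$. Routing errors contribute $C_{\mathrm{bad}}(1+KM)\epsilon_1$, using $N_\ell\le KM$. The main obstacle is making the charging argument rigorous, i.e.\ showing each event is counted once even though the rollout path is random. I would handle this by writing failure as a disjoint union indexed by the first stage at which the rollout deviates from the correct-transition regime.
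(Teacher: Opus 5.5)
Your proposal is correct and follows the paper's proof essentially step for step. You segment the rollout at relevant starts and charge $C_{\mathrm{bad}}\epsilon_1$ per segment via the preserved first-boundary law. You charge $1-p_i^{(n)}$ once per correct source, by node simplicity. You guarantee a correct exit in every reached nonterminal cluster because the arriving correct edge lies on a correct path. In the rare case, you additionally stop at the first visit to $u_m$, so that every encountered correct boundary lies in the learned set of Lemma~\ref{lem:restart-coverage}. Your planned disjoint-union formalization by first deviation is exactly the paper's ``summing conditional probabilities, including the events that the segments actually start.''
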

\begin{proof}
For \Eqref{eq:restart-accuracy-all}, follow a rollout until reaching the target or the first
invalid boundary or incorrect choice at a valid boundary. Before these events, each internal
segment starts at a relevant start and is followed by a correct inter-cluster edge.
Conditional on the history up to each segment's start, the invalid-boundary probability
is at most $C_{\mathrm{bad}}\epsilon_1$. By node simplicity there are at most $1+N_\ell$
segments and each correct source is visited at most once. Summing conditional probabilities,
including the events that the segments actually start, gives respectively
$C_{\mathrm{bad}}(1+N_\ell)\epsilon_1$ and $\sum_i(1-p_i^{(n)})$.
An incorrect choice includes choosing an intra-cluster edge at a valid boundary.

These events cover failure: any reached nonterminal cluster has a correct exit, because
the arriving correct edge lies on a correct path with a remaining suffix. The analogous
statement holds for the original start. Properness makes each internal segment finite.
If the target were never reached while both errors were avoided, one would traverse more
than $N_\ell$ distinct correct sources, contradicting node simplicity.
For a start already in the target the conclusion is immediate.

For \Eqref{eq:restart-accuracy-rare}, start from $y_{\ell,0}$ and stop additionally upon visiting
$u_m$. Before that visit every current cluster is reachable via $E_\ell\setminus\{e_m\}$,
so each encountered correct boundary other than $u_m$ belongs to the learned set in
Lemma~\ref{lem:restart-coverage}. Its choice error is at most $\epsilon_1$.
The invalid-boundary term is unchanged, and the extra stopping event is bounded by the
unconditional evaluation visitation probability displayed in \Eqref{eq:restart-accuracy-rare}.
The same properness argument covers failure.
\end{proof}

\paragraph{Main proof of Proposition~\ref{prop:sd-transition-learning}.}
\label{proof:sd-transition-learning}
\begin{proof}
Lemmas~\ref{lem:restart-coverage} and~\ref{lem:restart-concentration} give
$B^{-1}\sum_j\eta_j\mathfrak n_i^{(j)}\ge\mathfrak D_{\ell,n}^{(K)}(\delta)$
simultaneously for all $i$, with probability at least $1-\delta$.
If $p_i^{(0)}=1$, the transition is already correct with probability one.
Otherwise, $(r_i^{(0)})^{-1}=p_i^{(0)}/(1-p_i^{(0)})\ge c_{\mathrm{int}}\epsilon_2$
and $(p_i^{(0)})^2\ge c_{\mathrm{int}}^2\epsilon_2^2$.
The stronger estimate in Lemma~\ref{lem:restart-odds}, together with
$1-p_i^{(n)}\le r_i^{(n)}$, proves \Eqref{eq:prop-one-transition-bound}.
For fixed $0\le\nu<1$, $\sum_{j=1}^n\eta_j=\Theta(n^{1-\nu})$,
whereas $(\sum_{j=1}^n\eta_j^2)^{1/2}$ is respectively
$O(n^{1/2-\nu})$, $O(\sqrt{\log(en)})$, or $O(1)$ as
$\nu<1/2$, $\nu=1/2$, or $\nu>1/2$.
Thus the fluctuation is smaller than the drift for sufficiently large $n$,
which proves \Eqref{eq:prop-one-transition-bound-asymptotic}.
\end{proof}

\addtocontents{toc}{\protect\Needspace{5\protect\baselineskip}}
\subsection{Single-task rollout accuracy under SD}
\label{app:sd-accuracy}
\addtocontents{toc}{\protect\nopagebreak[4]}
\addtocontents{toc}{\protect\proofcontentsentry{proof:sd-correct-rollout}{Theorem~\protect\ref*{thm:sd-correct-rollout}. Single-task learning under SD}}
\addtocontents{toc}{\protect\nopagebreak[4]}
\addtocontents{toc}{\protect\proofcontentschild{prop:sd-transition-learning}{Proposition~\protect\ref*{prop:sd-transition-learning}. Learning hint-supported transitions}}
\addtocontents{toc}{\protect\proofcontentschild{lem:restart-update}{Lemma~\protect\ref*{lem:restart-update}. Local updates and invariant interior transitions}}
\addtocontents{toc}{\protect\proofcontentschild{lem:restart-accuracy}{Lemma~\protect\ref*{lem:restart-accuracy}. From edge accuracy to rollout accuracy}}
\sdrollouttheorem*

\paragraph{Sketch of proof of Theorem~\ref{thm:sd-correct-rollout}.}
\begin{itemize}
   \item Bound the errors of all correct inter-cluster transitions on one common training event (Proposition~\ref{prop:sd-transition-learning}).
   \item Show that the interior transition kernel retains its pre-trained routing guarantees (Lemma~\ref{lem:restart-update}).
   \item Bound rollout failure by the probabilities of invalid-boundary encounters and incorrect transition choices (Lemma~\ref{lem:restart-accuracy}).
\end{itemize}

\paragraph{Main proof of Theorem~\ref{thm:sd-correct-rollout}.}
\label{proof:sd-correct-rollout}
\begin{proof}
Proposition~\ref{prop:sd-transition-learning} gives an event of probability
at least $1-\delta$ on which $1-p_i^{(n)}\le C_{\mathrm{tr}}\xi_{\ell,n}^{(K)}(\delta)$
for every $i$. Lemma~\ref{lem:restart-update} preserves the interior kernel.
Apply Lemma~\ref{lem:restart-accuracy} with $s=y_{\ell,0}$ to obtain
\begin{align*}
1-\pi_{\mathbf W^{(n)}}(y_T\in\tilde C_\ell\mid y_{\ell,0},x_\ell)
&\le C_{\mathrm{bad}}(1+N_\ell)\epsilon_1
+N_\ell C_{\mathrm{tr}}\xi_{\ell,n}^{(K)}(\delta)\\
&=O(KM\epsilon_1)+O(KM\xi_{\ell,n}^{(K)}(\delta)).
\end{align*}
This proves \Eqref{eq:theorem-one-correct-rollout} on that same event.
In the regime where the learning term dominates, substitution of
\Eqref{eq:effective-update-probability-MK} gives
\[
KM\xi_{\ell,n}^{(K)}(\delta)
=O\!\left(\frac{KM^{K+1}|\mathcal P_\ell|}
{c_{\mathrm{route}}^Kc_{\mathrm{int}}^{K-1}\epsilon_2^{K+1}
n^{1-\nu}\log(1/\epsilon_{\mathrm{rel}})}\right).
\]
\end{proof}

\addtocontents{toc}{\protect\Needspace{10\protect\baselineskip}}
\subsection{Single-task learning under SFT}
\label{app:sft-learning}
\addtocontents{toc}{\protect\nopagebreak[4]}
\addtocontents{toc}{\protect\proofcontentsentry{proof:sft-single-task}{Theorem~\protect\ref*{thm:sft-single-task}. Single-task learning under SFT}}
\addtocontents{toc}{\protect\nopagebreak[4]}
\addtocontents{toc}{\protect\proofcontentschild{lem:sft-gradient}{Lemma~\protect\ref*{lem:sft-gradient}. Sampled SFT gradient and update support}}
\addtocontents{toc}{\protect\proofcontentschild{lem:sft-potential}{Lemma~\protect\ref*{lem:sft-potential}. Convex potential, drift, and support error}}
\addtocontents{toc}{\protect\proofcontentschild{lem:sft-comparator}{Lemma~\protect\ref*{lem:sft-comparator}. A finite comparator}}
\addtocontents{toc}{\protect\proofcontentschild{lem:sft-noise}{Lemma~\protect\ref*{lem:sft-noise}. Conditional mini-batch noise}}
\addtocontents{toc}{\protect\proofcontentschild{lem:sft-smoothness}{Lemma~\protect\ref*{lem:sft-smoothness}. Smoothness and the initial transient}}
\addtocontents{toc}{\protect\proofcontentschild{lem:sft-last-iterate}{Lemma~\protect\ref*{lem:sft-last-iterate}. A finite-comparator last-iterate bound}}
\addtocontents{toc}{\protect\proofcontentschild{lem:sft-step-sums}{Lemma~\protect\ref*{lem:sft-step-sums}. Polynomial step-size sums}}
\addtocontents{toc}{\protect\proofcontentschild{lem:sft-support-learning}{Lemma~\protect\ref*{lem:sft-support-learning}. Simultaneous successor-set learning}}

Set
\begin{align*}
   m_\nu
   &:=
   \min\{\nu,1-\nu\}.
\end{align*}
We prove the stronger batch-explicit bound
\begin{align}
   \min_{u\in\mathcal U_\ell}
   \pi_{\mathbf W_{\mathrm{SFT}}^{(n)}}
   (\mathcal A_\ell(u)\mid u,x_\ell)
   \geq
   1-C_\ell |\mathcal P_\ell|
   \left[
      \frac1n
      +
      \frac{\log^2(e+n|\mathcal P_\ell|)}{n^{1-\nu}}
      +
      \frac{
         (1+2\log(2/\delta))\log(en)
      }{
         B\,n^{m_\nu}
      }
   \right],
   \label{eq:theorem-two-batch-explicit}
\end{align}
where $C_\ell$ is independent of $n,B,\delta$.

Let $\Gamma$ be one teacher path.  Define
\begin{align}
   \omega_{\ell,u}
   :=
   \mathbb E[N_u(\Gamma)],\quad
   r_{\ell,u}(v)
   :=
   \frac{\mathbb E[N_{u,v}(\Gamma)]}{\omega_{\ell,u}},
   \qquad
   v\in\mathcal A_\ell(u).
   \label{eq:omega-r}
\end{align}
Then
\begin{align}
   \omega_{\ell,u}
   &\geq
   \frac{1}{|\mathcal P_\ell|}
   ,\quad
   r_{\ell,u}(v)>0.
   \label{eq:visitation-lower}
\end{align}

\sftlearningtheorem*

\paragraph{Sketch of proof of Theorem~\ref{thm:sft-single-task}.}
\begin{itemize}
   \item Derive the sampled SFT gradient and identify a convex potential whose gradient is its conditional expectation (Lemmas~\ref{lem:sft-gradient} and~\ref{lem:sft-potential}).
   \item Construct a finite comparator with small potential and a controlled distance from the initial parameter (Lemma~\ref{lem:sft-comparator}).
   \item Verify the conditional mini-batch noise and smoothness bounds, including the bounded initial transient (Lemmas~\ref{lem:sft-noise} and~\ref{lem:sft-smoothness}).
   \item Apply the finite-comparator last-iterate bound and evaluate the polynomial step-size sums (Lemmas~\ref{lem:sft-last-iterate} and~\ref{lem:sft-step-sums}).
   \item Convert the resulting potential bound into successor-set accuracy simultaneously at every teacher source (Lemmas~\ref{lem:sft-potential} and~\ref{lem:sft-support-learning}).
\end{itemize}

At iteration $j$,
\begin{align}
   L_{\mathrm{SFT}}^{(j)}(\mathbf W)
   &:=
   -\frac1B
   \sum_{b=1}^{B}
   \sum_{t=0}^{T^{(b,j)}-1}
   \log
   \pi_{\mathbf W}
   (y_{t+1}^{*(b,j)}\mid y_t^{*(b,j)},x_\ell),
   \label{eq:minibatch-loss}\\
   \mathbf g_j
   &:=
   \nabla_{\mathbf W}
   L_{\mathrm{SFT}}^{(j)}
   (\mathbf W_{\mathrm{SFT}}^{(j-1)}),
   \notag\\
   \mathbf W_{\mathrm{SFT}}^{(j)}
   &:=
   \mathbf W_{\mathrm{SFT}}^{(j-1)}
   -
   \eta_j\mathbf g_j.
   \label{eq:minibatch-sgd-recursion}
\end{align}
\begin{lemma}[Exact sampled SFT gradient and update support]
\label{lem:sft-gradient}
The mini-batch update \Eqref{eq:minibatch-sgd-recursion} has gradient
\Eqref{eq:exact-minibatch-gradient}, whose Frobenius norm is at most
$G_\ell=\sqrt{2d(|V|-1)}$. Every update lies in $\operatorname{span}\{x_\ell\}$
within each source block, and only sources in $\mathcal U_\ell$ can change.
\end{lemma}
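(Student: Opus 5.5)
Since the statement is Lemma~\ref{lem:sft-gradient}, I will compute the mini-batch SFT gradient directly from the softmax parametrization, much as was done for the SD gradient in Lemma~\ref{lem:sd-kl-identity}. Fix a teacher path $y^{*(b,j)}$ and a time $t$, and write $u=y_t^{*(b,j)}$, $w=y_{t+1}^{*(b,j)}$. The one-step log-softmax derivative is
\begin{align*}
\nabla_{\mathbf W}\log\pi_{\mathbf W}(w\mid u,x_\ell)
= x_\ell\Bigl(e_{u,w}-\sum_{v\in\mathcal N(u)}\pi_{\mathbf W}(v\mid u,x_\ell)e_{u,v}\Bigr)^{\top}.
\end{align*}
Summing over $t<T^{(b,j)}$ and averaging over $b$ with a minus sign gives the exact expression I will record as \Eqref{eq:exact-minibatch-gradient}:
\begin{align}
\mathbf g_j=-\frac1B\sum_{b=1}^B\sum_{t=0}^{T^{(b,j)}-1}x_\ell\Bigl(e_{y_t^{*},y_{t+1}^{*}}-\sum_{v\in\mathcal N(y_t^{*})}\pi_{\mathbf W_{\mathrm{SFT}}^{(j-1)}}(v\mid y_t^{*},x_\ell)e_{y_t^{*},v}\Bigr)^{\top}.
\label{eq:exact-minibatch-gradient}
\end{align}
Here $y_t^{*}$ abbreviates $y_t^{*(b,j)}$.

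The support claims follow directly from this formula. Each summand is a rank-one matrix $x_\ell a^\top$, where $a$ is supported on the column block $\{e_{u,v}:v\in\mathcal N(u)\}$ of the source $u=y_t^{*}$. Within any source block, the update $-\eta_j\mathbf g_j$ is therefore a combination of vectors of the form $x_\ell\cdot(\text{scalar})$, so it lies in $\operatorname{span}\{x_\ell\}$. A source $u$ receives a nonzero contribution only if it appears as $y_t^{*}$ with $t<T$, that is, only if $u$ is a nonterminal node of a correct path. By the definition of $\mathcal U_\ell$, this means $u\in\mathcal U_\ell$. Columns $e_{u,v}$ with $u\notin\mathcal U_\ell$ are never touched.

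For the norm bound, I group the terms of $\mathbf g_j$ by path $b$. For each $b$, the teacher path is node-simple because $G$ is a DAG, so it has at most $|V|-1$ transitions, and these have pairwise distinct sources. The per-step vectors $a_t=e_{u_t,w_t}-\sum_v p(v)e_{u_t,v}$ therefore occupy disjoint coordinate blocks. Using $\|x_\ell\|_2^2\le d$ (since $x_\ell\in\{0,1\}^d$), we get
\begin{align*}
\Bigl\|\sum_t x_\ell a_t^\top\Bigr\|_F^2=\|x_\ell\|_2^2\sum_t\|a_t\|_2^2 .
\end{align*}
Each $\|a_t\|_2^2=(1-p(w))^2+\sum_{v\neq w}p(v)^2\le 2$. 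The per-path Frobenius norm is therefore at most $\sqrt{2d(|V|-1)}$. Averaging over $b$ and applying the triangle inequality (convexity of the norm) preserves this bound, giving $G_\ell=\sqrt{2d(|V|-1)}$.

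I expect the norm step to be the only delicate point. Without the disjointness of source blocks from node-simplicity, the cross terms would produce a factor $(|V|-1)$ in place of $\sqrt{|V|-1}$, so I will make sure to invoke node-simplicity explicitly there.
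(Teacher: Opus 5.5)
Your proposal is correct and follows the paper's proof essentially step for step. You differentiate the log-softmax directly, use the rank-one $x_\ell a^\top$ structure for the two support claims, and bound the norm through disjoint source blocks from node simplicity, with $\|a_t\|_2^2\le 2$, $\|x_\ell\|_2^2\le d$, at most $|V|-1$ transitions per path, and averaging over the mini-batch.

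The only difference is cosmetic. The paper writes the gradient grouped by source with visit counts $N_u(\Gamma)$ and $N_{u,v}(\Gamma)$, the form it later needs to identify $\mathbb E[\mathbf g_j\mid\mathcal F_{j-1}]$ with $\nabla F_\ell$. That form coincides with your time-indexed sum because each source is visited at most once.
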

\begin{proof}
Direct differentiation gives
\begin{align}
   \mathbf g_j
   =
   \frac1B
   \sum_{b=1}^{B}
   \sum_{u\in\mathcal U_\ell}
   x_\ell
   \Biggl(
      N_u(\Gamma_j^{(b)})
      \sum_{v\in\mathcal N(u)}
      \pi_{\mathbf W_{\mathrm{SFT}}^{(j-1)}}
      (v\mid u,x_\ell)e_{u,v}
      -
      \sum_{v\in\mathcal A_\ell(u)}
      N_{u,v}(\Gamma_j^{(b)})e_{u,v}
   \Biggr)^{\!\top}.
   \label{eq:exact-minibatch-gradient}
\end{align}

For one path $\Gamma$, let $\mathbf h(\Gamma;\mathbf W)$ denote the
corresponding single-path gradient.  If $v_\Gamma(u)$ is the successor of
the visited source $u$, then
\begin{align*}
   \left\|
      \sum_{v\in\mathcal N(u)}
      \pi_{\mathbf W}(v\mid u,x_\ell)e_{u,v}
      -
      e_{u,v_\Gamma(u)}
   \right\|_2^2
   &=
   \sum_{v\in\mathcal N(u)}
   \pi_{\mathbf W}(v\mid u,x_\ell)^2
   +
   1
   -
   2\pi_{\mathbf W}(v_\Gamma(u)\mid u,x_\ell)
   \\
   &\leq 2.
\end{align*}
Since the source-node edge blocks are disjoint,
\begin{align*}
   \|\mathbf h(\Gamma;\mathbf W)\|_{\text{F}}^2
   &\leq
   2\|x_\ell\|_2^2(|\Gamma| - 1)
   \leq
   2d(|V|-1).
\end{align*}
Hence
\begin{align}
   \|\mathbf g_j\|_{\text{F}}
   \leq
   G_\ell
   =
   \sqrt{2d(|V|-1)}.
   \label{eq:gradient-bound}
\end{align}
\end{proof}

Define
\begin{align*}
   P_{\ell,u}(\mathbf W)
   &:=
   \pi_{\mathbf W}
   (\mathcal A_\ell(u)\mid u,x_\ell),
\end{align*}
and
\begin{align}
   F_\ell(\mathbf W)
   &:=
   \sum_{u\in\mathcal U_\ell}
   \omega_{\ell,u}
   \left[
      \KL\left(
         r_{\ell,u}
         \middle\|
         \frac{
            \pi_{\mathbf W}(\cdot\mid u,x_\ell)
         }{
            P_{\ell,u}(\mathbf W)
         }
      \right)
      -
      \log P_{\ell,u}(\mathbf W)
   \right].
   \label{eq:analysis-potential}
\end{align}
\begin{lemma}[Convex potential, unbiased drift, and support error]
\label{lem:sft-potential}
The potential $F_\ell$ in \Eqref{eq:analysis-potential} is nonnegative and convex,
the sampled gradient satisfies \Eqref{eq:conditional-drift}, and
$\max_{u\in\mathcal U_\ell}[1-P_{\ell,u}(\mathbf W)]\le |\mathcal P_\ell| F_\ell(\mathbf W)$
for every finite $\mathbf W$.
\end{lemma}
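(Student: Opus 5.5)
The plan is to rewrite $F_\ell$ in a form where all three claims are immediate, rather than working with the KL/normalizer form in \Eqref{eq:analysis-potential} directly. Since $r_{\ell,u}$ is supported on $\mathcal A_\ell(u)$ and $\sum_{v\in\mathcal A_\ell(u)} r_{\ell,u}(v)=1$, expanding the KL gives
\begin{align*}
\KL\left(r_{\ell,u}\,\middle\|\,\frac{\pi_{\mathbf W}(\cdot\mid u,x_\ell)}{P_{\ell,u}(\mathbf W)}\right)-\log P_{\ell,u}(\mathbf W)
=\sum_{v\in\mathcal A_\ell(u)} r_{\ell,u}(v)\log r_{\ell,u}(v)-\sum_{v\in\mathcal A_\ell(u)} r_{\ell,u}(v)\log\pi_{\mathbf W}(v\mid u,x_\ell).
\end{align*}
The $\log P_{\ell,u}$ terms cancel. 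Hence $F_\ell$ is a $\mathbf W$-independent constant plus $\sum_u\omega_{\ell,u}\sum_v r_{\ell,u}(v)\,[-\log\pi_{\mathbf W}(v\mid u,x_\ell)]$.

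\textbf{Nonnegativity and convexity.} Nonnegativity follows from the original form. The KL term is nonnegative because $\pi_{\mathbf W}(\cdot\mid u,x_\ell)/P_{\ell,u}(\mathbf W)$ restricted to $\mathcal A_\ell(u)$ is a probability distribution. The term $-\log P_{\ell,u}\ge 0$ because $P_{\ell,u}\le 1$, and $\omega_{\ell,u}\ge0$. For convexity, use the rewritten form:
\begin{align*}
-\log\pi_{\mathbf W}(v\mid u,x_\ell)=\log\sum_{z\in\mathcal N(u)}\exp(x_\ell^\top\mathbf W e_{u,z})-x_\ell^\top\mathbf W e_{u,v}.
\end{align*}
This is log-sum-exp of linear functions of $\mathbf W$ minus a linear function, hence convex. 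A nonnegative combination of such functions is convex.

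\textbf{Drift identity.} Differentiate the rewritten form using \Eqref{eq:mf-softmax-grad}, noting $\nabla\log\pi=\nabla\pi/\pi$. This gives
\begin{align*}
\nabla F_\ell(\mathbf W)=\sum_{u\in\mathcal U_\ell} x_\ell\Bigl(\omega_{\ell,u}\sum_{v\in\mathcal N(u)}\pi_{\mathbf W}(v\mid u,x_\ell)e_{u,v}-\sum_{v\in\mathcal A_\ell(u)}\omega_{\ell,u}r_{\ell,u}(v)e_{u,v}\Bigr)^{\!\top}.
\end{align*}
By \Eqref{eq:omega-r}, $\omega_{\ell,u}=\mathbb E[N_u(\Gamma)]$ and $\omega_{\ell,u}r_{\ell,u}(v)=\mathbb E[N_{u,v}(\Gamma)]$. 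Compare with the exact gradient \Eqref{eq:exact-minibatch-gradient}. It is linear in the counts $N_u(\Gamma_j^{(b)})$ and $N_{u,v}(\Gamma_j^{(b)})$, and its coefficients depend only on $\mathbf W_{\mathrm{SFT}}^{(j-1)}$, which is $\mathcal F_{j-1}$-measurable. The teacher paths are drawn independently of the history according to \Eqref{eq:uniform-correct-path-sampling}. Taking conditional expectation term by term and averaging over $b$ therefore yields $\mathbb E[\mathbf g_j\mid\mathcal F_{j-1}]=\nabla F_\ell(\mathbf W_{\mathrm{SFT}}^{(j-1)})$, which is the drift statement.

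\textbf{Support error.} Fix $u\in\mathcal U_\ell$. Every summand of $F_\ell$ is nonnegative, so dropping the other sources and the KL term gives $F_\ell(\mathbf W)\ge\omega_{\ell,u}(-\log P_{\ell,u}(\mathbf W))$. Then $-\log p\ge 1-p$ for $p\in(0,1]$, and $\omega_{\ell,u}\ge 1/|\mathcal P_\ell|$ by \Eqref{eq:visitation-lower}, so $1-P_{\ell,u}(\mathbf W)\le|\mathcal P_\ell|F_\ell(\mathbf W)$. Taking the maximum over $u$ gives the claimed bound for every finite $\mathbf W$. Finiteness ensures all probabilities are positive, so every logarithm is finite.

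\textbf{Main obstacle.} The only delicate point is bookkeeping in the drift identity. A node $u$ may be visited by some teacher paths and not others, so the first term must carry the count $N_u$ rather than an indicator. The ratio $r_{\ell,u}$ must also be defined exactly so that $\omega_{\ell,u}r_{\ell,u}(v)$ matches $\mathbb E[N_{u,v}]$. Node-simplicity gives $N_u\in\{0,1\}$, which keeps this consistent.
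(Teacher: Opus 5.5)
Your proposal is correct and follows the paper's proof essentially step for step. You rewrite $F_\ell$ in cross-entropy form to get convexity, match the conditional expectation of \Eqref{eq:exact-minibatch-gradient} against $\nabla F_\ell$ through $\omega_{\ell,u}=\mathbb E[N_u(\Gamma)]$ and $\omega_{\ell,u}r_{\ell,u}(v)=\mathbb E[N_{u,v}(\Gamma)]$, and obtain the support bound by keeping a single source term and using $-\log z\ge 1-z$ with $\omega_{\ell,u}\ge 1/|\mathcal P_\ell|$. The one fact you take for granted that the paper spells out is $\sum_{v\in\mathcal A_\ell(u)}r_{\ell,u}(v)=1$, which you use both to cancel the $\log P_{\ell,u}$ terms and to collect the softmax terms with coefficient $\omega_{\ell,u}$; it follows from the path-count identity $N_u(\Gamma)=\sum_{v\in\mathcal A_\ell(u)}N_{u,v}(\Gamma)$, since every source occurrence on a correct path has exactly one correct successor.
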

\begin{proof}
For every $u$,
\begin{align*}
   &
   \KL\left(
      r_{\ell,u}
      \middle\|
      \frac{\pi_{\mathbf W}}{P_{\ell,u}}
   \right)
   -
   \log P_{\ell,u}
   \\
   &=
   \sum_{v\in\mathcal A_\ell(u)}
   r_{\ell,u}(v)\log r_{\ell,u}(v)
   -
   \sum_{v\in\mathcal A_\ell(u)}
   r_{\ell,u}(v)
   \log\pi_{\mathbf W}(v\mid u,x_\ell).
\end{align*}
Therefore
\begin{align}
   F_\ell(\mathbf W)
   =
   C_\ell'
   -
   \sum_{u\in\mathcal U_\ell}
   \sum_{v\in\mathcal A_\ell(u)}
   \mathbb E[N_{u,v}(\Gamma)]
   \log\pi_{\mathbf W}(v\mid u,x_\ell),
   \label{eq:potential-crossentropy-form}
\end{align}
so $F_\ell$ is convex.  We next verify the conditional-drift identity explicitly.
Let $\mathcal F_{j-1}$ be the sigma-field generated by all training randomness up to
iteration $j-1$.  Conditional on $\mathcal F_{j-1}$, the parameter
$\mathbf W_{\mathrm{SFT}}^{(j-1)}$ is fixed, while
$\Gamma_j^{(1)},\ldots,\Gamma_j^{(B)}$ are conditionally independent and each is
uniformly distributed on $\mathcal P_\ell$.  Therefore, for every
$u\in\mathcal U_\ell$, $v\in\mathcal A_\ell(u)$, and $b\in\{1,\ldots,B\}$,
\begin{align*}
   \mathbb E\!\left[
      N_u(\Gamma_j^{(b)})
      \mid \mathcal F_{j-1}
   \right]
   &=
   \mathbb E[N_u(\Gamma)]
   =
   \omega_{\ell,u},
   &&\text{by conditional uniform sampling},
   \\
   \mathbb E\!\left[
      N_{u,v}(\Gamma_j^{(b)})
      \mid \mathcal F_{j-1}
   \right]
   &=
   \mathbb E[N_{u,v}(\Gamma)]
   =
   \omega_{\ell,u}r_{\ell,u}(v),
   &&\text{by Equation~\ref{eq:omega-r}}.
\end{align*}
Moreover, for every correct teacher path $\Gamma$,
\begin{align*}
   N_u(\Gamma)
   &=
   \sum_{v\in\mathcal A_\ell(u)}N_{u,v}(\Gamma),
   &&\text{each source occurrence has one correct successor}.
\end{align*}
Taking expectations gives
\begin{align*}
   \omega_{\ell,u}
   &=
   \sum_{v\in\mathcal A_\ell(u)}
   \mathbb E[N_{u,v}(\Gamma)].
\end{align*}
Now apply conditional expectation to Equation~\ref{eq:exact-minibatch-gradient}.
Write $\mathbf W_{j-1}:=\mathbf W_{\mathrm{SFT}}^{(j-1)}$ only within the
following calculation.  Since $\mathbf W_{j-1}$ is
$\mathcal F_{j-1}$-measurable, the softmax probabilities are fixed under the
conditional expectation.  Substituting the two conditional-expectation
identities above into Equation~\ref{eq:exact-minibatch-gradient} gives
\begin{align*}
   \mathbb E\!\left[
      \mathbf g_j\mid\mathcal F_{j-1}
   \right]
   &=
   \frac1B\sum_{b=1}^{B}
   \sum_{u\in\mathcal U_\ell}
   x_\ell
   \Biggl(
      \omega_{\ell,u}
      \sum_{z\in\mathcal N(u)}
      \pi_{\mathbf W_{j-1}}(z\mid u,x_\ell)e_{u,z}
      -\sum_{v\in\mathcal A_\ell(u)}
      \mathbb E[N_{u,v}(\Gamma)]e_{u,v}
   \Biggr)^{\!\top}
   \\
   &=
   \sum_{u\in\mathcal U_\ell}
   x_\ell
   \Biggl(
      \omega_{\ell,u}
      \sum_{z\in\mathcal N(u)}
      \pi_{\mathbf W_{j-1}}(z\mid u,x_\ell)e_{u,z}
      -\sum_{v\in\mathcal A_\ell(u)}
      \mathbb E[N_{u,v}(\Gamma)]e_{u,v}
   \Biggr)^{\!\top}.
\end{align*}
The second equality holds because the summand no longer depends on $b$, so the
factor $B^{-1}$ cancels with the sum over the $B$ mini-batch elements.

We next differentiate the potential.  Equation~\ref{eq:potential-crossentropy-form}
gives
\begin{align*}
   \nabla F_\ell(\mathbf W)
   &=
   -\sum_{u\in\mathcal U_\ell}
   \sum_{v\in\mathcal A_\ell(u)}
   \mathbb E[N_{u,v}(\Gamma)]
   \nabla_{\mathbf W}
   \log\pi_{\mathbf W}(v\mid u,x_\ell),
\end{align*}
because $C_\ell'$ is independent of $\mathbf W$.  For the softmax predictor,
\begin{align*}
   -\nabla_{\mathbf W}
   \log\pi_{\mathbf W}(v\mid u,x_\ell)
   &=
   x_\ell
   \left(
      \sum_{z\in\mathcal N(u)}
      \pi_{\mathbf W}(z\mid u,x_\ell)e_{u,z}
      -e_{u,v}
   \right)^{\!\top}.
\end{align*}
Substituting this identity and collecting the terms with the same source $u$
yields
\begin{align*}
   \nabla F_\ell(\mathbf W)
   &=
   \sum_{u\in\mathcal U_\ell}
   x_\ell
   \Biggl(
      \omega_{\ell,u}
      \sum_{z\in\mathcal N(u)}
      \pi_{\mathbf W}(z\mid u,x_\ell)e_{u,z}
   \\[-2pt]
   &\hspace{6em}
      -\sum_{v\in\mathcal A_\ell(u)}
      \mathbb E[N_{u,v}(\Gamma)]e_{u,v}
   \Biggr)^{\!\top}.
\end{align*}
Here we used
$\omega_{\ell,u}=\sum_{v\in\mathcal A_\ell(u)}\mathbb E[N_{u,v}(\Gamma)]$,
which follows from the preceding path-count identity.
Evaluating this identity at
$\mathbf W=\mathbf W_{\mathrm{SFT}}^{(j-1)}$ proves
\begin{align}
   \mathbb E[
      \mathbf g_j
      \mid
      \mathcal F_{j-1}
   ]
   &=
   \nabla F_\ell(
      \mathbf W_{\mathrm{SFT}}^{(j-1)}
   ).
   \label{eq:conditional-drift}
\end{align}
Using
\begin{align*}
   \KL(\cdot\|\cdot)\geq0,
   \quad-\log z\geq1-z,
\end{align*}
we obtain, for every $u$,
\begin{align*}
   F_\ell(\mathbf W)
   &\geq
   \omega_{\ell,u}
   [-\log P_{\ell,u}(\mathbf W)]
   \geq
   \frac{1}{|\mathcal P_\ell|}
   [1-P_{\ell,u}(\mathbf W)].
\end{align*}
Thus
\begin{align}
   \max_{u\in\mathcal U_\ell}
   [1-P_{\ell,u}(\mathbf W)]
   &\leq
   |\mathcal{P}_{\ell}| F_\ell(\mathbf W).
   \label{eq:support-reduction}
\end{align}
\end{proof}

\begin{lemma}[A finite comparator approaching the optimal support]
\label{lem:sft-comparator}
For every $R>0$, there is a finite $\mathbf W_R$ satisfying
$F_\ell(\mathbf W_R)\le C_0e^{-R}$ and
$\|\mathbf W_R-\mathbf W^{(0)}\|_{\mathrm F}^2
\le C_1(1+R+\log |\mathcal P_\ell|)^2$, with constants independent of $R,n,B,\delta$.
\end{lemma}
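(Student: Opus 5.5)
We construct $\mathbf W_R$ explicitly, in the same way as Lemma~\ref{lem:finite-realization}. The first step is to use the cross-entropy form \Eqref{eq:potential-crossentropy-form}. With the constant $C_\ell'=\sum_u\omega_{\ell,u}\sum_{v\in\mathcal A_\ell(u)}r_{\ell,u}(v)\log r_{\ell,u}(v)$, the potential splits over sources as
\begin{align*}
F_\ell(\mathbf W)=\sum_{u\in\mathcal U_\ell}\omega_{\ell,u}\Bigl[\KL\bigl(r_{\ell,u}\,\big\|\,\pi_{\mathbf W}(\cdot\mid u,x_\ell)/P_{\ell,u}\bigr)-\log P_{\ell,u}\Bigr].
\end{align*}
Only the logits $z_{u,v}=x_\ell^\top\mathbf W e_{u,v}$ with $u\in\mathcal U_\ell$ enter, so it suffices to prescribe these logits.

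For each $u\in\mathcal U_\ell$ we set the target logits as follows:
\begin{itemize}
\item $z_{u,v}=\log r_{\ell,u}(v)+R'$ for $v\in\mathcal A_\ell(u)$;
\item $z_{u,v}=0$ for $v\notin\mathcal A_\ell(u)$.
\end{itemize}
Here $R'$ will be chosen below. With this choice the within-support conditional distribution equals $r_{\ell,u}$ exactly, so the KL term vanishes. The remaining term is
\begin{align*}
-\log P_{\ell,u}=\log\bigl(1+|\mathcal N(u)\setminus\mathcal A_\ell(u)|e^{-R'}\bigr)\le |V|e^{-R'}.
\end{align*}
Summing with the weights $\omega_{\ell,u}$ gives $F_\ell(\mathbf W_R)\le |V|e^{-R'}\sum_u\omega_{\ell,u}$. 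The sum $\sum_u\omega_{\ell,u}$ is the expected path length, hence at most $|V|$. Taking $R'=R$ therefore yields $F_\ell(\mathbf W_R)\le |V|^2e^{-R}$, so $C_0=|V|^2$.

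To realize these logits with a finite parameter, we change only the columns $e_{u,v}$ with $u\in\mathcal U_\ell$. We set
\begin{align*}
\mathbf W_Re_{u,v}=\mathbf W^{(0)}e_{u,v}+\frac{z_{u,v}-x_\ell^\top\mathbf W^{(0)}e_{u,v}}{\|x_\ell\|_2^2}\,x_\ell ,
\end{align*}
and leave all other columns equal to those of $\mathbf W^{(0)}$. Since $x_\ell\neq0$, this is well defined.

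The step needing care is the distance bound. Each changed column moves by $|z_{u,v}-z^{(0)}_{u,v}|/\|x_\ell\|_2$, and $\|x_\ell\|_2\ge1$ because $x_\ell\in\{0,1\}^d$ is nonzero. So we must bound $|z_{u,v}|$ and $|z^{(0)}_{u,v}|$.

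\begin{itemize}
\item The initial logits $z^{(0)}_{u,v}$ are fixed finite numbers independent of $R,n,B,\delta$; let $Z_0$ be their maximum absolute value.
\item For the target logits we need a lower bound on $r_{\ell,u}(v)$. Each path has probability $1/|\mathcal P_\ell|$ and contains the edge $(u,v)$ at most once. Hence $\mathbb E[N_{u,v}(\Gamma)]\ge1/|\mathcal P_\ell|$ whenever $v\in\mathcal A_\ell(u)$. Also $\omega_{\ell,u}\le1$ by node-simplicity. Therefore $r_{\ell,u}(v)\ge1/|\mathcal P_\ell|$, and so $|\log r_{\ell,u}(v)|\le\log|\mathcal P_\ell|$.
\end{itemize}

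It follows that $|z_{u,v}-z^{(0)}_{u,v}|\le R+\log|\mathcal P_\ell|+Z_0$. Summing over at most $|E|$ columns gives
\begin{align*}
\|\mathbf W_R-\mathbf W^{(0)}\|_{\mathrm F}^2\le |E|\,(R+\log|\mathcal P_\ell|+Z_0)^2\le C_1(1+R+\log|\mathcal P_\ell|)^2
\end{align*}
with $C_1=|E|\max\{1,Z_0\}^2$. All constants depend only on the graph and on $\mathbf W^{(0)}$, not on $R,n,B,\delta$, which proves the lemma.
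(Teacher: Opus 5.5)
Your proposal is correct and follows essentially the same construction as the paper: the same target logits $R+\log r_{\ell,u}(v)$ on $\mathcal A_\ell(u)$ and $0$ elsewhere, the same rank-one column correction along $x_\ell$, and the same distance bound from $r_{\ell,u}(v)\ge 1/|\mathcal P_\ell|$ and $\|x_\ell\|_2\ge 1$. The only difference is in the constant $C_0$. You bound $\sum_u\omega_{\ell,u}$ by the path length to get $C_0=|V|^2$, whereas the paper uses $\omega_{\ell,u}\le 1$ together with $\sum_u |\mathcal N(u)\setminus\mathcal A_\ell(u)|\le |E|$ to get $C_0=|E|$; this is immaterial for the lemma.
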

\begin{proof}
For $R>0$, set
\begin{align}
   z_{u,v}^{(R)}
   =
   \begin{cases}
      R+\log r_{\ell,u}(v),
      &v\in\mathcal A_\ell(u),
      \\
      0,
      &v\notin\mathcal A_\ell(u).
   \end{cases}
   \label{eq:comparator-logits}
\end{align}
Let
\begin{align*}
   m_{\ell,u}^{-}
   &:=
   |\mathcal N(u)\setminus\mathcal A_\ell(u)|.
\end{align*}
Then
\begin{align*}
   \sum_{v\in\mathcal A_\ell(u)}
   e^{z_{u,v}^{(R)}}
   &=
   e^R,
   \\
   \sum_{v\notin\mathcal A_\ell(u)}
   e^{z_{u,v}^{(R)}}
   &=
   m_{\ell,u}^{-},
   \\
   P_{\ell,u}(\mathbf W_R)
   &=
   \frac{e^R}{e^R+m_{\ell,u}^{-}}
   =
   \frac1{1+m_{\ell,u}^{-}e^{-R}},
   \\
   \frac{
      \pi_{\mathbf W_R}(v\mid u,x_\ell)
   }{
      P_{\ell,u}(\mathbf W_R)
   }
   &=
   r_{\ell,u}(v),
   \qquad
   v\in\mathcal A_\ell(u).
\end{align*}
Hence
\begin{align}
   F_\ell(\mathbf W_R)
   &=
   \sum_{u\in\mathcal U_\ell}
   \omega_{\ell,u}
   \log(1+m_{\ell,u}^{-}e^{-R})
   \notag\\
   &\leq
   e^{-R}
   \sum_{u\in\mathcal U_\ell}
   \omega_{\ell,u}m_{\ell,u}^{-}
   \leq
   |E|e^{-R}
   =:
   C_0e^{-R}.
   \label{eq:comparator-potential}
\end{align}

Set
\begin{align*}
   z_{u,v}^{(0)}
   &:=
   x_\ell^\top
   \mathbf W^{(0)}
   e_{u,v},
\end{align*}
and, for $u\in\mathcal U_\ell$, $v\in\mathcal N(u)$,
\begin{align*}
   \mathbf W_Re_{u,v}
   &:=
   \mathbf W^{(0)}e_{u,v}
   +
   \frac{
      z_{u,v}^{(R)}-z_{u,v}^{(0)}
   }{
      \|x_\ell\|_2^2
   }x_\ell,
\end{align*}
with all other columns unchanged.  Then
\begin{align*}
   x_\ell^\top\mathbf W_Re_{u,v}
   &=
   z_{u,v}^{(R)},
   \\
   \|\mathbf W_R-\mathbf W^{(0)}\|_{\text{F}}^2
   &=
   \frac1{\|x_\ell\|_2^2}
   \sum_{u\in\mathcal U_\ell}
   \sum_{v\in\mathcal N(u)}
   |z_{u,v}^{(R)}-z_{u,v}^{(0)}|^2.
\end{align*}
Since
\begin{gather*}
   \|x_\ell\|_2^2\geq1,
   \\
   r_{\ell,u}(v)
   \geq
   \frac{1}{|\mathcal{P}_{\ell}|}
   \quad\Longrightarrow\quad
   |z_{u,v}^{(R)}|
   \leq
   R+\log |\mathcal P_\ell|,
   \\
   Z_{\ell,0}
   :=
   \max_{\substack{u\in\mathcal U_\ell\\v\in\mathcal N(u)}}
   |z_{u,v}^{(0)}|
   <\infty,
\end{gather*}
we obtain
\begin{align}
   \|\mathbf W_R-\mathbf W^{(0)}\|_{\text{F}}^2
   &\leq
   |E|
   (R+\log |\mathcal P_\ell|+Z_{\ell,0})^2
   \leq
   C_1(1+R+\log |\mathcal P_\ell|)^2.
   \label{eq:comparator-distance}
\end{align}
\end{proof}

Write
\begin{align*}
   \mathbf g_j
   =
   \frac1B
   \sum_{b=1}^{B}
   \mathbf h_j^{(b)},
   \quad
   \boldsymbol\zeta_j
   :=
   \mathbf g_j
   -
   \nabla F_\ell(
      \mathbf W_{\mathrm{SFT}}^{(j-1)}
   ),
\end{align*}
where
\begin{align*}
   \mathbf h_j^{(b)}
   &:=
   \mathbf h(
      \Gamma_j^{(b)};
      \mathbf W_{\mathrm{SFT}}^{(j-1)}
   ),
   \\
   \boldsymbol\xi_j^{(b)}
   &:=
   \mathbf h_j^{(b)}
   -
   \mathbb E[
      \mathbf h_j^{(b)}
      \mid
      \mathcal F_{j-1}
   ].
\end{align*}
Then
\begin{align*}
   \boldsymbol\zeta_j
   =
   \frac1B
   \sum_{b=1}^{B}
   \boldsymbol\xi_j^{(b)},
   \quad
   \mathbb E[
      \boldsymbol\zeta_j
      \mid
      \mathcal F_{j-1}
   ]=0.
\end{align*}
\begin{lemma}[Conditional mini-batch noise]
\label{lem:sft-noise}
The centered noise $\boldsymbol\zeta_j$ has conditional mean zero and variance
at most $G_\ell^2/B$. There is $\sigma_{\ell,B}^2=O_G(B^{-1})$ such that
\Eqref{eq:minibatch-noise-subgaussian} holds for
$0\le\lambda\le\sigma_{\ell,B}^{-2}$.
\end{lemma}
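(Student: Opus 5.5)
The first two claims follow directly from the construction and conditional independence; the sub-Gaussian bound \Eqref{eq:minibatch-noise-subgaussian} comes from a bounded-difference inequality in Hilbert space applied to the per-path centered gradients. Throughout, condition on $\mathcal F_{j-1}$. Then $\mathbf W_{\mathrm{SFT}}^{(j-1)}$ is fixed, and $\Gamma_j^{(1)},\ldots,\Gamma_j^{(B)}$ are i.i.d.\ uniform on $\mathcal P_\ell$ by \Eqref{eq:uniform-correct-path-sampling}. Hence $\boldsymbol\xi_j^{(1)},\ldots,\boldsymbol\xi_j^{(B)}$ are conditionally i.i.d.\ and centered. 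Combined with \Eqref{eq:conditional-drift} of Lemma~\ref{lem:sft-potential}, this gives $\boldsymbol\zeta_j=\mathbf g_j-\mathbb E[\mathbf g_j\mid\mathcal F_{j-1}]=B^{-1}\sum_b\boldsymbol\xi_j^{(b)}$, so $\mathbb E[\boldsymbol\zeta_j\mid\mathcal F_{j-1}]=0$.

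\emph{Variance.} Expand $\|\boldsymbol\zeta_j\|_{\mathrm F}^2$ in the Frobenius inner product. By conditional independence and centering, the cross terms $\mathbb E[\langle\boldsymbol\xi_j^{(b)},\boldsymbol\xi_j^{(b')}\rangle\mid\mathcal F_{j-1}]$ vanish for $b\neq b'$. Therefore
\[
\mathbb E[\|\boldsymbol\zeta_j\|_{\mathrm F}^2\mid\mathcal F_{j-1}]=\frac1{B^2}\sum_{b=1}^B\mathbb E[\|\boldsymbol\xi_j^{(b)}\|_{\mathrm F}^2\mid\mathcal F_{j-1}]\le\frac1{B^2}\sum_{b=1}^B\mathbb E[\|\mathbf h_j^{(b)}\|_{\mathrm F}^2\mid\mathcal F_{j-1}]\le\frac{G_\ell^2}{B}.
\]
The middle inequality holds because centering does not increase the second moment. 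The last one uses the single-path bound $\|\mathbf h(\Gamma;\mathbf W)\|_{\mathrm F}\le G_\ell$ from the proof of Lemma~\ref{lem:sft-gradient}.

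\emph{Sub-Gaussian bound.} Each summand is almost surely bounded: $\|\boldsymbol\xi_j^{(b)}\|_{\mathrm F}\le 2G_\ell$. I will apply Pinelis' Hoeffding-type inequality for bounded independent vectors in a Hilbert space ($(2,1)$-smooth, here the Frobenius space). Equivalently, McDiarmid's inequality applied to $\|\sum_b\boldsymbol\xi_j^{(b)}\|_{\mathrm F}$, whose bounded differences are $4G_\ell$ and whose mean is at most $\sqrt B G_\ell$ by the variance step, gives the same conclusion. Either route yields
\[
\mathbb P\bigl(\|\boldsymbol\zeta_j\|_{\mathrm F}\ge t\mid\mathcal F_{j-1}\bigr)\le 2\exp\bigl(-Bt^2/(c\,G_\ell^2)\bigr)
\]
for an absolute constant $c$.

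The remaining step, and the only delicate one, is converting this tail into the moment-generating-function form \Eqref{eq:minibatch-noise-subgaussian}. I read that form as $\mathbb E[\exp(\lambda\|\boldsymbol\zeta_j\|_{\mathrm F}^2)\mid\mathcal F_{j-1}]\le\exp(\lambda\sigma_{\ell,B}^2)$ on the stated range of $\lambda$. Integrate the tail:
\[
\mathbb E[e^{\lambda\|\boldsymbol\zeta_j\|^2}]=1+\int_0^\infty\lambda e^{\lambda s}\,\mathbb P(\|\boldsymbol\zeta_j\|^2\ge s)\,ds\le1+\frac{2\lambda cG_\ell^2/B}{1-\lambda cG_\ell^2/B}.
\]
Choose $\sigma_{\ell,B}^2:=C G_\ell^2/B$ with $C$ large relative to $c$. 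Then for $0\le\lambda\le\sigma_{\ell,B}^{-2}$ the denominator is at least $1/2$, and $1+x\le e^x$ closes the bound. The main obstacle is matching the exact constant convention of \Eqref{eq:minibatch-noise-subgaussian}. Since $\sigma_{\ell,B}^2$ is only claimed to be $O_G(B^{-1})$, absorbing constants into $C$ suffices. If the intended form is instead the linear sub-Gaussian condition $\mathbb E[\exp\langle\mathbf A,\boldsymbol\zeta_j\rangle]\le\exp(\|\mathbf A\|^2\sigma^2/2)$, the same boundedness plus Hoeffding's lemma applied coordinatewise along $\mathbf A$ gives it directly with $\sigma^2=4G_\ell^2/B$.
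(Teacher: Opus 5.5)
Your proof is correct. The mean-zero and variance steps coincide with the paper's; your remark that centering does not increase the second moment is exactly what justifies the paper's direct bound by $G_\ell^2/B$. The sub-Gaussian step, however, follows a different route.

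The paper applies scalar Hoeffding to each of the $D=d|E|$ coordinates of $\boldsymbol\zeta_j$ and takes a union bound, which gives the tail $2D\exp(-Bt^2/(8DG_\ell^2))$. It converts this into the Orlicz-type bound $\mathbb E[\exp(\|\boldsymbol\zeta_j\|_{\mathrm F}^2/\sigma_{\ell,B}^2)\mid\mathcal F_{j-1}]\le e$ with $\sigma_{\ell,B}^2=c_DG_\ell^2/B$. It then covers the whole range $0\le\lambda\le\sigma_{\ell,B}^{-2}$ in one line by Jensen's inequality for the concave map $x\mapsto x^{\theta}$ with $\theta=\lambda\sigma_{\ell,B}^2\in[0,1]$.

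You instead use a dimension-free Hilbert-space tail bound (Pinelis, or McDiarmid on the norm) and integrate the tail against $\lambda e^{\lambda s}$ directly. This requires fixing, for instance, $C\ge 4c$, so that $1+4\lambda cG_\ell^2/B\le e^{\lambda\sigma_{\ell,B}^2}$ on the stated range.

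What each approach buys:
\begin{itemize}
\item Your version gives an absolute constant in $\sigma_{\ell,B}^2$. It avoids the extra factor $c_D$ introduced by the coordinatewise union bound, at the cost of invoking a vector-valued concentration inequality.
\item The paper's argument needs only scalar Hoeffding, and its Jensen interpolation avoids the explicit tail integration. The dimension factor is harmless there, because only $\sigma_{\ell,B}^2=O_G(B^{-1})$ is claimed.
\end{itemize}

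If you take the McDiarmid route, note that it controls deviations above $\mathbb E\|\sum_b\boldsymbol\xi_j^{(b)}\|_{\mathrm F}\le\sqrt B\,G_\ell$. The regime $t\lesssim G_\ell/\sqrt B$ must therefore be absorbed by enlarging $c$ until the right-hand side of your tail bound exceeds one there. This is routine, but your proposal currently glosses over it with ``gives the same conclusion.''
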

\begin{proof}
Conditional independence gives
\begin{align}
   \mathbb E[
      \|\boldsymbol\zeta_j\|_{\text{F}}^2
      \mid
      \mathcal F_{j-1}
   ]
   &=
   \frac1{B^2}
   \sum_{b=1}^{B}
   \mathbb E[
      \|\boldsymbol\xi_j^{(b)}\|_{\text{F}}^2
      \mid
      \mathcal F_{j-1}
   ]
   \leq
   \frac{G_\ell^2}{B}.
   \label{eq:minibatch-noise-variance}
\end{align}

Let $D:=d|E|$.  Since
\begin{align*}
   \|\boldsymbol\xi_j^{(b)}\|_{\text{F}}
   &\leq
   2G_\ell,
\end{align*}
coordinate-wise Hoeffding gives, for every $t>0$,
\begin{align*}
   \mathbb P\left(
      \|\boldsymbol\zeta_j\|_{\text{F}}
      \geq t
      \mid
      \mathcal F_{j-1}
   \right)
   &\leq
   2D
   \exp\left(
      -\frac{Bt^2}{8DG_\ell^2}
   \right).
\end{align*}
Hence, for a constant $c_D>0$ depending only on $D$,
\begin{gather*}
   \sigma_{\ell,B}^2
   :=
   \frac{c_DG_\ell^2}{B},
   \\
   \mathbb E\left[
      \exp\left(
         \frac{
            \|\boldsymbol\zeta_j\|_{\text{F}}^2
         }{
            \sigma_{\ell,B}^2
         }
      \right)
      \middle|
      \mathcal F_{j-1}
   \right]
   \leq e.
\end{gather*}
For $0\leq\lambda\leq\sigma_{\ell,B}^{-2}$,
put $\theta:=\lambda\sigma_{\ell,B}^2\in[0,1]$.  Then
\begin{align}
   \mathbb E\left[
      e^{
         \lambda
         \|\boldsymbol\zeta_j\|_{\text{F}}^2
      }
      \middle|
      \mathcal F_{j-1}
   \right]
   &=
   \mathbb E\left[
      \left(
         e^{
            \|\boldsymbol\zeta_j\|_{\text{F}}^2/
            \sigma_{\ell,B}^2
         }
      \right)^\theta
      \middle|
      \mathcal F_{j-1}
   \right]
   \notag\\
   &\leq
   \left(
      \mathbb E\left[
         e^{
            \|\boldsymbol\zeta_j\|_{\text{F}}^2/
            \sigma_{\ell,B}^2
         }
         \middle|
         \mathcal F_{j-1}
      \right]
   \right)^\theta
   \notag\\
   &\leq
   e^{\lambda\sigma_{\ell,B}^2},
   \label{eq:minibatch-noise-subgaussian}
\end{align}
and
\begin{align*}
   \sigma_{\ell,B}^2
   &=
   O_G(B^{-1}).
\end{align*}
\end{proof}

\begin{lemma}[Smoothness and a bounded initial transient]
\label{lem:sft-smoothness}
The potential is $L_\ell$-smooth with $L_\ell\le d$.
For $0<\nu<1$, there is a deterministic $j_0$, independent of $n,B,\delta$, after
which $\eta_j\le1/(2L_\ell)$. The finite comparator of
Lemma~\ref{lem:sft-comparator} satisfies \Eqref{eq:comparator-distance-shifted}
uniformly over the training history before $j_0$.
\end{lemma}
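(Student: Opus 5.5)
The plan is to prove the three assertions separately. Smoothness comes from an explicit Hessian computation on the cross-entropy form \Eqref{eq:potential-crossentropy-form}. The threshold $j_0$ comes from the polynomial step-size schedule. The transient bound combines the deterministic gradient bound of Lemma~\ref{lem:sft-gradient} with the comparator distance of Lemma~\ref{lem:sft-comparator} through the triangle inequality. I read the target \Eqref{eq:comparator-distance-shifted} as $\|\mathbf W_R-\mathbf W_{\mathrm{SFT}}^{(j_0)}\|_{\mathrm F}^2\le C_1'(1+R+\log|\mathcal P_\ell|)^2$, with $C_1'$ independent of $R,n,B,\delta$ and of the history.

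\emph{Smoothness.} By \Eqref{eq:potential-crossentropy-form}, $F_\ell$ is a constant minus $\sum_{u}\sum_{v\in\mathcal A_\ell(u)}\mathbb E[N_{u,v}(\Gamma)]\log\pi_{\mathbf W}(v\mid u,x_\ell)$. Only the columns $\{e_{u,z}\}_{z\in\mathcal N(u)}$ enter the $u$-th term, so the Hessian is block diagonal over sources $u\in\mathcal U_\ell$. Using $\sum_{v}\mathbb E[N_{u,v}]=\omega_{\ell,u}$, the block for source $u$ is
\[
\omega_{\ell,u}\,\bigl(\operatorname{diag}(\pi_u)-\pi_u\pi_u^\top\bigr)\otimes x_\ell x_\ell^\top ,
\]
where $\pi_u=\pi_{\mathbf W}(\cdot\mid u,x_\ell)$. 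Its operator norm is at most $\omega_{\ell,u}\,\|x_\ell\|_2^2\,\lambda_{\max}(\operatorname{diag}(\pi_u)-\pi_u\pi_u^\top)$. I will then bound each factor:
\begin{itemize}
\item $\lambda_{\max}(\operatorname{diag}(\pi_u)-\pi_u\pi_u^\top)\le\max_z\pi_u(z)\le1$;
\item $\|x_\ell\|_2^2\le d$, because $x_\ell\in\{0,1\}^d$;
\item $\omega_{\ell,u}=\mathbb E[N_u(\Gamma)]\le1$, because correct paths are node-simple in the DAG, so $N_u(\Gamma)\in\{0,1\}$.
\end{itemize}
Since the blocks are disjoint, $L_\ell=\max_u\|\text{block}_u\|\le d$, uniformly in $\mathbf W$.

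\emph{Threshold.} Write $\eta_j\le c_\eta j^{-\nu}$ with $\nu>0$. Then $\eta_j\le 1/(2d)\le 1/(2L_\ell)$ as soon as $j\ge j_0:=\lceil(2dc_\eta)^{1/\nu}\rceil$. This $j_0$ depends only on $d$, $c_\eta$ and $\nu$, so it is deterministic and independent of $n,B,\delta$.

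\emph{Transient.} For $j\le j_0$, Lemma~\ref{lem:sft-gradient} gives $\|\mathbf g_j\|_{\mathrm F}\le G_\ell$ for every realization. Summing the steps,
\[
\|\mathbf W_{\mathrm{SFT}}^{(j_0)}-\mathbf W^{(0)}\|_{\mathrm F}\le G_\ell\sum_{j\le j_0}\eta_j=:D_0 ,
\]
and $D_0$ is a deterministic constant independent of $n,B,\delta$. Combining this with $(a+b)^2\le2a^2+2b^2$ and \Eqref{eq:comparator-distance} gives
\[
\|\mathbf W_R-\mathbf W_{\mathrm{SFT}}^{(j_0)}\|_{\mathrm F}^2\le 2C_1(1+R+\log|\mathcal P_\ell|)^2+2D_0^2 .
\]
Because $1+R+\log|\mathcal P_\ell|\ge1$, the right-hand side is at most $(2C_1+2D_0^2)(1+R+\log|\mathcal P_\ell|)^2$. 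This holds uniformly over the history before $j_0$, which is the shifted bound.

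\emph{Main obstacle.} The delicate step is getting the smoothness constant to be exactly $\le d$ rather than $d$ times a path length. This hinges on $\omega_{\ell,u}\le1$, which in turn uses node-simplicity of the teacher paths, and on $\|x_\ell\|_2^2\le d$. If \Eqref{eq:comparator-distance-shifted} is instead stated with the intermediate-iterate distance appearing inside a sum, the same deterministic bound $D_0$ still applies to every $j\le j_0$.
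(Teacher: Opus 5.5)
Your proposal is correct and follows essentially the same route as the paper. Both arguments bound the Hessian using $\operatorname{Diag}(p_u)-p_up_u^\top\preceq I$, $\omega_{\ell,u}\le1$ and $\|x_\ell\|_2^2\le d$; both read $j_0$ directly off $\eta_j=\Theta(j^{-\nu})$; and both control the transient with the triangle inequality and the deterministic gradient bound $G_\ell$. The only cosmetic difference is that the paper anchors the shifted distance at $\mathbf W_{\mathrm{SFT}}^{(j_0-1)}$ (matching the sums from $j_0$ in the last-iterate bound) rather than at $\mathbf W_{\mathrm{SFT}}^{(j_0)}$, and your uniform bound $D_0$ covers that index equally well.
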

\begin{proof}
For a perturbation $H$, let
\begin{align*}
   a_u(H)
   :=
   x_\ell^\top H e_{u,\cdot},\quad
   p_u
   :=
   \pi_{\mathbf W}(\cdot\mid u,x_\ell).
\end{align*}
From Equation~(\ref{eq:potential-crossentropy-form}),
\begin{align*}
   \nabla^2F_\ell(\mathbf W)[H,H]
   &=
   \sum_{u\in\mathcal U_\ell}
   \omega_{\ell,u}
   a_u(H)^\top
   \left(
      \operatorname{Diag}(p_u)-p_up_u^\top
   \right)
   a_u(H)
   \\
   &\leq
   \sum_{u\in\mathcal U_\ell}
   \omega_{\ell,u}\|a_u(H)\|_2^2
   \leq
   \|x_\ell\|_2^2
   \|H\|_{\text{F}}^2
   \leq
   d\|H\|_{\text{F}}^2.
\end{align*}
Thus
\begin{align}
   L_\ell
   &\leq d.
   \label{eq:potential-smoothness}
\end{align}

Since $\eta_j=\Theta(j^{-\nu})$, there exists a fixed $j_0$ such that
\begin{align*}
   \eta_j
   &\leq
   \frac1{2L_\ell},
   \qquad
   j\geq j_0.
\end{align*}
For $n<j_0$, Equation~(\ref{eq:theorem-two-batch-explicit}) is made
trivial by enlarging $C_\ell$.  Hence assume $n\geq j_0$.  Also,
\begin{align*}
   \|\mathbf W_{\mathrm{SFT}}^{(j_0-1)}-\mathbf W_R\|_{\text{F}}
   &\leq
   \|\mathbf W^{(0)}-\mathbf W_R\|_{\text{F}}
   +
   G_\ell
   \sum_{j=1}^{j_0-1}\eta_j,
\end{align*}
so
\begin{align}
   \|\mathbf W_{\mathrm{SFT}}^{(j_0-1)}-\mathbf W_R\|_{\text{F}}^2
   &\leq
   C_2(1+R+\log |\mathcal P_\ell|)^2.
   \label{eq:comparator-distance-shifted}
\end{align}
\end{proof}

\begin{lemma}[A finite-comparator last-iterate bound]
\label{lem:sft-last-iterate}
For every deterministic finite comparator $\mathbf W_R$ and $n\ge j_0$,
the bound \Eqref{eq:minibatch-smooth-last-iterate} holds with probability
at least $1-\delta$. Its constants are independent of $n,B,\delta,R$.
\end{lemma}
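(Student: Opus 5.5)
We plan to adapt the high-probability last-iterate analysis of \citet{liu2024lastiterate} for smooth convex stochastic optimization to the recursion \Eqref{eq:minibatch-sgd-recursion} started at $j_0$. The ingredients are already available from the earlier lemmas:
\begin{itemize}
\item $F_\ell$ is convex and $L_\ell$-smooth, with $\eta_j\le 1/(2L_\ell)$ for $j\ge j_0$ (Lemmas~\ref{lem:sft-potential} and~\ref{lem:sft-smoothness});
\item the stochastic gradient is unbiased, $\mathbb E[\mathbf g_j\mid\mathcal F_{j-1}]=\nabla F_\ell(\mathbf W_{\mathrm{SFT}}^{(j-1)})$ by \Eqref{eq:conditional-drift};
\item the noise $\boldsymbol\zeta_j$ is conditionally centered with the sub-Gaussian norm bound \Eqref{eq:minibatch-noise-subgaussian} at scale $\sigma_{\ell,B}^2=O(B^{-1})$ (Lemma~\ref{lem:sft-noise}).
\end{itemize}
The target is a bound of the form
\begin{align*}
F_\ell(\mathbf W_{\mathrm{SFT}}^{(n)})-F_\ell(\mathbf W_R)
\le
C\,\frac{\|\mathbf W_{\mathrm{SFT}}^{(j_0-1)}-\mathbf W_R\|_{\mathrm F}^2}{\sum_{j=j_0}^n\eta_j}
+C\,\frac{\sigma_{\ell,B}^2\bigl(1+\log(1/\delta)\bigr)\bigl(1+\log n\bigr)}{\sum_{j=j_0}^n\eta_j}\sum_{j=j_0}^n\eta_j^2 ,
\end{align*}
or its variant with $\max_j\eta_j$ in place of the weighted average. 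The constants must depend only on $L_\ell$; they must not depend on $R$.

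\textbf{Step 1: one-step inequality.} For an arbitrary $\mathcal F_{j-1}$-measurable comparator $\mathbf Z_{j-1}$, expand $\|\mathbf W^{(j)}-\mathbf Z\|_{\mathrm F}^2$ and use the descent lemma together with convexity. With $\eta_j\le 1/(2L_\ell)$ this gives
\begin{align*}
\eta_j\bigl[F_\ell(\mathbf W^{(j)})-F_\ell(\mathbf Z)\bigr]
\le \tfrac12\|\mathbf W^{(j-1)}-\mathbf Z\|^2-\tfrac12\|\mathbf W^{(j)}-\mathbf Z\|^2
+\eta_j\langle\boldsymbol\zeta_j,\mathbf Z-\mathbf W^{(j-1)}\rangle+\eta_j^2\|\boldsymbol\zeta_j\|^2 .
\end{align*}
The smoothness term is absorbed by the step-size condition.

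\textbf{Step 2: moving comparators.} Following Liu--Zhou, take the comparators to be convex combinations
\[
\mathbf Z_{j}=\Bigl(1-\tfrac{w_{j_0-1}}{w_j}\Bigr)\mathbf X_j+\tfrac{w_{j_0-1}}{w_j}\mathbf W_R ,
\]
built from previous iterates with increasing weights $w_j$. Convexity of $F_\ell$ makes the weighted sum telescope. It then bounds $F_\ell(\mathbf W^{(n)})-F_\ell(\mathbf W_R)$ by the initial distance divided by $\sum\eta_j$, plus weighted martingale and squared-noise sums. This is the standard device that converts an averaged-iterate bound into a last-iterate bound at the cost of a $\log n$ factor.

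\textbf{Step 3: high-probability control, the main obstacle.} The martingale term $\sum_j\eta_j w_j\langle\boldsymbol\zeta_j,\mathbf Z_{j-1}-\mathbf W^{(j-1)}\rangle$ has increments whose scale depends on the random distance $\|\mathbf Z_{j-1}-\mathbf W^{(j-1)}\|$. That distance is itself controlled only through the very inequality being proved.
\begin{itemize}
\item First, I would build the exponential supermartingale
\[
\exp\Bigl(\textstyle\sum_j\bigl[\lambda_j\langle\boldsymbol\zeta_j,\cdot\rangle-c\lambda_j^2\sigma^2\|\cdot\|^2\bigr]\Bigr).
\]
This uses a conditional mgf bound derived from \Eqref{eq:minibatch-noise-subgaussian}, since a norm-sub-Gaussian vector has a sub-Gaussian inner product with any measurable direction.
\item Second, I would choose $\lambda_j$ proportional to $1/(\sigma^2\cdot\text{weights})$. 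The quadratic penalty $-c\lambda_j^2\sigma^2\|\mathbf Z-\mathbf W\|^2$ is then dominated by the negative distance terms that survive in the telescoped sum from Step~2. This is the Liu--Zhou trick that avoids any a priori boundedness of the iterates.
\item Third, I would handle $\sum\eta_j^2w_j\|\boldsymbol\zeta_j\|^2$ directly with \Eqref{eq:minibatch-noise-subgaussian} and Markov's inequality.
\end{itemize}
A union of the two events, each of probability $\delta/2$, yields the stated bound with $\log(2/\delta)$ and constants independent of $n,B,\delta,R$.

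\textbf{Dependence on $R$.} $R$ enters only through $\|\mathbf W^{(j_0-1)}-\mathbf W_R\|^2$, which is bounded deterministically by \Eqref{eq:comparator-distance-shifted}, and through $F_\ell(\mathbf W_R)\le C_0e^{-R}$. Both are treated as inputs and are optimized over $R$ later, in Lemma~\ref{lem:sft-support-learning}.
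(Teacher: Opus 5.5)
Your plan follows the paper's route. The paper also proves \Eqref{eq:minibatch-smooth-last-iterate} by running the Euclidean smooth convex specialization of the Liu--Zhou last-iterate argument with the arbitrary finite comparator $\mathbf W_R$, noting that the comparator enters only through the one-step relations and the initial distance $\|\mathbf W_{\mathrm{SFT}}^{(j_0-1)}-\mathbf W_R\|_{\mathrm F}^2$. Your Steps~1--3 unpack that citation: the smoothness-based one-step inequality leaves only $\eta_j^2\|\boldsymbol\zeta_j\|_{\mathrm F}^2$, so the noise scales with $\sigma_{\ell,B}^2$; the moving comparators do the telescoping; and the random-scale martingale variance is absorbed into the surviving negative distance terms.

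One adjustment is needed. Use Liu--Zhou's weights so that the noise term appears as $\sum_{j=j_0}^n\eta_j^2/\sum_{s=j}^n\eta_s$, as stated. Your $(1+\log n)\sum_j\eta_j^2/\sum_j\eta_j$ is of the same order for the polynomial steps used later, but it is not the stated bound. Also drop the $\max_j\eta_j$ variant: it does not decay in $n$ and would lose the rate needed in Theorem~\ref{thm:sft-single-task}.
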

\begin{proof}
We next verify explicitly that the last-iterate argument does not require
$F_\ell$ to attain its infimum at a finite parameter.  Fix the finite
comparator $\mathbf W_R$ constructed above.  Equation~\ref{eq:minibatch-sgd-recursion} gives
\begin{align}
   2\eta_j
   \left\langle
      \mathbf g_j,
      \mathbf W_{\mathrm{SFT}}^{(j-1)}-\mathbf W_R
   \right\rangle
   =
   \|\mathbf W_{\mathrm{SFT}}^{(j-1)}-\mathbf W_R\|_{\text{F}}^2
   -
   \|\mathbf W_{\mathrm{SFT}}^{(j)}-\mathbf W_R\|_{\text{F}}^2
   +
   \eta_j^2\|\mathbf g_j\|_{\text{F}}^2.
   \label{eq:finite-comparator-euclidean}
\end{align}
Moreover, convexity of $F_\ell$ and the definition of
$\boldsymbol\zeta_j$ imply
\begin{align}
   F_\ell(\mathbf W_{\mathrm{SFT}}^{(j-1)})-F_\ell(\mathbf W_R)
   &\leq
   \left\langle
      \nabla F_\ell(\mathbf W_{\mathrm{SFT}}^{(j-1)}),
      \mathbf W_{\mathrm{SFT}}^{(j-1)}-\mathbf W_R
   \right\rangle
   \notag\\
   &=
   \left\langle
      \mathbf g_j,
      \mathbf W_{\mathrm{SFT}}^{(j-1)}-\mathbf W_R
   \right\rangle
   -
   \left\langle
      \boldsymbol\zeta_j,
      \mathbf W_{\mathrm{SFT}}^{(j-1)}-\mathbf W_R
   \right\rangle.
   \label{eq:finite-comparator-convexity}
\end{align}
Equations~\ref{eq:finite-comparator-euclidean} and
\ref{eq:finite-comparator-convexity} are the comparator-dependent one-step
relations used in the Euclidean convex specialization of Lemma~4.3 of
\citet{liu2024lastiterate}.  The subsequent weighted telescoping and
martingale argument uses the reference point only through these relations and
the finite initial Euclidean distance.  Therefore, using
Equation~\ref{eq:minibatch-noise-subgaussian}, the same argument applies with
this arbitrary finite comparator $\mathbf W_R$ and does not require a finite
minimizer of $F_\ell$.

Specializing that argument to
\begin{align*}
   \psi(\mathbf W)
   :=
   \frac12\|\mathbf W\|_{\text{F}}^2,\quad
   \mu_f=\mu_h=M=0,\quad
   \gamma_j=\eta_j,
\end{align*}
thus gives, for every finite $\mathbf W_R$, with probability at least
$1-\delta$,
\begin{align}
   F_\ell(
      \mathbf W_{\mathrm{SFT}}^{(n)}
   )
   -
   F_\ell(\mathbf W_R)
   \leq
   C_3
   \Biggl[
      &
      \frac{
         \|\mathbf W_{\mathrm{SFT}}^{(j_0-1)}
         -
         \mathbf W_R\|_{\text{F}}^2
      }{
         \sum_{j=j_0}^{n}\eta_j
      }
      +
      \sigma_{\ell,B}^2
      \left(
         1+2\log\frac2\delta
      \right)
      \sum_{j=j_0}^{n}
      \frac{
         \eta_j^2
      }{
         \sum_{s=j}^{n}\eta_s
      }
   \Biggr].
   \label{eq:minibatch-smooth-last-iterate}
\end{align}
\end{proof}

\begin{lemma}[Polynomial step-size sums]
\label{lem:sft-step-sums}
For fixed $0<\nu<1$ and $\eta_j=\Theta(j^{-\nu})$,
$\sum_{j=j_0}^n\eta_j=\Theta(n^{1-\nu})$ for sufficiently large $n$ and
the weighted noise sum satisfies \Eqref{eq:minibatch-step-size-rates}.
\end{lemma}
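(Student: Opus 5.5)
The plan is to bracket the step sizes by a pure power of $j$ and then compare every sum with the corresponding integral. Since $\eta_j=\Theta(j^{-\nu})$, fix constants $0<c_\eta\le C_\eta$ with $c_\eta j^{-\nu}\le\eta_j\le C_\eta j^{-\nu}$ for all $j\ge1$; this is possible after enlarging constants, because only finitely many initial indices matter. Every estimate below then reduces to bounds on $\sum s^{-\nu}$ and $\sum s^{-2\nu}$ over integer ranges. Since $j_0$ from Lemma~\ref{lem:sft-smoothness} is deterministic and independent of $n,B,\delta$, I will assume $n\ge 2j_0$ and absorb the finitely many smaller $n$ into the constants.

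\emph{First claim.} Because $s\mapsto s^{-\nu}$ is decreasing, integral comparison gives
\[
\frac{(n+1)^{1-\nu}-j_0^{1-\nu}}{1-\nu}\le\sum_{s=j_0}^n s^{-\nu}\le 1+\frac{n^{1-\nu}}{1-\nu}.
\]
With $n\ge 2j_0$, the lower bound is at least $(1-2^{\nu-1})n^{1-\nu}/(1-\nu)$. Hence $\sum_{j=j_0}^n\eta_j=\Theta(n^{1-\nu})$.

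\emph{Second claim.} I take \Eqref{eq:minibatch-step-size-rates} to be the bound
\[
\sum_{j=j_0}^n\frac{\eta_j^2}{\sum_{s=j}^n\eta_s}=O\!\left(\frac{\log(en)}{n^{m_\nu}}\right),
\]
which is the form needed to produce the last term of \Eqref{eq:theorem-two-batch-explicit}. Write $S_j:=\sum_{s=j}^n\eta_s$ and split the index range at $n/2$.

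For $j\le n/2$, the tail $S_j$ contains at least $n/2$ terms, each at least $c_\eta n^{-\nu}$, so $S_j\ge (c_\eta/2)n^{1-\nu}$. The contribution of this range is then at most
\[
O\!\left(n^{-(1-\nu)}\sum_{j\le n/2}j^{-2\nu}\right).
\]
The inner sum is $O(n^{1-2\nu})$ if $\nu<1/2$, $O(\log(en))$ if $\nu=1/2$, and $O(1)$ if $\nu>1/2$. So this range contributes $O(n^{-\nu})$, $O(\log(en)\,n^{-1/2})$, or $O(n^{-(1-\nu)})$ respectively. In every case this is $O(\log(en)\,n^{-m_\nu})$.

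For $n/2<j\le n$, I use two facts. First, $\eta_j^2\le C_\eta^2 2^{2\nu}n^{-2\nu}$. Second, since all $s$ in the tail satisfy $s\le n$, we have $S_j\ge (n-j+1)c_\eta n^{-\nu}$. The contribution of this range is therefore at most
\[
O\!\left(n^{-\nu}\sum_{k=1}^{n}\frac1k\right)=O\!\left(\log(en)\,n^{-\nu}\right)\le O\!\left(\log(en)\,n^{-m_\nu}\right).
\]
Adding the two ranges gives the claimed rate.

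The only delicate point is the second range. Near $j=n$ the denominator $S_j$ degenerates to a single step size, and a crude uniform lower bound on $S_j$ fails there. I plan to handle it with the lower bound $(n-j+1)$ times the minimal step size, which turns that range into a harmonic sum. This is the source of the $\log(en)$ factor, and it is why the exponent is $m_\nu=\min\{\nu,1-\nu\}$ rather than $1-\nu$. The remaining care is bookkeeping: at the boundary case $\nu=1/2$, the logarithm coming from $\sum j^{-1}$ must also be absorbed into the same $\log(en)$ factor.
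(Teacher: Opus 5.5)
Your proposal is correct and follows essentially the same route as the paper. Both split the weighted sum at $n/2$ and bound the tail sum below by order $n^{1-\nu}$ on the first half. On the second half, both bound it below by $(n-j+1)$ times the minimal step size, so that range becomes a harmonic sum, giving $O(\log(en)\,n^{-m_\nu})$. Your integral comparison for $\sum_{j=j_0}^n\eta_j=\Theta(n^{1-\nu})$ supplies a step the paper only asserts.
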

\begin{proof}
\begin{align*}
   \sum_{j=j_0}^{n}\eta_j
   &=
   \Theta(n^{1-\nu}).
\end{align*}
Moreover, with
\begin{align*}
   S_n
   &:=
   \sum_{j=j_0}^{n}
   \frac{
      \eta_j^2
   }{
      \sum_{s=j}^{n}\eta_s
   }
   =
   S_n^{(1)}+S_n^{(2)},
\end{align*}
where
\begin{align*}
   S_n^{(1)}
   &:=
   \sum_{j=j_0}^{\lfloor n/2\rfloor}
   \frac{
      \eta_j^2
   }{
      \sum_{s=j}^{n}\eta_s
   },
   \qquad S_n^{(2)}
   :=
   \sum_{j=\lfloor n/2\rfloor+1}^{n}
   \frac{
      \eta_j^2
   }{
      \sum_{s=j}^{n}\eta_s
   },
\end{align*}
we have
\begin{align*}
   S_n^{(1)}
   &\leq
   \frac{C}{n^{1-\nu}}
   \sum_{j=1}^{\lfloor n/2\rfloor}
   j^{-2\nu}
   \\
   &=
   \begin{cases}
      O(n^{-\nu}),
      &0<\nu<1/2,
      \\
      O(\log(en)n^{-1/2}),
      &\nu=1/2,
      \\
      O(n^{-(1-\nu)}),
      &1/2<\nu<1,
   \end{cases}
\end{align*}
and, for $k=n-j+1$,
\begin{align*}
   S_n^{(2)}
   &\leq
   Cn^{-\nu}
   \sum_{k=1}^{\lceil n/2\rceil}
   \frac1k
   =
   O(n^{-\nu}\log(en)).
\end{align*}
Therefore
\begin{align}
   S_n
   &=
   O\left(
      \frac{
         \log(en)
      }{
         n^{m_\nu}
      }
   \right).
   \label{eq:minibatch-step-size-rates}
\end{align}
\end{proof}

\begin{lemma}[Simultaneous SFT successor-set learning]
\label{lem:sft-support-learning}
For any finite initial parameter and fixed $n\ge1$, the batch-explicit bound
\Eqref{eq:theorem-two-batch-explicit} holds with probability at least $1-\delta$.
No pretrained coverage condition is needed for this assertion.
\end{lemma}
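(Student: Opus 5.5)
The proof simply chains the lemmas already established for SFT. First, for $n<j_0$ (the deterministic index from Lemma~\ref{lem:sft-smoothness}), the right-hand side of \Eqref{eq:theorem-two-batch-explicit} can be made nonpositive by enlarging $C_\ell$, so the claim is trivial. We therefore assume $n\ge j_0$.

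Fix a comparator radius $R>0$, to be chosen later as a function of $n$ and $|\mathcal P_\ell|$ only. This makes $\mathbf W_R$ from Lemma~\ref{lem:sft-comparator} deterministic. Lemma~\ref{lem:sft-gradient} and Lemma~\ref{lem:sft-potential} show that the SFT iterates are SGD on the convex potential $F_\ell$ with unbiased noise. Lemma~\ref{lem:sft-noise} supplies the sub-Gaussian noise bound with $\sigma_{\ell,B}^2=O(1/B)$, and Lemma~\ref{lem:sft-smoothness} supplies smoothness. Hence Lemma~\ref{lem:sft-last-iterate} applies, and with probability at least $1-\delta$,
\[
F_\ell(\mathbf W_{\mathrm{SFT}}^{(n)})\le C_0e^{-R}+C_3\left[\frac{C_2(1+R+\log|\mathcal P_\ell|)^2}{\sum_{j=j_0}^n\eta_j}+\frac{c}{B}\Bigl(1+2\log\frac2\delta\Bigr)S_n\right].
\]
Here I used $F_\ell(\mathbf W_R)\le C_0e^{-R}$ and the shifted distance bound \Eqref{eq:comparator-distance-shifted}.

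Next, Lemma~\ref{lem:sft-step-sums} evaluates the step sums: $\sum_{j=j_0}^n\eta_j=\Theta(n^{1-\nu})$ and $S_n=O(\log(en)/n^{m_\nu})$.

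The main point needing care is the choice of $R$. I take $R=\log(e+n|\mathcal P_\ell|)$. Then $e^{-R}\le 1/n$, and $(1+R+\log|\mathcal P_\ell|)^2=O(\log^2(e+n|\mathcal P_\ell|))$. This gives
\[
F_\ell(\mathbf W_{\mathrm{SFT}}^{(n)})\le C\left[\frac1n+\frac{\log^2(e+n|\mathcal P_\ell|)}{n^{1-\nu}}+\frac{(1+2\log(2/\delta))\log(en)}{Bn^{m_\nu}}\right].
\]
I must also check that none of the constants depend on $R$, $n$, $B$ or $\delta$. They do not: $C_0$ and $C_1$ are explicitly independent of these by Lemma~\ref{lem:sft-comparator}. $C_2$ absorbs the deterministic transient $G_\ell\sum_{j<j_0}\eta_j$, which depends only on $j_0$, so it is also independent of them.

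Finally, the support-reduction inequality \Eqref{eq:support-reduction} of Lemma~\ref{lem:sft-potential} gives $\max_{u\in\mathcal U_\ell}[1-P_{\ell,u}]\le|\mathcal P_\ell|F_\ell$. Multiplying the bound above by $|\mathcal P_\ell|$ yields \Eqref{eq:theorem-two-batch-explicit} simultaneously for every $u\in\mathcal U_\ell$ on the same event.

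No coverage assumption enters anywhere. Only the finiteness of $\mathbf W^{(0)}$ is used, through $Z_{\ell,0}<\infty$ in the comparator distance. For $\nu<1/2$ we have $m_\nu=\nu$, which also recovers Theorem~\ref{thm:sft-single-task}.
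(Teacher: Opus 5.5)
Your proposal is correct and follows the paper's proof essentially step for step. The paper uses the same deterministic comparator $\mathbf W_R$ with $R=\log(e+n|\mathcal P_\ell|)$, the same last-iterate bound combined with the noise, shifted-distance, and step-sum estimates, the support-reduction inequality of Lemma~\ref{lem:sft-potential} at the end, and it absorbs the case $n<j_0$ into $C_\ell$ exactly as in Lemma~\ref{lem:sft-smoothness}.
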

\begin{proof}
Apply Lemmas~\ref{lem:sft-comparator}--\ref{lem:sft-step-sums}.
Combining
Equations~(\ref{eq:comparator-potential}),
(\ref{eq:minibatch-noise-subgaussian}),
(\ref{eq:comparator-distance-shifted}),
(\ref{eq:minibatch-smooth-last-iterate}),
and~(\ref{eq:minibatch-step-size-rates}),
\begin{align*}
   F_\ell(
      \mathbf W_{\mathrm{SFT}}^{(n)}
   )
   \leq
   C_4
   \left[
      e^{-R}
      +
      \frac{
         (1+R+\log |\mathcal P_\ell|)^2
      }{
         n^{1-\nu}
      }
      +
      \frac{
         (1+2\log(2/\delta))\log(en)
      }{
         B\,n^{m_\nu}
      }
   \right].
\end{align*}
Choose
\begin{align}
   R
   &:=
   \log(e+n|\mathcal P_\ell|).
   \label{eq:minibatch-R-choice}
\end{align}
Then
\begin{gather*}
   e^{-R}
   =
   \frac1{e+n|\mathcal P_\ell|}
   \leq
   \frac1n,
   \qquad 1+R+\log |\mathcal P_\ell|
   \leq
   3\log(e+n|\mathcal P_\ell|).
\end{gather*}
Hence
\begin{align}
   F_\ell(
      \mathbf W_{\mathrm{SFT}}^{(n)}
   )
   \leq
   C_5
   \left[
      \frac1n
      +
      \frac{
         \log^2(e+n|\mathcal P_\ell|)
      }{
         n^{1-\nu}
      }
      +
      \frac{
         (1+2\log(2/\delta))\log(en)
      }{
         B\,n^{m_\nu}
      }
   \right].
   \label{eq:minibatch-potential-final-rate}
\end{align}

Lemma~\ref{lem:sft-potential} converts this potential bound into
\Eqref{eq:theorem-two-batch-explicit}. The same scalar event controls every source,
so no additional union bound is required.
\end{proof}

\paragraph{Main proof of Theorem~\ref{thm:sft-single-task}.}
\label{proof:sft-single-task}
\begin{proof}
By Lemma~\ref{lem:sft-support-learning},
\Eqref{eq:theorem-two-batch-explicit} holds with probability at least $1-\delta$.
Since $0<\nu<1/2$, we have $m_\nu=\nu$.
For $n\ge1$ and $\delta\in(0,1)$,
\[
   \frac1n\leq\frac1{n^{1-\nu}},
   \qquad
   1+2\log\frac2\delta
   =O\!\left(\log\frac e\delta\right),
\]
where the implied constant is independent of $n$, $B$, and $\delta$.
Absorbing the factors
$\log^2(e+n|\mathcal P_\ell|)$ and $\log(en)$ into $\tilde{O}$,
the three-term bound in \Eqref{eq:theorem-two-batch-explicit} gives
\Eqref{eq:theorem-two-final-phase}, retaining the stated dependence on $B$ and $\delta$.
\end{proof}

\addtocontents{toc}{\protect\Needspace{5\protect\baselineskip}}
\subsection{Retention under continual SD}
\label{app:sd-retention}
\addtocontents{toc}{\protect\nopagebreak[4]}
\addtocontents{toc}{\protect\proofcontentsentry{proof:sd-no-forgetting}{Theorem~\protect\ref*{thm:sd-no-forgetting}. No forgetting under SD}}
\addtocontents{toc}{\protect\nopagebreak[4]}
\addtocontents{toc}{\protect\proofcontentschild{prop:sd-transition-learning}{Proposition~\protect\ref*{prop:sd-transition-learning}. Learning hint-supported transitions}}
\addtocontents{toc}{\protect\proofcontentschild{lem:restart-accuracy}{Lemma~\protect\ref*{lem:restart-accuracy}. From edge accuracy to rollout accuracy}}
\addtocontents{toc}{\protect\proofcontentschild{lem:sd-cross-task}{Lemma~\protect\ref*{lem:sd-cross-task}. Preservation across tasks}}
\sdretentiontheorem*

\paragraph{Sketch of proof of Theorem~\ref{thm:sd-no-forgetting}.}
\begin{itemize}
   \item Show that updates on other tasks preserve interior transitions and do not decrease correct transition probabilities, including before the target task is learned (Lemma~\ref{lem:sd-cross-task}).
   \item Apply the transition-learning argument conditionally on preceding-task training, using the preserved initialization bounds (Proposition~\ref{prop:sd-transition-learning} and Lemma~\ref{lem:sd-cross-task}).
   \item Preserve the learned transition bounds throughout all subsequent-task updates (Lemma~\ref{lem:sd-cross-task}).
   \item Convert the final transition bounds into old-task rollout accuracy on the same training event (Lemma~\ref{lem:restart-accuracy}).
\end{itemize}

\begin{lemma}[Preservation of correct transitions across tasks]
\label{lem:sd-cross-task}
Under Assumption~\ref{ass:transition-compatibility} and the nonnegative task inputs, an SD update on any
other task preserves all interior-source logits and does not decrease any correct
transition probability for task $r$. Therefore, after any sequence of SD updates,
each correct task-$r$ transition is at least its pretrained probability.
\end{lemma}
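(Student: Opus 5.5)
The plan is to analyze one SD update on an arbitrary task $\mathcal D_q$ with $q\neq r$ through its explicit form, and then to induct over the full update sequence. By Lemma~\ref{lem:sd-gradient}, in the form already extracted in Lemma~\ref{lem:restart-update} (applied to $\mathcal D_q$ in place of $\mathcal D_\ell$), the only columns that change are $\mathbf W e_{u_i,v}$ with $u_i$ the source of some hint edge $(u_i,v_i)\in E_q$. These change as
\[
\mathbf W^{+}e_{u_i,v}=\mathbf We_{u_i,v}+a_{i}\bigl(\mathbf 1\{v=v_i\}-\pi_{\mathbf W}(v\mid u_i,x_q)\bigr)x_q ,
\]
with $a_i=\frac{\eta}{B}\log(1/\epsilon_{\mathrm{rel}})\,\mathfrak n_i\,\pi_{\mathbf W}(v_i\mid u_i,x_q)\ge 0$. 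Each such $u_i$ has an outgoing inter-cluster edge, so it is a boundary node. Hence no interior-source column moves, and every interior-source logit $x_r^\top\mathbf We_{u,v}$ is preserved for every task input $x_r$. This settles the first claim.

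For the correct transitions of task $r$, I would fix an updated source $u_i$ and compute the change in task-$r$ logits:
\[
\Delta z^{r}_{u_i,v}=a_i\langle x_r,x_q\rangle\bigl(\mathbf 1\{v=v_i\}-\pi_{\mathbf W}(v\mid u_i,x_q)\bigr).
\]
Since the inputs lie in $\{0,1\}^d$, we have $\langle x_r,x_q\rangle\ge0$. I would then split into cases.
\begin{itemize}
\item If $\langle x_r,x_q\rangle=0$, all task-$r$ logits at $u_i$ are unchanged.
\item If $u_i$ carries no correct task-$r$ transition, there is nothing to prove at this node.
\item Otherwise the inputs are non-orthogonal and $u_i$ has a correct cluster transition for both tasks. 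Assumption~\ref{ass:transition-compatibility} then forces the task-$r$ correct successor to equal $v_i$.
\end{itemize}
In the last case, for each $v\neq v_i$ the task-$r$ logit gap $z^r_{u_i,v_i}-z^r_{u_i,v}$ grows by
\[
a_i\langle x_r,x_q\rangle\bigl(1-\pi_{\mathbf W}(v_i\mid u_i,x_q)+\pi_{\mathbf W}(v\mid u_i,x_q)\bigr)\ge0.
\]
Exponentiating, every ratio $\pi(v\mid u_i,x_r)/\pi(v_i\mid u_i,x_r)$ is nonincreasing, and summing these ratios shows that $\pi(v_i\mid u_i,x_r)$ is nondecreasing, exactly as in the proof of Lemma~\ref{lem:restart-update}. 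A source $u_i$ whose columns do not move leaves its task-$r$ probabilities unchanged. Updates on task $r$ itself are covered by Lemma~\ref{lem:restart-update}.

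The step I expect to be the main obstacle is a mismatch between tasks. The update weights involve the task-$q$ probabilities $\pi(\cdot\mid u_i,x_q)$, while monotonicity is claimed for the task-$r$ softmax. The point to verify is that the gap-increment sign does not depend on which softmax defines the weights: the bracket $1-\pi_q(v_i)+\pi_q(v)$ is nonnegative for any distribution. Its scaling by the nonnegative factor $\langle x_r,x_q\rangle$ is then harmless.

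Finally, I would induct over the whole sequence of SD updates across all tasks, starting from $\mathbf W^{(0)}$. At every step each correct task-$r$ transition probability is nondecreasing, so after any sequence of updates it is at least its pretrained value. In particular, Assumption~\ref{ass:pretrained-diversity} gives the lower bound $c_{\mathrm{int}}\epsilon_2$, and the interior kernel stays equal to the pretrained one.
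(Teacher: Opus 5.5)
Your proposal is correct and follows the paper's proof essentially step for step. Interior columns are untouched because hint sources are boundary nodes. At an updated source, the orthogonal/non-orthogonal split, with Assumption~\ref{ass:transition-compatibility} forcing a common successor, gives the nonnegative logit-gap increment $a_i\langle x_r,x_q\rangle\bigl(1-\pi(v_i\mid u_i,x_q)+\pi(v\mid u_i,x_q)\bigr)$, and this is iterated from $\mathbf W^{(0)}$. Your explicit remark that updates on task $r$ itself are covered by Lemma~\ref{lem:restart-update} spells out a step the paper leaves implicit.
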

\begin{proof}
Lemma~\ref{lem:sd-gradient} gives, for $u\in\mathring C_k$,
\begin{align*}
\pi_{\mathbf W^+}(\cdot\mid u,x_r)
&=
\pi_{\mathbf W}(\cdot\mid u,x_r).
\end{align*}
Fix a correct old transition $(u,v)\in E_r$.  If a later update has no
hint-source at $u$, all logits at $u$ are unchanged.  Otherwise, let $(u,q)$
be a correct transition for the later task that contributes to the update at
$u$, so that
\begin{align*}
\mathbf W^+
&=
\mathbf W
+
\gamma_{s,u}x_s
\left(
e_{u,q}
-
\sum_{z\in\mathcal N(u)}
\pi_{\mathbf W}(z\mid u,x_s)e_{u,z}
\right)^\top,
\qquad
\gamma_{s,u}\ge0.
\end{align*}
If $\langle x_r,x_s\rangle=0$, then, for every $z\in\mathcal N(u)$,
\begin{align*}
x_r^\top(\mathbf W^+-\mathbf W)e_{u,z}
&=0.
\end{align*}
Hence all task-$r$ logits, and therefore the entire transition distribution at
$u$, are unchanged, irrespective of whether $q=v$.

If $\langle x_r,x_s\rangle>0$, Assumption~\ref{ass:transition-compatibility} gives $q=v$.  For $w\neq v$,
\begin{align*}
&x_r^\top\mathbf W^+(e_{u,v}-e_{u,w})
-
x_r^\top\mathbf W(e_{u,v}-e_{u,w})
\\
   &=
\gamma_{s,u}
\langle x_r,x_s\rangle
\left[
1-\pi_{\mathbf W}(v\mid u,x_s)
+
\pi_{\mathbf W}(w\mid u,x_s)
\right]
   \ge0,
\end{align*}
Hence
\begin{align*}
\pi_{\mathbf W^+}(v\mid u,x_r)
&\ge
\pi_{\mathbf W}(v\mid u,x_r),
\end{align*}
Applying these two cases to every later-task contribution and every subsequent
SD update gives the exact monotonicity
\begin{align*}
\pi_{\mathbf W_{\mathrm{SD}}^{[L]}}(v\mid u,x_r)
&\ge
\pi_{\mathbf W_{\mathrm{SD}}^{[r]}}(v\mid u,x_r),
\qquad
(u,v)\in E_r.
\end{align*}

The proof uses no ordering property of $r$ and $s$, so it also applies to updates
before task $r$ is learned. Iteration from $\mathbf W^{(0)}$ proves the final assertion.
\end{proof}

\paragraph{Main proof of Theorem~\ref{thm:sd-no-forgetting}.}
\label{proof:sd-no-forgetting}
\begin{proof}
Apply Lemma~\ref{lem:sd-cross-task} to the tasks preceding $\ell$.
At the start of task $\ell$, its interior kernel is still pretrained and its correct
transition probabilities are at least $c_{\mathrm{int}}\epsilon_2$.
The proof of Proposition~\ref{prop:sd-transition-learning}, using
Lemmas~\ref{lem:restart-odds}--\ref{lem:restart-concentration}, therefore applies
conditionally on the preceding history with the same uniform constants.
After integrating out that history, an event of probability at least $1-\delta$
gives $1-p_i\le C_{\mathrm{tr}}\xi_{\ell,n_\ell}^{(K)}(\delta)$ at task completion.
Lemma~\ref{lem:sd-cross-task} preserves all these inequalities during every later task.
On that same event, Lemma~\ref{lem:restart-accuracy} applied to the final predictor gives
\[
1-\pi_{\mathbf W_{\mathrm{SD}}^{[L]}}(y_T\in\tilde C_\ell\mid y_{\ell,0},x_\ell)
\le C_{\mathrm{bad}}(1+N_\ell)\epsilon_1
+N_\ell C_{\mathrm{tr}}\xi_{\ell,n_\ell}^{(K)}(\delta).
\]
Since $N_\ell\le KM$, this is \Eqref{eq:theorem-three-no-forgetting}.
The edgewise assertion is exactly Lemma~\ref{lem:sd-cross-task}.
\end{proof}

\addtocontents{toc}{\protect\Needspace{8\protect\baselineskip}}
\subsection{Catastrophic forgetting under SFT}
\label{app:sft-forgetting}
\addtocontents{toc}{\protect\nopagebreak[4]}
\addtocontents{toc}{\protect\proofcontentsentry{proof:sft-forgetting}{Theorem~\protect\ref*{thm:sft-forgetting}. Catastrophic forgetting under SFT}}
\addtocontents{toc}{\protect\nopagebreak[4]}
\addtocontents{toc}{\protect\proofcontentschild{lem:sft-support-learning}{Lemma~\protect\ref*{lem:sft-support-learning}. Simultaneous successor-set learning}}
\addtocontents{toc}{\protect\proofcontentschild{lem:sft-conflict-support}{Lemma~\protect\ref*{lem:sft-conflict-support}. Support separation and source overlap}}
\addtocontents{toc}{\protect\proofcontentschild{lem:sft-nonconflicted}{Lemma~\protect\ref*{lem:sft-nonconflicted}. Preservation of non-conflicted path mass}}
\addtocontents{toc}{\protect\proofcontentschild{lem:sft-transfer}{Lemma~\protect\ref*{lem:sft-transfer}. Exact transfer between task inputs}}
\addtocontents{toc}{\protect\proofcontentschild{lem:sft-conflicted-mass}{Lemma~\protect\ref*{lem:sft-conflicted-mass}. From successor suppression to old-path mass}}
\addtocontents{toc}{\protect\proofcontentsentry{proof:old-concentration}{Remark~\protect\ref*{rem:old-concentration}. The old-path concentration profile}}

\medskip\noindent\textbf{Notation for Theorem~\ref{thm:sft-forgetting}.}
For a path $\gamma=(u_0,\ldots,u_T)$, set
\begin{align*}
   \operatorname{Src}(\gamma)
   &:={}
   \{u_0,\ldots,u_{T-1}\}.
\end{align*}
Define
\begin{align}
   \mathcal U_{\mathrm{new}}
   &:={}
   \bigcup_{\Gamma\in\mathcal P_{\mathrm{new}}}
   \operatorname{Src}(\Gamma),
   \label{eq:theorem-four-new-source-set}\\
   \mathcal A_{\mathrm{new}}(u)
   &:={}
   \left\{
      v:\exists\Gamma\in\mathcal P_{\mathrm{new}}
      \text{ such that }(u,v)\in\Gamma
   \right\},
   \qquad u\in\mathcal U_{\mathrm{new}}.
   \label{eq:theorem-four-new-successor-set}
\end{align}
The conflicted old correct paths and their fraction are
\begin{gather}
   \mathcal C_{\mathrm{old}}
   :={}
   \left\{
      \gamma=(u_0,\ldots,u_T)\in\mathcal P_{\mathrm{old}}:
      \exists t<T,\ 
      u_t\in\mathcal U_{\mathrm{new}},\ 
      u_{t+1}\notin\mathcal A_{\mathrm{new}}(u_t)
   \right\},
   \label{eq:theorem-four-conflict-set}\\
   \theta
   :={}
   \frac{|\mathcal C_{\mathrm{old}}|}{|\mathcal P_{\mathrm{old}}|}.
   \label{eq:theorem-four-conflict-rate}
\end{gather}
Let
\begin{align}
   P_{\mathrm{pre}}^{\mathrm{old}}(\gamma)
   &:={}
   \pi_{\mathbf W_{\mathrm{new}}^{(0)}}
   (Y_{0:T}=\gamma\mid Y_0=y_{\mathrm{old},0},x_{\mathrm{old}}),
   \label{eq:theorem-four-pre-path-law}\\
   p_{\mathrm{old}}^{\mathrm{pre}}
   &:={}
   \pi_{\mathbf W_{\mathrm{new}}^{(0)}}\!\left(
      Y_T\in\tilde C_{\mathrm{old}}
      \mid Y_0=y_{\mathrm{old},0},x_{\mathrm{old}}
   \right)
   \notag\\
   &=
   \sum_{\gamma\in\mathcal P_{\mathrm{old}}}
   P_{\mathrm{pre}}^{\mathrm{old}}(\gamma),
   \label{eq:theorem-four-pre-accuracy}\\
   R_{\mathrm{pre}}
   &:={}
   \sum_{\gamma\in
      \mathcal P_{\mathrm{old}}\setminus\mathcal C_{\mathrm{old}}}
   P_{\mathrm{pre}}^{\mathrm{old}}(\gamma),
   \label{eq:theorem-four-pre-nonconflicted-mass}\\
   M_{\mathrm{old}}
   &:={}
   |\mathcal P_{\mathrm{old}}|.
   \notag
\end{align}
The second equality in Equation~\ref{eq:theorem-four-pre-accuracy} follows
because the acyclic nature of the DAG makes every student rollout node-simple and
$\mathcal P_{\mathrm{old}}$ contains every node-simple path ending in
$\tilde C_{\mathrm{old}}$.
For $s\in[0,1]$, define
\begin{align}
   \mathfrak C_{\mathrm{old}}(s)
   &:={}
   \max_{\substack{
      \mathcal S\subseteq\mathcal P_{\mathrm{old}}\\
      |\mathcal S|\le\lfloor sM_{\mathrm{old}}\rfloor
   }}
   \sum_{\gamma\in\mathcal S}
   P_{\mathrm{pre}}^{\mathrm{old}}(\gamma).
   \label{eq:theorem-four-concentration-profile}
\end{align}
Set
\begin{align}
   \lambda
   :={}
   \frac{\langle x_{\mathrm{old}},x_{\mathrm{new}}\rangle}
        {\|x_{\mathrm{new}}\|_2^2}>0,\quad
   m_\nu
   :={}
   \min\{\nu,1-\nu\}.
   \label{eq:theorem-four-lambda}
\end{align}
For $u\in\mathcal U_{\mathrm{new}}$ and $v\in\mathcal N(u)$, define
\begin{align}
   b_{u,v}
   &:={}
   x_{\mathrm{old}}^\top
   \mathbf W_{\mathrm{new}}^{(0)}e_{u,v}
   -
   \lambda x_{\mathrm{new}}^\top
   \mathbf W_{\mathrm{new}}^{(0)}e_{u,v},
   \label{eq:theorem-four-transfer-offset}\\
   b_{\min}(u)
   &:={}
   \min_{w\in\mathcal N(u)}b_{u,w},
   \notag\\
   \Delta_{\mathrm{map}}
   &:={}
   \max_{u\in\mathcal U_{\mathrm{new}}}
   \left[
      \max_{v\in\mathcal N(u)}b_{u,v}-b_{\min}(u)
   \right],
   \label{eq:theorem-four-transfer-range}\\
   C_{\mathrm{map}}
   &:={}
   e^{\Delta_{\mathrm{map}}}
   =
   \max_{u\in\mathcal U_{\mathrm{new}}}
   \max_{v\in\mathcal N(u)}
   e^{b_{u,v}-b_{\min}(u)}.
   \label{eq:theorem-four-transfer-constant}
\end{align}
Let $C_{\mathrm{new}}$ be the constant from the batch-explicit
bound of Lemma~\ref{lem:sft-support-learning} applied to $\mathcal D_{\mathrm{new}}$ from
$\mathbf W_{\mathrm{new}}^{(0)}$, and define
\begin{align}
   \bar\epsilon_n(\delta)
   &:={}
   C_{\mathrm{new}}|\mathcal P_{\mathrm{new}}|
   \left[
      \frac1n
      +
      \frac{\log^2(e+n|\mathcal P_{\mathrm{new}}|)}{n^{1-\nu}}
      +
      \frac{(1+2\log(2/\delta))\log(en)}{Bn^{m_\nu}}
   \right].
   \label{eq:theorem-four-learning-error}
\end{align}
Finally, define the relative largest and smallest old-path masses by
\begin{align}
   \kappa_{\max}^{\mathrm{old}}
   :={}
   \frac{M_{\mathrm{old}}}{p_{\mathrm{old}}^{\mathrm{pre}}}
   \max_{\gamma\in\mathcal P_{\mathrm{old}}}
   P_{\mathrm{pre}}^{\mathrm{old}}(\gamma),\quad
   \kappa_{\min}^{\mathrm{old}}
   :={}
   \frac{M_{\mathrm{old}}}{p_{\mathrm{old}}^{\mathrm{pre}}}
   \min_{\gamma\in\mathcal P_{\mathrm{old}}}
   P_{\mathrm{pre}}^{\mathrm{old}}(\gamma).
   \label{eq:theorem-four-relative-path-masses}
\end{align}

Write
\begin{align*}
   \mathbf W^{(j)}&:=\mathbf W_{\mathrm{new}}^{(j)}.
\end{align*}
For an old path $\gamma=(u_0,\ldots,u_T)$, set
\begin{align*}
   P_j^{\mathrm{old}}(\gamma)
   &:={}
   \pi_{\mathbf W^{(j)}}
   (Y_{0:T}=\gamma\mid Y_0=y_{\mathrm{old},0},x_{\mathrm{old}}).
\end{align*}

\sftforgettingtheorem*

\paragraph{Sketch of proof of Theorem~\ref{thm:sft-forgetting}.}
\begin{itemize}
   \item Characterize conflicted old paths by their overlap with new-task source nodes (Lemma~\ref{lem:sft-conflict-support}).
   \item Show that new-task SFT preserves every non-conflicted old-path mass exactly (Lemma~\ref{lem:sft-nonconflicted}).
   \item Bound the probability of unsupported successors under the new-task input (Lemma~\ref{lem:sft-support-learning}).
   \item Transfer successor suppression from the new-task input to the old-task input using the exact logit relation (Lemma~\ref{lem:sft-transfer}).
   \item Sum the preserved and suppressed path masses to obtain the old-task retention bounds (Lemmas~\ref{lem:sft-nonconflicted} and~\ref{lem:sft-conflicted-mass}).
\end{itemize}

\begin{lemma}[Support separation equals source overlap]
\label{lem:sft-conflict-support}
Under the disjoint terminal-cluster assumptions, an old correct path belongs to
$\mathcal C_{\mathrm{old}}$ if and only if its source set intersects
$\mathcal U_{\mathrm{new}}$. Thus \Eqref{eq:thm4-conflict-source-equivalence}
and \Eqref{eq:thm4-nonconf-source-disjoint} hold.
\end{lemma}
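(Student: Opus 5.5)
The plan is to prove the two directions of the equivalence separately. The forward direction is immediate from \Eqref{eq:theorem-four-conflict-set}. The reverse direction is a propagation argument along the old path, and it ends in a contradiction at the old terminal, using that the terminal clusters are disjoint and have no outgoing inter-cluster edges. The two labelled displays then follow at once: \Eqref{eq:thm4-conflict-source-equivalence} is the equivalence itself, and \Eqref{eq:thm4-nonconf-source-disjoint} is its contrapositive, namely that $\operatorname{Src}(\gamma)\cap\mathcal U_{\mathrm{new}}=\emptyset$ for every $\gamma\in\mathcal P_{\mathrm{old}}\setminus\mathcal C_{\mathrm{old}}$.

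\emph{Forward direction.} If $\gamma\in\mathcal C_{\mathrm{old}}$, the definition supplies some $t<T$ with $u_t\in\mathcal U_{\mathrm{new}}$. Since $u_t\in\operatorname{Src}(\gamma)$, the intersection is nonempty.

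\emph{Reverse direction.} Let $\gamma=(u_0,\ldots,u_T)\in\mathcal P_{\mathrm{old}}$ with $\operatorname{Src}(\gamma)\cap\mathcal U_{\mathrm{new}}\neq\emptyset$, and suppose for contradiction that $\gamma\notin\mathcal C_{\mathrm{old}}$. Then every $t<T$ with $u_t\in\mathcal U_{\mathrm{new}}$ satisfies $u_{t+1}\in\mathcal A_{\mathrm{new}}(u_t)$. Let $t_0$ be the first such index. I will show by induction that each $u_t$ with $t_0\le t\le T$ lies on some new correct path, at a position that is either a source or the terminal of that path.

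\begin{itemize}
\item By \Eqref{eq:theorem-four-new-successor-set}, $u_{t+1}\in\mathcal A_{\mathrm{new}}(u_t)$ means that $(u_t,u_{t+1})$ is an edge of some $\Gamma\in\mathcal P_{\mathrm{new}}$.
\item So $u_{t+1}$ is either a source of $\Gamma$, hence in $\mathcal U_{\mathrm{new}}$, or the terminal of $\Gamma$, hence in $\tilde C_{\mathrm{new}}$.
\item In the first case the non-conflict hypothesis applies again at $t+1$, provided $t+1<T$.
\item In the second case $u_{t+1}\in\tilde C_{\mathrm{new}}$. Because $\tilde C_{\mathrm{new}}$ has no outgoing inter-cluster transitions, every later node of $\gamma$ stays in $\tilde C_{\mathrm{new}}$.
\end{itemize}

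Either way, the final node $u_T$ lies in $\mathcal U_{\mathrm{new}}\cup\tilde C_{\mathrm{new}}$.

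\emph{Contradiction at the terminal.} Since $\gamma$ is correct for the old task, $u_T\in\tilde C_{\mathrm{old}}$. Disjointness gives $\tilde C_{\mathrm{old}}\cap\tilde C_{\mathrm{new}}=\emptyset$, so $u_T\notin\tilde C_{\mathrm{new}}$. It remains to rule out $u_T\in\mathcal U_{\mathrm{new}}$.
\begin{itemize}
\item If $u_T$ were a source of a new correct path $\Gamma$, then $\Gamma$ would pass through $\tilde C_{\mathrm{old}}$.
\item Since $\tilde C_{\mathrm{old}}$ has no outgoing inter-cluster transitions, $\Gamma$ could never leave it.
\item Then $\Gamma$ would terminate in $\tilde C_{\mathrm{old}}$, contradicting that it terminates in the disjoint cluster $\tilde C_{\mathrm{new}}$.
\end{itemize}
This is the contradiction, so $\gamma\in\mathcal C_{\mathrm{old}}$.

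\emph{Expected obstacle.} The delicate step is the propagation in the second case. Once a node of $\gamma$ enters $\tilde C_{\mathrm{new}}$, the path can continue inside that cluster through nodes that are not on any new path. I handle this by the no-exit property of $\tilde C_{\mathrm{new}}$, which rules out reaching $\tilde C_{\mathrm{old}}$ without any further conflict information. I will also have to check that the termination convention makes $u_T$ the node actually lying in the terminal cluster, and that node-simplicity lets "source of $\Gamma$" and "terminal of $\Gamma$" be read positionally without ambiguity.
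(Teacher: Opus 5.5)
Your proof is correct and follows essentially the paper's argument: the paper takes the last index $t_\star$ with $u_{t_\star}\in\mathcal U_{\mathrm{new}}$ and shows that the non-conflict alternative forces either a later new source or entry into $\tilde C_{\mathrm{new}}$. That is the same dichotomy and the same no-exit/disjointness contradiction you use, except that you propagate forward from the first index by contradiction. Keep your explicit exclusion of $u_T\in\mathcal U_{\mathrm{new}}$ via the no-exit property of $\tilde C_{\mathrm{old}}$, since the paper's appeal to the maximality of $t_\star$ silently skips the case $t_\star+1=T$.
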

\begin{proof}
Let
\begin{align*}
   \mathcal S_{\mathrm{old}}
   &:={}
   \left\{
      \gamma\in\mathcal P_{\mathrm{old}}:
      \operatorname{Src}(\gamma)\cap\mathcal U_{\mathrm{new}}\neq\varnothing
   \right\}.
\end{align*}
By Equation~\ref{eq:theorem-four-conflict-set},
\begin{align*}
   \mathcal C_{\mathrm{old}}
   &\subseteq
   \mathcal S_{\mathrm{old}}.
\end{align*}
Conversely, fix
\begin{align*}
   \gamma=(u_0,\ldots,u_T)
   &\in
   \mathcal S_{\mathrm{old}}
\end{align*}
and let
\begin{align*}
   t_\star
   &:={}
   \max\{t<T:u_t\in\mathcal U_{\mathrm{new}}\}.
\end{align*}
Suppose
\begin{align*}
   u_{t_\star+1}
   &\in
   \mathcal A_{\mathrm{new}}(u_{t_\star}).
\end{align*}
Then there exists $\Gamma\in\mathcal P_{\mathrm{new}}$ containing
$(u_{t_\star},u_{t_\star+1})$.
If $u_{t_\star+1}$ is not the final node of $\Gamma$, then
\begin{align*}
   u_{t_\star+1}
   &\in
   \mathcal U_{\mathrm{new}},
\end{align*}
contradicting the maximality of $t_\star$.
If $u_{t_\star+1}$ is the final node of $\Gamma$, then
\begin{align*}
   u_{t_\star+1}
   &\in
   \widetilde C_{\mathrm{new}}.
\end{align*}
Since $\widetilde C_{\mathrm{new}}$ has no outgoing inter-cluster transition and
\begin{align*}
   \widetilde C_{\mathrm{old}}
   \cap
   \widetilde C_{\mathrm{new}}
   &=\varnothing,
\end{align*}
the old correct path $\gamma$ cannot subsequently reach
$\widetilde C_{\mathrm{old}}$, again a contradiction.
Therefore
\begin{align*}
   u_{t_\star+1}
   &\notin
   \mathcal A_{\mathrm{new}}(u_{t_\star}),
\end{align*}
so $\gamma\in\mathcal C_{\mathrm{old}}$. Hence
\begin{align}
   \mathcal C_{\mathrm{old}}
   &=
   \left\{
      \gamma\in\mathcal P_{\mathrm{old}}:
      \operatorname{Src}(\gamma)\cap\mathcal U_{\mathrm{new}}\neq\varnothing
   \right\}.
   \label{eq:thm4-conflict-source-equivalence}
\end{align}
Consequently,
\begin{align}
   \gamma\notin\mathcal C_{\mathrm{old}}
   &\quad\Longrightarrow\quad
   \operatorname{Src}(\gamma)\cap\mathcal U_{\mathrm{new}}=\varnothing.
   \label{eq:thm4-nonconf-source-disjoint}
\end{align}
\end{proof}

\begin{lemma}[Exact preservation of non-conflicted path mass]
\label{lem:sft-nonconflicted}
For every training history and $\gamma\notin\mathcal C_{\mathrm{old}}$,
$P_n^{\mathrm{old}}(\gamma)=P_{\mathrm{pre}}^{\mathrm{old}}(\gamma)$.
Moreover,
$p_{\mathrm{old}}^{\mathrm{pre}}-\mathfrak C_{\mathrm{old}}(\theta)
\le R_{\mathrm{pre}}\le\mathfrak C_{\mathrm{old}}(1-\theta)$.
\end{lemma}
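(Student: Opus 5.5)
The first claim reduces to locality of SFT updates, and the second to elementary counting. By Lemma~\ref{lem:sft-gradient}, applied to $\mathcal D_{\mathrm{new}}$, every SFT update on the new task changes only the columns $\mathbf W e_{u,\cdot}$ whose source $u$ lies in $\mathcal U_{\mathrm{new}}$. This holds for every sampled mini-batch, so it holds for every training history. Hence, for every $j$, source $u\notin\mathcal U_{\mathrm{new}}$ and successor $v$, the logits $x_{\mathrm{old}}^\top\mathbf W^{(j)}e_{u,v}$ equal those at $\mathbf W^{(0)}=\mathbf W_{\mathrm{new}}^{(0)}$. In particular, $\pi_{\mathbf W^{(n)}}(\cdot\mid u,x_{\mathrm{old}})=\pi_{\mathbf W^{(0)}}(\cdot\mid u,x_{\mathrm{old}})$ for every such $u$.

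Now fix $\gamma=(u_0,\ldots,u_T)\notin\mathcal C_{\mathrm{old}}$. By \Eqref{eq:thm4-nonconf-source-disjoint} of Lemma~\ref{lem:sft-conflict-support}, $\operatorname{Src}(\gamma)\cap\mathcal U_{\mathrm{new}}=\varnothing$. The rollout is Markov, so its path probability factors as $P_n^{\mathrm{old}}(\gamma)=\prod_{t<T}\pi_{\mathbf W^{(n)}}(u_{t+1}\mid u_t,x_{\mathrm{old}})$. Every factor has its source outside $\mathcal U_{\mathrm{new}}$, so each factor is unchanged by the previous paragraph, which gives $P_n^{\mathrm{old}}(\gamma)=P_{\mathrm{pre}}^{\mathrm{old}}(\gamma)$. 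The one point to be careful about is termination. The event $Y_{0:T}=\gamma$ also requires that the rollout stops at $u_T$, which holds because the termination rule depends only on the node reached, a terminal with no further transitions. So no extra factor at $u_T$ enters the product.

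For the sandwich, the set $\mathcal P_{\mathrm{old}}\setminus\mathcal C_{\mathrm{old}}$ has cardinality $(1-\theta)M_{\mathrm{old}}$, which is an integer, so $\lfloor(1-\theta)M_{\mathrm{old}}\rfloor=(1-\theta)M_{\mathrm{old}}$. This set is therefore admissible in the maximization \Eqref{eq:theorem-four-concentration-profile}, giving $R_{\mathrm{pre}}\le\mathfrak C_{\mathrm{old}}(1-\theta)$. For the lower bound, \Eqref{eq:theorem-four-pre-accuracy} gives $p_{\mathrm{old}}^{\mathrm{pre}}-R_{\mathrm{pre}}=\sum_{\gamma\in\mathcal C_{\mathrm{old}}}P_{\mathrm{pre}}^{\mathrm{old}}(\gamma)$. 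Since $|\mathcal C_{\mathrm{old}}|=\theta M_{\mathrm{old}}=\lfloor\theta M_{\mathrm{old}}\rfloor$, the set $\mathcal C_{\mathrm{old}}$ is admissible for $\mathfrak C_{\mathrm{old}}(\theta)$, so this sum is at most $\mathfrak C_{\mathrm{old}}(\theta)$. Rearranging gives $p_{\mathrm{old}}^{\mathrm{pre}}-\mathfrak C_{\mathrm{old}}(\theta)\le R_{\mathrm{pre}}$.

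I expect no real obstacle here. The delicate points are bookkeeping: the locality statement must hold along every history, not just in expectation, and the path factorization must include the termination convention. The main logical input is the source-overlap characterization already established in Lemma~\ref{lem:sft-conflict-support}.
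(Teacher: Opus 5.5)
Your proposal is correct and follows the paper's proof: both use the locality of SFT updates from Lemma~\ref{lem:sft-gradient} together with the source-disjointness \Eqref{eq:thm4-nonconf-source-disjoint} to get exact invariance of each non-conflicted path probability. Both then obtain the sandwich by noting that $\mathcal P_{\mathrm{old}}\setminus\mathcal C_{\mathrm{old}}$ and $\mathcal C_{\mathrm{old}}$ are admissible sets in the concentration profile \Eqref{eq:theorem-four-concentration-profile}; your extra care with the floor and with the parameter-independent stopping rule is bookkeeping the paper leaves implicit.
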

\begin{proof}
Lemma~\ref{lem:sft-gradient} changes
only edge columns whose source node lies in $\mathcal U_{\mathrm{new}}$.
Thus Equation~\ref{eq:thm4-nonconf-source-disjoint} gives, for every
$\gamma\notin\mathcal C_{\mathrm{old}}$,
\begin{align}
   P_n^{\mathrm{old}}(\gamma)
   &=
   P_0^{\mathrm{old}}(\gamma)
   =
   P_{\mathrm{pre}}^{\mathrm{old}}(\gamma).
   \label{eq:thm4-nonconf-path-invariance}
\end{align}
Consequently,
\begin{align}
   \sum_{\gamma\notin\mathcal C_{\mathrm{old}}}
   P_n^{\mathrm{old}}(\gamma)
   &=
   R_{\mathrm{pre}}.
   \label{eq:thm4-nonconf-mass-exact}
\end{align}
Since
$|\mathcal P_{\mathrm{old}}\setminus\mathcal C_{\mathrm{old}}|
=(1-\theta)M_{\mathrm{old}}$,
the concentration profile gives
\begin{align}
   R_{\mathrm{pre}}
   &\leq
   \mathfrak C_{\mathrm{old}}(1-\theta).
   \label{eq:thm4-nonconf-mass}
\end{align}
Also,
$|\mathcal C_{\mathrm{old}}|=\theta M_{\mathrm{old}}$ implies
\begin{align}
   \sum_{\gamma\in\mathcal C_{\mathrm{old}}}
   P_{\mathrm{pre}}^{\mathrm{old}}(\gamma)
   &\leq
   \mathfrak C_{\mathrm{old}}(\theta),
\end{align}
and therefore
\begin{align}
   R_{\mathrm{pre}}
   &=
   p_{\mathrm{old}}^{\mathrm{pre}}
   -
   \sum_{\gamma\in\mathcal C_{\mathrm{old}}}
   P_{\mathrm{pre}}^{\mathrm{old}}(\gamma)
   \geq
   p_{\mathrm{old}}^{\mathrm{pre}}
   -
   \mathfrak C_{\mathrm{old}}(\theta).
   \label{eq:thm4-Rpre-lower}
\end{align}
Since all path probabilities are nonnegative,
Equations~\ref{eq:thm4-nonconf-mass-exact} and
\ref{eq:thm4-Rpre-lower} prove the late-phase lower bound.
\end{proof}

\begin{lemma}[Exact transfer between task inputs]
\label{lem:sft-transfer}
For every training history, source $u\in\mathcal U_{\mathrm{new}}$,
successor $v$, and iteration $j$, the probability identity
\Eqref{eq:thm4-global-probability-transfer} holds. In particular,
\[
\pi_{\mathbf W^{(j)}}(v\mid u,x_{\mathrm{old}})
\le C_{\mathrm{map}}
\pi_{\mathbf W^{(j)}}(v\mid u,x_{\mathrm{new}})^\lambda.
\]
\end{lemma}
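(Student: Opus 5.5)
The plan is to show first that, at every source $u\in\mathcal U_{\mathrm{new}}$, the old-input logits are an exact affine function of the new-input logits with slope $\lambda$. The probability inequality then follows from a comparison of softmax normalizers. The starting point is Lemma~\ref{lem:sft-gradient}: during new-task SFT every update lies in $\operatorname{span}\{x_{\mathrm{new}}\}$ within each source block. Hence, by induction over $j$, for each edge column $e_{u,v}$ there is a scalar $\alpha^{(j)}_{u,v}$, depending on the whole training history, with
\[
\mathbf W^{(j)}e_{u,v}=\mathbf W^{(0)}e_{u,v}+\alpha^{(j)}_{u,v}x_{\mathrm{new}} .
\]
Taking inner products gives $x_{\mathrm{new}}^\top(\mathbf W^{(j)}-\mathbf W^{(0)})e_{u,v}=\alpha^{(j)}_{u,v}\|x_{\mathrm{new}}\|_2^2$ and $x_{\mathrm{old}}^\top(\mathbf W^{(j)}-\mathbf W^{(0)})e_{u,v}=\alpha^{(j)}_{u,v}\langle x_{\mathrm{old}},x_{\mathrm{new}}\rangle$. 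Eliminating $\alpha^{(j)}_{u,v}$ and using the definition of $b_{u,v}$ in \Eqref{eq:theorem-four-transfer-offset} yields the exact logit relation
\[
z^{\mathrm{old},(j)}_{u,v}=\lambda\, z^{\mathrm{new},(j)}_{u,v}+b_{u,v},
\]
which holds for every history, every $j$, and every $v\in\mathcal N(u)$. Exponentiating and normalizing gives
\[
\pi_{\mathbf W^{(j)}}(v\mid u,x_{\mathrm{old}})=\frac{e^{b_{u,v}}e^{\lambda z^{\mathrm{new}}_{u,v}}}{\sum_{w}e^{b_{u,w}}e^{\lambda z^{\mathrm{new}}_{u,w}}}.
\]
This is the probability identity I would record as \Eqref{eq:thm4-global-probability-transfer}.

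For the inequality, bound the numerator offset by $e^{b_{u,v}}$ and the denominator offsets from below by $e^{b_{\min}(u)}$. This gives a factor at most $C_{\mathrm{map}}$ times
\[
\frac{e^{\lambda z_v}}{\sum_w e^{\lambda z_w}},
\qquad z_w:=z^{\mathrm{new},(j)}_{u,w}.
\]
It remains to compare this with $\pi_{\mathbf W^{(j)}}(v\mid u,x_{\mathrm{new}})^\lambda=e^{\lambda z_v}/(\sum_w e^{z_w})^\lambda$, which requires $\sum_w e^{\lambda z_w}\ge(\sum_w e^{z_w})^\lambda$. For $0<\lambda\le1$ this is subadditivity of $t\mapsto t^\lambda$ on nonnegative reals, applied to $a_w=e^{z_w}$, and it closes the argument.

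The main obstacle is the case $\lambda>1$, since $\lambda=\langle x_{\mathrm{old}},x_{\mathrm{new}}\rangle/\|x_{\mathrm{new}}\|_2^2$ need not be at most one. Here subadditivity reverses. My first attempt would be to bound the denominator below by its largest term, $\sum_w e^{\lambda z_w}\ge\max_w e^{\lambda z_w}\ge(\sum_w e^{z_w}/|\mathcal N(u)|)^\lambda$. This gives the inequality with $C_{\mathrm{map}}$ replaced by $C_{\mathrm{map}}|\mathcal N(u)|^{\lambda-1}$, a history-independent constant that can be absorbed into $\widetilde O(\cdot)$ in Theorem~\ref{thm:sft-forgetting}. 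Alternatively, one could restrict the stated constant-$C_{\mathrm{map}}$ form to $\lambda\le1$. Either way the exact identity, which is all the later lemma needs, holds for every $\lambda>0$.
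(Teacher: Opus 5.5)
Your argument is the paper's argument. The span property from Lemma~\ref{lem:sft-gradient} gives the exact relation $z^{\mathrm{old},(j)}_{u,v}=b_{u,v}+\lambda z^{\mathrm{new},(j)}_{u,v}$. Your logit-form identity becomes the paper's form $e^{b_{u,v}}(p^{(j)}_{u,v})^\lambda/\sum_w e^{b_{u,w}}(p^{(j)}_{u,w})^\lambda$ after dividing numerator and denominator by $(\sum_w e^{z_w})^\lambda$. Your subadditivity step is exactly the paper's $\sum_w (p^{(j)}_{u,w})^\lambda\ge\sum_w p^{(j)}_{u,w}=1$.

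What you missed is that the case $\lambda>1$ cannot occur. The problem setup fixes $x_\ell\in\{0,1\}^d$, so $\langle x_{\mathrm{old}},x_{\mathrm{new}}\rangle=|\operatorname{supp}x_{\mathrm{old}}\cap\operatorname{supp}x_{\mathrm{new}}|\le\|x_{\mathrm{new}}\|_2^2$, i.e.\ $0<\lambda\le1$. This is precisely how the paper closes the proof.

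You should invoke this fact rather than weaken the constant, because the binary-input assumption does real work. Suppose $\lambda>1$, $|\mathcal N(u)|\ge2$, and the offsets $b_{u,w}$ are all equal, so $C_{\mathrm{map}}=1$. Then finite softmax probabilities give $\sum_w p_w^\lambda<1$, hence $\pi(v\mid u,x_{\mathrm{old}})>p_v^\lambda$, and the stated inequality would be false.

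A minor slip in your fallback: the largest-term bound yields the factor $|\mathcal N(u)|^{\lambda}$, not $|\mathcal N(u)|^{\lambda-1}$. The exponent $\lambda-1$ requires the power-mean inequality $\sum_w a_w^\lambda\ge|\mathcal N(u)|^{1-\lambda}(\sum_w a_w)^\lambda$.
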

\begin{proof}
For $u\in\mathcal U_{\mathrm{new}}$ and $v\in\mathcal N(u)$, define
\begin{align*}
   z_{i,u,v}^{(j)}
   &:={}
   x_i^\top\mathbf W^{(j)}e_{u,v},
   \qquad
   i\in\{\mathrm{old},\mathrm{new}\}.
\end{align*}
Every update of the column $e_{u,v}$ during new-task SFT lies in
$\operatorname{span}\{x_{\mathrm{new}}\}$. Hence
\begin{align*}
   z_{\mathrm{old},u,v}^{(j)}
   -z_{\mathrm{old},u,v}^{(j-1)}
   &=
   \lambda
   \left(
      z_{\mathrm{new},u,v}^{(j)}
      -z_{\mathrm{new},u,v}^{(j-1)}
   \right),
\end{align*}
and Equation~\ref{eq:theorem-four-transfer-offset} gives
\begin{align}
   z_{\mathrm{old},u,v}^{(j)}
   &=
   b_{u,v}
   +
   \lambda z_{\mathrm{new},u,v}^{(j)}.
   \label{eq:thm4-global-logit-transfer}
\end{align}
Writing
\begin{align*}
   p_{u,v}^{(j)}
   &:={}
   \pi_{\mathbf W^{(j)}}(v\mid u,x_{\mathrm{new}}),
\end{align*}
Equation~\ref{eq:thm4-global-logit-transfer} implies
\begin{align}
   \pi_{\mathbf W^{(j)}}(v\mid u,x_{\mathrm{old}})
   &=
   \frac{
      e^{b_{u,v}}(p_{u,v}^{(j)})^{\lambda}
   }{
      \displaystyle
      \sum_{w\in\mathcal N(u)}
      e^{b_{u,w}}(p_{u,w}^{(j)})^{\lambda}
   }.
   \label{eq:thm4-global-probability-transfer}
\end{align}
Since $x_{\mathrm{old}},x_{\mathrm{new}}\in\{0,1\}^d$,
\begin{align*}
   0<\lambda&\le1,
   &
   \sum_{w\in\mathcal N(u)}(p_{u,w}^{(j)})^{\lambda}
   &\ge
   \sum_{w\in\mathcal N(u)}p_{u,w}^{(j)}
   =1.
\end{align*}
Therefore the denominator in
Equation~\ref{eq:thm4-global-probability-transfer} satisfies
\begin{align}
   \sum_{w\in\mathcal N(u)}
   e^{b_{u,w}}(p_{u,w}^{(j)})^{\lambda}
   &\ge
   e^{b_{\min}(u)}
   \sum_{w\in\mathcal N(u)}
   (p_{u,w}^{(j)})^{\lambda}
   \ge
   e^{b_{\min}(u)}.
   \label{eq:thm4-transfer-denominator-lower}
\end{align}
For $i\in\{\mathrm{old},\mathrm{new}\}$, put
\begin{align*}
   p_{i,u,v}^{(0)}
   &:={}
   \pi_{\mathbf W_{\mathrm{new}}^{(0)}}
   (v\mid u,x_i).
\end{align*}
For every $u\in\mathcal U_{\mathrm{new}}$ and
$v,w\in\mathcal N(u)$,
\begin{align}
   e^{b_{u,v}-b_{u,w}}
   &=
   \frac{p_{\mathrm{old},u,v}^{(0)}}
        {p_{\mathrm{old},u,w}^{(0)}}
   \left(
      \frac{p_{\mathrm{new},u,w}^{(0)}}
           {p_{\mathrm{new},u,v}^{(0)}}
   \right)^{\lambda}.
   \label{eq:thm4-offset-ratio-form}
\end{align}
Consequently,
\begin{align}
   C_{\mathrm{map}}
   &=
   e^{\Delta_{\mathrm{map}}}
   =
   \max_{u\in\mathcal U_{\mathrm{new}}}
   \max_{v,w\in\mathcal N(u)}
   \frac{p_{\mathrm{old},u,v}^{(0)}}
        {p_{\mathrm{old},u,w}^{(0)}}
   \left(
      \frac{p_{\mathrm{new},u,w}^{(0)}}
           {p_{\mathrm{new},u,v}^{(0)}}
   \right)^{\lambda}.
   \label{eq:thm4-cmap-ratio-form}
\end{align}
Combining Equations~\ref{eq:thm4-global-probability-transfer}
and~\ref{eq:thm4-transfer-denominator-lower},
\begin{align}
   \pi_{\mathbf W^{(j)}}(v\mid u,x_{\mathrm{old}})
   &\le
   e^{b_{u,v}-b_{\min}(u)}
   (p_{u,v}^{(j)})^{\lambda}
   \le
   C_{\mathrm{map}}
   (p_{u,v}^{(j)})^{\lambda}.
   \label{eq:thm4-probability-transfer-upper}
\end{align}
\end{proof}

\begin{lemma}[From successor suppression to total old-path mass]
\label{lem:sft-conflicted-mass}
Fix any training history satisfying
\Eqref{eq:thm4-new-support-learning}. Then
\[
R_{\mathrm{pre}}\le\Pi(\mathbf W^{(n)})
\le R_{\mathrm{pre}}+C_{\mathrm{map}}\theta M_{\mathrm{old}}
\bar\epsilon_n(\delta')^\lambda.
\]
Here the premise is a deterministic property of the fixed history; it is supplied
with high probability by Lemma~\ref{lem:sft-support-learning}.
\end{lemma}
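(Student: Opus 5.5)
The proof splits the old-task success probability into the non-conflicted and conflicted old correct paths. I treat the two parts separately: Lemma~\ref{lem:sft-nonconflicted} handles the first, and Lemma~\ref{lem:sft-transfer} combined with the premise handles the second.

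First I write
\[
\Pi(\mathbf W^{(n)})=\sum_{\gamma\in\mathcal P_{\mathrm{old}}}P_n^{\mathrm{old}}(\gamma)
=\sum_{\gamma\notin\mathcal C_{\mathrm{old}}}P_n^{\mathrm{old}}(\gamma)+\sum_{\gamma\in\mathcal C_{\mathrm{old}}}P_n^{\mathrm{old}}(\gamma).
\]
The identity holds because rollouts are node-simple and $\mathcal P_{\mathrm{old}}$ contains every path ending in $\tilde C_{\mathrm{old}}$, as noted after Equation~\ref{eq:theorem-four-pre-accuracy}.

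By Equation~\ref{eq:thm4-nonconf-mass-exact} in Lemma~\ref{lem:sft-nonconflicted}, the first sum equals $R_{\mathrm{pre}}$ for every training history. Since the conflicted sum is nonnegative, the lower bound $R_{\mathrm{pre}}\le\Pi(\mathbf W^{(n)})$ follows at once. This part does not use the premise.

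For the upper bound I fix $\gamma=(u_0,\ldots,u_T)\in\mathcal C_{\mathrm{old}}$. By Equation~\ref{eq:theorem-four-conflict-set}, there is a step $t<T$ with $u_t\in\mathcal U_{\mathrm{new}}$ and $u_{t+1}\notin\mathcal A_{\mathrm{new}}(u_t)$. The path probability is a product of transition probabilities, each at most one. Hence
\[
P_n^{\mathrm{old}}(\gamma)\le\pi_{\mathbf W^{(n)}}(u_{t+1}\mid u_t,x_{\mathrm{old}}).
\]
Lemma~\ref{lem:sft-transfer} then bounds the right-hand side by $C_{\mathrm{map}}\,\pi_{\mathbf W^{(n)}}(u_{t+1}\mid u_t,x_{\mathrm{new}})^\lambda$.

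I read the premise \Eqref{eq:thm4-new-support-learning} as the successor-set guarantee of Lemma~\ref{lem:sft-support-learning} for $\mathcal D_{\mathrm{new}}$ at confidence $\delta'$:
\[
\min_{u\in\mathcal U_{\mathrm{new}}}\pi_{\mathbf W^{(n)}}(\mathcal A_{\mathrm{new}}(u)\mid u,x_{\mathrm{new}})\ge1-\bar\epsilon_n(\delta').
\]
Since $u_{t+1}$ lies outside $\mathcal A_{\mathrm{new}}(u_t)$, its new-input probability is at most $1-\pi(\mathcal A_{\mathrm{new}}(u_t)\mid u_t,x_{\mathrm{new}})\le\bar\epsilon_n(\delta')$.

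It remains to raise this to the power $\lambda\in(0,1]$. If $\bar\epsilon_n(\delta')\le1$, monotonicity of $s\mapsto s^\lambda$ gives the bound $\bar\epsilon_n(\delta')^\lambda$. If $\bar\epsilon_n(\delta')>1$, then $p^\lambda\le1<\bar\epsilon_n(\delta')^\lambda$ trivially. Thus $P_n^{\mathrm{old}}(\gamma)\le C_{\mathrm{map}}\bar\epsilon_n(\delta')^\lambda$ for each conflicted path.

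Summing this over the $|\mathcal C_{\mathrm{old}}|=\theta M_{\mathrm{old}}$ conflicted paths and adding $R_{\mathrm{pre}}$ gives the stated upper bound.

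The step I expect to need most care is the exact form of the premise. If \Eqref{eq:thm4-new-support-learning} instead bounds individual off-support successors, the argument goes through unchanged. The other point to verify is that the conflict witness uses an old-task transition at a new-task source, so that Lemma~\ref{lem:sft-transfer} applies there. This is exactly what the definition of $\mathcal C_{\mathrm{old}}$ (equivalently Lemma~\ref{lem:sft-conflict-support}) guarantees.
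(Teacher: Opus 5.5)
Your proposal is correct and follows the paper's proof essentially step for step. Like the paper, you write $\Pi(\mathbf W^{(n)})$ as an exact sum over old correct paths, use Lemma~\ref{lem:sft-nonconflicted} for the exact preservation of the non-conflicted mass $R_{\mathrm{pre}}$, and bound each of the $\theta M_{\mathrm{old}}$ conflicted paths through its witnessing off-support transition via Lemma~\ref{lem:sft-transfer} and the premise. Your explicit product-of-transitions step and your handling of the case $\bar\epsilon_n(\delta')>1$ only fill in details the paper leaves implicit.
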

\begin{proof}
For every
$\gamma=(u_0,\ldots,u_T)\in\mathcal C_{\mathrm{old}}$,
Equation~\ref{eq:theorem-four-conflict-set} gives an index
$t_\gamma<T$ such that
\begin{align*}
   u_{t_\gamma}\in\mathcal U_{\mathrm{new}},\quad
   u_{t_\gamma+1}\notin\mathcal A_{\mathrm{new}}(u_{t_\gamma}).
\end{align*}
Thus Lemma~\ref{lem:sft-transfer} yields
\begin{align}
   P_n^{\mathrm{old}}(\gamma)
   &\leq
   C_{\mathrm{map}}
   \bar\epsilon_n(\delta')^{\lambda}.
   \label{eq:thm4-conflicted-path-bound}
\end{align}
Since
$|\mathcal C_{\mathrm{old}}|=\theta M_{\mathrm{old}}$,
\begin{align}
   \sum_{\gamma\in\mathcal C_{\mathrm{old}}}
   P_n^{\mathrm{old}}(\gamma)
   &\leq
   C_{\mathrm{map}}
   \theta M_{\mathrm{old}}
   \bar\epsilon_n(\delta')^{\lambda}.
   \label{eq:thm4-conflicted-mass}
\end{align}
Because the student rollout is node-simple and $\mathcal P_{\mathrm{old}}$
contains all node-simple correct paths, its terminal-cluster probability is
exactly
\begin{align}
   &\pi_{\mathbf W^{(n)}}\!\left(
      Y_T\in\tilde C_{\mathrm{old}}
      \mid Y_0=y_{\mathrm{old},0},x_{\mathrm{old}}
   \right)
   =
   \sum_{\gamma\in\mathcal P_{\mathrm{old}}}
   P_n^{\mathrm{old}}(\gamma).
   \label{eq:thm4-terminal-probability-path-sum}
\end{align}
Together with Equation~\ref{eq:thm4-nonconf-mass-exact}, this gives
\begin{align}
   R_{\mathrm{pre}}
   &\leq
   \pi_{\mathbf W^{(n)}}\!\left(
      Y_T\in\tilde C_{\mathrm{old}}
      \mid Y_0=y_{\mathrm{old},0},x_{\mathrm{old}}
   \right)
   \notag\\
   &\leq
   R_{\mathrm{pre}}
   +
   C_{\mathrm{map}}
   \theta M_{\mathrm{old}}
   \bar\epsilon_n(\delta')^{\lambda}.
   \label{eq:thm4-exact-finite-bound}
\end{align}
\end{proof}

\paragraph{Main proof of Theorem~\ref{thm:sft-forgetting}.}
\label{proof:sft-forgetting}
\begin{proof}
Fix $\delta'\in(0,1)$.
Apply Lemma~\ref{lem:sft-support-learning} to $\mathcal D_{\mathrm{new}}$ with initial point $\mathbf W_{\mathrm{new}}^{(0)}$.
With probability at least $1-\delta'$,
\begin{align}
   \max_{u\in\mathcal U_{\mathrm{new}}}
   \left[
      1-
      \pi_{\mathbf W^{(n)}}
      (\mathcal A_{\mathrm{new}}(u)\mid u,x_{\mathrm{new}})
   \right]
   &\le
   \bar\epsilon_n(\delta'),
   \label{eq:thm4-new-support-learning}
\end{align}
where $\bar\epsilon_n(\delta')$ is defined in
Equation~\ref{eq:theorem-four-learning-error} and satisfies
\begin{align*}
   \bar\epsilon_n(\delta')
   &=
   \widetilde O_{G,\mathbf W_{\mathrm{new}}^{(0)}}\!\left(
      |\mathcal P_{\mathrm{new}}|
      \left[
         \frac1{n^{1-\nu}}
         +
         \frac{\log(1/\delta')}{Bn^{m_\nu}}
      \right]
   \right)
   \longrightarrow0.
\end{align*}

By Lemmas~\ref{lem:sft-transfer} and~\ref{lem:sft-conflicted-mass}, this event
also gives \Eqref{eq:thm4-exact-finite-bound}.
Since $\nu \in (0, 1/2)$, 
Equation~\ref{eq:theorem-four-learning-error} implies
\begin{align}
   C_{\mathrm{map}}\theta M_{\mathrm{old}}
   \bar\epsilon_n(\delta')^{\lambda}
   &=
   \widetilde O\!\left(
      C_{\mathrm{map}}\theta M_{\mathrm{old}}C_{\mathrm{new}}^\lambda
      |\mathcal P_{\mathrm{new}}|^{\lambda}
      \left[
         \frac{
            1+B^{-1}\log(1/\delta')
         }{
            n^{\nu} 
         }
      \right]^{\lambda}
   \right).
   \label{eq:thm4-remainder-order}
\end{align}

For the late-phase statement, take $\delta'=\delta$ and combine
Equations~\ref{eq:thm4-exact-finite-bound},
\ref{eq:thm4-nonconf-mass}, and
\ref{eq:thm4-Rpre-lower}.

If $\theta=0$, Equations~\ref{eq:thm4-conflict-source-equivalence} and
\ref{eq:thm4-nonconf-path-invariance} give
\begin{align}
   \pi_{\mathbf W^{(n)}}\!\left(
      Y_T\in\tilde C_{\mathrm{old}}
      \mid Y_0=y_{\mathrm{old},0},x_{\mathrm{old}}
   \right)
   &=
   p_{\mathrm{old}}^{\mathrm{pre}}.
   \label{eq:theorem-four-no-conflict}
\end{align}
\end{proof}

\oldconcentrationremark*

\paragraph{Sketch of proof of Remark~\ref{rem:old-concentration}.}
\begin{itemize}
   \item Bound the mass of the largest admissible subset using the largest individual old-path mass.
   \item Obtain a second bound by subtracting the smallest omitted masses from the total, and combine the two bounds.
\end{itemize}

\paragraph{Main proof of Remark~\ref{rem:old-concentration}.}
\label{proof:old-concentration}
\begin{proof}
Let
\begin{align*}
   p_{(1)}^{\mathrm{old}}
   \geq
   \cdots
   \geq
   p_{(M_{\mathrm{old}})}^{\mathrm{old}}
\end{align*}
be the ordered values of
$\{P_{\mathrm{pre}}^{\mathrm{old}}(\gamma):
\gamma\in\mathcal P_{\mathrm{old}}\}$.
For $\phi\in[0,1]$, put
$q=\lfloor\phi M_{\mathrm{old}}\rfloor$. Then
\begin{align}
   \mathfrak C_{\mathrm{old}}(\phi)
   &=
   \sum_{i=1}^{q}p_{(i)}^{\mathrm{old}}
   \leq
   q
   \max_{\gamma\in\mathcal P_{\mathrm{old}}}
   P_{\mathrm{pre}}^{\mathrm{old}}(\gamma)
   \leq
   p_{\mathrm{old}}^{\mathrm{pre}}
   \kappa_{\max}^{\mathrm{old}}\phi.
   \label{eq:thm4-concentration-kmax}
\end{align}
Moreover,
$M_{\mathrm{old}}-q\geq(1-\phi)M_{\mathrm{old}}$, so
\begin{align}
   \mathfrak C_{\mathrm{old}}(\phi)
   &=
   p_{\mathrm{old}}^{\mathrm{pre}}
   -
   \sum_{i=q+1}^{M_{\mathrm{old}}}p_{(i)}^{\mathrm{old}}
   \leq
   p_{\mathrm{old}}^{\mathrm{pre}}
   \left[
      1-\kappa_{\min}^{\mathrm{old}}(1-\phi)
   \right].
   \label{eq:thm4-concentration-kmin}
\end{align}
Combining Equations~\ref{eq:thm4-concentration-kmax} and
\ref{eq:thm4-concentration-kmin} gives
\begin{align}
   \mathfrak C_{\mathrm{old}}(\phi)
   &\leq
   p_{\mathrm{old}}^{\mathrm{pre}}
   \min\left\{
      \kappa_{\max}^{\mathrm{old}}\phi,
      1-\kappa_{\min}^{\mathrm{old}}
      +\kappa_{\min}^{\mathrm{old}}\phi
   \right\}.
\end{align}
\end{proof}

\addtocontents{toc}{\protect\Needspace{20\protect\baselineskip}}
\subsection{Pre-training and intermediate restartability}
\label{app:restart}
\addtocontents{toc}{\protect\nopagebreak[4]}
\addtocontents{toc}{\protect\proofcontentsentry{prop:restart-positive}{Proposition~\protect\ref*{prop:restart-positive}. Uniform restartability}}
\addtocontents{toc}{\protect\nopagebreak[4]}
\addtocontents{toc}{\protect\proofcontentschild{lem:restart-concentration}{Lemma~\protect\ref*{lem:restart-concentration}. Simultaneous learning with finite mini-batches}}
\addtocontents{toc}{\protect\proofcontentschild{lem:restart-accuracy}{Lemma~\protect\ref*{lem:restart-accuracy}. From edge accuracy to rollout accuracy}}
\addtocontents{toc}{\protect\proofcontentschild{lem:restart-training-length}{Lemma~\protect\ref*{lem:restart-training-length}. From training length to effective drift}}
\addtocontents{toc}{\protect\proofcontentsentry{proof:restart-negative}{Proposition~\protect\ref*{prop:restart-negative}. Delayed restartability at a rare entrance}}
\addtocontents{toc}{\protect\nopagebreak[4]}
\addtocontents{toc}{\protect\proofcontentschild{lem:restart-coverage}{Lemma~\protect\ref*{lem:restart-coverage}. Update opportunities from retained coverage}}
\addtocontents{toc}{\protect\proofcontentschild{lem:restart-concentration}{Lemma~\protect\ref*{lem:restart-concentration}. Simultaneous learning with finite mini-batches}}
\addtocontents{toc}{\protect\proofcontentschild{lem:restart-accuracy}{Lemma~\protect\ref*{lem:restart-accuracy}. From edge accuracy to rollout accuracy}}
\addtocontents{toc}{\protect\proofcontentschild{lem:restart-training-length}{Lemma~\protect\ref*{lem:restart-training-length}. From training length to effective drift}}
\addtocontents{toc}{\protect\proofcontentschild{lem:restart-path}{Lemma~\protect\ref*{lem:restart-path}. Control of visitation growth}}
\addtocontents{toc}{\protect\proofcontentschild{lem:restart-nonvisit}{Lemma~\protect\ref*{lem:restart-nonvisit}. Nonvisitation throughout adaptive training}}
\addtocontents{toc}{\protect\proofcontentschild{lem:restart-frozen}{Lemma~\protect\ref*{lem:restart-frozen}. An unvisited downstream cluster remains unlearned}}
\addtocontents{toc}{\protect\proofcontentsentry{statement:restart-time}{Remark~\protect\ref*{rem:restart-time}. Separation of training times}}
\addtocontents{toc}{\protect\nopagebreak[4]}
\addtocontents{toc}{\protect\proofcontentschild{prop:restart-positive}{Proposition~\protect\ref*{prop:restart-positive}. Uniform restartability}}
\addtocontents{toc}{\protect\proofcontentschild{proof:restart-negative}{Proposition~\protect\ref*{prop:restart-negative}. Delayed restartability at a rare entrance}}
\addtocontents{toc}{\protect\proofcontentsentry{rem:restart-Mprime}{Remark~\protect\ref*{rem:restart-Mprime}. The scale of initial rarity}}
\addtocontents{toc}{\protect\nopagebreak[4]}
\addtocontents{toc}{\protect\proofcontentschild{lem:restart-rarity-scale}{Lemma~\protect\ref*{lem:restart-rarity-scale}. The coarse rarity scale}}
\addtocontents{toc}{\protect\proofcontentsentry{rem:restart-bypass}{Remark~\protect\ref*{rem:restart-bypass}. Correct paths bypassing the rare entrance}}
\addtocontents{toc}{\protect\nopagebreak[4]}
\addtocontents{toc}{\protect\proofcontentschild{proof:restart-negative}{Proposition~\protect\ref*{prop:restart-negative}. Delayed restartability at a rare entrance}}
\begin{proposition}[Uniform restartability]
\label{prop:restart-positive}
Suppose Assumptions~\ref{ass:transition-compatibility} and~\ref{ass:pretrained-diversity} hold, $\eta_j=\Theta(j^{-\nu})$ for fixed $0\le\nu<1/2$,
and $0<\epsilon_1,\epsilon_2,\delta,\epsilon_{\mathrm{rel}}<1$. If $n$ satisfies \Eqref{eq:restart-explicit-n} with a sufficiently large uniform constant, then with probability at least $1-\delta$ over training,
\begin{equation}
 \inf_{h:\mathcal S_{\ell,h}\ne\varnothing}\ \inf_{s\in\mathcal S_{\ell,h}}
 \pi_{\mathbf W^{(n)}}(y_T\in\tilde C_\ell\mid y_0=s,x_\ell)
 \ge 1-O(KM\epsilon_1),
 \label{eq:restart-positive}
\end{equation}
where $\mathcal S_{\ell,h}$ is the set of relevant starts in $C_h$.
\end{proposition}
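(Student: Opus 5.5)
The plan is to combine the simultaneous edge-learning event of Lemma~\ref{lem:restart-concentration} with the deterministic rollout bound of Lemma~\ref{lem:restart-accuracy}. The key point is that \Eqref{eq:restart-accuracy-all} holds for \emph{every} relevant start $s\in\mathcal S_{\ell,h}$, not only $y_{\ell,0}$. So a single training event that controls all correct edges at once gives uniform restartability with no further union bound over starts.

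\textbf{Step 1 (interior kernel and coverage).} Under Assumptions~\ref{ass:transition-compatibility} and~\ref{ass:pretrained-diversity}, Lemma~\ref{lem:restart-update} shows that SD on $\mathcal D_\ell$ leaves every interior-source column unchanged. Hence the first-boundary laws from all relevant starts stay at their pretrained values: the invalid-boundary probability is at most $C_{\mathrm{bad}}\epsilon_1$, and each valid boundary keeps probability at least $c_{\mathrm{route}}/M$. Lemma~\ref{lem:restart-coverage} then gives the per-sample update rate $\underline\rho_{\ell,K,M}$ for every $e_i\in E_\ell$. Lemma~\ref{lem:restart-concentration} gives, with probability at least $1-\delta$ and simultaneously over $i\le N_\ell\le KM$,
\[
\frac1B\sum_{j\le n}\eta_j\mathfrak n_i^{(j)}\ge\mathfrak D_{\ell,n}^{(K)}(\delta).
\]

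\textbf{Step 2 (training length to drift).} This is the step that requires care. For $\eta_j=\Theta(j^{-\nu})$ with $\nu<1/2$, we have $\sum_j\eta_j\ge c\,n^{1-\nu}$ and $\sum_j\eta_j^2\le C n^{1-2\nu}$. The fluctuation term is therefore at most $C'\sqrt{\log(KM/\delta)/B}\,n^{1/2-\nu}$, and it is dominated by half the drift as soon as $n\ge C''\log(2KM/\delta)/(B\underline\rho^2)$. This is the first branch of \Eqref{eq:restart-explicit-n}, with the constant taken large enough to absorb $\log(KM/\delta)\le\log(2KM/\delta)$. On that range, $\mathfrak D\ge \tfrac c2\underline\rho\, n^{1-\nu}$.

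\textbf{Step 3 (edge error to rollout error).} Lemma~\ref{lem:restart-odds} gives
\[
1-p_i^{(n)}\le\left[1+c_{\mathrm{int}}^2\epsilon_2^2\log(1/\epsilon_{\mathrm{rel}})\,\tfrac c2\underline\rho\, n^{1-\nu}\right]^{-1}.
\]
The second branch of \Eqref{eq:restart-explicit-n}, with a large constant, makes the bracket at least $\epsilon_1^{-1}$, since the bracket is $\ge 1+(\epsilon_1^{-1}-1)$. So every correct edge has error at most $\epsilon_1$. Inserting this into \Eqref{eq:restart-accuracy-all} for an arbitrary $s\in\mathcal S_{\ell,h}$ gives
\[
1-\pi\le C_{\mathrm{bad}}(1+KM)\epsilon_1+KM\epsilon_1=O(KM\epsilon_1).
\]
Because this bound depends neither on $s$ nor on $h$, taking the infimum yields \Eqref{eq:restart-positive}.

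I expect the main obstacle to be checking that Lemma~\ref{lem:restart-accuracy} applies to restarts whose clusters are reached from $y_{\ell,0}$. The concern is that an edge leaving such a cluster must actually have been learned. This is handled because Lemma~\ref{lem:restart-coverage} lower-bounds the update rate for \emph{all} of $E_\ell$ under full coverage. That in turn uses the shortest cluster sequence from the start cluster, which exists for every source of an edge in $E_\ell$ since each such edge lies on a correct path from $y_{\ell,0}$. The remaining constants are bookkeeping.
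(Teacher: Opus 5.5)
Your proposal is correct and follows essentially the same route as the paper: your Step~2 is Lemma~\ref{lem:restart-training-length}, and Lemmas~\ref{lem:restart-concentration} and~\ref{lem:restart-odds} then give error at most $\epsilon_1$ for every edge of $E_\ell$ on one event. Lemma~\ref{lem:restart-accuracy} applied on that same event yields the bound uniformly in $s$ and $h$, with no union bound over starts. The only cosmetic difference is that the paper spends failure budget $\delta/2$ (so it actually obtains confidence $1-\delta/2$), whereas you use $\delta$ directly and absorb $\log(KM/\delta)\le\log(2KM/\delta)$ into the constant, which is equally valid.
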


\restartnegativeproposition*

\paragraph{Sketch of proofs of Propositions~\ref{prop:restart-positive} and~\ref{prop:restart-negative}.}
\begin{itemize}
   \item Show that the stated training length implies the effective-drift bound required for transition learning (Lemma~\ref{lem:restart-training-length}).
   \item Under full coverage, learn all correct transitions on a common event and convert their accuracy into successful reasoning from every relevant start (Lemmas~\ref{lem:restart-concentration} and~\ref{lem:restart-accuracy}).
   \item Under the rare-entrance condition, bound the increase in visitation probability throughout training (Lemma~\ref{lem:restart-path}).
   \item Use this visitation bound to establish nonvisitation across all adaptively generated training rollouts (Lemma~\ref{lem:restart-nonvisit}).
   \item Learn the transitions reachable without the rare entrance and bound the final error from the original start (Lemmas~\ref{lem:restart-coverage},~\ref{lem:restart-concentration}, and~\ref{lem:restart-accuracy}).
   \item Show that nonvisitation leaves the downstream cluster unchanged and bounds its restart accuracy (Lemma~\ref{lem:restart-frozen}).
   \item Intersect the learning and nonvisitation events to obtain the joint original-start and restart guarantees (Proposition~\ref{prop:restart-negative}).
\end{itemize}

\begin{lemma}[The stated training length implies the drift condition]
\label{lem:restart-training-length}
Under the learning-rate conditions of Proposition~\ref{prop:restart-positive},
\Eqref{eq:restart-explicit-n} with a sufficiently large uniform constant implies
\begin{equation}
 \mathfrak D_{\ell,n}^{(K)}(\delta/2)
 \ge\frac{\epsilon_1^{-1}-1}
 {c_{\mathrm{int}}^2\epsilon_2^2\log(1/\epsilon_{\mathrm{rel}})}.
 \label{eq:restart-training}
\end{equation}
\end{lemma}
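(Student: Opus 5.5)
We must show that \Eqref{eq:restart-explicit-n} forces the drift bound \Eqref{eq:restart-training}. We write $S_1(n):=\sum_{j=1}^n\eta_j$ and $S_2(n):=\sum_{j=1}^n\eta_j^2$. The definition in \Eqref{eq:theorem-one-drift-K} with $\delta/2$ gives
\[
\mathfrak D_{\ell,n}^{(K)}(\delta/2)=\Bigl[\underline\rho_{\ell,K,M}S_1(n)-\sqrt{\tfrac{\log(2KM/\delta)}{2B}S_2(n)}\Bigr]^+ .
\]
The plan is to show that, under the first branch of \Eqref{eq:restart-explicit-n}, the fluctuation term is at most half the drift term. Then $\mathfrak D_{\ell,n}^{(K)}(\delta/2)\ge\frac12\underline\rho_{\ell,K,M}S_1(n)$, and the second branch of \Eqref{eq:restart-explicit-n} makes this half-drift exceed the required threshold.

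First, since $\eta_j=\Theta(j^{-\nu})$ with $0\le\nu<1/2$, there are uniform constants $c_\eta,C_\eta>0$ such that, for all $n\ge1$,
\[
S_1(n)\ge c_\eta n^{1-\nu},\qquad S_2(n)\le C_\eta n^{1-2\nu}.
\]
The bound on $S_1$ follows from the elementary integral comparison $\sum_{j\le n}j^{-\nu}\ge n^{1-\nu}$ when $\nu\ge0$. The bound on $S_2$ follows from $\sum_{j\le n} j^{-2\nu}\le n^{1-2\nu}/(1-2\nu)$, and $1-2\nu>0$ because $\nu<1/2$ is fixed.

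Second, the half-drift condition $\sqrt{\log(2KM/\delta)S_2/(2B)}\le\frac12\underline\rho_{\ell,K,M} S_1$ reduces to showing
\[
\frac{\log(2KM/\delta)}{2B}\,C_\eta n^{1-2\nu}\le \frac14\underline\rho_{\ell,K,M}^2c_\eta^2n^{2-2\nu},
\]
which is equivalent to
\[
n\ge \frac{2C_\eta\log(2KM/\delta)}{c_\eta^2 B\underline\rho_{\ell,K,M}^2}.
\]
This is exactly the first branch of \Eqref{eq:restart-explicit-n} once its implied constant is chosen at least $2C_\eta/c_\eta^2$.

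Third, with the fluctuation controlled, $\mathfrak D_{\ell,n}^{(K)}(\delta/2)\ge\frac12 c_\eta\underline\rho_{\ell,K,M}n^{1-\nu}$. The second branch of \Eqref{eq:restart-explicit-n}, with constant $A$, gives
\[
n^{1-\nu}\ge A^{1-\nu}\frac{\epsilon_1^{-1}-1}{\epsilon_2^2\log(1/\epsilon_{\mathrm{rel}})\underline\rho_{\ell,K,M}}.
\]
Hence
\[
\mathfrak D_{\ell,n}^{(K)}(\delta/2)\ge \tfrac12c_\eta A^{1-\nu}\frac{\epsilon_1^{-1}-1}{\epsilon_2^2\log(1/\epsilon_{\mathrm{rel}})}.
\]
This dominates the right-hand side of \Eqref{eq:restart-training} provided $\tfrac12c_\eta A^{1-\nu}\ge c_{\mathrm{int}}^{-2}$. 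Since $1-\nu\ge1/2$, it suffices that $A\ge(2/(c_\eta c_{\mathrm{int}}^2))^2$, which is a uniform constant. Choosing the constant in \Eqref{eq:restart-explicit-n} as the maximum of the two requirements completes the argument.

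The only delicate point is uniformity. The constants $c_\eta$ and $C_\eta$ must not depend on $n$, $K$, $M$, $\delta$ or the $\epsilon$'s. This holds because they depend only on $\nu$ and the step-size constants. The bound on $S_2$ also needs $\nu$ bounded away from $1/2$, which is guaranteed because $\nu$ is fixed.
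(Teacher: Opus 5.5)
Your proposal is correct and follows essentially the same route as the paper: the first branch of \Eqref{eq:restart-explicit-n} makes the fluctuation term at most half of $\underline\rho_{\ell,K,M}\sum_{j}\eta_j$, and the second branch then lifts this half-drift above the threshold in \Eqref{eq:restart-training}; your version simply makes the step-size constants explicit. One small slip: $A\ge(2/(c_\eta c_{\mathrm{int}}^2))^2$ yields $A^{1-\nu}\ge 2/(c_\eta c_{\mathrm{int}}^2)$ only when $A\ge1$, so take $A\ge\max\{1,(2/(c_\eta c_{\mathrm{int}}^2))^2\}$.
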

\begin{proof}
For fixed $0\le\nu<1/2$ and uniform learning-rate constants,
\[
 \sum_{j=1}^n\eta_j=\Theta(n^{1-\nu}),\qquad
 \sum_{j=1}^n\eta_j^2=\Theta(n^{1-2\nu}).
\]
The ratio of the square-root term in \Eqref{eq:theorem-one-drift-K} to its positive drift term is
\[
 O\!\left(\sqrt{\frac{\log(2KM/\delta)}{B\underline\rho_{\ell,K, M}^{\,2}n}}\right).
\]
The first term of the maximum in \Eqref{eq:restart-explicit-n}, with a sufficiently large
constant, makes this ratio at most one half. Consequently the effective drift is
$\Omega(\underline\rho_{\ell,K, M}n^{1-\nu})$. The second term in that maximum, again with a
sufficiently large constant, makes this lower bound at least the right-hand side of
\Eqref{eq:restart-training}. This proves the implication.
\end{proof}

\begin{lemma}[Control of the increase in visitation]
\label{lem:restart-path}
Let $N=N_\ell=|E_\ell|$, the total number of correct inter-cluster edges of the task, be fixed.
Under the retained initialization condition, for every training history and iteration $j$,
\begin{align}
 &\pi_{\mathbf W^{(j)}}(\exists t<T:y_t=u_m\mid y_{\ell,0},x_\ell)
 \notag\\
 &\quad\le
 \pi_{\mathbf W^{(0)}}(\exists t<T:y_t=u_m\mid y_{\ell,0},x_\ell)
 \frac{c_{\mathrm{int}}^{-N}}{\epsilon_2^{N}}.
 \label{eq:restart-visit-growth}
\end{align}
\end{lemma}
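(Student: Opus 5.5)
The plan is to decompose the visitation event into a sum of path probabilities and compare each path's probability under $\mathbf W^{(j)}$ with its value under $\mathbf W^{(0)}$. We invoke Lemma~\ref{lem:restart-update}, which applies under Assumptions~\ref{ass:transition-compatibility} and~\ref{ass:restart-rare}. It says that only columns whose sources are hint sources in $E_\ell$ change, and that interior transitions stay frozen.

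Every node-simple prefix $\gamma=(y_0,\ldots,y_t=u_m)$ therefore factors as a product of two kinds of transitions. Transitions out of nodes that are not sources of $E_\ell$ are unchanged. Transitions out of boundary nodes $u_i$ with $(u_i,v_i)\in E_\ell$ may have changed. Since $\gamma$ is node-simple and sources in $E_\ell$ are distinct (Assumption~\ref{ass:transition-compatibility}), at most $N$ factors of the second kind appear.

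For each changed factor we bound the ratio $\pi_{\mathbf W^{(j)}}(w\mid u_i,x_\ell)/\pi_{\mathbf W^{(0)}}(w\mid u_i,x_\ell)$ by cases.
\begin{itemize}
\item If $w\neq v_i$: the ratio inequality in Lemma~\ref{lem:restart-update} gives $p^{(j)}_{u_i,w}\le p^{(j)}_{i}\,p^{(0)}_{u_i,w}/p^{(0)}_i$. With $p_i^{(j)}\le 1$ and $p_i^{(0)}\ge c_{\mathrm{int}}\epsilon_2$, the factor grows by at most $(c_{\mathrm{int}}\epsilon_2)^{-1}$.
\item If $w=v_i$: the factor grows by at most $1/p_i^{(0)}\le (c_{\mathrm{int}}\epsilon_2)^{-1}$.
\end{itemize}
Either way each changed factor is inflated by at most $c_{\mathrm{int}}^{-1}\epsilon_2^{-1}$. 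Multiplying over at most $N$ factors gives
\[
P^{(j)}(\gamma)\le c_{\mathrm{int}}^{-N}\epsilon_2^{-N}P^{(0)}(\gamma)
\]
for every prefix. The prefix events $\{Y_{0:t}=\gamma\}$ are disjoint for distinct first-visit prefixes, so summing over all prefixes ending at their first visit to $u_m$ yields \Eqref{eq:restart-visit-growth}. The bound holds deterministically for every training history, because it uses only the structural properties of the updates and not their magnitudes.

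The main obstacle is checking that the lower bound $p_i^{(0)}\ge c_{\mathrm{int}}\epsilon_2$ is available at every changed source in the rare-entrance setting, including $u_m$ itself. It is in fact needed only at sources strictly before $u_m$ on the prefix, since the prefix stops at $u_m$. Those sources satisfy the inter-cluster clause of Assumption~\ref{ass:pretrained-diversity}, which is retained by Assumption~\ref{ass:restart-rare}.

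A second point to verify is that transitions from interior nodes, and from boundary nodes outside $E_\ell$, are exactly frozen. This follows from Lemma~\ref{lem:sd-gradient}, because hints only contain edges of $E_\ell$.
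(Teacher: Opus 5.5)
Your proposal is correct and is essentially the paper's argument. Like the paper, you factor the rollout probability into one-step transitions and bound each factor at an updated source by $1/p_i^{(0)}\le(c_{\mathrm{int}}\epsilon_2)^{-1}$, using the iterated ratio inequality of Lemma~\ref{lem:restart-update} together with $p_i^{(j)}\le1$. Node simplicity and source uniqueness then give at most $N$ such factors, and you sum over the visiting paths.

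The only difference is cosmetic. You sum over first-visit prefixes ending at $u_m$, whereas the paper sums over complete rollouts and pads the product with unvisited sources, whose ratios are $\ge1$. Your version happens to avoid needing the lower bound at $u_m$ itself, though that bound holds anyway because Assumption~\ref{ass:restart-rare} keeps the inter-cluster clause.
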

\begin{proof}
Iterating the second inequality of \Eqref{eq:restart-ratio-step} from $1$ to $j$ gives
\[
 \frac{p_{u_i,v}^{(j)}}{p_{u_i,v}^{(0)}}
 \le\frac{p_i^{(j)}}{p_i^{(0)}}\le\frac1{p_i^{(0)}}
 \qquad ((u_i,v)\in E).
\]
This applies to every outgoing edge at an updated source, including intra-cluster edges.
At other sources the ratio is one. Factor the probability of a complete rollout into
its one-step Markov probabilities, using the same stopping rule for both predictors.
Each updated source occurs at most once by node simplicity. Sources that do not occur
can be included because $p_i^{(j)}/p_i^{(0)}\ge1$. Consequently,
\begin{equation}
 \pi_{\mathbf W^{(j)}}(y_{0:T}\mid y_{\ell,0},x_\ell)
 \le\pi_{\mathbf W^{(0)}}(y_{0:T}\mid y_{\ell,0},x_\ell)
       \prod_{i=1}^{N_\ell}\frac{p_i^{(j)}}{p_i^{(0)}}.
 \label{eq:restart-path-ratio}
\end{equation}
The retained initialization condition gives $p_i^{(0)}\ge c_{\mathrm{int}}\epsilon_2$, and hence
\begin{equation}
 \prod_{i=1}^{N_\ell}\frac1{p_i^{(0)}}
 \le c_{\mathrm{int}}^{-N}\epsilon_2^{-N}.
 \label{eq:restart-product-scale}
\end{equation}
Sum \Eqref{eq:restart-path-ratio} over the node-simple rollouts that visit $u_m$
before stopping, and use $p_i^{(j)}\le1$ together with \Eqref{eq:restart-product-scale}.
\end{proof}

\begin{lemma}[Nonvisitation throughout adaptive training]
\label{lem:restart-nonvisit}
Under Assumption~\ref{ass:restart-rare},
\begin{align}
 &\mathbb P\!\left(\sum_{j=1}^n\sum_{b=1}^B
 \mathbf1\{\tau_m^{(b,j)}<T^{(b,j)}\}>0\right)
 \notag\\
 &\quad\le Bn\,
 \pi_{\mathbf W^{(0)}}(\exists t<T:y_t=u_m\mid y_{\ell,0},x_\ell)
 \frac{c_{\mathrm{int}}^{-N}}{\epsilon_2^{N}}
 =O\!\left(\frac{Bn}{M'\epsilon_2^{N}}\right).
 \label{eq:restart-nonvisit-prob}
\end{align}
\end{lemma}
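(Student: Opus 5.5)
The statement to prove is Lemma~\ref{lem:restart-nonvisit}. The plan is a union bound over all $Bn$ training rollouts. The only subtlety is that the rollouts are generated adaptively, because $\mathbf W^{(j-1)}$ depends on earlier samples. Write the event as $\bigcup_{j,b}\{\tau_m^{(b,j)}<T^{(b,j)}\}$ and bound its probability by
\[
\sum_{j=1}^n\sum_{b=1}^B\mathbb E\bigl[\mathbb P(\tau_m^{(b,j)}<T^{(b,j)}\mid\mathcal F_{j-1})\bigr].
\]
Conditional on $\mathcal F_{j-1}$, the parameter $\mathbf W^{(j-1)}$ is fixed. By the fine-tuning setup, each student rollout $y^{(b,j)}$ is then drawn from $\pi_{\mathbf W^{(j-1)}}(\cdot\mid x_\ell)$ starting at $y_{\ell,0}$, independently of the hint. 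The conditional probability is therefore exactly $\pi_{\mathbf W^{(j-1)}}(\exists t<T:y_t=u_m\mid y_{\ell,0},x_\ell)$.

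Next I would invoke Lemma~\ref{lem:restart-path}. It holds for every training history and every iteration, so the conditional visitation probability is bounded uniformly by
\[
\pi_{\mathbf W^{(0)}}(\exists t<T:y_t=u_m\mid y_{\ell,0},x_\ell)\,c_{\mathrm{int}}^{-N}\epsilon_2^{-N}.
\]
Taking expectations preserves this deterministic bound. Summing over the $Bn$ pairs $(j,b)$ gives the first inequality.

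For the $O(\cdot)$ form, Assumption~\ref{ass:restart-rare} gives $\pi_{\mathbf W^{(0)}}(\exists t<T:y_t=u_m\mid y_{\ell,0},x_\ell)=O(1/M')$ with a uniform implied constant. The factor $c_{\mathrm{int}}^{-N}$ must then be absorbed. Since $N=N_\ell$ is fixed, as stated in Lemma~\ref{lem:restart-path}, and $c_{\mathrm{int}}$ is a uniform constant, this factor is a constant. This yields $O(Bn/(M'\epsilon_2^N))$.

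The main point to check carefully is that Lemma~\ref{lem:restart-path} applies under the rare-entrance assumption along every adaptive history. It needs the retained initialization condition $p_i^{(0)}\ge c_{\mathrm{int}}\epsilon_2$ together with the ratio monotonicity \Eqref{eq:restart-ratio-step}. Both hold deterministically for every history, by Lemma~\ref{lem:restart-update} and Assumption~\ref{ass:restart-rare}, which retains the $\Theta(\epsilon_2)$ clause of Assumption~\ref{ass:pretrained-diversity}. So no martingale concentration is needed; the union bound with conditional expectations suffices.
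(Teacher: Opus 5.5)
Your proposal is correct and follows the paper's proof. The paper conditions on $\mathcal F_{j-1}$, identifies the conditional visit probability with $\pi_{\mathbf W^{(j-1)}}(\exists t<T:y_t=u_m\mid y_{\ell,0},x_\ell)$, bounds it uniformly over histories via Lemma~\ref{lem:restart-path}, and sums over $(b,j)$. The only cosmetic difference is that it applies Markov's inequality to the visit count where you use the equivalent union bound.
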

\begin{proof}
Conditional on $\mathcal F_{j-1}$, the student rollout has law $\pi_{\mathbf W^{(j-1)}}$.
Lemma~\ref{lem:restart-path} therefore gives
\begin{align*}
 &\mathbb E[\mathbf1\{\tau_m^{(b,j)}<T^{(b,j)}\}\mid\mathcal F_{j-1}]\\
 &\quad=\pi_{\mathbf W^{(j-1)}}(\exists t<T:y_t=u_m\mid y_{\ell,0},x_\ell)\\
 &\quad\le\pi_{\mathbf W^{(0)}}(\exists t<T:y_t=u_m\mid y_{\ell,0},x_\ell)
            \frac{c_{\mathrm{int}}^{-N}}{\epsilon_2^{N}}.
\end{align*}
Here $T^{(b,j)}$ is the random length of that sampled rollout.
Take total expectations and sum over $b,j$. The probability of a positive visit count
is at most its expectation, proving the claim. 
\end{proof}

\begin{lemma}[An unvisited downstream cluster remains unlearned]
\label{lem:restart-frozen}
Under Assumptions~\ref{ass:transition-compatibility} and~\ref{ass:restart-rare},
if no training rollout visits $u_m$, then
$\mathbf W^{(n)}e_{u,v}=\mathbf W^{(0)}e_{u,v}$ for every $u\in C_k$ and $(u,v)\in E$,
and \Eqref{eq:restart-restart-low} holds.
\end{lemma}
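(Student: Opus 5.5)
The plan is to split the lemma into a structural claim about the SD gradient and a union bound over exits from the frozen cluster $C_k$. Throughout, training is on the fixed task $\mathcal D_\ell$ alone, starting from $\mathbf W^{(0)}$, as in Proposition~\ref{prop:restart-negative}.

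\emph{Step 1: no training rollout enters $C_k$.} By Assumption~\ref{ass:restart-rare}, $y_{\ell,0}\notin C_k$. Also, the correct transition $e_m=(u_m,v_m)$ out of $u^*=u_m$ is the only incoming edge to $C_k$. Since every rollout starts at $y_{\ell,0}$ and moves along edges of $G$, any rollout that visits a node of $C_k$ must first traverse an edge entering $C_k$. That edge can only be $e_m$, so the rollout must have visited $u_m$. On the event that no training rollout visits $u_m$, no training rollout visits any node of $C_k$.

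\emph{Step 2: the columns with sources in $C_k$ are frozen.} By Lemma~\ref{lem:sd-gradient}, equivalently the explicit form \Eqref{eq:restart-update} in Lemma~\ref{lem:restart-update}, a mini-batch gradient changes the columns $e_{u,\cdot}$ only at sources $u=u_i^{(b)}$ with $\tau_i^{(b)}<T^{(b)}$. Such a source is a hint source actually visited by the student rollout. By Step 1, no $u\in C_k$ is ever visited, so every term with $u\in C_k$ has a zero indicator. Induction over the $n$ iterations then gives $\mathbf W^{(n)}e_{u,v}=\mathbf W^{(0)}e_{u,v}$ for all $u\in C_k$ and $(u,v)\in E$. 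In particular, from any start in $C_k$, the trained kernel agrees with the pretrained kernel until the rollout leaves $C_k$.

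\emph{Step 3: restart accuracy.} Fix $s\in\mathcal S_{\ell,k}$. Since $C_k$ is nonterminal, $C_k\neq\tilde C_\ell$, so a successful rollout must leave $C_k$ through an inter-cluster edge. Every successful node-simple rollout is a correct path, so all its inter-cluster edges lie in $E_\ell$. Hence success implies that the rollout traverses some $(u,v)\in E_\ell$ with $u\in\partial^\star_\ell C_k$.
\begin{itemize}
\item There are at most $M$ such sources.
\item By Assumption~\ref{ass:transition-compatibility}, each such source has a unique correct successor $v(u)$.
\item By node simplicity, each source is visited at most once.
\end{itemize}
A union bound together with the Markov property at the visit time gives
\[
\pi_{\mathbf W^{(n)}}(y_T\in\tilde C_\ell\mid y_0=s,x_\ell)\le\sum_{u\in\partial^\star_\ell C_k}\pi_{\mathbf W^{(n)}}(\text{visit }u)\,\pi_{\mathbf W^{(n)}}(v(u)\mid u,x_\ell)\le M\max_{u}\pi_{\mathbf W^{(0)}}(v(u)\mid u,x_\ell).
\]
The last inequality uses Step 2 to replace the trained edge probabilities by their pretrained values. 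Assumption~\ref{ass:restart-rare} retains the clause of Assumption~\ref{ass:pretrained-diversity} that every inter-cluster edge has pretrained probability $\Theta(\epsilon_2)$. The bound is therefore $O(M\epsilon_2)$, uniformly over $s\in\mathcal S_{\ell,k}$, which is \Eqref{eq:restart-restart-low}.

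The delicate point is Step 1: I must read Assumption~\ref{ass:restart-rare} as saying that $e_m$ is the unique edge of $G$ into $C_k$, not merely the unique correct one. I must also note that no other task's updates intervene. If only correct incoming edges were restricted, I would instead argue that entries via incorrect edges never reach $C_k$'s relevant starts; but the stated assumption appears to give the strong form directly.
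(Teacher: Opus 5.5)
Your proposal is correct and follows the paper's proof essentially step for step. Uniqueness of the entrance $e_m$ turns nonvisitation of $u_m$ into nonvisitation of all of $C_k$. The sparse SD gradient (Lemma~\ref{lem:restart-update}) then freezes every column with source in $C_k$. Finally, a Markov-property union bound over the at most $M$ correct exits, each with pretrained probability $\Theta(\epsilon_2)$, gives \Eqref{eq:restart-restart-low}. Your strong reading of Assumption~\ref{ass:restart-rare}, that $e_m$ is the only edge of $G$ into $C_k$, is also the one the paper uses.

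The one step to tighten is your claim that a successful rollout from $s$ ``is a correct path.'' $\mathcal P_\ell$, and hence $E_\ell$, are defined by paths that start at $y_{\ell,0}$, so a rollout started at $s$ is not itself in $\mathcal P_\ell$. The paper fills this gap as follows. Since $s$ is a relevant start, it is the endpoint of a prefix of some correct path. Concatenating that prefix with the successful continuation gives a path that is node-simple by acyclicity and ends in $\tilde C_\ell$, hence lies in $\mathcal P_\ell$. So the exit edge from $C_k$ belongs to $E_\ell$. Without this step you could only union-bound over all inter-cluster exits of $C_k$, and there are not $O(M)$ of those.
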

\begin{proof}
Every training rollout starts outside $C_k$. Its only entrance is $e_m$, so nonvisitation of
$u_m$ implies nonvisitation of all of $C_k$. Lemma~\ref{lem:restart-update}, iterated over the
training steps, therefore leaves all columns with source in $C_k$ unchanged, even if a hint
contains an edge inside $C_k$.

Success from $s\in\mathcal S_{\ell,k}$ requires leaving $C_k\ne\tilde C_\ell$.
Since $s$ is a relevant start, a correct path from $y_{\ell,0}$ has a prefix ending at $s$.
Concatenating this prefix with any successful continuation from $s$ gives a correct path:
a repeated vertex would create a directed cycle.
Consequently, the first exit of a successful continuation belongs to $E_\ell$.
By Assumption~\ref{ass:transition-compatibility}, at most one such exit leaves each
$u\in\partial_\ell^\star C_k$, so there are at most $|\partial_\ell^\star C_k|\le M$ such edges.
For each $(u,v)\in E_\ell$ with $u\in C_k$, the Markov property gives
\begin{align*}
 &\mathbb E_{\mathbf W^{(n)}}\!\left[\sum_{t<T}\mathbf1\{y_t=u,y_{t+1}=v\}
 \mid y_0=s,x_\ell\right]\\
 &\quad=\pi_{\mathbf W^{(n)}}(v\mid u,x_\ell)
 \mathbb E_{\mathbf W^{(n)}}\!\left[\sum_{t<T}\mathbf1\{y_t=u\}\mid y_0=s,x_\ell\right]
 \le\pi_{\mathbf W^{(0)}}(v\mid u,x_\ell).
\end{align*}
The last step uses parameter preservation and node simplicity. A union bound over these
correct exit edges and the retained initialization condition yield
\begin{align*}
 &\pi_{\mathbf W^{(n)}}(y_T\in\tilde C_\ell\mid y_0=s,x_\ell)\\
 &\quad\le\sum_{\substack{(u,v)\in E_\ell\\u\in C_k}}
       \pi_{\mathbf W^{(0)}}(v\mid u,x_\ell)
 =O\!\left(|\partial_\ell^\star C_k|\epsilon_2\right)
 =O(M\epsilon_2).
\end{align*}
The bound is uniform over $s\in\mathcal S_{\ell,k}$, proving \Eqref{eq:restart-restart-low}.
\end{proof}

\paragraph{Main proofs of Propositions~\ref{prop:restart-positive} and~\ref{prop:restart-negative}.}

\begin{proof}[Proof of Proposition~\ref{prop:restart-positive}]
\phantomsection\label{proof:restart-positive}
Lemma~\ref{lem:restart-training-length} first gives \Eqref{eq:restart-training} from the stated condition on $n$.
By Lemma~\ref{lem:restart-concentration} with failure budget $\delta/2$ and \Eqref{eq:restart-training}, all $N_\ell$ correct
edges have error at most $\epsilon_1$ with probability at least $1-\delta/2$.
Fix any such training history. Lemma~\ref{lem:restart-accuracy} gives, for every relevant start,
\[
 1-\pi_{\mathbf W^{(n)}}(y_T\in\tilde C_\ell\mid y_0=s,x_\ell)
 \le [C_{\mathrm{bad}}(1+N_\ell)+N_\ell]\epsilon_1
 \le(1+2C_{\mathrm{bad}})KM\epsilon_1.
\]
The same history works for every $s$, so no additional probabilistic union bound is needed.
This is stronger than the stated confidence $1-\delta$.
\end{proof}

\begin{proof}[Proof of Proposition~\ref{prop:restart-negative}]
\phantomsection\label{proof:restart-negative}
The common condition on $n$ implies \Eqref{eq:restart-training} by Lemma~\ref{lem:restart-training-length}.
Hold $N=N_\ell=|E_\ell|$ fixed. Let $C/M'$ be the initial visitation bound in
Assumption~\ref{ass:restart-rare}, and take the uniform constant in
\Eqref{eq:restart-budget} to be at least $2C c_{\mathrm{int}}^{-N}$.
Then \Eqref{eq:restart-budget} and initial rarity imply
\begin{align}
 &\pi_{\mathbf W^{(0)}}(\exists t<T:y_t=u_m\mid y_{\ell,0},x_\ell)
 \frac{c_{\mathrm{int}}^{-N}}{\epsilon_2^{N}}
 \le\min\!\left\{KM\epsilon_1,\frac{\delta}{2Bn}\right\}.
 \label{eq:restart-proof-budget}
\end{align}
Lemma~\ref{lem:restart-nonvisit} makes the probability that any training rollout visits $u_m$
at most $\delta/2$. Independently of whether nonvisitation holds, Lemma~\ref{lem:restart-concentration} with failure budget $\delta/2$
and \Eqref{eq:restart-training} give error at most $\epsilon_1$ for all edges specified in
Lemma~\ref{lem:restart-coverage}, except on a set of probability at most $\delta/2$.
A union bound gives both conclusions jointly with probability at least $1-\delta$;
the two events need not be independent.

Fix a history in their intersection. Lemma~\ref{lem:restart-path} and
\Eqref{eq:restart-proof-budget} bound the final evaluation visitation probability by
$KM\epsilon_1$. In \Eqref{eq:restart-accuracy-rare} this yields
\[
 1-\pi_{\mathbf W^{(n)}}(y_T\in\tilde C_\ell\mid y_{\ell,0},x_\ell)
 \le[C_{\mathrm{bad}}(1+KM)+2KM]\epsilon_1
 \le(2+2C_{\mathrm{bad}})KM\epsilon_1.
\]
Lemma~\ref{lem:restart-frozen} gives the restart upper bound on that same history.
This proves all three assertions, including the nonvisit event, within confidence $1-\delta$.
\end{proof}

\phantomsection\label{statement:restart-time}
\restarttimeremark*

\paragraph{Sketch of proof of Remark~\ref{rem:restart-time}.}
Express the rare-entrance budget as a nonvisitation horizon and compare it with
the sufficient training length for uniform restartability
(Propositions~\ref{prop:restart-negative} and~\ref{prop:restart-positive}).

\paragraph{Main proof of Remark~\ref{rem:restart-time}.}
\label{proof:restart-time}
\begin{proof}
With $N=N_\ell=|E_\ell|$ fixed and the constant choice in the proof of
Proposition~\ref{prop:restart-negative}, rearranging \Eqref{eq:restart-budget} gives
\begin{equation}
 M'=\Omega\!\left(\frac1{KM\epsilon_1}\epsilon_2^{-N}\right),
 \qquad n=O\!\left(\frac{\delta M'}B\epsilon_2^{N}\right).
 \label{eq:restart-window}
\end{equation}
Thus the allowed nonvisit horizon grows linearly in $M'$ with other quantities fixed.
In addition to the first condition, if that horizon is $\omega$ of the sufficient
learning scale in \Eqref{eq:restart-explicit-n}, their ratio diverges.
The comparison uses common bounds on the other problem parameters and confidence
$1-\delta$ in each setting at each fixed chosen horizon.
\end{proof}

\begin{remark}[The scale of initial rarity]
\label{rem:restart-Mprime}
The sufficient conditions of Proposition~\ref{prop:restart-negative} imply
$M'=\Omega(M^2\log M)$, so $M'$ grows asymptotically faster than $M$.
\end{remark}

\paragraph{Sketch of proof of Remark~\ref{rem:restart-Mprime}.}
Derive the minimum batch budget imposed by the learning condition and substitute
it into the rarity requirement to obtain the coarse rarity scale
(Lemma~\ref{lem:restart-rarity-scale}).

\begin{lemma}[The coarse rarity scale]
\label{lem:restart-rarity-scale}
The sufficient conditions \Eqref{eq:restart-explicit-n} and
\Eqref{eq:restart-budget} imply $M'=\Omega(M^2\log M)$ as $M\to\infty$.
\end{lemma}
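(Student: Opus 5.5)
The plan is to extract two lower bounds on $M'$ from \Eqref{eq:restart-budget} and chain them. First, we get a lower bound on $Bn$ from the learning condition \Eqref{eq:restart-explicit-n}. Then we substitute it into the second entry of the maximum in \Eqref{eq:restart-budget}.

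Since $M'=\Omega(\epsilon_2^{-|E_\ell|}Bn/\delta)$, it suffices to bound $Bn$ from below. The first entry of the maximum in \Eqref{eq:restart-explicit-n} gives
\[
Bn=\Omega\!\left(\frac{\log(2KM/\delta)}{\underline\rho_{\ell,K,M}^{\,2}}\right).
\]
By \Eqref{eq:effective-update-probability-MK},
\[
\underline\rho_{\ell,K,M}=\frac{c_{\mathrm{route}}^Kc_{\mathrm{int}}^{K-1}\epsilon_2^{K-1}}{M^K|\mathcal P_\ell|}\le \frac{1}{M^K}\le\frac1M,
\]
because $K\ge1$, all constants are at most one, $\epsilon_2\le1$, and $|\mathcal P_\ell|\ge1$. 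Hence $\underline\rho_{\ell,K,M}^{-2}\ge M^2$. Also $\log(2KM/\delta)\ge\log M$, because $K\ge1$ and $\delta<1$. Together these give $Bn=\Omega(M^2\log M)$.

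Substituting into \Eqref{eq:restart-budget} gives
\[
M'=\Omega\!\left(\epsilon_2^{-|E_\ell|}\delta^{-1}M^2\log M\right)=\Omega(M^2\log M),
\]
since $\epsilon_2^{-|E_\ell|}\ge1$ and $\delta^{-1}>1$. The first entry of the maximum in \Eqref{eq:restart-budget} is not needed. Because $M^2\log M/M\to\infty$, this also yields the comparison with $M$ stated in Remark~\ref{rem:restart-Mprime}.

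The main point needing care is uniformity of the constants. All implied constants in \Eqref{eq:restart-explicit-n} and \Eqref{eq:restart-budget} are uniform by hypothesis. The factors we discard are each bounded below by $1$, namely $c_{\mathrm{route}}^{-K}$, $c_{\mathrm{int}}^{-(K-1)}$, $\epsilon_2^{-(K-1)}$, $|\mathcal P_\ell|$, $\epsilon_2^{-|E_\ell|}$ and $\delta^{-1}$. So throwing them away only weakens the bound and never introduces a hidden dependence on $M$.

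The one remaining subtlety is that $M\to\infty$ while $K$ and the other parameters may vary. Even in the worst case the estimates $\underline\rho_{\ell,K,M}\le 1/M$ and $\log(2KM/\delta)\ge\log M$ hold. Therefore the $\Omega(M^2\log M)$ conclusion holds regardless of how those parameters scale.
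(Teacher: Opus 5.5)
Your proof is correct. It follows the paper's outline: lower-bound $Bn$, substitute into the $Bn/\delta$ branch of Equation~\ref{eq:restart-budget}, and use $\underline\rho_{\ell,K,M}\lesssim 1/M$ together with $\epsilon_2^{-|E_\ell|}\ge1$.

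The one real difference is how you obtain the bound on $Bn$. You read $Bn=\Omega(\log(2KM/\delta)/\underline\rho_{\ell,K,M}^{2})$ directly off the first entry of the maximum in Equation~\ref{eq:restart-explicit-n}. The paper instead goes through Lemma~\ref{lem:restart-training-length}, as follows:
\begin{itemize}
\item It notes that Equation~\ref{eq:restart-training}, with $\epsilon_1<1$, forces the effective drift in Equation~\ref{eq:theorem-one-drift-K} to be strictly positive.
\item It squares that strict inequality.
\item It applies Cauchy--Schwarz, $(\sum_j\eta_j)^2\le n\sum_j\eta_j^2$, to get $Bn>\log(2KM/\delta)/(2\underline\rho_{\ell,K,M}^{2})$.
\end{itemize}

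Your route is shorter and uses the hypothesis exactly as stated, which is all the lemma asks for. The paper's route gives an explicit constant $1/2$ that does not depend on the unspecified constant in Equation~\ref{eq:restart-explicit-n} or on the step-size schedule. It also shows that the $M^2\log M$ rarity scale is forced by any training length that makes the SD drift positive at all. So the scale is intrinsic to the mini-batch fluctuation term, not an artifact of how the sufficient condition on $n$ was written.
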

\begin{proof}
Since $\epsilon_1<1$, \Eqref{eq:restart-training} forces the drift to be positive. Squaring
its defining strict inequality gives
\[
 \underline\rho_{\ell,K, M}^{\,2}\left(\sum_{j=1}^n\eta_j\right)^2
 >\frac{\log(2KM/\delta)}{2B}\sum_{j=1}^n\eta_j^2.
\]
Cauchy--Schwarz, $(\sum_j\eta_j)^2\le n\sum_j\eta_j^2$, gives
\begin{equation}
 Bn>\frac{\log(2KM/\delta)}{2\underline\rho_{\ell,K, M}^{\,2}}.
 \label{eq:restart-Bn}
\end{equation}
Substitution in \Eqref{eq:restart-budget} gives
\begin{equation}
 M'=\Omega\!\left(\left[\epsilon_2^{-N}\right]
 \frac{\log(2KM/\delta)}{\delta\underline\rho_{\ell,K, M}^{\,2}}\right).
 \label{eq:restart-strong-M}
\end{equation}
Since $\underline\rho_{\ell,K, M}\le c_{\mathrm{route}}/M$ and $\epsilon_2^{-N}\ge1$,
for $M\to\infty$ this implies
\begin{equation}
 M'=\Omega(M^2\log M),\qquad M'/M\to\infty.
 \label{eq:restart-M2}
\end{equation}
The fuller budget \Eqref{eq:restart-budget} remains
required for Proposition~\ref{prop:restart-negative}.
\end{proof}

\paragraph{Main proof of Remark~\ref{rem:restart-Mprime}.}
\label{proof:restart-Mprime}
\begin{proof}
Lemma~\ref{lem:restart-rarity-scale} gives the claimed scale
$M'=\Omega(M^2\log M)$. This coarse implication does not replace the full budget
\Eqref{eq:restart-budget} required for Proposition~\ref{prop:restart-negative}.
\end{proof}

\begin{remark}[Correct paths bypassing the rare entrance]
\label{rem:restart-bypass}
On the event in the proof of Proposition~\ref{prop:restart-negative}, Equations~\ref{eq:restart-visit-growth} and~\ref{eq:restart-proof-budget}
bound the final visitation probability of $u^*=u_m$ by $KM\epsilon_1$.
Subtracting this probability from the original-start accuracy bound proved above gives
\begin{align*}
 &\pi_{\mathbf W^{(n)}}\!\left(
 y_T\in\tilde C_\ell,\ y_t\ne u^*\ \forall\,0\le t<T
 \,\middle|\,y_{\ell,0},x_\ell\right)\\
 &\qquad\ge 1-(3+2C_{\mathrm{bad}})KM\epsilon_1.
\end{align*}
For $(3+2C_{\mathrm{bad}})KM\epsilon_1<1$, this probability is positive, so a correct path avoiding $u^*$ exists.
By the unique-entrance condition in Assumption~\ref{ass:restart-rare}, such a path also bypasses $C_k$.
\end{remark}

\section{Numerical experiment}\label{app:numerical}
Code, configuration files, saved results, and reproduction instructions are provided in the
\href{https://github.com/yuukiizayoi/continual-reasoning-experiments}{GitHub repository}
listed in the Reproducibility statement.
The objective of this experiment is to compare finite-mini-batch training with the retention guarantees and population references associated with Theorems~\ref{thm:sd-no-forgetting} and~\ref{thm:sft-forgetting}. 
Concrete numerical settings are listed below.

\subsection{Experiment settings}
\paragraph{Graph construction and initialization}
We instantiate the layered DAG constructed in Appendix~\ref{app:realizability}.
The graph parameters and the choices used to construct $\mathbf W^{(0)}$
are summarized separately in Tables~\ref{tab:experiment-graph-parameters}
and~\ref{tab:experiment-initialization-parameters}.
All tasks use the same graph and all seeds use the same deterministic initialization.

\begin{table}[ht]
\centering
\begin{minipage}[t]{0.48\textwidth}
\vspace{0pt}
\centering
\caption{Graph parameters.}
\label{tab:experiment-graph-parameters}
\small
\setlength{\tabcolsep}{3pt}
\renewcommand{\arraystretch}{1.18}
\begin{tabular}{@{}p{0.60\linewidth}p{0.34\linewidth}@{}}
\hline
Parameter & Value\\
\hline
Tasks $L$ & $4$\\
Clusters $K$ & $13$\\
Node layers $(H+1)$ & $5$\\
Nodes $|V|$ & $123$\\
Directed edges $|E|$ & $214$\\
Correct paths $|\mathcal P_\ell|$ & $10$ per task\\
Maximum valid boundaries $M$ & $10$\\
Boundary nodes $\sum_k|\partial C_k|$ & $53$\\
\hline
\end{tabular}
\end{minipage}
\hfill
\begin{minipage}[t]{0.48\textwidth}
\vspace{0pt}
\centering
\caption{Initialization parameters.}
\label{tab:experiment-initialization-parameters}
\small
\setlength{\tabcolsep}{3pt}
\renewcommand{\arraystretch}{1.18}
\begin{tabular}{@{}p{0.60\linewidth}p{0.34\linewidth}@{}}
\hline
Parameter & Value\\
\hline
Routing error parameter $\epsilon_1$ & $0.02$\\
Inter-cluster probability $\epsilon_2$ & $0.04$\\
Input dimension $d$ & $36$\\
Task-specific input coordinates & $4$-D one-hot\\
Input squared norm $\|x_\ell\|_2^2$ & $33$\\
Residual successors $|R_k(u)|$ & $1$\\
Each start-to-entry probability & $1/10$\\
Boundary residual mass & $0.92$ or $0.88$\\
\hline
\end{tabular}
\end{minipage}
\end{table}

At a task start, $|\partial_\ell^\star C_k|=10$ and
$I_{\ell,k}(s)=\varnothing$, so Equation~\ref{eq:routing-start-kernel}
assigns probability $1/10$ to each entry boundary.
At a relevant shared router, $|\partial_\ell^\star C_k|=1$ and the remaining
successors form $I_{\ell,k}(s)$; the correct exit receives probability
$1/(1+\epsilon_1)$ and each other exit receives
$\epsilon_1/((1+\epsilon_1)|I_{\ell,k}(s)|)$.
For task inputs not using that router, its outgoing distribution is uniform.
At every boundary, Equation~\ref{eq:routing-boundary-kernel} assigns
probability $\epsilon_2$ to each inter-cluster edge and probability
$1-d_{u,k}^{\mathrm{out}}\epsilon_2$ to its unique same-cluster successor.
There are $51$ boundaries with inter-cluster outdegree $2$ and $2$ with outdegree $3$.
With $K=13$ and $\epsilon_2=0.04$, the sufficient condition in
Appendix~\ref{app:realizability} holds since $(K-1)\epsilon_2=0.48<1/2$.

The input matrix is $X=[\mathbf 1_{4\times32}\;\;I_4]$, where the first block
has all entries equal to one. Thus $XX^\top$ has diagonal entries $33$
and off-diagonal entries $32$ and is invertible.
We use Equation~\ref{eq:routing-parameter-construction} directly:
\begin{align}
\mathbf W^{(0)}e_{u,v}
&=X^\top(XX^\top)^{-1}\ell_{u,v},
&
\ell_{u,v}
&=\bigl(\log P_1(v\mid u),\ldots,\log P_4(v\mid u)\bigr)^\top.
\end{align}
The hint parameter $\epsilon_{\mathrm{rel}}=0.1$ is used during SD training,
not to construct $\mathbf W^{(0)}$.

\paragraph{Pairwise and cumulative conflict rates.}
Tables~\ref{tab:experiment-pairwise-theta} and~\ref{tab:experiment-total-theta}
report conflict rates, with rows denoting evaluated old tasks and columns
 denoting the tasks being learned. The pairwise rate $\theta$ compares only
 the row task and the column task. The cumulative rate $\theta_{\mathrm{total}}$
 is the fraction of the row task's correct paths that conflict with at least
 one task learned after it, up to and including the column task.
Thus, $\theta_{\mathrm{total}}$ is computed from a union of conflict-path sets:
 overlapping paths are counted only once, and the pairwise rates are not added.
Diagonal entries are set to zero; entries below the diagonal are omitted.

\begin{table}[ht]
\centering
\begin{minipage}[t]{0.48\textwidth}
\vspace{0pt}\centering
\caption{Pairwise conflict rate $\theta$.}
\label{tab:experiment-pairwise-theta}
\small
\setlength{\tabcolsep}{4pt}
\renewcommand{\arraystretch}{1.15}
\begin{tabular}{lcccc}
\hline
Old / New & Task 1 & Task 2 & Task 3 & Task 4\\
\hline
Task 1 & $0$ & $0.4$ & $0.6$ & $0.6$\\
Task 2 &  & $0$ & $0.9$ & $0.9$\\
Task 3 &  &  & $0$ & $0.7$\\
Task 4 &  &  &  & $0$\\
\hline
\end{tabular}
\end{minipage}
\hfill
\begin{minipage}[t]{0.48\textwidth}
\vspace{0pt}\centering
\caption{Cumulative conflict rate $\theta_{\mathrm{total}}$.}
\label{tab:experiment-total-theta}
\small
\setlength{\tabcolsep}{4pt}
\renewcommand{\arraystretch}{1.15}
\begin{tabular}{lcccc}
\hline
Old / New & Task 1 & Task 2 & Task 3 & Task 4\\
\hline
Task 1 & $0$ & $0.4$ & $0.7$ & $1.0$\\
Task 2 &  & $0$ & $0.9$ & $0.9$\\
Task 3 &  &  & $0$ & $0.7$\\
Task 4 &  &  &  & $0$\\
\hline
\end{tabular}
\end{minipage}
\end{table}

For example, Task~1 has pairwise conflict rates $0.4$ and $0.6$ with Tasks~2
and~3, respectively, whereas their cumulative conflict rate is $0.7$.
In the plots, the term labeled ``Limit'' uses
$\mathfrak C_{\mathrm{old}}(1-\theta_{\mathrm{total}})$,
with the concentration profile from Equation~\ref{eq:theorem-four-concentration-profile}
constructed from the old task's correct-path probabilities
at the end of its own training.
This profile is held fixed throughout subsequent-task training,
while $\theta_{\mathrm{total}}$ is updated to reflect the cumulative conflict.
Thus the plotted term uses the cumulative rate, rather than the pairwise
$\theta$, and is computed separately for each seed before averaging.
The label refers to this concentration-profile term, not necessarily to the
exact limiting correct generation probability.

\paragraph{Training and evaluation}
We train four tasks sequentially with $2\times10^6$ gradient updates per task,
using mini-batches of size $B=32$ and seeds $0,\ldots,9$ for each method.
The step size is $\eta_j=11^{-1}j^{-0.01}$, where $j$ starts at $1$ for each task.
The factor \(11^{-1}=3/\|x_\ell\|_2^2\) controls the update magnitude, while \(j^{-0.01}\)
provides a slow decay over successive updates.
Plots show the mean correct generation probability and the minimum--maximum
range across the ten seeds. Each probability is evaluated by summing the
probabilities of all ten correct paths.
For SD, the graph-specific population learning lower bound is shown during
own-task training. At its end, the plot switches to the conditional retention
lower bound anchored at the observed endpoint for each seed.
The population learning curve is evaluated at $B\to\infty$ for the stability.

\paragraph{Graph-specific evaluation of constants.}
We tighten intermediate bounds in the Appendix proofs using the experimental graph, without altering the theorem statements.
For Theorem~\ref{thm:sd-no-forgetting}, we use edge-specific hint and visitation probabilities in the odds bound and sum correct-path probabilities instead of applying the coarse coupling union bound.
For Theorem~\ref{thm:sft-forgetting}, we evaluate the source-wise optimization and cross-task probability-transfer bounds using the actual visitation frequencies, logit offsets, and unchanged path factors.
These refinements retain the underlying inequalities while avoiding worst-case estimates over the general graph family.
The SD learning and SFT upper reference curves are evaluated in the population limit $B\to\infty$. The SFT conditional lower and SD retention bounds are instead evaluated at the observed phase endpoints, separately for each seed.


\subsection{Experiment results}
Table~\ref{tab:experiment-final-probabilities} reports the correct-generation
probabilities after all four tasks have been learned.
SFT reaches a mean probability above $0.999997$ at the end of each task's own
training, but its final means on Tasks~1--3 fall to $0.002799$, $0.100312$,
and $0.304232$, respectively. SD attains approximately $0.980386$ on every
task and preserves its own-training endpoint through the final task for
every seed, to stored numerical precision. The final seed-to-seed range
for SD is below $1.2\times10^{-8}$ for each task.

\begin{table}[ht]
\centering
\caption{Final correct-generation probabilities after all four tasks.
Entries are means [minimum, maximum] over ten seeds, rounded to six decimal places.}
\label{tab:experiment-final-probabilities}
\small
\setlength{\tabcolsep}{5pt}
\renewcommand{\arraystretch}{1.15}
\begin{tabular}{lcc}
\hline
Task & SFT & SD\\
\hline
Task~1 & $0.002799\;[0.002701,\,0.002955]$ & $0.980386\;[0.980386,\,0.980386]$\\
Task~2 & $0.100312\;[0.065468,\,0.149365]$ & $0.980386\;[0.980386,\,0.980386]$\\
Task~3 & $0.304232\;[0.263488,\,0.334231]$ & $0.980386\;[0.980386,\,0.980386]$\\
Task~4 & $0.999998\;[0.999998,\,0.999998]$ & $0.980386\;[0.980386,\,0.980386]$\\
\hline
\end{tabular}
\end{table}

All saved evaluations satisfy the applicable SFT conditional lower and SD
retention bounds, within numerical tolerance. The SFT trajectories also
remain below the population upper references at the saved evaluations.
For SD, $12$ of $12{,}400$ saved learning evaluations lie below the population
learning reference, with a maximum shortfall of $7.83\times10^{-6}$.
This distinction is consistent with using $B=32$ for training: the
$B\to\infty$ learning reference does not guarantee finite-mini-batch accuracy.

Figures~\ref{fig:sd-training-dynamics} and~\ref{fig:sft-training-dynamics} show the training dynamics for SD and SFT, respectively.
\begin{figure}[!htbp]
   \includegraphics[width=1.0\textwidth]{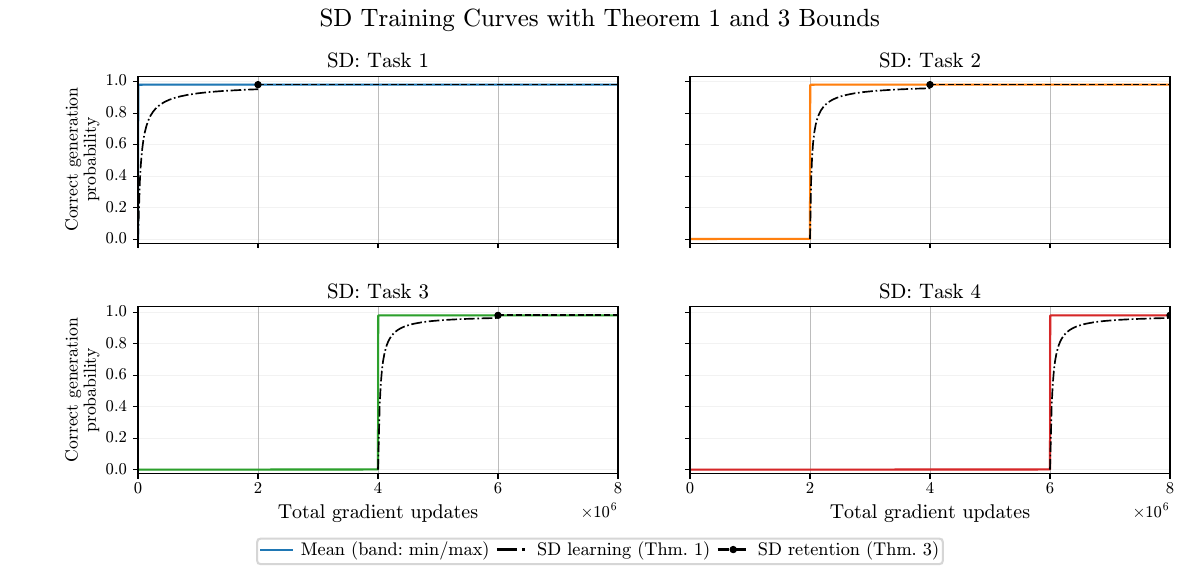}
   \caption{Training dynamics under SD for Tasks 1--4.
   The population learning lower reference is plotted during own-task training.
   At its end, the conditional retention lower bound is anchored at the observed
   endpoint and applies throughout subsequent-task training.
   }
   \label{fig:sd-training-dynamics}
\end{figure}

\begin{figure}[!htbp]
   \centering
   \includegraphics[width=1.0\textwidth]{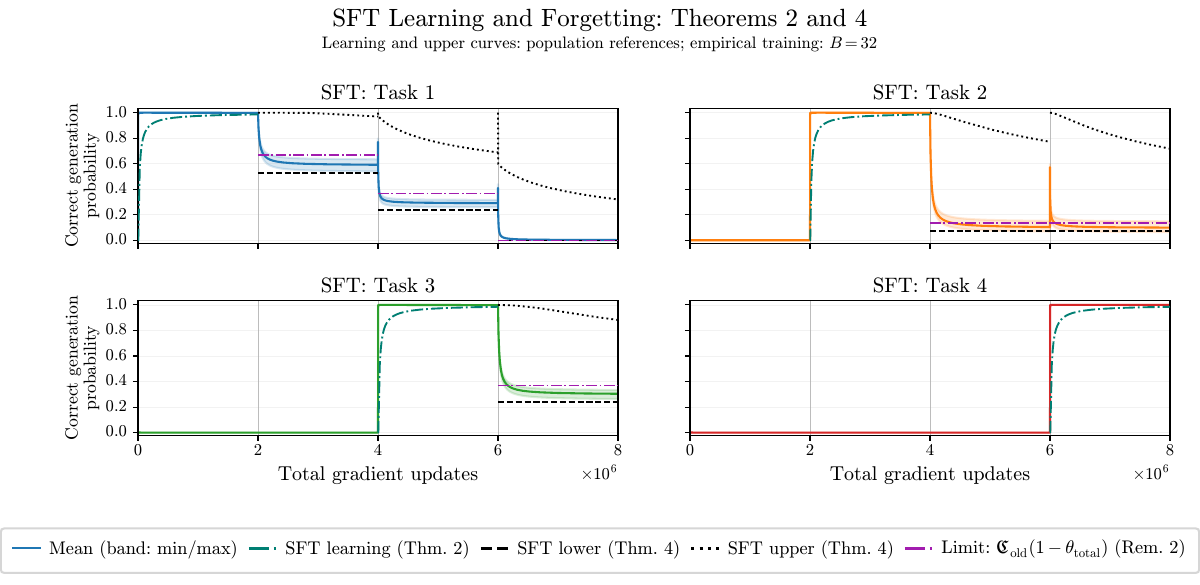}
   \caption{Training dynamics under SFT for Tasks 1--4. 
   Black curves show conditional retention lower bounds and upper
   references.
   The concentration-profile term
   $\mathfrak{C}_{\mathrm{old}}(1-\theta_{\mathrm{total}})$ reflects the cumulative
   conflict and, for Task~1, the accumulation of forgetting across subsequent tasks. 
   }
   \label{fig:sft-training-dynamics}
\end{figure}

\end{document}